\newcommand{\pspace}{\varOmega}
\newcommand{\rspace}{\mathcal{X}}
\newcommand{\reals}{\mathbb{R}}
\newcommand{\nats}{\mathbb{N}}
\newcommand{\values}{\mathcal{X}}
\newcommand{\valuesy}{\mathcal{X}_z}
\newcommand{\valuesn}{\mathcal{X}}
\newcommand{\pr}{P}
\newcommand{\lpr}{{\underline{\pr}}}
\newcommand{\upr}{{\overline{\pr}}}
\newcommand{\apr}{Q}
\newcommand{\nex}{E}
\newcommand{\lnex}{{\underline{\nex}}}
\newcommand{\unex}{{\overline{\nex}}}
\newcommand{\gambles}{\mathcal{L}}
\newcommand{\acts}{\mathcal{H}}
\newcommand{\actsy}{\mathcal{H}_z}
\newcommand{\actsn}{\mathcal{H}}
\newcommand{\domain}{\mathcal{K}}
\newcommand{\desirs}{\mathcal{D}}
\newcommand{\rdesirs}{\mathcal{R}}
\newcommand{\rdesirsc}[1]{\mathcal{R}\rfloor#1}
\newcommand{\sdiff}{\mathcal{A}}
\newcommand{\cone}{\mathcal{C}}
\newcommand{\partit}{\mathcal{B}}
\newcommand{\set}[2]{\left\{#1\colon#2\right\}}
\newcommand{\norm}[1]{\lVert#1\rVert}
\newcommand{\solp}{\mathcal{M}}
\renewcommand{\epsilon}{\varepsilon}
\newcommand{\eps}{\epsilon}
\newcommand{\cluc}[1]{\overline{#1}}  
\renewcommand{\iff}{\Leftrightarrow}
\DeclareMathOperator{\posi}{posi}
\DeclareMathOperator{\aff}{aff}
\DeclareMathOperator{\dom}{\vartriangleright}
\DeclareMathOperator{\ndom}{\ntriangleright}
\newtheorem{theorem}{Theorem}
\newtheorem{lemma}[theorem]{Lemma}
\newtheorem{proposition}[theorem]{Proposition}
\newtheorem{corollary}[theorem]{Corollary}
\theoremstyle{remark}
\newtheorem{remark}{Remark}
\newtheorem{example}{Example}
\newtheorem{definition}{Definition}
\begin{document}

\begin{frontmatter}

\title{Desirability and the birth of incomplete preferences}

\author[uni1]{Marco Zaffalon\corref{cor1}}
\ead{zaffalon@idsia.ch}

\cortext[cor1]{Corresponding author}

\author[uni2]{Enrique Miranda}
\ead{mirandaenrique@uniovi.es}

\address[uni1]{Istituto Dalle Molle di Studi sull'Intelligenza Artificiale (IDSIA). Lugano (Switzerland)}

\address[uni2]{University of Oviedo, Dep. of Statistics and Operations Research. Oviedo (Spain)}

\begin{abstract}
We establish an equivalence between two seemingly different
theories: one is the traditional axiomatisation of incomplete
preferences on horse lotteries based on the mixture independence
axiom; the other is the theory of desirable gambles (bounded random
variables) developed in the context of imprecise probability, which
we extend here to make it deal with vector-valued gambles. The
equivalence allows us to revisit incomplete preferences from the
viewpoint, and with the tools, of desirability and through the
derived notion of coherent lower previsions (i.e., lower expectation
functionals). On this basis, we obtain new results and insights: in
particular, we show that the theory of incomplete preferences can be
developed assuming only the existence of a worst act---no best act
is needed---, and that a weakened Archimedean axiom suffices too;
this axiom allows us also to address some controversy about the
regularity assumption (that probabilities should be positive---they
need not), which enables us also to deal with uncountable
possibility spaces;  we show that it is always possible to extend in
a minimal way a preference relation to one with a worst act, and yet
the resulting relation is never Archimedean, except in a
trivial case; we show that the traditional notion of state
independence coincides with the notion called \emph{strong
independence} in imprecise probability (stochastic independence in
the case of complete preferences)---this leads us to give much a
weaker definition of state independence than the traditional one; we
rework and uniform the notions of complete preferences, beliefs,
values; we argue that Archimedeanity does not capture all the problems that can be
modelled with sets of expected utilities and we provide a new notion
that does precisely that. Perhaps most importantly, we argue
throughout that desirability is a powerful and natural setting to
model, and work with, incomplete preferences, even in the case of
non-Archimedean problems. This leads us to suggest that
desirability, rather than preference, should be the primitive notion
at the basis of decision-theoretic axiomatisations.

\end{abstract}

\begin{keyword}
Incomplete preferences, decision theory, expected utility, desirability, convex cones, imprecise probability.
\end{keyword}

\end{frontmatter}

\section{Introduction}
\subsection*{Strand one} It seems natural to found a theory of rational decision making on the notion of preference; after all, what is deciding other than choosing between alternatives?

This must have been the idea behind the early works on the subject,
starting from von Neumann and Morgenstern's \cite{neumann1947}, to
the analytical framework of  Anscombe and Aumann \cite{anscombe1963}
and to that of Savage \cite{savage1954}. They show that a rational
decision maker, let us call him Thomas,\footnote{A homage to Bayes.} can be regarded as an agent
with beliefs about the world, in the form of probabilities, and
values of consequences, in the form of utilities. And moreover that
Thomas can be regarded as taking decisions by maximising his expected
utility. This view has had a tremendous impact in many fields of
research, not last on Bayesian statistics, which some see to draw
its justification from Savage's work.

Yet, many authors, including von Neumann, Morgenstern and Anscombe
themselves, have soon recognised that it is not realistic to assume
that Thomas can always compare alternatives; in some cases he may just
not know which one to prefer---even only because he lacks
information about them. In this case we talk of \emph{incomplete},
or \emph{partial}, \emph{preferences}. Axiomatisations of rational
decision making with incomplete preferences came much later, though,
through the works of Bewley \cite{bewley1986}, Seidenfeld et al.
\cite{seidenfeld1995}, Nau \cite{nau2006}, and, more recently, Ok et
al. \cite{ok2012}, and Galaabaatar and Karni \cite{galaabaatar2013}.
These works build upon the analytical framework of Anscombe and
Aumann so as to represent rational (or \emph{coherent}) preferences
through sets of expected utilities; the disagreeing decisions these
may lead to account for the incomplete nature of preferences.

The picture that comes out of these works is not entirely clear. The axioms employed are not always the same. This is the case of the continuity axiom, called \emph{Archimedean}, which is necessary to obtain a representation in terms of expected utilities; but it is also the case for the \emph{state-independence} axioms that enable one to decompose the set of expected utilities into separate sets of probabilities and utilities. The cardinality of the spaces involved also changes in different works, and the treatment of infinite spaces turns out to be quite technical while not void of problems.

Moreover, the overall impression is that directly stretching the axioms of Anscombe and Aumann, so as to deal with incomplete preferences, shows some limits, and that clarity risks getting lost in the process.

\subsection*{Strand two}
In parallel, other researchers were getting, through another path, to a theory of \emph{imprecise probability}. It started from de Finetti's interpretation of expectation as a subject's fair price for a \emph{gamble}, that is, a bounded random variable. Let us call this subject Matilda, or Thilda, to stress that she is Thomas' counterpart. De Finetti's next bright move was to deduce probability by imposing a single axiom of self-consistency on Thilda's fair prices for different gambles \cite{finetti1937}. Smith suggested that de Finetti's approach could be extended to account for the case that probabilities are indeterminate or not precisely specified \cite{smith1961}. Williams made Smith's ideas precise, by giving an axiomatisation that is again based on a notion of self-consistency, which is called \emph{coherence} since then \cite{williams1975}.

It is important to stop for a moment on this notion, which will be
also central to the present work. Williams developed his theory
starting from a setup more primitive than de Finetti's. Rather than
asking Thilda to assess her prices for gambles, he only requires
Thilda to state whether a gamble is \emph{desirable} (or
\emph{acceptable}) for her,\footnote{These are also called
\emph{favourable gambles} in \cite{seidenfeld1990}.} in the sense
that she would commit herself to accept whatever reward or loss it
will eventually lead to. The core notion in Williams' theory is then
a set of so-called \emph{desirable gambles}. One such set is said
\emph{coherent} when it satisfies a few axioms of rationality. Lower
and upper expectations, which are called \emph{previsions} in
Williams' theory, and their properties, are derived from the set of
gambles, and are shown to be equivalent to sets of probabilities.
Eventually, we can also recover de Finetti's theory as a special
case from sets of gambles called \emph{maximal}, or \emph{complete}.
But the important point is that coherent sets of desirable gambles
can be conditioned, marginalised, extended to bigger spaces, and so
on, without ever needing to talk about (sets of) probabilities. This
is even more remarkable as coherent sets of desirable gambles are
more expressive than probabilities; for instance, we can condition a
set on an event of zero probability without making any conceptual or
technical issue arise. In fact, sets of probabilities are equivalent
to a special type of desirability, the one made of so-called
\emph{strictly desirable} gambles. We take care to introduce
desirability and coherent lower previsions in a self-contained way,
and as pedagogically as possible in a research paper, in
Section~\ref{sec:intro-clp}. We do so since we are aware that it is
not a formalism that is as well known as that of preference
relations (these are briefly introduced in
Section~\ref{sec:IntroPrefs}).

Williams' fundamental work went largely unnoticed until Walley used it as the basis for his theory of imprecise probability \cite{walley1991}. At the very essence, Walley's theory can be regarded as Williams' theory with an additional axiom to account for conglomerability. \emph{Conglomerability} \cite{finetti1931} is a property that a finitely additive model may or may not satisfy, and that makes a difference when the space of possibilities is infinite. In fact, if the possibility space is finite, William's and Walley's theory essentially coincide.

Walley's theory has been influential, originating a number of further developments as well as specialisations and applications (see \cite{augustin2014} for a recent collection of related works). Most importantly, along the years, the core notion of desirability underlying both Williams' and Walley's theories, has resisted thorough analysis and has proven to be a very solid and general foundation for a behavioural theory of uncertainty. On the other hand, let us note that both Williams' and Walley's theories are developed for the case of a linear utility scale, and the utility itself is assumed precisely specified.

\subsection*{Equivalence}
What stroke us at first is that the same mathematical structure is at the basis of both the axiomatisations of incomplete preferences and the theory of desirability in imprecise probability: that of convex cones.

In the former case, cones are made of scaled differences of horse
lotteries. If we call $\pspace$ the possibility space and $\values$
the set of possible outcomes, or prizes, then a \emph{horse
lottery}, or \emph{act}, $p:\pspace\times\values\rightarrow[0,1]$ is
a nested, or compound, lottery that returns a simple lottery
$p(\omega,\cdot)$ on $\values$ for each possible realisation of
$\omega\in\pspace$. In this paper we take $\values$ to be
finite---whereas $\pspace$ is unconstrained---, so $p(\omega,\cdot)$
is simply a mass function on $\values$. The convex cone associated
with a (strict, as we take it) coherent preference relation $\succ$
between horse lotteries is given by
$\cone\coloneqq\{\lambda(p-q):\lambda>0,p\succ q\}$. Note that
$\cone$ is made of objects that are neither acts nor
preferences---strictly speaking.

On the other hand, a coherent set of desirable gambles turns out to be again a convex cone, this time made of gambles. In the traditional formulation of desirability, a gamble is a bounded function $f$ from the space of possibility $\pspace$ to the real numbers $\reals$; $f(\omega)$ represents the (possibly negative) reward one gets, in a linear utility scale, in case $\omega\in\pspace$ occurs.

Obviously, the two representations are very close. Cannot they just be the same? The answer is yes: in Section~\ref{sec:equiv} we show that desirability cones and cones arising from incomplete preferences are equivalent representations. Two conditions must be met for this to be the case:
\begin{itemize}
\item One is very specific: the preference relation must have a worst act, that is, an act $w$ such that $p\succ w$ for all $p\neq w$. This is almost universally assumed in the literature, together with the presence of a best act. In this paper we need not have the best act, the worst act suffices to develop all the theory. Moreover, we show that we can assume without loss of generality that the worst act $w$ is degenerate on an element $z\in\values$ for all $\omega\in\pspace$. In other words, that set $\values$ has a worst outcome $z$. Whence we denote by $z$ both the worst outcome and the worst act.  Accordingly, we shall use $\valuesy$ to denote a set of prizes with worst outcome and $\valuesn$ a set of prizes without it (or something that is not specified).

\item The second condition is that desirability must be extended to make it deal with vector-valued
gambles. As we shall see in Remark~\ref{rem:interpret1}, this is
surprisingly simple to do. It is enough to define a gamble as a
function $f:\pspace\times\values\rightarrow\reals$. The
interpretation is that once $\omega\in\pspace$ occurs, gamble $f$
returns a vector of rewards, one for each different type of prize in
$\values$. And yet, we eventually, and mathematically, treat the
gamble \emph{as if} the possibility space were the product
$\pspace\times\values$. Therefore the desirability axioms are
unchanged, they are simply applied to gambles defined on
$\pspace\times\values$. All the theory is applied unchanged to
gambles on the product space. The consequence are, however,
stunning.
\end{itemize}

\noindent They follow in particular because then we can go from a problem of incomplete preferences to an equivalent one of desirability by simply dropping $z$ from $\values_z$ and hence from all acts. Immediately, cone $\cone$ becomes in that way a coherent set $\rdesirs$ of desirable gambles on $\pspace\times\values$. And we can go the other way around: we start from a coherent set of desirable gambles on $\pspace\times\values$ and by appending a worst outcome to $\values$, and to all the acts, we create an equivalent cone $\cone$, with the associated coherent preference relation $\succ$. What is striking is that the equivalence holds also for the corresponding notions of Archimedeanity: an Archimedean preference relation originates a coherent set of strictly desirable gambles, and vice versa. And in desirability there are well-consolidated tools to derive, and work with, lower previsions (expectations); these tools now become available to the decision-theoretic investigator.

On the other hand, our standpoint is that the fundamental tool to model incomplete preferences is cone $\rdesirs$: it is more expressive that the sets of probabilities we can derive from it: it allows us to model and work with non-Archimedean problems besides Archimedean ones. And we can do this with well-established tools that apply directly to the cone, without any need to go through a probability-utility representation. This remarkably widens the set of problems we can handle.

\subsection*{Offsprings}

A variety of results and insights follow from the mentioned equivalence. We list them in the order they appear in the paper.

\begin{itemize}
\item We show in Section~\ref{sec:equiv} that the traditional Archimedean axiom conflicts with the possibility to represent a maximally uninformative, or \emph{vacuous}, relation. Thanks to the worst outcome, we define a \emph{weak Archimedean} condition, which is like the original one but restricted to a subset of \emph{non-trivial} preferences. This solves the problem and still suffices to obtain a representation in terms of sets of expected utilities.

Weak Archimedeanity enables us also to address the long-term controversy about the \emph{regularity assumption}, that is, whether or not a probabilistic model should be allowed to assign probability zero to events: probabilities can be zero in our model and yet we can have a meaningful representation of a strict partial order in terms of a set of expected utilities.

Finally, weak Archimedeanity allows the probabilistic models derived in our representation to be meaningfully defined on uncountable sets, unlike in the case of the traditional Archimedean axiom. This is remarked in Section~\ref{sec:interpretation}.

\item Still in Section~\ref{sec:interpretation} we discuss how desirable gambles might be taken as the primitive notion at the basis of preference, with an opportune interpretation.

We illustrate also how our representation in terms of expectations
is naturally based on objects that we can interpret as joint
probability-utility functions (e.g., joint mass functions in the
finite case). This enables us to use tools from probability theory
to deal with, and reinterpret, operations with utility, in a uniform
way.

\item Given that the presence of the worst act is important for this paper, we investigate in Section~\ref{sec:z} whether one can extend, in a least-committal way, a coherent preference relation that has no worst act, into one that has it. It turns out that it is always possible to find such a minimal extension, but there is a catch: the extension is never (weakly) Archimedean irrespectively of the preference relation we start from---apart from the trivial empty relation; moreover, that the notion of minimal extension is ill-defined for Archimedean extensions: given any Archimedean extension, it is always possible to find a smaller one.

Then we dive deeper into this problem, and define a \emph{strong} Archimedean condition, which is indeed stronger than the traditional one. We show that the weak, strong and traditional Archimedean conditions are all essentially\footnote{For the weak one this is only partly true.} equivalent in case a relation has a worst outcome. When it has not, and restricting the attention to the case of finite spaces (in particular finite $\pspace$), we show that the strong Archimedean condition leads to an Archimedean extension and that the traditional does not suffice. Moreover, we show that strong Archimedeanity is equivalent to the topological openness of cone $\cone$.

\item Starting from Section~\ref{sec:decompose}, we assume that the worst outcome exists and we take advantage of the desirability representation of incomplete preferences to revisit a number of traditional notions.

Initially we discuss the cases of state dependence and independence directly for the case of desirable gambles. We show that there are much weaker (and arguably, more intuitive) notions than the traditional ones we can employ to model state independence.

Something similar happens with the case of complete preferences. The definition in the case of desirability is straightforward and of great generality.

\item In Section~\ref{sec:lprevs}, we see what happens of these notions when we move down to the level of coherent lower previsions, that is, sets of expected utilities. Also in this case, we provide weaker notions than the traditional ones that are very direct.

For the case of complete preferences, we give a number of equivalent conditions to impose them, thus also simplifying the traditional conditions, and showing that we can use the very same condition both for the case of complete beliefs and incomplete values and for the opposite one (the so-called \emph{Knightian uncertainty}).

\item In Section~\ref{sec:si=sp} we consider two axioms used in the literature to impose state independence in the case of a multiprior expected multiutility representation. After quite an involved analysis, we show that imposing those axioms is equivalent to model state independence with sets of expectations using the \emph{strong product}. In other words, for complete preferences, state independence simply turns out to be stochastic independence in our setting; and to a set of stochastically independent models in the case of incomplete preferences. In fact, when we say, as above, that there are weaker ways to model state independence, this is because it is well known with sets of probabilities that there are much weaker ways to model irrelevance and independence than the strong product.

\item In Section~\ref{sec:archimedes} we argue that the Archimedean condition is inadequate to capture all the problems that can be tackled using sets of expected utilities. In particular, there are problems that can be modelled using collections of sets of expected utilities that are not Archimedean according to the common definition. We give a new definition of \emph{full Archimedeanity} that captures all and only the problems that can be expressed with collections of sets of expected utilities, and of which the Archimedean condition is a special case.

\end{itemize}

\subsection*{Comparing}

In Section~\ref{sec:relatedWs} we compare our work with some previous ones that have dealt with incomplete preferences.

We consider the work by Nau \cite{nau2006} in the light of the connection we make to desirability, showing how some of his notions map into ours and vice versa. Nau's work is based on weak preference relations; this also gives us the opportunity to discuss how the present approach can be adapted to that case.

The work by Galaabaatar and Karni \cite{galaabaatar2013} is particularly interesting to compare as it has been a source of inspiration for ours, and because it is actually quite close in spirit, even though it misses the connection to desirability.

Finally, we consider the work by Seidenfeld, Schervish and Kadane
\cite{seidenfeld1995}. This work is interesting to compare
especially for the different type of setting it is based on,
compared to ours and to the former ones. In particular, they use a
special type of Archimedean condition with a topological structure.
The paper is also based on quite technical mathematical tools.
Moreover, they work with sets of expected utilities that need not be
open or closed, thus gaining generality compared to the former
approaches. Interestingly, they also consider the problem of
extending a preference relation to a worst (and a best) act and come
to conclusions that seem to be clashing with ours.

We clarify the difference with Seidenfeld et al.'s work by mapping their concepts in our language of desirability. By doing so we show that there is no contradiction between their results and ours, and we argue that the type of generality they get to can be achieved more naturally and simply using convex cones of gambles rather than sets of expected utilities.

\subsection*{Summarising}
In summary, we present a very general approach to axiomatise, and work with, incomplete preferences, which to us appears simpler and with a great potential to clarify previous notions and to unify them under a single viewpoint. It is based on a shift of paradigm: regarding desirability as the underlying and fundamental concept at the basis of preference.

There are limitations in our current work, like the finiteness of $\values$, and other challenges left to address. We comment on these and other issues in the Conclusions. The Appendix collects the proofs of our results.

\section{Desirability and coherent lower
previsions}\label{sec:intro-clp}
\subsection{Foundations of desirability}\label{sec:foundations}
Let $\pspace$ denote the set of possible outcomes of an experiment. In this paper we let the cardinality of $\pspace$ be general, so $\pspace$ can be infinite. We call $\pspace$ the \emph{space of possibilities}. A \emph{gamble} $f:\pspace\rightarrow\reals$ is a bounded, real-valued, function of $\pspace$. It is interpreted as an uncertain reward in a linear utility scale: in particular, $f(\omega)$ is the amount of utiles a subject receives if $\omega\in\pspace$ eventually happens to be the outcome of the experiment. Let us name this subject Thilda.

We can model Thilda's uncertainty about $\pspace$ through the set of gambles she is willing to accept. We say also that those are her \emph{acceptable} or \emph{desirable gambles} (we use the two terms interchangeably). Accepting a gamble $f$ means that Thilda commits herself to receive $f(\omega)$ whatever $\omega$ occurs. Since $f(\omega)$ can be negative, Thilda can lose utiles and hence the desirability of a gamble depends on Thilda's beliefs about $\pspace$.

Denote by $\gambles(\pspace)$ the set of all the gambles on
$\pspace$ and by
$\gambles^+(\pspace)\coloneqq\{f\in\gambles(\pspace):f\gneq0\}$ its
subset of the \emph{positive gambles}: the non-negative non-zero
ones (the set of negative gambles is similarly given by
$\{f\in\gambles(\pspace):f\lneq0\})$. We denote these sets also by
$\gambles$ and $\gambles^+$, respectively, when there can be no
ambiguity about the space involved. Thilda examines a set of gambles
$\apr\subseteq\gambles$ and comes up with the subset $\domain$ of
the gambles in $\apr$ that she finds desirable. How can we
characterise the rationality of the assessments represented by
$\domain$?

We can follow the procedure adopted in similar cases in logic, where first of all we need to introduce a notion of deductive closure: that is, we first characterise the set of gambles that Thilda must find desirable as a consequence of having desired $\domain$ in the first place. This is easy to do since Thilda's utility scale is linear, whence those gambles are the positive linear combinations of gambles in $\domain$:
\begin{equation*}
  \posi(\domain)\coloneqq\set{\sum_{j=1}^r\lambda_jf_j}{f_j\in\domain,\lambda_j>0,r\ge1}.
\end{equation*}
On the other hand, we must consider that any gamble in $\gambles^+$ must be desirable as well, given that it may increase Thilda's utiles without ever decreasing them. Stated differently, the set $\gambles^+$ plays the role of the tautologies in logic. This means that the actual deductive closure we are after is given by the following:
\begin{definition}({\bf Natural extension for gambles}) Given a set $\domain$ of desirable gambles, its \emph{natural extension} $\rdesirs$ is the set of gambles given by
\begin{equation}
\rdesirs\coloneqq\posi(\domain\cup\gambles^+).\label{eq:posi-nex}
\end{equation}
\end{definition}
\noindent Note that $\rdesirs$ is the smallest convex cone that includes $\domain\cup\gambles^+$.

The rationality of the assessments is characterised through the natural extension by the following:
\begin{definition}({\bf Avoiding partial loss for gambles}) A set $\domain$ of desirable gambles is said to \emph{avoid partial loss} if $0\notin\rdesirs$.
\end{definition}
\noindent This condition is the analog of the notion of consistency
in logic. The irrationality of a natural extension that incurs
partial loss depends on the fact (as it is possible to show) that it
must contain a negative gamble $f$, that is, one that cannot
increase utiles and can possibly decrease them. In contradistinction, a set
that avoids partial loss does not contain negative gambles.

There is a final notion that is required to make a full logical theory of desirability. This is the logical notion of a theory, that is, a set of assessments that is consistent and logically closed, in the sense that the consistent assessments coincide with their deductive closure in the examined domain $\apr$. This means that Thilda is fully aware of the implications of her assessments on other assessments in $\apr$. The logical notion of a theory goes in desirability under the name of coherence:
\begin{definition}[{\bf Coherence for gambles}]\label{def:coh-R-zero}
Say that $\domain$ is \emph{coherent relative to $\apr$} if
$\domain$ avoids partial loss and
$\apr\cap\rdesirs\subseteq\domain$ (and hence
$\apr\cap\rdesirs=\domain$). In case $\apr=\gambles$ then we simply say that $\domain$ is \emph{coherent}.
\end{definition}
\noindent This definition alone, despite its conceptual simplicity, makes up all the theory of desirable gambles: in principle, every property of the theory can be derived from it. Moreover, the definition gives the theory a solid logical basis and in particular guarantees that the inferences one draws are always coherent with one another. At the same time, the theory is very powerful: as we have seen, it can be defined on any space of possibility $\pspace$ and any domain $\apr\subseteq\gambles$ (in this sense, it is not affected by measurability problems); and, as we shall make precise later on in this section, it can handle both precise and imprecise assessments, as well as model both Archimedean and non-Archimedean problems.

Sets of desirable gambles are uncertainty models and as such we can
define a notion of conditioning for them. As usual, we consider an
event $B\subseteq\pspace$. We adopt de Finetti's convention to
denote by $B$ both the subset of $\pspace$ and its indicator
function $I_B$ (that equals one in $B$ and zero elsewhere). Using
this convention, we can multiply $B$ and a gamble $f$ obtaining the
new gamble $Bf$ given by
\begin{equation*}
Bf(\omega)=
\begin{cases}
f(\omega)&\text{ if }\omega\in B\\
0&\text{ otherwise}
\end{cases}
\end{equation*}
for all $\omega\in\pspace$. Recall the interpretation of a gamble as an uncertain reward. Since gamble $Bf$ cannot change Thilda's wealth outside $B$, we can as well interpret $Bf$ as a gamble that is called off unless the outcome $\omega$ of the experiment belongs to $B$: we say that $Bf$ is a gamble \emph{contingent}, or \emph{conditional}, on $B$. This leads to the following:
\begin{definition}[{\bf Conditioning for gambles}]\label{def:condR} Let $\rdesirs$ be a coherent set of desirable gambles on $\gambles$ and $B$ be a non-empty subset of $\pspace$. The set of desirable gambles conditional on $B$ derived from $\rdesirs$ is defined as $\rdesirs|B\coloneqq\{f\in\rdesirs:f=Bf\}$.
\end{definition}
\noindent $\rdesirs|B$ is a set of desirable gambles coherent
relative to $\apr\coloneqq\{f\in\gambles:f=Bf\}$. Note that there is
a natural correspondence between $Bf$ and the restriction of $f$ to
$B$, whence we can also put $\rdesirs|B$ in relation with
$\rdesirsc{B}\coloneqq\{f_B\in\gambles(B):Bf_B\in\rdesirs|B\}$, and
show that this is a coherent set of desirable gambles in
$\gambles(B)$. The point here is that $\rdesirs|B$ and
$\rdesirsc{B}$ are equivalent representations of the conditional
set; yet, there can be some mathematical convenience in using one
over the other depending on the situation at hand. For these
reasons, and in the attempt to avoid a cumbersome notation, from now
on---with few exceptions---we shall use notation $\rdesirs|B$ for
the conditional set, even though, on occasions, what we shall
actually mean and use is $\rdesirs\rfloor B$. This abuse should not
be problematic as the specific set we shall use will be clear from
the context. For analogous reasons, in the following we shall
sometimes abuse terminology by just saying that $\rdesirs|B$ is
coherent, without specifying `relative to $\apr$'.

Finally, it is useful to consider the operation of marginalisation for a coherent set of desirable gambles.

\begin{definition}[{\bf Marginalisation for gambles}]\label{def:margR} Let $\rdesirs$ be a coherent set of desirable gambles in the product space $\pspace\times\pspace'$, where $\pspace,\pspace'$ are two logically independent sets. The marginal set of desirable gambles on $\gambles(\pspace\times\pspace')$ induced by $\rdesirs$ is defined as $\rdesirs_{\pspace}\coloneqq\{f\in\rdesirs:(\forall\omega\in\pspace)(\forall\omega_1',\omega_2'\in\pspace')f(\omega,\omega_1')=f(\omega,\omega_2')\}$. The marginal on $\gambles(\pspace')$ is defined analogously.
\end{definition}
\noindent $\rdesirs_{\pspace}$ is a coherent set of desirable
gambles relative to
$\apr_\pspace\coloneqq\{f\in\gambles(\pspace\times\pspace'):(\forall\omega\in\pspace)(\forall\omega_1',\omega_2'\in\pspace')f(\omega,\omega_1')=f(\omega,\omega_2')\}$. 
$\apr_\pspace$ collects all gambles that depend only on elements of
$\pspace$; we also say they are the $\pspace$-measurable gambles.
Since $\rdesirs_\pspace$ is made of $\pspace$-measurable gambles, we
can establish a correspondence between $\rdesirs_{\pspace}$ and
$\rdesirs_{\pspace}'\coloneqq\{g\in\gambles(\pspace):(\exists
f\in\rdesirs)(\forall\omega\in\pspace)(\forall\omega'\in\pspace')g(\omega)=f(\omega,\omega')\}$.
$\rdesirs_{\pspace}'$ is a coherent set of gambles in
$\gambles(\pspace)$. Similarly to the discussion made above in the
case of conditioning, there is no real difference in representing
the marginal information via $\rdesirs_{\pspace}$ or
$\rdesirs_{\pspace}'$, so from now on we shall stick to notation
$\rdesirs_\pspace$ for the marginal set, even when we shall actually
mean $\rdesirs_{\pspace}'$.

\subsection{Coherent sets of desirable gambles and coherent lower previsions}
When we restrict the attention to the case where $\apr=\gambles$, coherence can by characterised by four simple conditions:
\begin{theorem} $\rdesirs$ is a \emph{coherent set of desirable gambles} on $\gambles$ if and only if it satisfies the following conditions:
\begin{enumerate}[label=\upshape D\arabic*.,ref=\upshape D\arabic*]
\item\label{D1} $\gambles^+ \subseteq \rdesirs$ \emph{[Accepting Partial Gains]};
\item\label{D2} $0\notin \rdesirs$ \emph{[Avoiding Null Gain]};
\item\label{D3} $f \in \rdesirs, \lambda>0 \Rightarrow \lambda f \in \rdesirs$ \emph{[Positive Homogeneity]};
\item\label{D4} $f,g \in \rdesirs \Rightarrow f+g \in \rdesirs$ \emph{[Additivity]}.
\end{enumerate}
\end{theorem}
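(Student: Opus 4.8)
The plan is to reduce the definition of coherence relative to $\gambles$ to a single fixed-point identity and then match that identity against D1--D4 by elementary set manipulations. The first step is to unwind Definition~\ref{def:coh-R-zero} with $\apr=\gambles$: here the set $\rdesirs$ whose coherence we characterise plays the role of the domain $\domain$, so its natural extension is $\posi(\rdesirs\cup\gambles^+)$, and coherence asks for two things---that $\rdesirs$ \emph{avoid partial loss}, i.e.\ $0\notin\posi(\rdesirs\cup\gambles^+)$, and that $\gambles\cap\posi(\rdesirs\cup\gambles^+)\subseteq\rdesirs$. Since a positive linear combination of finitely many bounded functions is again a bounded function, $\posi(\rdesirs\cup\gambles^+)\subseteq\gambles$, so the inclusion condition simplifies to $\posi(\rdesirs\cup\gambles^+)\subseteq\rdesirs$; and because every $f\in\rdesirs$ equals the one-term combination $1\cdot f$, the reverse inclusion $\rdesirs\subseteq\posi(\rdesirs\cup\gambles^+)$ always holds. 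Hence coherence relative to $\gambles$ is equivalent to the conjunction of $\rdesirs=\posi(\rdesirs\cup\gambles^+)$ and $0\notin\rdesirs$.

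For the ``only if'' direction I would assume this conjunction and simply read off each axiom. D2 is $0\notin\rdesirs$ verbatim. D1 holds since $\gambles^+\subseteq\posi(\rdesirs\cup\gambles^+)=\rdesirs$. D3 and D4 hold because, for $f,g\in\rdesirs$ and $\lambda>0$, both the single scaled term $\lambda f$ and the two-term sum $f+g$ lie in $\posi(\rdesirs\cup\gambles^+)=\rdesirs$. For the ``if'' direction I would assume D1--D4; then $0\notin\rdesirs$ is precisely D2, so it remains to prove $\posi(\rdesirs\cup\gambles^+)\subseteq\rdesirs$. Given $h=\sum_{j=1}^r\lambda_jf_j$ with $\lambda_j>0$ and $f_j\in\rdesirs\cup\gambles^+$, D1 gives $f_j\in\rdesirs$ for every $j$, D3 gives $\lambda_jf_j\in\rdesirs$, and a straightforward induction on $r$ using D4 (pairwise additivity upgraded to arbitrary finite sums) gives $h\in\rdesirs$. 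Combining the two inclusions yields the fixed-point identity, and with D2 this is exactly coherence relative to $\gambles$.

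The argument is essentially bookkeeping, so there is no real obstacle. The only point that deserves care is the translation step: once $\apr=\gambles$, ``avoiding partial loss'' collapses to $0\notin\rdesirs$ precisely because the inclusion half of coherence already forces $\rdesirs$ to coincide with $\posi(\rdesirs\cup\gambles^+)$, and one should not conflate $0\notin\rdesirs$ with avoidance of partial loss \emph{before} that closure is established. The finite induction in the ``if'' direction is the only computation that is not completely immediate, and it is entirely routine.
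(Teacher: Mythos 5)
Your proof is correct: reducing coherence relative to $\gambles$ to the fixed-point identity $\rdesirs=\posi(\rdesirs\cup\gambles^+)$ together with $0\notin\rdesirs$, and then reading off D1--D4 from it (with the routine induction on the number of terms for the converse), is exactly the standard verification, and the one subtle point you flag---that avoiding partial loss only collapses to $0\notin\rdesirs$ after the closure identity is in place---is handled in the right order. The paper itself gives no proof, deferring to Williams, Walley and Proposition~2 of the cited Miranda--Zaffalon paper; your argument is essentially that proof, so nothing is missing.
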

\noindent This result goes back to Williams \cite{williams1975} and
Walley \cite{walley1991} (for a recent proof,
see~\cite[Proposition~2]{miranda2010c}.) It shows, somewhat more
explicitly than Definition~\ref{def:coh-R-zero}, that a coherent set
of desirable gambles is a convex cone (\ref{D3},~\ref{D4}) that
excludes the origin (\ref{D2}) and that contains the positive
gambles (\ref{D1}).

A coherent set of desirable gambles implicitly defines a probabilistic model for $\pspace$. The way to see this is to consider gambles of the form $f-\mu$, where $\mu$ is a real value used as a constant gamble here, and $f$ is any gamble. Say that Thilda is willing to accept the gamble $f-\mu$. We can reinterpret this by saying that she is willing to buy gamble $f$ at price $\mu$. Focusing on the supremum price for which this happens leads us to the following:
\begin{definition}[{\bf Coherent lower and upper previsions}]\label{def:clps} Let $\rdesirs$ be a coherent set of desirable gambles on $\gambles$. For all $f\in\gambles$, let
\begin{equation}
\lpr(f)\coloneqq\sup\{\mu\in\reals:f-\mu\in\rdesirs\};\label{eq:lprR}
\end{equation}
it is called the \emph{lower prevision} of $f$. The conjugate value given by $\upr(f)\coloneqq-\lpr(-f)$ is called the \emph{upper prevision} of $f$. The functionals $\lpr,\upr:\gambles\rightarrow\reals$ are respectively called a \emph{coherent lower prevision} and a \emph{coherent upper prevision}.
\end{definition}
\noindent It is not difficult to see that an upper prevision can be written also as
\begin{equation*}
\upr(f)\coloneqq\inf\{\mu\in\reals:\mu-f\in\rdesirs\},\label{eq:uprR}
\end{equation*}
which makes it clear that it is Thilda's infimum selling price for gamble $f$. That buying and selling prices for some goods usually do not coincide is a matter of fact in real problems; this shows that the ability to represent such a situation is important if we aim at doing realistic applications of probability. In case they do coincide, instead, what we get are linear previsions:
\begin{definition}[{\bf Linear prevision}] Let $\rdesirs$ be a coherent set of desirable gambles on $\gambles$ and $\lpr,\upr$ the induced coherent lower and upper previsions. If $\lpr(f)=\upr(f)$ for some $f\in\gambles$, then we call the common value the \emph{prevision} of $f$ and we denote it by $\pr(f)$. If this happens for all $f\in\gambles$ then we call the functional $\pr$ a linear prevision.
\end{definition}

Now we would like to give a word of caution and of clarification: before proceeding, it should be crystal clear that linear previsions are nothing else but expectations. In particular, they are expectations with respect to the probability that is the restriction of $\pr$ to events: the probability of an event $B\subseteq\pspace$ is just $\pr(I_B)=\pr(B)$. Traditionally, one takes probability as the primitive concept, which is created on top of some structure such as a $\sigma$-algebra, and then computes the expectation of a measurable gamble (i.e., a measurable bounded random variable) $f$. Here instead probability is derived from expectation that is derived from a coherent set of desirable gambles. Among other advantages of this approach, one is that we do not need structures as $\sigma$-algebras to do our analysis since the probability underlying $\pr$ can be finitely additive; whence one can in principle also compute the prevision (expectation) of a non-measurable gamble. In this sense $\pr$ is more general and fundamental than a traditional expectation; it is actually an expectation in de Finetti's sense; it seems worth remarking this by using de Finetti's name for it: that of prevision, and to use symbol $\pr$ to denote it. Besides this, using symbol $\pr$ is mathematically accurate once probability is defined as the prevision of indicator functions, and it avoids us to keep on switching, in a needless way, between symbols $\nex$ and $\pr$.

In turn, coherent lower and upper previsions are just lower and upper expectation functionals. Consider a coherent lower prevision $\lpr$. We can associate it with a set of probabilities by considering all the linear previsions that dominate $\lpr$:
\begin{equation*}
 \solp(\lpr)\coloneqq\{\pr \text{ linear prevision}:(\forall f\in\gambles)
 \pr(f)\geq\lpr(f)\},
\end{equation*}
that turns out to be closed\footnote{In the weak* topology, which is
the smallest topology for which all the evaluation functionals given
by $f(\pr)\coloneqq \pr(f)$, where $f \in\gambles$, are continuous.}
and convex. Since each linear prevision is in a one-to-one
correspondence with a finitely additive probability, we can regard
$\solp(\lpr)$ also as a set of probabilities, which is called a \emph{credal set}. Moreover, $\lpr$ is the lower envelope of the previsions in $\solp(\lpr)$:
\begin{equation}
\lpr(f)=\inf\{\pr(f):\pr\in\solp(\lpr)\}.\label{eq:lowenv}
\end{equation}
In fact, coherent lower previsions are in a one-to-one correspondence with closed and convex sets of probabilities, such as $\solp(\lpr)$. The coherent upper prevision $\upr$ is, not surprisingly, the upper envelope of the same previsions; as a consequence, it follows that $\lpr(f)\leq\upr(f)$ for all $f\in\gambles$. In any case, and even if it is convenient sometimes to work with coherent upper previsions, let us remark that it is enough to work with coherent lower previsions in general, thanks to the conjugacy relation between them.

It is well known that a functional $\lpr:\gambles(\pspace)\rightarrow\reals$ is a coherent lower prevision if and only if it satisfies the following three conditions for all $f\in\gambles(\pspace)$ and all real $\lambda>0$:
\begin{enumerate}[label=\upshape C\arabic*.,ref=\upshape C\arabic*]
\item\label{C1} $\lpr(f)\geq \inf f$;
\item\label{C2} $\lpr(\lambda f)=\lambda \lpr(f)$;
\item\label{C3} $\lpr(f+g)\geq \lpr(f)+\lpr(g)$. \end{enumerate}
(When condition~\ref{C3} holds with equality for every $f,g\in\gambles(\pspace\times\values)$, then $\lpr$ is actually a linear prevision.) Therefore one can understand these three conditions also as the axioms of coherent lower previsions, thus disregarding the more primitive notion of desirability. Still, it is useful to know that coherent lower previsions are in a one-to-one correspondence with a special class of desirable gambles:
\begin{definition}[{\bf Strict desirability}] A coherent set of gambles $\rdesirs$ is said \emph{strictly desirable} if it satisfies the following condition:
\begin{enumerate}[label=\upshape D0.,ref=\upshape D0]
\item\label{D0} $f\in\rdesirs\setminus\gambles^+\Rightarrow (\exists\delta>0)f-\delta\in \rdesirs$ [Openness].
\end{enumerate}
\end{definition}
\noindent Strict desirability\footnote{A note of caution to prevent confusion in the reader: the adjective `strict' denotes two unrelated things in desirability and in preferences. In preferences it characterises irreflexive (i.e., non-weak) relations, while in desirability it formalises an Archimedean condition as it will become clear in Section~\ref{sec:equiv}. We are keeping the same adjective in both cases for historical reasons and given that there should be no possibility to create ambiguity by doing so.} is a condition of openness: it means that the part of cone represented by $\rdesirs\setminus\gambles^+$ does not contain the topological border. By an abuse of terminology, $\rdesirs$ is said to be open too.

The correspondence mentioned above holds in particular because we
can start with a coherent lower prevision $\lpr$
satisfying~\ref{C1}--\ref{C3} and induce the following set of
desirable gambles:
\begin{equation}
\rdesirs\coloneqq\{f\in\gambles:f\gneq0\text{ or }\lpr(f)>0\}.\label{eq:Rlpr}
\end{equation}
This set is coherent and in particular strictly desirable and moreover it induces $\lpr$ through Eq.~\eqref{eq:lprR}.

\subsection{Conditional lower previsions and non-Archimedeanity}\label{sec:regular}
We have just seen that coherent lower previsions and coherent sets
of strictly desirable gambles are equivalent models. This means also
that coherent sets of desirable gambles are more general than
coherent lower previsions, given that the general case of
desirability does not impose any constraint on the topological
border (such as openness). We can rephrase this by saying that
coherent sets of desirable gambles can model also non-Archimedean
problems, that is, problems that cannot be modelled by probabilities
(and in particular through a coherent lower prevision).

There are two main avenues where non-Archimedeanity can show up in desirability and both are related to gambles with zero prevision. The first has to do with the much debated problem about the way to deal with conditioning in case of zero-probability events. To see this, it is useful first to define a conditional coherent lower prevision.
\begin{definition}[{\bf Conditional coherent lower and upper previsions}]\label{def:cclps} Let $\rdesirs$ be a coherent set of desirable gambles on $\gambles(\pspace)$ and $B$ a non-empty subset of $\pspace$. For all $f\in\gambles(\pspace)$, let
\begin{equation}
\lpr(f|B)\coloneqq\sup\{\mu\in\reals:B(f-\mu)\in\rdesirs\}\label{eq:cclprR}
\end{equation}
be the \emph{conditional lower prevision} of $f$ given $B$. The conjugate value given by $\upr(f|B)\coloneqq-\lpr(-f|B)$ is called the \emph{conditional upper prevision} of $f$. The functionals $\lpr(\cdot|B),\upr(\cdot|B):\gambles(\pspace)\rightarrow\reals$ are respectively called a \emph{conditional coherent lower prevision} and a \emph{conditional coherent upper prevision}.
\end{definition}
\noindent Note that Eq.~\eqref{eq:lprR} is the special case of Eq.~\eqref{eq:cclprR} obtained when $B=\pspace$, whence for all matters we can stick to Eq.~\eqref{eq:cclprR} as the general procedure to obtain (conditional) coherent lower previsions from coherent sets of desirable gambles. Note, on the other hand, that $\lpr(f|B)=\sup\{\mu\in\reals:B(f-\mu)\in\rdesirs\}=\sup\{\mu\in\reals:B(f-\mu)\in\rdesirs|B\}=\sup\{\mu\in\reals:f_B-\mu\in\rdesirs\rfloor B\}\eqqcolon\lpr(f_B)$; here we have denoted by $f_B\in\gambles(B)$ the restriction of $f$ to $B$ and by $\lpr(f_B)$ its unconditional lower prevision obtained from set $\rdesirs\rfloor B$. The equality of the two lower previsions implies that $\lpr(\cdot|B)$ is equivalent to a set of probabilities and that it satisfies conditions similar to~\ref{C1}--\ref{C3} for all $f\in\gambles(\pspace)$ and all real $\lambda>0$:
\begin{enumerate}[label=\upshape CC\arabic*.,ref=\upshape CC\arabic*]
\item\label{CC1} $\lpr(f|B)\geq \inf_ Bf$;
\item\label{CC2} $\lpr(\lambda f|B)=\lambda \lpr(f|B)$;
\item\label{CC3} $\lpr(f+g|B)\geq \lpr(f|B)+\lpr(g|B)$, \end{enumerate}
 in addition to the condition, specific to the conditional case, that $\lpr(B|B)=1$ (this could be removed by formulating everything using $\lpr(f_B)$ rather than $\lpr(f|B)$). As in the unconditional case, one could take these four requirements as axioms of coherent conditional lower previsions, thus disregarding desirability. And also in this case, if we do start from a coherent conditional lower prevision $\lpr(\cdot|B)$, we can then induce its associated set of strictly desirable gambles through
\begin{equation}\label{eq:cond-from-lpr}
 \rdesirs|B\coloneqq\{f \in\gambles^+: f=Bf\}\cup\{B(f-(\lpr(f|B)-\eps)): \eps>0, f\in\gambles\}.
\end{equation}
\noindent Note that, according to Definition~\ref{def:condR}, $\rdesirs|B$ is made of gambles that are zero outside $B$. For the rest, the above expression simply states what is desirable under $\lpr(\cdot|B)$: either the positive gambles or the net gains originated by buying a gamble $f$ at price $\lpr(f|B)-\eps$, which Thilda regards as convenient since the price is less than her supremum acceptable one.

Eq.~\eqref{eq:cclprR} tells us how to create coherent conditional
lower previsions from a coherent set of desirable gambles. If we
apply it in particular to a coherent set of strictly desirable
gambles $\rdesirs$, then, thanks to its equivalence to a coherent
lower prevision $\lpr$, we obtain a conditioning rule defined
directly for coherent lower previsions:
\begin{definition}[{\bf Conditional natural extension}]\label{def:condnatex} Let $\lpr$ be a coherent lower prevision and $B$ a non-empty subset of $\pspace$. The \emph{conditional natural extension} of $\lpr$ given $B$ is the real-valued functional
\begin{equation}\label{eq:cond-natural-extension}
 \lpr(f|B)\coloneqq\begin{cases}
  \inf_{B} f &\text{ if } \lpr(B)=0, \\
  \min\{\pr(f|B): \pr\geq\lpr\} &\text{ otherwise}
  \end{cases}
\end{equation}
defined for every $f\in\gambles(\pspace)$, where $\pr(f|B)\coloneqq\pr(Bf)/\pr(B)$ is a conditional linear prevision defined by Bayes' rule.
\end{definition}
\noindent In other words, $\lpr(f|B)$ is obtained by conditioning
all the linear previsions in $\solp(\lpr)$ by Bayes' rule, when
$\lpr(B)>0$, and then taking their lower envelope. When $\lpr(B)=0$,
$\lpr(f|B)$ is instead \emph{vacuous}, and the interval
$[\lpr(f|B),\upr(f|B)]$ is equal to $[\inf_B f,\sup_B f]$ for all
$f\in\gambles$, whence it is completely uninformative about Thilda's beliefs when the conditioning event has zero lower
probability. This is just a limitation of an Archimedean model such
as a coherent lower prevision.

In contrast, it is known that the conditional lower prevision
$\lpr(f|B)$ obtained from a coherent set of non-strictly desirable
gambles can be informative, and actually for every pair
$\lpr,\lpr(\cdot|B)$ with $\lpr(B)=0$ that are coherent with each
other in Walley's sense, we can find a coherent set of desirable
gambles that induces them both (see~\cite[Appendix~F4]{walley1991}).
This is to say that conditioning with events of zero probability
does not pose any problem in the framework of desirability. This
happens thanks to the rich modelling capabilities offered by the
border of the cone, which is excluded from consideration in the case
of strictly desirable sets. Note that there are many common
situations that we would like to model where $B$ is assigned zero
lower probability and posterior beliefs are not vacuous: just think
of a bivariate normal density function over $\reals\times\reals$; it
assigns zero probability to each real number but conditional on a
real number it is again Gaussian, whence non-vacuous. These cases
fall in the area of general desirability.

The previous question, related to conditioning on an event of
probability zero, has illustrated the first type of
non-Archimedeanity that a coherent set of desirable gambles can
address. Still, it is possible to model the same case through
probabilities: the key is to use a collection of coherent lower
previsions as the basic modelling tool, such as the pair
$\lpr,\lpr(\cdot|B)$, rather than a single unconditional one.
However, there is another, somewhat purer, type of
non-Archimedeanity that cannot be modelled by collections either and
that can instead be modelled through desirability.  Here is an
example (taken from \cite[Example~13]{zaffalon2013a}):
\begin{example}\label{ex:non-fullA}
Two people express their beliefs about a fair coin using coherent
sets of desirable gambles. The possibility space
$\pspace\coloneqq\{h,t\}$, represents the two possible outcomes of
tossing the coin, i.e., heads and tails. For the first person, the
desirable gambles $f$ are characterised by $f(h)+f(t)> 0$; for the
second person, a gamble $f$ is desirable if either $f(h)+f(t)> 0$ or
$f(h)=-f(t)<0$. Call $\rdesirs_1$ and $\rdesirs_2$ the set of
desirable gambles for the first and the second person, respectively.
It can be verified that both sets are coherent. Moreover, they
originate the same unconditional and conditional lower previsions through Eqs.~\eqref{eq:lprR} and~\eqref{eq:cclprR}.
In the unconditional case we obtain $\lpr(f)=\frac{f(h)+f(t)}{2}$;
this corresponds, correctly, to assigning probability $\frac{1}{2}$ to both
heads and tails. In the conditional case, we again correctly obtain
that each person would assign probability 1 to either heads or tails
assuming that one of them indeed occurs: $\lpr(f|\{h\})=f(h),
\lpr(f|\{t\})=f(t)$. This exhausts the conditional and unconditional
lower previsions that we can obtain from $\rdesirs_1$ and
$\rdesirs_2$, given that $\pspace$ has only two elements. It follows
that $\rdesirs_1$ and $\rdesirs_2$ are indistinguishable as far as
probabilistic statements are concerned. But now consider the gamble
$f\coloneqq(-1,1)$, which yields a loss of 1 unit of utility if the
coin lands heads and a gain of 1 unit otherwise: whereas $f$ is not
desirable for the first person, it is actually so for the second.
This distinction of the two persons' behaviour cannot be achieved
through probabilities---and in fact gamble $f$ lies in the border of each of the two sets, given that $\lpr(f)=0$.
$\lozenge$
\end{example}
\noindent The same example can be rephrased in the language of
preferences (see~\cite[Example~10]{miranda2010c}). It shows that
coherent sets of desirable gambles can determine a preference also
when the lower expectation of the related gamble, in the case above
$(-1,1)$, is zero, which is a clear case of a non-Archimedean
preference. Again, the extra expressive power of general
desirability compared to strict desirability is made possible by the
modelling capabilities offered by the border of the involved cones.

All the discussion above on non-Archimedeanity shows in particular
that with coherent sets of desirable gambles we need not enter the
controversy as to whether or not we should use the \emph{regularity
assumption}, which prescribes that probabilities of possible events
should be positive. It is instructive to track the origin of this
assumption: it goes back to an important article by Shimony
\cite{shimony1955}. In the language of this paper, Shimony
argued---correctly, in our view--- that de Finetti's framework could
lead to the questionable non-acceptance of a positive gamble in case
zero probabilities where present, given that the prevision
(expectation) of such a gamble could be zero. This led Shimony and a
number of later authors, among whom Carnap and Skyrms, to advocate
strengthening de Finetti's theory by requiring regularity. But it
has originated also much controversy given the very constraining
nature of regularity on probabilistic models. Between requiring
regularity or dropping the acceptability of positive gambles, it
eventually emerged a third option: that of using non-Archimedean
models, which can keep both desiderata together. Unfortunately, this
idea has not been the subject of much development in mainstream
probability. But there are signs that something is changing and that
there could be a renewed interest in non-Archimedean models (see for
instance Pedersen's very recent interesting work
\cite{pedersen2014}). In this light, it is remarkable that Williams
has elegantly solved these problems by desirability as long as 40
years ago: in fact, by including the positive gambles in the natural
extension, no matter what Thilda's assessments are, as in
Eq.~\eqref{eq:posi-nex}, we make sure that they, and their
implications, are always desirable; and we do this without
compromising the presence of zero probabilities, therefore not
requiring regularity. What we get is a theory, very much in the
spirit of de Finetti's, that is very powerful, so much that it can
smoothly deal with non-Archimedeanity too.

An important difference between de Finetti's theory and desirability is that the former is developed for precise probabilistic assessments.  We can restrict desirability to precise models by an additional simple axiom of maximality:
\begin{definition}[{\bf Maximal coherent set of gambles}] A coherent set of desirable gambles $\rdesirs$ is called \emph{maximal} if
\begin{equation*}
 (\forall f\in\gambles\setminus\{0\}) f\notin\rdesirs\Rightarrow-f\in\rdesirs.
\end{equation*}
\end{definition}
\noindent Requiring maximality is a tantamount to assuming that Thilda
has complete preferences. The logic counterpart of maximality is
also called the completeness of a theory. It is interesting to
consider that logic has discovered long ago the inevitability of
incomplete theories, after G\"{o}del's celebrated theorem, and this
has led logicians to eventually appreciate their modelling power.
Mainstream (Bayesian) probability and statistics, on the other hand,
for the most part seem to be still stuck on precise probabilistic
models; and yet these are complete logical theories too.

Geometrically, a coherent maximal, or complete, set of desirable gambles $\rdesirs$ is a cone degenerated into a hyperplane. It induces, via Eq.~\eqref{eq:posi-nex}, a linear prevision $\pr$; this, in turn, induces through Eq.~\eqref{eq:Rlpr} a coherent set of strictly desirable gambles that corresponds to the interior of $\rdesirs$.\footnote{Remember that  by an abuse of terminology we say that a coherent set of strictly desirable gambles is open; for this same reason we refer to the union of $\gambles^+$ with the interior of $\rdesirs\setminus\gambles^+$ as `the interior' of $\rdesirs$.} Therefore there is a one-to-one correspondence between the interiors of maximal coherent sets and linear previsions. This connects again to the question of Archimedeanity. A maximal coherent set of gambles is a richer model than the associated coherent lower prevision, given that the former can profit from the border of the cone. Therefore, for example, it can yield a non-vacuous conditional linear prevision even when the conditioning event has precise zero probability; in contrast, the linear prevision that corresponds to the interior of the set will lead to a vacuous conditional model in that case.

\subsection{Conglomerability and marginal extension}\label{sec:congmet}
Finally, it is important for this paper to say something also about conglomerability. In fact, conditions~\ref{D1}--\ref{D4} essentially make up Williams' theory of desirability \cite{williams1975}. The competing theory by Walley \cite{walley1991} adds them a fifth condition that depends on the choice of a partition $\partit$ of the possibility space:
\begin{enumerate}[label=\upshape D5.,ref=\upshape D5]
\item\label{D5} $(\forall B\in\partit)Bf\in\rdesirs\cup\{0\} \Rightarrow f\in\rdesirs\cup\{0\}$ [Conglomerability],\footnote{Note that we should call it $\partit$-conglomerability, but we drop $\partit$ given that in this paper we shall always consider only one partition of the space.}
\end{enumerate}
This axiom follows from additivity when $\partit$ is finite. The rationale behind~\ref{D5} is that if Thilda is willing to accept gamble $f$ conditional on $B$, and this holds for all $B\in\partit$, then she should also be willing to accept $f$ unconditionally.

Despite the innocuous-looking nature of~\ref{D5}, conglomerability has originated almost a century-long controversy after de Finetti discovered it \cite{finetti1930}. De Finetti described conglomerability in the case of previsions and it can be shown that~\ref{D5} reduces to such a formulation when we induce a linear prevision from a coherent and conglomerable set of desirable gambles. The controversy concerns whether or not conglomerability should be imposed as a rationality axiom. De Finetti rejects this idea; others, like Walley, support it. In some recent work we have shown that there are cases where conglomerability is necessary for a probabilistic theory to make coherent inferences in time \cite{zaffalon2013a}. In any case, it is not our intention to enter the controversy in this paper and actually we shall try to avoid having to deal with questions of conglomerability as much as possible.

The natural extension of a set of gambles $\domain$ that avoids partial loss is the smallest superset that is coherent according to~\ref{D1}--\ref{D4}. We can proceed similarly in the case of conglomerability:
\begin{definition}[{\bf Conglomerable natural extension}] The \emph{conglomerable natural extension} of a set of gambles $\domain$, in case it exists, is the smallest coherent and conglomerable superset according to~\ref{D1}--\ref{D5}.
\end{definition}
\noindent The conglomerable natural extension plays the role of the deductive closure for a conglomerable theory of probability. Although the natural extension is easy to compute
\cite{walley1991}, this is not necessarily the case for the
conglomerable natural extension \cite{miranda2012,miranda2015}.
However, it will be simple in the context of this paper.

In fact, we shall use conglomerability jointly with a special type of hierarchical information: we shall consider a marginal coherent set of gambles $\rdesirs_{\pspace}$ and conditional coherent sets $\rdesirs|{\{\omega\}}$ for all $\omega\in\pspace$. It is an interesting fact then that the conglomerable natural extension of the given marginal and conditional information always exists and can easily be represented as follows (for a proof, see \cite[Proposition~29]{miranda2012}):
\begin{proposition}[{\bf Marginal extension}]\label{prop:met} Let $\rdesirs_{\pspace}$ be a marginal coherent set of gambles  in $\gambles(\pspace\times\pspace')$ and $\rdesirs|{\{\omega\}}$, for all $\omega\in\pspace$, be conditional coherent sets. Let
\begin{equation*}
 \rdesirs|\pspace\coloneqq\{h\in\gambles(\pspace\times\pspace'):(\forall
 \omega\in\pspace)
 h(\omega,\cdot)\in\rdesirs|{\{\omega\}}\cup\{0\}\}\setminus\{0\}
\end{equation*}
be a set that conglomerates all the conditional information along the partition $\{\{\omega\}\times\pspace':\omega\in\pspace\}$ of $\pspace\times\pspace'$ that we denote, by an abuse of notation, as $\pspace$ too. Then the $\pspace$-conglomerable natural extension of $\rdesirs_{\pspace}$ and $\rdesirs|{\{\omega\}}$ ($\omega\in\pspace$) is called their \emph{marginal extension} and is given by
\begin{equation*}
\hat{\rdesirs}\coloneqq\{g+h:
g\in\rdesirs_{\pspace}\cup\{0\},
h\in\rdesirs|\pspace\cup\{0\}\}\setminus\{0\}.
\end{equation*}

\end{proposition}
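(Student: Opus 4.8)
The plan is to show that $\hat{\rdesirs}$ (i) is coherent, i.e., satisfies \ref{D1}--\ref{D4}; (ii) is $\pspace$-conglomerable, i.e., satisfies \ref{D5} for the partition $\{\{\omega\}\times\pspace':\omega\in\pspace\}$; (iii) includes $\rdesirs_{\pspace}$ and every $\rdesirs|\{\omega\}$; and (iv) is included in every coherent $\pspace$-conglomerable set that includes all of these. Since the $\pspace$-conglomerable natural extension, when it exists, is by definition the smallest coherent $\pspace$-conglomerable superset of $\rdesirs_{\pspace}$ and the $\rdesirs|\{\omega\}$, facts (i)--(iv) prove at once that it exists and equals $\hat{\rdesirs}$.

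Coherence is routine. Under the identification discussed after Definition~\ref{def:condR}, each $\rdesirs|\{\omega\}$ is a coherent set of gambles on $\gambles(\pspace')$, so $\rdesirs|\pspace\cup\{0\}$ is a convex cone containing $\gambles^+(\pspace\times\pspace')$ (a nonnegative gamble restricts on every slice to a nonnegative, hence desirable or null, gamble on $\pspace'$), while $\rdesirs_{\pspace}\cup\{0\}$ is a convex cone containing the nonnegative $\pspace$-measurable gambles. Thus $\hat{\rdesirs}$ is, up to removal of the origin, the Minkowski sum of two convex cones, which yields \ref{D1}, \ref{D3}, \ref{D4}. For \ref{D2}: if $g+h=0$ with $g\in\rdesirs_{\pspace}\cup\{0\}$ and $h\in\rdesirs|\pspace\cup\{0\}$ not both zero, one checks first that $g,h\neq0$, so $h=-g$ is $\pspace$-measurable; then on every slice the constant gamble $h(\omega,\cdot)=-g(\omega)$ lies in $\rdesirs|\{\omega\}\cup\{0\}$, which forces $g(\omega)\leq0$, making $g$ a negative gamble and contradicting coherence of $\rdesirs_{\pspace}$. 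The same no-cancellation fact shows that a sum of two elements of $\hat{\rdesirs}$ is never zero, so $\hat{\rdesirs}$ is genuinely closed under \ref{D4}.

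The crux is \ref{D5}. Suppose $(\{\omega\}\times\pspace')f\in\hat{\rdesirs}\cup\{0\}$ for every $\omega\in\pspace$; one must exhibit a single decomposition $f=g+h$ with $g\in\rdesirs_{\pspace}\cup\{0\}$ and $h\in\rdesirs|\pspace\cup\{0\}$. Fix, for each $\omega$, a decomposition $(\{\omega\}\times\pspace')f=g_\omega+h_\omega$ (both summands null when the gamble is). Because $(\{\omega\}\times\pspace')f$ vanishes off $\{\omega\}\times\pspace'$ while $g_\omega$ is constant along $\pspace'$, on each other slice one gets $h_\omega(\omega',\cdot)=-g_\omega(\omega')$, a constant gamble on $\pspace'$; being in $\rdesirs|\{\omega'\}\cup\{0\}$ it is nonnegative, so $g_\omega(\omega')\leq0$, and since $g_\omega\in\rdesirs_{\pspace}\cup\{0\}$ is not a negative gamble this forces $g_\omega(\omega)\geq0$. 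Now define $h$ by $h(\omega,\cdot)\coloneqq h_\omega(\omega,\cdot)$ for every $\omega$: each slice lies in $\rdesirs|\{\omega\}\cup\{0\}$, and $g\coloneqq f-h$ then satisfies $g(\omega,\cdot)=g_\omega(\omega)$, so $g$ is $\pspace$-measurable with $g(\omega)=g_\omega(\omega)\geq0$. Boundedness follows from $0\leq g_\omega(\omega)\leq\lpr(f(\omega,\cdot)|\{\omega\})\leq\sup f$, the middle inequality holding because $h_\omega(\omega,\cdot)=f(\omega,\cdot)-g_\omega(\omega)$ is desirable or null and hence has nonnegative conditional lower prevision; so $g$, and with it $h=f-g$, is a gamble. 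Since $g$ is a nonnegative $\pspace$-measurable gamble it belongs to $\rdesirs_{\pspace}\cup\{0\}$, so $f=g+h\in\hat{\rdesirs}\cup\{0\}$, which is \ref{D5}.

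Finally, inclusion and minimality. One has $\rdesirs_{\pspace}\subseteq\hat{\rdesirs}$ (take $h=0$) and each $\rdesirs|\{\omega\}\subseteq\hat{\rdesirs}$ (its gambles are zero off the slice, hence belong to $\rdesirs|\pspace$, hence equal $0+h\in\hat{\rdesirs}$). Conversely, let $\rdesirs'$ be coherent and $\pspace$-conglomerable with $\rdesirs_{\pspace}\subseteq\rdesirs'$ and $\rdesirs|\{\omega\}\subseteq\rdesirs'$ for all $\omega$. For $h\in\rdesirs|\pspace$ each $(\{\omega\}\times\pspace')h$ lies in $\rdesirs|\{\omega\}\cup\{0\}\subseteq\rdesirs'\cup\{0\}$, so \ref{D5} for $\rdesirs'$ gives $h\in\rdesirs'$; thus $\rdesirs|\pspace\subseteq\rdesirs'$ and, by \ref{D4} applied to $\rdesirs'$, $\hat{\rdesirs}\subseteq\rdesirs'$. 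The main obstacle is the \ref{D5} step: the delicate point is to glue the per-slice decompositions into a single global one, which requires simultaneously controlling the sign of the marginal component $g$ built from the diagonal $\omega\mapsto g_\omega(\omega)$ (supplied by the off-slice constraints on the $h_\omega$) and its boundedness (supplied by the conditional lower previsions).
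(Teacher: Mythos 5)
Your proof is correct. Note that for this statement the paper gives no internal proof at all: it defers to \cite[Proposition~29]{miranda2012}, so there is nothing in the text to compare against line by line; your argument is self-contained and follows the route such a proof must take, namely verifying \ref{D1}--\ref{D4} for $\hat{\rdesirs}$, establishing \ref{D5} by gluing the per-slice decompositions, and then checking inclusion of the assessments and minimality among coherent $\pspace$-conglomerable supersets. The two genuinely delicate points are exactly the ones you isolate and resolve correctly: first, that no cancellation $g+h=0$ (and hence no violation of \ref{D2} under sums) can occur, because the slices of $h$ would then be negative constants in the conditional sets, forcing $g\lneq 0$ inside the coherent marginal $\rdesirs_{\pspace}$; second, that the glued marginal component $\omega\mapsto g_\omega(\omega)$ is well defined as a gamble, since the off-slice constraints give $g_\omega(\omega')\leq 0$ for $\omega'\neq\omega$ and hence $g_\omega(\omega)\geq 0$, while the bound $g_\omega(\omega)\leq\lpr(f(\omega,\cdot)|\{\omega\})\leq\sup f$ (equivalently, the fact that $f(\omega,\cdot)-g_\omega(\omega)$ cannot be a negative gamble of $\rdesirs|\{\omega\}$) yields boundedness, so that $g\in\rdesirs_{\pspace}\cup\{0\}$ and $h=f-g\in\rdesirs|\pspace\cup\{0\}$ as required.
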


The marginal extension is a generalisation of the law of total expectation to desirable gambles. It was initially defined for lower previsions in \cite[Section~7.7.2]{walley1991} and later extended to deal with more than two spaces in \cite{miranda2007}; in the previous form for desirable gambles it has appeared in \cite[Section~7.1]{miranda2012}. Representing marginal extension through desirability allows one to take advantage of the increased expressiveness of the model; we can for instance condition our marginal extension on an event with zero (lower or upper) probability and obtain an informative model.

As we have said, the marginal extension can be defined also for lower previsions. To this end, we first need to introduce a way to conglomerate the conditional lower previsions defined over a partition of $\pspace$:
\begin{definition}[{\bf Separately coherent conditional lower prevision}]\label{def:sc-clp} Let $\partit$ be a partition of $\pspace$ and $\lpr(\cdot|B)$ a coherent lower prevision conditional on $B$ for all $B\in\partit$. Then we call
\begin{equation*}
\lpr(\cdot|\partit)\coloneqq\sum_{B\in\partit} B
\lpr(\cdot |B)
\end{equation*}
a \emph{separately coherent conditional lower prevision}.
\end{definition}
\noindent For every gamble $f$,
$\lpr(f|\partit)$ is the gamble on $\pspace$ that equals $\lpr(f|B)$ for $\omega\in B$; so it is a $\partit$-measurable gamble.

Secondly, we introduce the notion of marginalisation for coherent lower previsions similarly to the case of desirability:
\begin{definition}[{\bf Marginal coherent lower prevision}]\label{def:marg-clp} Let $\lpr$ be a coherent lower prevision on $\gambles(\pspace\times\pspace')$. Then the $\pspace$-marginal coherent lower prevision it induces is given by
\begin{equation*}
\lpr_{\pspace}(f)\coloneqq\lpr(f)
\end{equation*}
for all $f\in\gambles(\pspace\times\pspace')$ that are $\pspace$-measurable. The definition of $\lpr_{\pspace'}$ is analogous.
\end{definition}
\noindent In other words, the $\pspace$-marginal is simply the restriction of $\lpr$ to the subset of gambles in $\gambles(\pspace\times\pspace')$ that only depend on elements of $\pspace$. For this reason, and analogously to the case of desirability, we can represent the $\pspace$-marginal in an equivalent way also through the corresponding lower prevision $\lpr_{\pspace}'$ defined on $\gambles(\pspace)$. In the following we shall not distinguish between $\lpr_\pspace$ and $\lpr_\pspace'$ and rather use the former notation in both cases.

We are ready to define the marginal extension:
\begin{definition}[{\bf Marginal extension for lower previsions}] Consider the possibility space $\pspace\times\pspace'$ and its partition $\{\{\omega\}\times\pspace':\omega\in\pspace\}$. We shall denote this partition by $\pspace$ and its elements by $\{\omega\}$, with an abuse of notation. Let $\lpr_{\pspace}$ be a marginal coherent lower prevision and let $\lpr(\cdot|\pspace)$ be a separately coherent conditional lower prevision on $\gambles(\pspace\times\pspace')$. Then the marginal extension of $\lpr_{\pspace},\lpr(\cdot|\pspace)$ is the lower prevision $\lpr$ given by
\begin{equation*}
\lpr(f)\coloneqq\lpr_{\pspace}(\lpr(f|\pspace))
\end{equation*}
for all $f\in\gambles(\pspace\times\pspace')$.
\end{definition}
\noindent The marginal extension is the least-committal coherent lower prevision with marginal $\lpr_{\pspace}$ that is coherent with $\lpr(\cdot|\pspace)$, in the sense that there is a coherent and conglomerable set of desirable gambles $\rdesirs$ that induces both $\lpr$ and $\lpr(\cdot|\pspace)$ via Eq.~\eqref{eq:cclprR}.

\begin{remark}[{\bf On dynamic consistency}]
\label{rem:dc1}
It is useful to observe that the marginal extension is tightly related to the question of \emph{dynamic consistency} in decision problems (analogous considerations hold for marginal extension in the case of sets of desirable gambles). This is a concept originally highlighted by Machina \cite{machina1989} and also related to the work of Hammond \cite{hammond1988}. Loosely speaking, a decision problem is dynamically consistent if the optimal strategy does not change from the normal to the extensive form. In the context of this paper, an uncertainty model is understood as dynamically consistent if it coincides with the least-committal (that is, weakest) combination of the marginal and conditional information it induces.

For example, assume that we have a joint model on the product space $\pspace\times\pspace'$ given by a credal set $\solp$. We
derive from $\solp$ a set of marginal previsions $\solp_{\pspace}$ on
$\pspace$, and a family of sets of conditional previsions
$\{\solp(\cdot|\{\omega\}): \omega\in\pspace\}$, one for each element of $\pspace$. Then dynamic consistency means that we can recover
$\solp$ by taking the closed convex hull of set
\begin{equation}\label{eq:dynamic-consistency}
\{\pr_{\pspace}(\pr(\cdot|\pspace)):
 \pr_{\pspace}\in\solp_{\pspace}, (\forall \omega\in\pspace)\pr(\cdot|\{\omega\})
 \in\solp(\cdot|\{\omega\})\}.
\end{equation}
This seems to be what Epstein and Schneider called `rectangularity' in \cite{epstein2003b}. Note in particular that both the marginal linear prevision and each conditional linear prevision are free to vary in their respective sets in~\eqref{eq:dynamic-consistency} irrespectively of the other linear previsions; in other words, there are no `logical' ties between linear previsions in different credal sets. This is the essential feature that characterises a dynamically consistent model. In terms of lower previsions, if
we let $\lpr,\lpr_{\pspace},\lpr(\cdot|\pspace)$ be the coherent
lower previsions determined by the sets
$\solp,\solp_{\pspace},\{\solp(\cdot|\{\omega\}):\omega\in\pspace\}$ by
means of lower envelopes, as in~\eqref{eq:lowenv}, dynamic consistency means that we should
have $\lpr=\lpr_{\pspace}(\lpr(\cdot|\pspace))$, that is, $\lpr$
should correspond to a procedure of marginal extension.

Note that dynamic consistency, as described above, depends on the notion of `weakest combination' of marginal and conditional information. This notion may vary, thus yielding different dynamically consistent models, even though they induce the same marginal and conditional information. For instance, when independence enters the picture, the form of the weakest combination may depend on the notion of independence adopted. We shall discuss more about this point in Section~\ref{sec:si-clps} (see Remark~\ref{rem:si-indep}). $\lozenge$
\end{remark}

\section{Preference relations}\label{sec:IntroPrefs}

Let $\pspace$ denote, as before, the space of possibilities. In
order to deal with preferences, we introduce now another set
$\values$ of outcomes, or prizes. While the cardinality of $\pspace$
is unrestricted, in this paper we take $\values$ to be a finite set.
Moreover, we assume that all the pairs of element in
$\pspace\times\values$ are possible or, which is equivalent, that
$\pspace$ and $\values$ are logically independent.

The treatment of preferences relies on the basic notion of a horse lottery:
\begin{definition}[{\bf Horse lottery}]
We define a \emph{horse lottery} as a functional $p:\pspace\times\rspace$ such that $p(\omega,\cdot)$ is a probability mass function on $\rspace$ for all $\omega\in\pspace$.
\end{definition}

Let us denote by $\acts(\pspace\times\rspace)$ the set of all horse lotteries on $\pspace\times\rspace$. Horse lotteries will also be called \emph{acts} for short. In the following we shall use the notation $\acts$ for the set of all the acts in case there is no possibility of ambiguity. An act $p$ for which it holds that $p(\omega_1,\cdot)=p(\omega_2,\cdot)$ for all $\omega_1,\omega_2\in\pspace$ is called a \emph{von Neumann-Morgenstern lottery}; moreover, if such a $p(\omega,\cdot)$ is degenerate on an element $x\in\rspace$, then it is called a \emph{constant von Neumann-Morgenstern lottery} and is denoted by the symbol $x$: that is, $x(\omega,x)=1$ for all $\omega\in\pspace$.

A horse lottery $p$ is usually regarded as a pair of nested lotteries: at the outer level, the outcome $\omega\in\pspace$ of an experiment is employed to select the simple lottery $p(\omega,\cdot)$; this is used at the inner level to determine a reward $x\in\values$. Horse lotteries can be related to a behavioural interpretation through a notion of preference. The idea is that a subject, that this time we shall name Thomas, who aims at receiving a prize from $\values$, will prefer some acts over some others; this will depend on his knowledge about the experiment originating an $\omega\in\pspace$, as well as on his attitude towards the elements of $\values$. We consider the following well-known axioms of coherent preferences.

\begin{definition}[{\bf Coherent preference relation}]
A \emph{preference relation} $\succ$ over horse lotteries is a subset of $\acts\times\acts$. It is said \emph{coherent} if it satisfies the next two axioms:
\begin{enumerate}[label=\upshape A\arabic*.,ref=\upshape A\arabic*]
\item\label{A1} $(\forall p,q,r\in\acts)p\nsucc p\text{ and }p\succ q\succ r\Rightarrow p\succ r$ [Strict Partial Order];
\item\label{A2} $(p,q\in\acts)p\succ q \iff (p,q\in\acts)(\forall r\in\acts)(\forall\alpha\in(0,1])\alpha p+(1-\alpha)r\succ \alpha q+(1-\alpha)r$ [Mixture Independence].
\end{enumerate}
If also the next axiom is satisfied, then we say that the coherent preference relation is \emph{Archimedean}.
\begin{enumerate}[label=\upshape A\arabic*.,ref=\upshape A\arabic*]
\setcounter{enumi}{2}
\item\label{A3} $(p,q,r\in\acts)p\succ q\succ r\Rightarrow(\exists\alpha,\beta\in(0,1)) \alpha p+(1-\alpha)r\succ q\succ \beta p+(1-\beta)r$  [Archimedeanity].
\end{enumerate}
\end{definition}

Next, we recall a few results that we shall use in the paper. We omit their proofs, which are elementary.

\begin{proposition}\label{prop:nau} Suppose that for given $p,q,r,s\in\acts$ it holds that $\alpha(p-q)=(1-\alpha)(r-s)$ for some $\alpha\in(0,1)$.
Then for any coherent preference relation $\succ$ it holds that
\begin{equation*}
p\succ q \iff r\succ s.
\end{equation*}
\end{proposition}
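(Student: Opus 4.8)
The plan is to reduce the claimed equivalence $p \succ q \iff r \succ s$ to two applications of Mixture Independence \ref{A2}, using the hypothesis $\alpha(p-q) = (1-\alpha)(r-s)$ to identify a common mixture. Rearranging the hypothesis gives $\alpha p + (1-\alpha) s = \alpha q + (1-\alpha) r$; this is the crucial algebraic observation, and it says that mixing $p$ with $s$ (weights $\alpha$ and $1-\alpha$) yields the same act as mixing $q$ with $r$ (same weights). Since $p,q,r,s$ are horse lotteries, all of $\alpha p + (1-\alpha) s$, $\alpha q + (1-\alpha) r$, etc., are again horse lotteries, so these mixtures are legitimate elements of $\acts$ and \ref{A2} applies to them.

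First I would establish the forward direction. Assume $p \succ q$. By \ref{A2}, taking $r$ in the axiom to be our act $s$ and the mixing coefficient to be $\alpha$, we get $\alpha p + (1-\alpha) s \succ \alpha q + (1-\alpha) s$. Separately, again by \ref{A2} but now reading it in the other direction: we want to conclude $r \succ s$, which by \ref{A2} (with mixing act $q$ and coefficient $1-\alpha$, noting $1-\alpha \in (0,1] $ since $\alpha \in (0,1)$) is equivalent to $(1-\alpha) r + \alpha q \succ (1-\alpha) s + \alpha q$. But $(1-\alpha) r + \alpha q = \alpha q + (1-\alpha) r = \alpha p + (1-\alpha) s$ by the rearranged hypothesis, and $(1-\alpha) s + \alpha q = \alpha q + (1-\alpha) s$, so this last preference is exactly the one we already derived. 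Hence $r \succ s$.

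The reverse direction is entirely symmetric: assume $r \succ s$, apply \ref{A2} with mixing act $q$ and coefficient $1-\alpha$ to obtain $\alpha q + (1-\alpha) r \succ \alpha q + (1-\alpha) s$, rewrite the left-hand side as $\alpha p + (1-\alpha) s$ using the hypothesis, and then apply \ref{A2} in the reverse direction (with mixing act $s$ and coefficient $\alpha$) to cancel the common $(1-\alpha)s$ term and conclude $p \succ q$. Since the argument does not use \ref{A1} or \ref{A3} at all, only the biconditional form of Mixture Independence, both implications go through with just \ref{A2}.

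I do not anticipate a genuine obstacle here; the only thing to be careful about is bookkeeping with the mixture coefficients — checking that $1-\alpha \in (0,1]$ (which holds because $\alpha \in (0,1)$, so strict inequalities on both sides) so that \ref{A2} is actually applicable, and that the two sides of the axiom are being matched to the right acts. The proof is a short chain of rewrites, so in the write-up I would present the rearranged identity $\alpha p + (1-\alpha) s = \alpha q + (1-\alpha) r$ up front and then spell out the two symmetric applications of \ref{A2}.
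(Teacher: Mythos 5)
Your proof is correct: the rearrangement $\alpha p+(1-\alpha)s=\alpha q+(1-\alpha)r$ plus one forward and one backward (cancellation) application of the per-instance biconditional~\ref{A2} is exactly the elementary argument the paper has in mind (the paper omits the proof as elementary), and your bookkeeping on the coefficients $\alpha,1-\alpha\in(0,1]$ and on the mixtures being acts is all in order. The only point worth being explicit about in a write-up is that you are reading~\ref{A2} as a biconditional holding for each fixed $r$ and $\alpha$ (so that a single instance of the mixed preference already yields cancellation), which is the intended reading and the one the paper itself relies on.
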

\noindent Note that using the previous proposition one can also
easily show that
\begin{equation*}
p\succ q \text{ and } r\succ s \Rightarrow \alpha p + (1-\alpha)r
\succ \alpha q + (1-\alpha) s.\label{eq:A2-cons1}
\end{equation*}

Let the set of scaled differences of horse lotteries be defined by
\begin{equation}
\sdiff(\acts)\coloneqq\{\lambda(p-q):p,q\in\acts,\lambda>0\}.\label{eq:diffs-hl}
\end{equation}
We shall also denote this set simply by $\sdiff$ in the following,
if there is not a possibility of confusion. Preference
relation~$\succ$ is characterised by the following subset of
$\sdiff$:
\begin{equation}
\cone\coloneqq\{\lambda(p-q):p,q\in\acts,\lambda>0,p\succ q\},\label{eq:diffs-hl-prefs}
\end{equation}
as the next proposition remarks:
\begin{proposition}\label{prop:charC}Let $\succ$ be a coherent preference relation. Then for all $p,q\in\acts$ and $\lambda>0$, it holds that
\begin{equation*}
p\succ q\iff\lambda(p-q)\in\cone.
\end{equation*}
\end{proposition}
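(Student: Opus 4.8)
The plan is to prove the two implications in the equivalence $p\succ q\iff\lambda(p-q)\in\cone$ separately, using the definition $\cone=\{\mu(p'-q'):p',q'\in\acts,\mu>0,p'\succ q'\}$ together with Proposition~\ref{prop:nau}. The forward direction is immediate: if $p\succ q$ and $\lambda>0$, then taking $p'=p$, $q'=q$, $\mu=\lambda$ in the defining set shows $\lambda(p-q)\in\cone$ directly. So the content is entirely in the converse.

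For the converse, suppose $\lambda(p-q)\in\cone$ for some $\lambda>0$. By definition there exist $p',q'\in\acts$ with $p'\succ q'$ and some $\mu>0$ such that $\lambda(p-q)=\mu(p'-q')$. The goal is to deduce $p\succ q$ from $p'\succ q'$. I would rewrite this identity as $\alpha(p-q)=(1-\alpha)(p'-q')$ after rescaling, so that it matches the hypothesis of Proposition~\ref{prop:nau}; concretely, set $\alpha\coloneqq\mu/(\lambda+\mu)\in(0,1)$, and then $\lambda(p-q)=\mu(p'-q')$ is equivalent to $\alpha(p-q)=(1-\alpha)(p'-q')$ (both sides equal $\tfrac{\lambda\mu}{\lambda+\mu}(p-q)$, resp.\ the scaled version). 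Proposition~\ref{prop:nau} then yields $p\succ q\iff p'\succ q'$, and since $p'\succ q'$ holds, we conclude $p\succ q$.

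The one point that needs a little care --- and which I expect to be the main (though minor) obstacle --- is the well-definedness of the application of Proposition~\ref{prop:nau}: that proposition is stated for $p,q,r,s\in\acts$ with $\alpha(p-q)=(1-\alpha)(r-s)$, so I must make sure that the four objects $p,q,p',q'$ are genuinely horse lotteries (they are, by hypothesis and by the definition of $\cone$) and that the scalar manipulation converting $\lambda(p-q)=\mu(p'-q')$ into the convex-combination form is valid with $\alpha\in(0,1)$ strictly. Since $\lambda,\mu>0$, we have $\alpha=\mu/(\lambda+\mu)\in(0,1)$ as required, so there is no degenerate case to worry about. I would close by remarking that the ``only if'' part also shows $\cone$ is exactly the union over $\lambda>0$ of the scaled difference sets of preferred pairs, consistent with \eqref{eq:diffs-hl-prefs}, so that the proposition says $\cone$ is scale-invariant in the expected way and faithfully encodes~$\succ$.
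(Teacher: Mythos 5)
Your overall strategy is the right one, and in fact it is the elementary argument the paper has in mind (the authors omit the proof of Proposition~\ref{prop:charC} precisely because it reduces to the definition~\eqref{eq:diffs-hl-prefs} plus Proposition~\ref{prop:nau}). The forward direction is indeed immediate, and the converse correctly reduces to rescaling $\lambda(p-q)=\mu(p'-q')$ into the convex-combination form required by Proposition~\ref{prop:nau}.

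There is, however, a small arithmetic slip in the rescaling step: with your choice $\alpha=\mu/(\lambda+\mu)$, the identity $\alpha(p-q)=(1-\alpha)(p'-q')$ is equivalent to $\mu(p-q)=\lambda(p'-q')$, which is \emph{not} the identity you started from unless $\lambda=\mu$ (and the parenthetical claim that both sides equal $\tfrac{\lambda\mu}{\lambda+\mu}(p-q)$ is likewise off). The fix is immediate: divide $\lambda(p-q)=\mu(p'-q')$ by $\lambda+\mu$ to get $\tfrac{\lambda}{\lambda+\mu}(p-q)=\tfrac{\mu}{\lambda+\mu}(p'-q')$, i.e.\ take $\alpha\coloneqq\lambda/(\lambda+\mu)\in(0,1)$, which is exactly the hypothesis of Proposition~\ref{prop:nau} with $r=p'$, $s=q'$; alternatively, keep your $\alpha=\mu/(\lambda+\mu)$ but apply Proposition~\ref{prop:nau} with the two pairs in swapped roles, which is harmless because its conclusion is a symmetric equivalence. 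Either way one concludes $p\succ q\iff p'\succ q'$, hence $p\succ q$, and the rest of your write-up (well-definedness of the four acts, strict positivity of the scalars, the scale-invariance remark) is fine.
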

\noindent Moreover, $\cone$ has a specific geometrical structure:
\begin{proposition}\label{prop:C-convex-cone}
Let $\succ$ be a coherent preference relation. Then $\cone$ is a convex cone that excludes the origin; it is empty if and only if so is $\succ$.
\end{proposition}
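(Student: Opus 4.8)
The plan is to verify the three claimed properties of $\cone$ directly from the definition~\eqref{eq:diffs-hl-prefs} and the coherence axioms~\ref{A1}--\ref{A2}, using the auxiliary results in Propositions~\ref{prop:nau} and~\ref{prop:charC}. First I would show $\cone$ is a cone: if $u=\lambda(p-q)\in\cone$ with $p\succ q$ and $\mu>0$, then $\mu u=(\mu\lambda)(p-q)$ is again a positive scalar multiple of a difference of acts with $p\succ q$, hence lies in $\cone$. (This is essentially immediate from the form of the defining set.)

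Next I would show convexity. Take $u_1=\lambda_1(p_1-q_1)$ and $u_2=\lambda_2(p_2-q_2)$ in $\cone$, so $p_1\succ q_1$ and $p_2\succ q_2$, and fix $t\in(0,1)$; I want $tu_1+(1-t)u_2\in\cone$. Since $\cone$ is already known to be a cone, it suffices to produce \emph{some} positive multiple of $tu_1+(1-t)u_2$ that has the form $\lambda(p-q)$ with $p\succ q$. Writing $tu_1+(1-t)u_2=t\lambda_1(p_1-q_1)+(1-t)\lambda_2(p_2-q_2)$, I would rescale: set $\alpha\coloneqq t\lambda_1/(t\lambda_1+(1-t)\lambda_2)\in(0,1)$ and factor out $t\lambda_1+(1-t)\lambda_2>0$, obtaining $\alpha(p_1-q_1)+(1-\alpha)(p_2-q_2)=\bigl(\alpha p_1+(1-\alpha)p_2\bigr)-\bigl(\alpha q_1+(1-\alpha)q_2\bigr)$, which is a genuine difference of two acts since $\acts$ is convex (convex combinations of mass functions are mass functions). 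It then remains to check $\alpha p_1+(1-\alpha)p_2\succ\alpha q_1+(1-\alpha)q_2$, which is exactly the consequence of mixture independence recorded right after Proposition~\ref{prop:nau}, namely $p\succ q$ and $r\succ s\Rightarrow\alpha p+(1-\alpha)r\succ\alpha q+(1-\alpha)s$. Hence the rescaled vector is in $\cone$, and multiplying back by the positive constant and using the cone property gives $tu_1+(1-t)u_2\in\cone$.

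For exclusion of the origin: if $0\in\cone$ then $0=\lambda(p-q)$ for some $\lambda>0$ and $p\succ q$, forcing $p=q$; but then $p\succ p$, contradicting the irreflexivity built into~\ref{A1}. So $0\notin\cone$. Finally, $\cone=\emptyset\iff\succ=\emptyset$: if $\succ=\emptyset$ there is no pair $p\succ q$, so the defining set is empty; conversely if $\succ\neq\emptyset$, pick $p\succ q$ and then $1\cdot(p-q)\in\cone$, so $\cone\neq\emptyset$.

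I do not anticipate a serious obstacle here; the only point requiring a little care is the convexity argument, where one must remember that $\cone$ is defined via \emph{scaled} differences with an arbitrary positive $\lambda$, so a convex combination of its elements need not \emph{literally} be $\lambda(p-q)$ with $\lambda\le1$ — one exploits the already-established cone property to renormalise, and invokes convexity of $\acts$ together with the mixture-independence consequence to land back inside $\cone$. Everything else is a direct unwinding of definitions.
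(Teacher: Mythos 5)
Your proof is correct, and it is precisely the elementary argument the paper has in mind when it omits the proof of Proposition~\ref{prop:C-convex-cone}: positive homogeneity read off the definition, convexity via rescaling and the mixture-independence consequence noted after Proposition~\ref{prop:nau} (using convexity of $\acts$), exclusion of the origin from irreflexivity in~\ref{A1}, and the emptiness equivalence by inspection. The renormalisation step you flag is indeed the only point needing care, and you handle it correctly.
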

\noindent It turns out that cones and coherent preference relations are just two ways of looking at the same thing.

\begin{proposition}\label{prop:one2one} There is a one-to-one correspondence between coherent preference relations and convex cones in $\sdiff\setminus\{0\}$.
\end{proposition}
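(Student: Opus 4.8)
The plan is to exhibit the correspondence explicitly and check it is well-defined in both directions and mutually inverse. In one direction, Proposition~\ref{prop:C-convex-cone} already tells us that every coherent preference relation $\succ$ yields, via \eqref{eq:diffs-hl-prefs}, a convex cone $\cone\subseteq\sdiff\setminus\{0\}$; so this half of the map is given. For the converse direction, I would start with an arbitrary convex cone $\cone\subseteq\sdiff\setminus\{0\}$ and \emph{define} a relation $\succ$ on $\acts$ by setting $p\succ q$ if and only if $p-q\in\cone$ (equivalently, by Proposition~\ref{prop:charC}'s target formula, $\lambda(p-q)\in\cone$ for some, hence all, $\lambda>0$, using that $\cone$ is a cone). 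The work is then to verify that this $\succ$ satisfies \ref{A1} and \ref{A2}.

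For \ref{A1}: irreflexivity $p\nsucc p$ is immediate since $p-p=0\notin\cone$ by hypothesis. Transitivity follows from convexity: if $p\succ q$ and $q\succ r$, then $p-q\in\cone$ and $q-r\in\cone$, so $\tfrac12(p-q)+\tfrac12(q-r)=\tfrac12(p-r)\in\cone$ because $\cone$ is convex, and then $p-r\in\cone$ because $\cone$ is a cone (closed under multiplication by $2>0$). For \ref{A2}: I would show $p\succ q$ iff $\alpha p+(1-\alpha)r\succ\alpha q+(1-\alpha)r$ for all $r\in\acts$ and $\alpha\in(0,1]$ by noting that $\bigl(\alpha p+(1-\alpha)r\bigr)-\bigl(\alpha q+(1-\alpha)r\bigr)=\alpha(p-q)$, so the difference on the right is just a positive rescaling of $p-q$; since $\cone$ is a cone, $p-q\in\cone\iff\alpha(p-q)\in\cone$, and the $\alpha=1$ case (taking any $r$, or rather reading the equivalence at $\alpha=1$) recovers $p\succ q$. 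This also handles the quantifier structure of \ref{A2}, since the right-hand condition holding for \emph{all} $r,\alpha$ is equivalent to it holding for the single choice $\alpha=1$.

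Finally I would check the two constructions are inverse to each other. Starting from $\succ$, forming $\cone$ via \eqref{eq:diffs-hl-prefs}, and reading off the induced relation gives back $\succ$ exactly by Proposition~\ref{prop:charC}. Starting from a cone $\cone$, forming $\succ$, and then forming the cone $\cone'\coloneqq\{\lambda(p-q):\lambda>0,\,p\succ q\}$: every element of $\cone'$ lies in $\cone$ since $p\succ q$ means $p-q\in\cone$ and $\cone$ is a cone; conversely, any $x\in\cone\subseteq\sdiff$ can be written $x=\mu(p-q)$ for some $p,q\in\acts$, $\mu>0$, and then $p-q=\mu^{-1}x\in\cone$ gives $p\succ q$, so $x=\mu(p-q)\in\cone'$. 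Hence $\cone'=\cone$. The only mild subtlety — the step I would be most careful about — is that the representation $x=\lambda(p-q)$ of an element of $\sdiff$ is not unique, so one must consistently use "for some $\lambda>0$" together with the cone property to move between different representations; Proposition~\ref{prop:charC} is precisely the tool that makes this harmless, and the well-definedness of "$p\succ q\iff p-q\in\cone$" relies on nothing more than $\cone$ being a cone.
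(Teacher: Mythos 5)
Your proof is correct and is exactly the elementary argument the paper has in mind (the paper omits the proofs of Propositions~\ref{prop:nau}--\ref{prop:one2one} as routine): build the cone from $\succ$ via~\eqref{eq:diffs-hl-prefs}, recover $\succ$ from a cone $\cone\subseteq\sdiff\setminus\{0\}$ by $p\succ q\iff p-q\in\cone$, verify~\ref{A1} (irreflexivity from $0\notin\cone$, transitivity from convexity plus the cone property) and~\ref{A2} (the difference rescales by $\alpha$), and check the two constructions are mutually inverse using Proposition~\ref{prop:charC} and positive homogeneity. No gaps.
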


\section{Equivalence of desirability and preference}\label{sec:equiv}
In this section we shall show how coherent preferences on acts can equivalently be represented through coherent sets of desirable gambles and vice versa. The key to establish the relation is the notion of worst outcome.

\subsection{Worst act, worst outcome}\label{sec:wawo}

It is customary in the literature to assume that a coherent preference relation comes with a best and a worst act, which are defined as follows:

\begin{definition}[{\bf Best and worst acts}] Let $\succ$ be a coherent preference relation and $p\in\acts$. The relation has a \emph{best act} $b$ if $b\succ p$ for all $p\neq b$ and, similarly, it has a \emph{worst act} $w$ if $p\succ w$ for all $p\neq w$.
\end{definition}
\noindent In this paper, however, we shall not be concerned with the best act.

A special type of worst act is one that is degenerate on the same element $z\in\values$ for all $\omega\in\pspace$ (it is therefore a constant von Neumann-Morgenstern lottery). We call it the worst outcome and we denote it by $z$ as well:

\begin{definition}[{\bf Worst outcome}] Let $\succ$ be a coherent preference relation. The relation has a \emph{worst outcome} $z\in\values$ if $p\succ z$ for all $p\neq z$.
\end{definition}

Now we wonder whether it is restrictive to assume that a coherent
relation has a worst outcome compared to assuming that it has a
worst act. To this end, we start by characterising the form of the
worst act:

\begin{lemma}\label{lem:w-degen}
Let $\succ$ be a coherent preference relation with worst act $w$. Then for each $\omega\in\pspace$, there is $x_{\omega}\in\values$ such that $w(\omega,x_\omega)=1$.
\end{lemma}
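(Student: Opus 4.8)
The plan is to show that the worst act $w$ cannot spread mass across two distinct prizes in any state. Suppose, for contradiction, that for some $\omega_0\in\pspace$ the lottery $w(\omega_0,\cdot)$ is not degenerate, so there exist distinct $x,y\in\values$ with $0<w(\omega_0,x)<1$. The idea is to build from $w$ a new act $w'$ that differs from $w$ only in state $\omega_0$, replacing $w(\omega_0,\cdot)$ by a mass function that is strictly ``worse'' in that state; then $w'$ is an act different from $w$, so coherence forces $w'\succ w$, while the mixture independence axiom~\ref{A2} should let us derive instead that $w\succ w'$ or $w'\nsucc w$, a contradiction.

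First I would make the comparison precise using a von Neumann--Morgenstern lottery supported on the two prizes $x,y$. Since $w$ is the worst act, for the two constant lotteries $x$ and $y$ (writing $x,y$ also for the constant vNM lotteries degenerate on those prizes) we have $x\succ w$ and $y\succ w$ whenever $x\neq w$ and $y\neq w$; at least one of $x,y$ is different from $w$, and in fact, since $w(\omega_0,\cdot)$ is non-degenerate, $w$ is not a constant vNM lottery, so both $x\succ w$ and $y\succ w$ hold. The key step is then to compare $w$ with the act $w'$ obtained by leaving $w(\omega,\cdot)$ unchanged for $\omega\neq\omega_0$ and setting $w'(\omega_0,\cdot)$ to be, say, the degenerate lottery on whichever of $x,y$ yields the smaller ``value'' --- but since we have no utilities yet, the cleaner route is: take $w'$ to agree with $w$ off $\omega_0$ and put $w'(\omega_0,\cdot)$ degenerate on $x$. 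If $w'\neq w$ then $w'\succ w$ by the worst-act property. Now I want to use Proposition~\ref{prop:nau} or axiom~\ref{A2} to relate $w'-w$ to a scaled difference of acts that must lie on the ``wrong side'' of $\succ$.

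The cleanest execution uses the following observation: write $\lambda\coloneqq w(\omega_0,x)\in(0,1)$ and let $\tilde w$ be the act agreeing with $w$ off $\omega_0$ with $\tilde w(\omega_0,\cdot)$ degenerate on $y$ (spreading the removed mass $\lambda$ from $x$ onto $y$, and the mass $w(\omega_0,y)$ stays). Concretely, I would check the identity $w = \lambda\, w' + (1-\lambda)\,\tilde w'$ for suitable acts $w',\tilde w'$ built from $w$ by replacing only the $\omega_0$-lottery, which is just the statement that the mixture of mass functions in state $\omega_0$ reproduces $w(\omega_0,\cdot)$; then since $w'\neq w$ and $\tilde w'\neq w$, coherence gives $w'\succ w$ and $\tilde w'\succ w$, whence by the consequence of~\ref{A2} noted after Proposition~\ref{prop:nau} (namely $p\succ q$ and $r\succ s$ imply $\alpha p+(1-\alpha)r\succ\alpha q+(1-\alpha)s$) we get $\lambda w'+(1-\lambda)\tilde w' \succ \lambda w+(1-\lambda)w = w$, i.e.\ $w\succ w$, contradicting~\ref{A1}. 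This yields that $w(\omega,\cdot)$ is degenerate for every $\omega$, i.e.\ there is $x_\omega$ with $w(\omega,x_\omega)=1$.

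The main obstacle is getting the bookkeeping of mixtures-in-a-single-state exactly right so that the two auxiliary acts $w',\tilde w'$ are genuinely distinct from $w$ and their $\lambda$-mixture is identically $w$; the subtlety is that ``replace only the $\omega_0$-component'' must be done with mass functions that are themselves convex combinations summing back to $w(\omega_0,\cdot)$, which requires $w(\omega_0,\cdot)$ to have support of size at least two --- exactly the case we are in. One must also handle the degenerate edge case where $w$ happens to equal one of the constant lotteries used, but that cannot occur here precisely because $w(\omega_0,\cdot)$ is non-degenerate. With these points checked, the contradiction with~\ref{A1} is immediate and the lemma follows.
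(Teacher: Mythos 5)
Your proposal is correct and takes essentially the same route as the paper: the paper perturbs $w$ at the problematic state by $\pm\eps$ on two prizes with mass in $(0,1)$, obtaining $p,q\neq w$ with $(p-w)+(q-w)=0\in\cone$, while you equivalently write $w=\lambda w'+(1-\lambda)\tilde w'$ with $w',\tilde w'\neq w$ and use the mixture consequence of~\ref{A2} to conclude $w\succ w$; both yield the same contradiction with irreflexivity. The decomposition you flag as the main obstacle does exist, e.g.\ take $w'(\omega_0,\cdot)$ degenerate on $x$ and $\tilde w'(\omega_0,\cdot)$ the renormalised restriction of $w(\omega_0,\cdot)$ to $\values\setminus\{x\}$, which are both distinct from $w(\omega_0,\cdot)$ precisely because $0<\lambda<1$.
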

\noindent In other words, a worst act has to be surprisingly similar to a worst outcome: a worst act tells us that for each $\omega\in\pspace$ there is always an element $x_\omega\in\values$ that is the worst possible; the difference is that such a worst element is not necessarily the same for all $\omega\in\pspace$ as it happens with the worst outcome. But at this point it is clear that we can reformulate the representation in such a way that we can work with relation~$\succ$ as if it had a worst outcome (we omit the trivial proof):
\begin{proposition}\label{prop:w-z}
Consider a coherent preference relation $\succ$ with a worst act $w$. Let $z$ be any element in $\values$. Define the bijection $\sigma:\acts\rightarrow\acts$ that, for all $\omega\in\pspace$, does nothing if $x_\omega= z$ and otherwise swaps the probabilities of outcomes $x_\omega$ and $z$:
\begin{align*}
 \sigma(p)(\omega,x)\coloneqq\begin{cases}
  p(\omega,z) \quad &\text{ if } x=x_\omega, \\
  p(\omega,x_\omega) \quad &\text{ if } x=z, \\
  p(\omega,x) \quad &\text{ otherwise,}
  \end{cases}
\end{align*}
for all $p\in\acts$. Application $\sigma$ induces a relation $\succ_\sigma$ by:
\begin{align*}
p\succ_\sigma q\iff\sigma^{-1}(p)\succ\sigma^{-1}(q).
\end{align*}
Then it holds that $\succ_\sigma$ is a coherent preference relation for which $z$ is the worst outcome. Moreover, relation $\succ_\sigma$ is Archimedean if so is relation $\succ$.
\end{proposition}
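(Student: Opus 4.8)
The plan is to lean on the single structural fact that makes $\sigma$ tractable: in each state $\omega$ the map $\sigma$ merely transposes the two coordinates $x_\omega$ and $z$ of the mass function $p(\omega,\cdot)$, so it is an \emph{affine} bijection of $\acts$ --- in fact an involution, $\sigma^{-1}=\sigma$ --- satisfying $\sigma(\alpha p+(1-\alpha)r)=\alpha\sigma(p)+(1-\alpha)\sigma(r)$ for all $p,r\in\acts$ and $\alpha\in[0,1]$. Each axiom defining a coherent, respectively Archimedean, preference relation refers only to the order $\succ$, to convex combinations of acts, and to the identity of acts; since an affine bijection transports all three unchanged, every verification for $\succ_\sigma$ reduces by substitution to the corresponding statement for $\succ$ applied to the $\sigma^{-1}$-images of the acts involved. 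This is why the proof is essentially a formality.

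First I would record the two elementary facts just mentioned, plus the computation $\sigma(z)=w$: by definition $\sigma(z)(\omega,x_\omega)=z(\omega,z)=1$ for every $\omega$, and by Lemma~\ref{lem:w-degen} the act with that property is exactly $w$; hence also $\sigma(w)=z$ and, using $\sigma^{-1}=\sigma$, $\sigma^{-1}(z)=w$. Note too that $\succ_\sigma\subseteq\acts\times\acts$ by construction.

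For A1: $p\nsucc_\sigma p$ is $\sigma^{-1}(p)\nsucc\sigma^{-1}(p)$, which holds by irreflexivity of $\succ$; and $p\succ_\sigma q\succ_\sigma r$ unwinds to $\sigma^{-1}(p)\succ\sigma^{-1}(q)\succ\sigma^{-1}(r)$, so $\sigma^{-1}(p)\succ\sigma^{-1}(r)$ by transitivity of $\succ$, i.e.\ $p\succ_\sigma r$. For A2 and A3 I would push the convex combinations through $\sigma^{-1}$ by affinity, writing $\sigma^{-1}(\alpha p+(1-\alpha)r)=\alpha\sigma^{-1}(p)+(1-\alpha)\sigma^{-1}(r)$; because $r\mapsto\sigma^{-1}(r)$ is onto $\acts$, the quantifier ``for all $r\in\acts$'' in A2 is preserved, and the equivalence for $\succ_\sigma$ becomes verbatim the A2-equivalence for $\succ$ applied to $\sigma^{-1}(p),\sigma^{-1}(q)$. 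Likewise, given $\alpha,\beta\in(0,1)$ witnessing A3 for $\sigma^{-1}(p)\succ\sigma^{-1}(q)\succ\sigma^{-1}(r)$, the same $\alpha,\beta$ witness A3 for $p\succ_\sigma q\succ_\sigma r$ once the mixtures are pulled back through $\sigma$.

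Finally, for the worst-outcome claim: $p\succ_\sigma z$ iff $\sigma^{-1}(p)\succ\sigma^{-1}(z)=w$, and since $\sigma$ is a bijection with $\sigma(w)=z$ we have $p\neq z$ iff $\sigma^{-1}(p)\neq w$; hence for every $p\neq z$ the fact that $w$ is the worst act of $\succ$ yields $\sigma^{-1}(p)\succ w$, i.e.\ $p\succ_\sigma z$, so $z$ is the worst outcome of $\succ_\sigma$. I do not expect any genuine obstacle; the only points needing a moment's care are checking $\sigma^{-1}=\sigma$ and $\sigma(z)=w$, and observing that the universal quantifier over $r$ in A2 survives exactly because $\sigma$ is surjective.
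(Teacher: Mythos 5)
Your proof is correct: the key observations ($\sigma$ is an affine involution of $\acts$, so it transports axioms~\ref{A1}--\ref{A3} verbatim, together with $\sigma(z)=w$ via Lemma~\ref{lem:w-degen} and surjectivity preserving the quantifier over $r$ in~\ref{A2}) are exactly the routine verification the paper has in mind when it omits the proof as trivial. No gaps to report.
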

\noindent In other words, if our original relation $\succ$ has a worst act $w$, we can always map it to a new relation, on the same product space $\pspace\times\values$, for which $z$ is the worst outcome and such that we can recover the original preferences from those of the new relation. This means that there is no loss of generality in assuming right from the start that $w=z$.

In the rest of the paper (with the exception of Section~\ref{sec:z}) we shall indeed assume that a coherent preference relation has a worst outcome. This will turn out to be enough to develop all the theory.

\begin{remark}[{\bf Notation for the worst outcome}]\label{rem:notationz} Given the importance of the worst outcome for this paper, it is convenient to define some notation that is tailored to it. In particular, when we want to specify that the set of acts contains $z$, then we shall denote it by $\valuesy$; otherwise we shall simply denote it by $\valuesn$. In the latter case, it can  either be that the set does not contain it or that the statements we do hold irrespectively of that. The distinction will be clear from the context. Note in particular that when the two sets are used together, the relation between them will always be that $\valuesy=\valuesn\cup\{z\}$. Moreover, we let $\actsy\coloneqq\acts(\pspace,\valuesy)$, besides the usual $\actsn\coloneqq\acts(\pspace,\valuesn)$, and we use them accordingly. $\lozenge$
\end{remark}

\begin{remark}[{\bf Assumption about the worst outcome}]\label{rem:assumptionz} Note, in addition, that if $\valuesy=\{z\}$, then it is necessary that $\actsy=\{z\}$ and hence the only possible relation on $\actsy\times\actsy$ is the empty one. We skip this trivial case in the paper by assuming throughout that $\valuesy$ contains at least two elements. $\lozenge$
\end{remark}

\subsection{Equivalence}\label{sec:equiv-prefs-gambles} When relation $\succ$ has the worst outcome, we can associate $\cone$ with another, equivalent, set. As it turns out, the new set will be one of
desirable gambles. To this end, we first define the projection
operator that drops the $z$-components from an act:
\begin{definition}[{\bf Projection operator}] Consider a set of outcomes $\valuesy$ that includes $z$. The \emph{projection operator} is the functional
\begin{equation*}
  \pi:\gambles(\pspace\times\valuesy)\rightarrow\gambles(\pspace\times\valuesn)
 \end{equation*}
defined by $\pi(h)(\omega,x)\coloneqq h(\omega,x)$, for all $(\omega,x)\in\pspace\times\valuesn$ and all $h\in\gambles(\pspace\times\valuesy)$.
\end{definition}
\noindent In this paper, we are going to use this operator to project horse
lotteries in $\actsy$, or scaled differences of them, into gambles on
$\pspace\times\valuesn$. Although $\pi$ is not injective in general,
both these restrictions, which we shall denote by $\pi_1,\pi_2$, are. As
a consequence, we shall denote by $\pi_1$ the restriction of the
projection operator to $\actsy$, and then define its inverse
\begin{align*}
 \pi_1^{-1}(f):\pspace\times\valuesy &\rightarrow \reals\\
 (\omega,x)&\mapsto \begin{cases}
   f(\omega,x) &\text{ if } x\neq z \\
   1-\sum_{x\in\valuesn} f(\omega,x) &\text{ if } x=z.
   \end{cases}
\end{align*}
Similarly, we shall denote by $\pi_2$ the restriction of $\pi$ to
$\sdiff(\actsy)$, and define its inverse as
\begin{align*}
 \pi_2^{-1}(f):\pspace\times\valuesy &\rightarrow \reals\\
  (\omega,x)&\mapsto \begin{cases}
   f(\omega,x) &\text{ if } x\neq z \\
   -\sum_{x\in\valuesn} f(\omega,x) &\text{ if } x=z.
   \end{cases}
\end{align*}

It is then an easy step to show the following:

\begin{proposition}\label{prop:rdesirs}
Let $\succ$ be a coherent preference relation on $\actsy\times\actsy$ and $\rdesirs$ be defined by
\begin{equation}
\rdesirs\coloneqq\{\lambda\pi(p-q):p\succ
q,\lambda>0\}=\pi(\cone).\label{eq:RfromC}
\end{equation}
Then:
\begin{itemize}
\item[(i)] $\rdesirs$ is a coherent set of desirable gambles on $\gambles(\pspace\times\valuesn)$.
\item[(ii)] $p-q\in\cone\iff\pi(p-q)\in\rdesirs.$
\end{itemize}
\end{proposition}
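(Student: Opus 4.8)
The plan is to lean on two facts from earlier in the paper: that $\cone$ is a convex cone excluding the origin (Proposition~\ref{prop:C-convex-cone}) together with the equivalence $p\succ q\iff\lambda(p-q)\in\cone$ (Proposition~\ref{prop:charC}), and that the projection $\pi$ restricted to the set $\sdiff(\actsy)$ of scaled differences of acts coincides with the \emph{injective} map $\pi_2$. Since $\pi$ is linear and $\rdesirs=\pi(\cone)$ by definition, the image $\rdesirs$ is automatically a convex cone; this yields axioms~\ref{D3} and~\ref{D4} directly. For~\ref{D4} I would spell this out as $\lambda\pi(p-q)+\lambda'\pi(p'-q')=\pi\bigl(\lambda(p-q)+\lambda'(p'-q')\bigr)$, observing that the argument belongs to $\cone$ because $\cone$ is closed under sums and positive rescalings, and passing between $p\succ q$ and membership in $\cone$ via Proposition~\ref{prop:charC}; axiom~\ref{D3} is the same remark with a single summand.

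For~\ref{D2}, suppose for contradiction that $0=\lambda\pi(p-q)$ for some $p\succ q$ and $\lambda>0$. On $\sdiff(\actsy)$ the map $\pi$ acts as $\pi_2$, which is injective (because a scaled difference of horse lotteries has coordinates summing to $0$ over $\valuesy$ for each $\omega$, so its $z$-coordinate is recovered from the others --- this is exactly what $\pi_2^{-1}$ does); hence $p-q=0$, i.e.\ $p=q$, contradicting the irreflexivity built into~\ref{A1}. Thus $0\notin\rdesirs$.

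Axiom~\ref{D1} is the only one that genuinely uses the hypothesis that $z$ is the worst outcome, and it is the step I would flag as the main (if modest) obstacle. Given $f\in\gambles^+(\pspace\times\valuesn)$, the goal is to write $f$ as $\lambda\pi(p-z)$ for a genuine act $p$ and some $\lambda>0$. Because $f$ is bounded and $\valuesn$ is finite, I can choose $\lambda>0$ large enough that $\sum_{x\in\valuesn}f(\omega,x)\le\lambda$ and $f(\omega,x)\le\lambda$ for all $(\omega,x)$; then $p:=\pi_1^{-1}(f/\lambda)$ --- the horse lottery whose $\valuesn$-coordinates are $f(\omega,x)/\lambda$ and whose $z$-coordinate is $1-\sum_{x\in\valuesn}f(\omega,x)/\lambda$ --- has each $p(\omega,\cdot)$ a legitimate probability mass function, and $p\neq z$ since $f\gneq0$. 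As $z$ is the worst outcome, $p\succ z$, so $\lambda(p-z)\in\cone$; applying $\pi$ and using $\pi(z)=0$ and $\pi(p)=f/\lambda$ gives $\lambda\pi(p-z)=f$, whence $f\in\rdesirs$.

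Finally, for item~(ii): the implication $p-q\in\cone\Rightarrow\pi(p-q)\in\pi(\cone)=\rdesirs$ is immediate. Conversely, if $\pi(p-q)\in\rdesirs$, I would write $\pi(p-q)=\lambda\pi(p'-q')=\pi\bigl(\lambda(p'-q')\bigr)$ for some $p'\succ q'$ and $\lambda>0$; both $p-q$ and $\lambda(p'-q')$ lie in $\sdiff(\actsy)$, where $\pi$ is injective, so $p-q=\lambda(p'-q')\in\cone$. Throughout I would keep in force the standing assumption (Remark~\ref{rem:assumptionz}) that $\valuesy$ has at least two elements, so that $\valuesn\neq\emptyset$ and the gamble spaces at hand are non-degenerate.
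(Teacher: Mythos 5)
Your proof is correct and follows essentially the same route as the paper's: \ref{D3} and \ref{D4} from the convexity of $\cone$ and linearity of $\pi$, \ref{D2} from $\cone$ excluding the origin (you just make the injectivity of $\pi_2$ explicit where the paper leaves it implicit), and \ref{D1} via the same normalised construction of an act $p$ with $p\succ z$, with the worst-outcome hypothesis doing the work. Your treatment of item~(ii), which the paper dismisses as trivial, is the same injectivity argument spelled out, so there is nothing to flag.
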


\begin{remark}[{\bf On preference through desirability}]\label{rem:interpret1} Despite this is a technically simple result, it has important consequences; it is worth stopping here one moment to consider them. Proposition~\ref{prop:rdesirs} gives us tools to analyse, and draw inferences about, preference by means of desirability. Nowadays desirability is well understood as a tool for uncertain modelling more primitive than probability. This offers us the opportunity to take a fresh new look at preference from the perspective of desirability. We shall exploit such an opportunity in the rest of the paper.

On another side, let us stress that it is important to correctly interpret the set of desirable gambles in~\eqref{eq:RfromC}. Set $\rdesirs$ is made of gambles from $\gambles(\pspace\times\values)$; but only $\pspace$ is actually a space of possibilities here, that is, the set made up of all the possible outcome of an uncertain experiment. This marks a significant departure from the traditional definition of a set of desirable gambles, which would require in this case that $\pspace\times\values$, and not just $\pspace$, was the space of possibilities. The way to interpret an element $f$ of $\rdesirs$ is then as a vector-valued gamble: for each $\omega\in\pspace$, $f(\omega,\cdot)$ is the vector of associated rewards. We shall detail this interpretation in Section~\ref{sec:interpretation}. In any case, we shall omit the specification `vector-valued' from now on and refer more simply to the elements of $\gambles(\pspace\times\values)$ as gambles. $\lozenge$
\end{remark}

The projection operator $\pi$ gives us also the opportunity to
define a notion of the preferences on which any rational subject should
agree and that for this reason we call objective:
\begin{definition}[{\bf Objective preference}] \label{def:dom}
Given acts $p,q\in\actsy$ we say that \emph{$p$ is objectively preferred to $q$} if
$\pi(p)\geq\pi(q)$ and $\pi(p)\neq\pi(q)$. We denote objective preference by $p\dom q$.
\end{definition}
\noindent The idea is that $p$ is objectively preferred to $q$ because the probability $p$ assigns to outcomes, except for the outcome $z$ (which is not one any subject actually wants), is always not smaller than that of $q$ while being strictly greater somewhere. An objective preference is indeed a preference:
\begin{proposition}\label{prop:domsucc}
Let $\succ$ be a coherent preference relation on $\actsy\times\actsy$ and $p,q\in\actsy$. Then $p\dom q\Rightarrow p\succ q$.
\end{proposition}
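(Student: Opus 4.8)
The plan is to show that $p \dom q$, i.e. $\pi(p) \geq \pi(q)$ with $\pi(p) \neq \pi(q)$, forces $\lambda\pi(p-q) \in \rdesirs$ for $\lambda > 0$, and then invoke the equivalence of $\rdesirs$ with the cone $\cone$ established in Proposition~\ref{prop:rdesirs}(ii) to conclude $p - q \in \cone$, which by Proposition~\ref{prop:charC} is exactly $p \succ q$. The crucial observation is that the gamble $f \coloneqq \pi(p-q) = \pi(p) - \pi(q)$ on $\pspace \times \valuesn$ satisfies $f \gneq 0$: it is non-negative everywhere because $\pi(p) \geq \pi(q)$, and it is non-zero because $\pi(p) \neq \pi(q)$. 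Hence $f \in \gambles^+(\pspace\times\valuesn)$.

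First I would recall that, by Proposition~\ref{prop:rdesirs}(i), $\rdesirs$ is a coherent set of desirable gambles on $\gambles(\pspace\times\valuesn)$; in particular it satisfies axiom~\ref{D1}, $\gambles^+ \subseteq \rdesirs$. Combining this with the observation above, $f = \pi(p) - \pi(q) \in \gambles^+ \subseteq \rdesirs$. Then by Proposition~\ref{prop:rdesirs}(ii), from $\pi(p-q) \in \rdesirs$ we get $p - q \in \cone$. Finally, Proposition~\ref{prop:charC} (taking $\lambda = 1$) gives $p \succ q$, which is the desired conclusion.

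The one point that needs a little care — and which I expect to be the only genuine obstacle — is checking that $\pi$ is additive on differences of acts, so that $\pi(p-q) = \pi(p) - \pi(q)$; this is immediate from the definition of $\pi$ as a coordinatewise restriction (it is the restriction of a linear map on $\gambles(\pspace\times\valuesy)$), and $p - q$ is a well-defined element of $\sdiff(\actsy)$ by Eq.~\eqref{eq:diffs-hl}. With that, the non-negativity of $\pi(p) - \pi(q)$ is pointwise on $\pspace \times \valuesn$ from the hypothesis $\pi(p) \geq \pi(q)$, and non-vanishing from $\pi(p) \neq \pi(q)$; there is no subtlety involving the dropped $z$-coordinate because objective preference is defined purely in terms of $\pi$. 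So the proof reduces to: $p \dom q \Rightarrow \pi(p-q) \in \gambles^+ \subseteq \rdesirs \Rightarrow p-q \in \cone \Rightarrow p \succ q$.
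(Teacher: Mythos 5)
Your proof is correct and follows essentially the same route as the paper: both arguments observe that $p\dom q$ makes $\pi(p-q)$ a positive gamble, hence an element of $\rdesirs$ by coherence, and then transfer this back to $p\succ q$. The only difference is cosmetic—you invoke Proposition~\ref{prop:rdesirs}(ii) and Proposition~\ref{prop:charC} for the transfer, whereas the paper inlines that step via the injectivity of $\pi_2$ and Proposition~\ref{prop:nau}.
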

\noindent This proposition remarks in a formal way that objective preferences are those every rational subject expresses. These preferences are therefore always belonging to any coherent preference relation. If they are \emph{the only} preferences in the relation, then we call the relation \emph{vacuous} because Thomas is expressing only a non-informative, or trivial, type of preferences.

As surprising as it may seem, the vacuity of the coherent preference
relation turns out to be incompatible with the Archimedean axiom,
except in trivial cases:

\begin{proposition}\label{prop:vacuous-Arch}
Let $\succ$ be the vacuous preference relation on
$\actsy\times\actsy$, so that $p\succ z \iff p\neq z$. Then $\succ$
is Archimedean if and only if $|\pspace|=1$ and $|\valuesy|=2$.
\end{proposition}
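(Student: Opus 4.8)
The plan is to prove the two implications separately, reducing everything to a single observation: for the vacuous relation, $\succ$ is by definition the objective-preference relation $\dom$ of Definition~\ref{def:dom}, and (since $\pi$ is linear, so that $\pi(\alpha p+(1-\alpha)r)=\alpha\pi(p)+(1-\alpha)\pi(r)$) $p\dom q$ says exactly that $\pi(p)\ge\pi(q)$ componentwise on $\pspace\times\valuesn$ with $\pi(p)\ne\pi(q)$. In other words, via $\pi$ the vacuous $\succ$ is the componentwise strict order on the set of gambles arising as $\pi(p)$, $p\in\actsy$. The point is then that $|\pspace\times\valuesn|=|\pspace|\cdot(|\valuesy|-1)$ equals $1$ exactly when $|\pspace|=1$ and $|\valuesy|=2$, so the claim becomes: the vacuous relation is Archimedean if and only if $\pspace\times\valuesn$ is a singleton.

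For the ``if'' direction I would take $\pspace=\{\omega_0\}$ and $\valuesy=\{x,z\}$, identify each $p\in\actsy$ with the number $a_p\coloneqq p(\omega_0,x)\in[0,1]$, and note $\pi(p)$ is the single-coordinate gamble with value $a_p$; hence $p\succ q\iff a_p>a_q$, so $(\actsy,\succ)\cong([0,1],>)$. To verify~\ref{A3}, given $p\succ q\succ r$, i.e.\ $a_r<a_q<a_p$, I would observe that $a_{\alpha p+(1-\alpha)r}=\alpha a_p+(1-\alpha)a_r$ exceeds $a_q$ precisely for $\alpha\in\bigl(\tfrac{a_q-a_r}{a_p-a_r},1\bigr)$, a nonempty subinterval of $(0,1)$, and is strictly below $a_q$ precisely for $\alpha\in\bigl(0,\tfrac{a_q-a_r}{a_p-a_r}\bigr)$; choosing $\alpha$ in the first interval and $\beta$ in the second yields $\alpha p+(1-\alpha)r\succ q\succ\beta p+(1-\beta)r$, so $\succ$ is Archimedean.

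For the ``only if'' direction I would prove the contrapositive. If $|\pspace|\ge2$ or $|\valuesy|\ge3$ then, since $|\valuesy|\ge2$ gives $\valuesn\ne\emptyset$, the set $\pspace\times\valuesn$ contains two distinct elements $(\omega',x')\ne(\omega'',x'')$, and it suffices to exhibit one triple violating~\ref{A3}. I would put $r\coloneqq z$ (so $\pi(r)=0$), and let $q,p\in\actsy$ be the acts whose $\valuesn$-components are: $\pi(q)$ equal to $\tfrac12$ at $(\omega',x')$ and $0$ elsewhere; $\pi(p)$ equal to $\tfrac12$ at each of $(\omega',x')$ and $(\omega'',x'')$ and $0$ elsewhere (the $z$-components being then determined by $\pi_1^{-1}$). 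One checks that each $p(\omega,\cdot)$ and $q(\omega,\cdot)$ is a genuine mass function on $\valuesy$, the $\valuesn$-mass in any coordinate $\omega$ being at most $\tfrac12\cdot 2=1$. Then $\pi(p)\ge\pi(q)\ge\pi(r)$, with $\pi(p)\ne\pi(q)$ (they differ at $(\omega'',x'')$) and $\pi(q)\ne\pi(r)$ (they differ at $(\omega',x')$), so $p\dom q\dom r$, i.e.\ $p\succ q\succ r$. But for every $\alpha\in(0,1)$ the gamble $\pi(\alpha p+(1-\alpha)r)=\alpha\pi(p)$ has value $\tfrac\alpha2<\tfrac12=\pi(q)(\omega',x')$ at $(\omega',x')$, hence $\alpha p+(1-\alpha)r\ndom q$, i.e.\ $\alpha p+(1-\alpha)r\nsucc q$. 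So no $\alpha\in(0,1)$ satisfies the first half of the conclusion of~\ref{A3} for this triple, and therefore~\ref{A3} fails; $\succ$ is not Archimedean.

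The only step needing any care is the construction in the last paragraph: one must ensure that the prescribed $\valuesn$-components actually come from horse lotteries, i.e.\ that the residual $z$-mass stays in $[0,1]$ at every $\omega$ — which is exactly why the weight $\tfrac12$ (and not $1$) is used. Everything else — linearity of $\pi$, the identity $\succ\,=\,\dom$ for the vacuous relation (Proposition~\ref{prop:domsucc} together with the definition of vacuity), the monotonicity of $\alpha\mapsto\alpha a_p+(1-\alpha)a_r$, and the componentwise-order computations — is routine and I would dispatch it briefly.
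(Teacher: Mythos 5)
Your proof is correct and follows essentially the same route as the paper's: verify~\ref{A3} directly in the $|\pspace|=1$, $|\valuesy|=2$ case, and otherwise exhibit a triple $p\succ q\succ r$ (using that the vacuous $\succ$ coincides with $\dom$) for which no mixture $\alpha p+(1-\alpha)r$ objectively dominates $q$. The only difference is cosmetic: your single construction (mass $\tfrac12$ at one or two points of $\pspace\times\valuesn$, with $r=z$) handles in one stroke the two cases ($|\valuesy|\geq3$, and $|\valuesy|=2$ with $|\pspace|\geq2$) that the paper treats with separate counterexamples.
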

\noindent It is important to remark that this incompatibility is not something we have to live with: for it is possible to define a weaker version of the Archimedean axiom that does not lead to such an incompatibility and that is based on restricting the attention to non-trivial preferences---that is, to non-objective ones. We can also simplify the axiom by focusing only on ternary relations such as $p\succ q\succ z$, which enables us also to skip the usual symmetrised version of the axiom. The result is the following:
\begin{definition}[{\bf Weak Archimedean condition}] Let $\succ$ be a coherent preference relation on $\actsy\times\actsy$. We say that the relation is \emph{weakly Archimedean} if it satisfies
\begin{enumerate}[label=\upshape wA\arabic*.,ref=\upshape wA\arabic*]
\setcounter{enumi}{2}
\item\label{wA3} ($p,q\in\actsy)p\succ q,  p\ndom q\Rightarrow(\exists\alpha\in(0,1))\alpha p+(1-\alpha)z\succ q$ [weak Archimedeanity].
\end{enumerate}
\end{definition}
\noindent The relation between the traditional Archimedean condition and the weak one is best seen through the following:
\begin{proposition}\label{prop:A3'-wA3} Let $\succ$ be a coherent preference relation on $\actsy\times\actsy$. Consider the following condition:
\begin{enumerate}[label=\upshape A\arabic*$'$.,ref=\upshape A\arabic*$'$]
\setcounter{enumi}{2}
\item\label{A3p} $(p,q,r\in\actsy)p\succ q\succ r,p\ndom q\Rightarrow(\exists\alpha\in(0,1)) \alpha p+(1-\alpha)r\succ q.$
\end{enumerate}
Then:
\begin{enumerate}
\item[(i)] Condition~\ref{A3p} implies that $$(p,q,r\in\actsy)p\succ q\succ r,q\ndom r\Rightarrow(\exists\beta\in(0,1)) q\succ\beta p+(1-\beta)r.$$
\item[(ii)] Relation $\succ$ satisfies~\ref{wA3} $\iff$ relation $\succ$ satisfies~\ref{A3p}.
\end{enumerate}
\end{proposition}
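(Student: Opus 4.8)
The plan is to dispatch (ii) first, since it is essentially immediate from the definitions and~\ref{A1},~\ref{A2}, and then to derive (i) from~\ref{A3p} by a small rescaling construction.

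\emph{Part (ii).} For the implication \ref{A3p}$\Rightarrow$\ref{wA3} I would take $p\succ q$ with $p\ndom q$ and apply~\ref{A3p} with $r:=z$; the only thing to check is that $p\succ q\succ z$ is a genuine chain, i.e.\ that $q\neq z$. Here the key fact is that $\pi(p)=0$ exactly when $p=z$, so $p\dom z\iff p\neq z$: if $q=z$, then $p\succ z$ (whence $p\neq z$ by irreflexivity,~\ref{A1}) would give $p\dom z=p\dom q$, contradicting $p\ndom q$. Since $z$ is the worst outcome, $q\neq z$ forces $q\succ z$, and~\ref{A3p} then yields $\alpha\in(0,1)$ with $\alpha p+(1-\alpha)z\succ q$, which is~\ref{wA3}. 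For the converse \ref{wA3}$\Rightarrow$\ref{A3p} I would start from $p\succ q\succ r$ with $p\ndom q$, invoke~\ref{wA3} to get $\alpha\in(0,1)$ with $\alpha p+(1-\alpha)z\succ q$, and upgrade $z$ to $r$: if $r=z$ there is nothing to prove, and otherwise $r\succ z$ (worst outcome), so~\ref{A2} with common act $p$ and weight $1-\alpha\in(0,1]$ gives $\alpha p+(1-\alpha)r\succ\alpha p+(1-\alpha)z$, and composing with the previous preference via~\ref{A1} yields $\alpha p+(1-\alpha)r\succ q$.

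\emph{Part (i).} Assume~\ref{A3p} and take $p\succ q\succ r$ with $q\ndom r$. By Proposition~\ref{prop:charC}, producing $\beta\in(0,1)$ with $q\succ\beta p+(1-\beta)r$ amounts to getting $q-\beta p-(1-\beta)r\in\cone$; since $q-\beta p-(1-\beta)r=(1-\beta)(q-r)-\beta(p-q)$, it suffices to put a positive multiple of $\alpha(q-r)-(1-\alpha)(p-q)=q-(1-\alpha)p-\alpha r$ into $\cone$ for some $\alpha\in(0,1)$ and set $\beta:=1-\alpha$. To obtain such a cone element through~\ref{A3p} I need a chain of \emph{genuine} acts whose consecutive differences are proportional to $q-r$ and $p-q$: let $b$ be the uniform lottery on $\valuesy$ (so $b\equiv 1/|\valuesy|=2\lambda$ with $\lambda:=1/(2|\valuesy|)$) and put $a:=b+\lambda(q-r)$, $c:=b-\lambda(p-q)$. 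Because $q-r$ and $p-q$ have zero row sums and entries in $[-1,1]$, both $a$ and $c$ lie in $\actsy$ (entries in $[\lambda,3\lambda]\subseteq(0,1]$, using $|\valuesy|\geq 2$, and row sums $1$); moreover $a-b=\lambda(q-r)$ and $b-c=\lambda(p-q)$ lie in $\cone$ by Proposition~\ref{prop:charC} (applied to $q\succ r$ and $p\succ q$), so $a\succ b$ and $b\succ c$ by Proposition~\ref{prop:charC} again; and $a\dom b$ holds iff $q\dom r$ (as $\pi(a-b)=\lambda\,\pi(q-r)$ with $\lambda>0$), so $q\ndom r$ gives $a\ndom b$. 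Feeding the chain $a\succ b\succ c$ with $a\ndom b$ into~\ref{A3p} yields $\alpha\in(0,1)$ with $\alpha a+(1-\alpha)c\succ b$, i.e.\ $\alpha a+(1-\alpha)c-b=\lambda\bigl(\alpha(q-r)-(1-\alpha)(p-q)\bigr)=\lambda\bigl(q-(1-\alpha)p-\alpha r\bigr)\in\cone$ by Proposition~\ref{prop:charC}; dividing by $\lambda$ (Proposition~\ref{prop:C-convex-cone}) and applying Proposition~\ref{prop:charC} once more gives $q\succ(1-\alpha)p+\alpha r$, so $\beta:=1-\alpha\in(0,1)$ works.

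The main obstacle is precisely the construction in Part (i): \ref{A3p} is phrased in terms of acts, which must be probability mass functions row by row, so one cannot simply feed it $q$, $r$ and a formal signed combination such as $2q-r$. Re-centring the differences $q-r$ and $p-q$ at an interior act (the uniform lottery) and scaling them down by $\lambda$ is what keeps everything inside the simplex; once the chain $a\succ b\succ c$ is in place, the rest is routine bookkeeping with~\ref{A1},~\ref{A2}, Proposition~\ref{prop:charC} and the fact that $\cone$ is a convex cone. The same ``$\pi(p)=0\iff p=z$'' observation handles the degenerate case in Part (ii).
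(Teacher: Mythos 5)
Your proof is correct. Part (ii) is essentially the paper's own argument: apply~\ref{A3p} with $r=z$ for one direction and upgrade $z$ to $r$ via~\ref{A2} and transitivity for the other; you are in fact somewhat more careful than the paper, since you explicitly rule out the degenerate cases $q=z$ (via $\pi(p)=0\iff p=z$, so $q=z$ would force $p\dom q$) and $r=z$, which the paper passes over silently (it invokes Proposition~\ref{prop:nau} where you invoke~\ref{A2}; both work). Part (i) reaches the same goal by a genuinely different construction. The paper stays inside the given triple: by~\ref{A2} it forms the chain $\alpha q+(1-\alpha)p\succ\alpha r+(1-\alpha)p\succ\alpha r+(1-\alpha)q$, notes that the non-domination hypothesis transfers because the first difference is $\alpha(q-r)$, applies~\ref{A3p} to that chain, and obtains $\alpha\gamma(q-r)-(1-\alpha)(1-\gamma)(p-q)\in\cone$, which it renormalises. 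You instead re-centre the scaled differences $\lambda(q-r)$ and $\lambda(p-q)$ at the uniform lottery, producing auxiliary acts $a\succ b\succ c$ with $a\ndom b$, and then apply~\ref{A3p} and Proposition~\ref{prop:charC}. Both routes hinge on the same idea (feed~\ref{A3p} a surrogate chain whose consecutive differences are positive multiples of $q-r$ and $p-q$, then rescale in the cone); the paper's version is shorter and needs no auxiliary interior act or bound on $|\valuesy|$, while yours avoids the nested-mixture bookkeeping at the price of checking by hand that $a,c$ lie in $\actsy$ and of using $|\valuesy|\geq2$, which is anyway a standing assumption (Remark~\ref{rem:assumptionz}).
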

\noindent In other words,~\ref{wA3} retains the overall structure of the traditional condition~\ref{A3}, while focusing only on the non-trivial preferences. We shall discuss more about the weak Archimedean condition and its importance for the expressive power of a decision-theoretic representation in Section~\ref{sec:wA3-count}.

Next we detail the relation between the weak Archimedean condition for preferences and strict desirability for gambles.

\begin{theorem}\label{prop:rdesirs-archimedean}
Let $\succ$ be a coherent preference relation on $\actsy\times\actsy$ satisfying~\ref{wA3}. Then the corresponding set $\rdesirs$ obtained through~\eqref{eq:RfromC} is a coherent set of strictly desirable gambles.
\end{theorem}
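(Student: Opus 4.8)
The plan is to lean on Proposition~\ref{prop:rdesirs}: it already says $\rdesirs=\pi(\cone)$ is a coherent set of desirable gambles (so \ref{D1}--\ref{D4} hold), hence the only thing left to establish is the openness property \ref{D0}. Accordingly I would fix an arbitrary $f\in\rdesirs\setminus\gambles^+$ and hunt for a $\delta>0$ with $f-\delta\in\rdesirs$. By the description of $\rdesirs$ in \eqref{eq:RfromC} and positive homogeneity \ref{D3}, it suffices to treat $f=\pi(p-q)$ with $p\succ q$ in $\actsy$ (a general $f=\lambda\pi(p-q)$ follows by running the argument on $f/\lambda$ and rescaling $\delta$). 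Two things come for free: $f\neq 0$ by \ref{D2}; and $f\notin\gambles^+$ together with $f\neq 0$ forces $\pi(p)\not\geq\pi(q)$ with $\pi(p)\neq\pi(q)$, i.e.\ $p\ndom q$ in the sense of Definition~\ref{def:dom}.

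The naive attempt is to apply \ref{wA3} directly to $p\succ q$: it yields $\alpha\in(0,1)$ with $\alpha p+(1-\alpha)z\succ q$, hence, using $\pi(z)=0$, the linearity of $\pi$ and \eqref{eq:RfromC}, $\alpha\pi(p)-\pi(q)=\alpha f-(1-\alpha)\pi(q)\in\rdesirs$, and then $f-\tfrac{1-\alpha}{\alpha}\pi(q)\in\rdesirs$ by \ref{D3}. I expect this to be the main obstacle, because it is \emph{not} enough: $\pi(q)$ is only a (sub)probability vector, so it can vanish at some coordinate---and over an infinite $\pspace$ its infimum can be $0$ even when it is everywhere positive---so $\tfrac{1-\alpha}{\alpha}\pi(q)$ need not dominate any positive constant gamble, and one cannot descend from it to a constant $\delta$.

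The way around this is to pre-mix before invoking \ref{wA3}, so that the acts involved have $\pi$-images bounded uniformly away from $0$. I would take $r\in\actsy$ to be the von Neumann--Morgenstern lottery uniform on $\valuesy$, so $\pi(r)$ is the constant gamble $c\coloneqq 1/\abs{\valuesy}$, strictly positive since $\abs{\valuesy}\geq 2$. Set $\tilde p\coloneqq\tfrac12 p+\tfrac12 r$ and $\tilde q\coloneqq\tfrac12 q+\tfrac12 r$. By Mixture Independence~\ref{A2}, $\tilde p\succ\tilde q$; and since $\pi(\tilde p)-\pi(\tilde q)=\tfrac12(\pi(p)-\pi(q))=\tfrac12 f\notin\gambles^+$, also $\tilde p\ndom\tilde q$, so \ref{wA3} now applies to $(\tilde p,\tilde q)$ and gives $\alpha\in(0,1)$ with $\alpha\tilde p+(1-\alpha)z\succ\tilde q$. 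By \eqref{eq:RfromC} the projected difference lies in $\rdesirs$, and expanding it with $\pi(z)=0$, $\pi(r)=c$ and $\pi(p)=f+\pi(q)$ gives
\[
\pi\bigl(\alpha\tilde p+(1-\alpha)z-\tilde q\bigr)=\tfrac{\alpha}{2}\,f-\tfrac{1-\alpha}{2}\,\pi(q)-\tfrac{c(1-\alpha)}{2}\in\rdesirs .
\]
Multiplying by $2/\alpha$ via \ref{D3} and then adding the non-negative gamble $\tfrac{1-\alpha}{\alpha}\pi(q)$ via \ref{D1} and \ref{D4} yields $f-\delta\in\rdesirs$ with $\delta\coloneqq\tfrac{c(1-\alpha)}{\alpha}>0$. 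This is exactly \ref{D0}, so, together with Proposition~\ref{prop:rdesirs}, $\rdesirs$ is a coherent set of strictly desirable gambles.

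To summarise, the only non-mechanical idea is the pre-mixing step: a single use of \ref{wA3} on $(p,q)$ only buys $f$ minus a multiple of $\pi(q)$, which may fail to dominate any positive constant; mixing $p$ and $q$ with a strictly interior act first ensures that, once \ref{wA3} replaces one side's mixing weight on that act by $z$, the surviving uniform term $\tfrac{c(1-\alpha)}{2}$ is an honest positive constant. Everything else is linearity of $\pi$ and bookkeeping with the cone axioms \ref{D1}--\ref{D4}.
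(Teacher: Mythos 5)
Your proof is correct, and it reaches \ref{D0} by a somewhat different mechanism than the paper, although both arguments turn on the same crux: a single application of \ref{wA3} to the original pair only yields $f$ minus a multiple of $\pi(q)$, which need not dominate any positive constant, so one must bring the uniform lottery into play to produce a genuinely constant positive discount. The paper does this through Lemma~\ref{lem:trans}: it translates $\rdesirs$ by $\pi(u)$, shows the translated openness condition \ref{D0'} is equivalent to \ref{D0}, and---the key step, via Proposition~\ref{prop:nau}---re-represents every element of the cone as $\lambda\pi(p-u)$ with $p\succ u$, so that \ref{wA3} applied to the pair $(p,u)$ (using $u\succ z$) directly leaves behind the constant term $(1-\beta)\pi(u)$. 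You instead keep the original pair $(p,q)$, mix both acts with the uniform lottery through \ref{A2}, check that the mixed pair still satisfies the hypotheses of \ref{wA3} (the non-domination passes to the mixture because the projected difference is just $\tfrac12 f$), and then absorb the leftover $\pi(q)$-term using \ref{D1} and \ref{D4}; this bypasses the translation lemma altogether and is arguably more self-contained, at the cost of not producing the explicit representation $\tau(\rdesirs)=\{\pi(u)+\lambda\pi(p-u):\lambda>0,\ p\succ u\}$, which the paper also exploits to connect with Galaabaatar--Karni's construction. One cosmetic remark: when $q=z$ the gamble $\tfrac{1-\alpha}{\alpha}\pi(q)$ is identically zero, so it is not in $\gambles^+$ and \ref{D1} does not literally apply; but in that case the preceding line already gives $f-\delta\in\rdesirs$ without any addition, so the argument is unaffected---just phrase the final step with that trivial case split.
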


The following result completes the picture: it turns out that
coherent sets of desirable gambles and coherent preferences are just
two equivalent representations, and this is true even when
Archimedeanity is taken into consideration.

\begin{theorem}\label{prop:rdesirsInv}
\begin{enumerate}
\item[(i)] Let $\sdiff(\actsy)$ be given by Eq.~\eqref{eq:diffs-hl}. For any gamble $f\in\gambles(\pspace\times\valuesy)$ it holds that
\begin{equation*}
f\in\sdiff(\actsy) \iff (\forall \omega\in\pspace)
\sum_{x\in\valuesy} f(\omega,x) = 0.
\end{equation*}

\item[(ii)] There is a one-to-one correspondence between elements of $\sdiff(\actsy)$ and gambles in $\gambles(\pspace\times\valuesn)$.

\item[(iii)] Let $\rdesirs$ be a coherent set of desirable gambles on
$\pspace\times\valuesn$. Then $\rdesirs$ defines a convex cone
$\cone$ that is equivalent to a coherent preference relation $\succ$
on $\actsy\times\actsy$ for which $z$ is the worst outcome.
Moreover, if $\rdesirs$ is a set of strictly desirable gambles, then
the relation is weakly Archimedean.
\end{enumerate}
\end{theorem}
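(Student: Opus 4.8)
The plan is to handle the two combinatorial claims (i) and (ii) quickly and then to build the cone $\cone$ out of $\rdesirs$ for the substantive claim (iii). For (i), one inclusion is immediate: if $f=\lambda(p-q)$ with $p,q\in\actsy$ and $\lambda>0$, then $\sum_{x\in\valuesy}f(\omega,x)=\lambda\big(\sum_{x}p(\omega,x)-\sum_{x}q(\omega,x)\big)=0$ for every $\omega\in\pspace$. For the converse I would centre a zero-column-sum gamble $f$ around the uniform act: take $q\in\actsy$ with $q(\omega,x)=1/|\valuesy|$, and, since $f$ is bounded, a $\lambda>0$ large enough that $p\coloneqq q+f/\lambda$ is nonnegative; the hypothesis $\sum_{x\in\valuesy}f(\omega,x)=0$ forces each $p(\omega,\cdot)$ to sum to one, so $p\in\actsy$ and $f=\lambda(p-q)\in\sdiff(\actsy)$. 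Claim (ii) is then just that the restriction $\pi_2$ of $\pi$ to $\sdiff(\actsy)$ and the displayed $\pi_2^{-1}$ are mutually inverse: by (i) an element of $\sdiff(\actsy)$ is determined by its values off the $z$-column (the $z$-column equals $-\sum_{x\in\valuesn}f(\omega,x)$), while $\pi_2^{-1}$ sends an arbitrary gamble on $\pspace\times\valuesn$ to a gamble with vanishing column sums, hence into $\sdiff(\actsy)$ by (i); so $\pi_2\colon\sdiff(\actsy)\to\gambles(\pspace\times\valuesn)$ is a linear bijection.

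For (iii) I would set $\cone\coloneqq\pi_2^{-1}(\rdesirs)=\{f\in\sdiff(\actsy):\pi(f)\in\rdesirs\}$. Since $\pi_2^{-1}$ is a linear bijection onto $\gambles(\pspace\times\valuesn)$ and $\rdesirs$ is a convex cone with $0\notin\rdesirs$ (by the~\ref{D1}--\ref{D4} characterisation), $\cone$ is a convex cone in $\sdiff(\actsy)\setminus\{0\}$; Proposition~\ref{prop:one2one} then produces the unique coherent preference relation $\succ$ whose cone is $\cone$, and $\pi(\cone)=\pi_2(\pi_2^{-1}(\rdesirs))=\rdesirs$, so $\succ$ and $\rdesirs$ are equivalent via~\eqref{eq:RfromC}. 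To see that $z$ is the worst outcome, note that for $p\neq z$ one has $\pi(p-z)=\pi(p)$, which is nonnegative and equals the zero gamble only when $p(\omega,\cdot)$ is degenerate on $z$ for every $\omega$, i.e.\ only when $p=z$; hence $\pi(p-z)\in\gambles^+\subseteq\rdesirs$ by~\ref{D1}, so $p-z\in\cone$ and $p\succ z$, whereas $z\nsucc z$ because $0\notin\cone$.

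It remains to derive~\ref{wA3} when $\rdesirs$ is strictly desirable, and this last step is where I expect the real work. Given $p\succ q$ with $p\ndom q$, put $f\coloneqq\pi(p-q)=\pi(p)-\pi(q)\in\rdesirs$; the condition $p\ndom q$ says exactly that $f\notin\gambles^+$ (and $f\neq0$), so $f\in\rdesirs\setminus\gambles^+$ and~\ref{D0} gives $\delta>0$ with $f-\delta\in\rdesirs$. For $\alpha\in(0,1)$ a direct computation yields $\pi\big((\alpha p+(1-\alpha)z)-q\big)=\alpha\pi(p)-\pi(q)=\alpha f-(1-\alpha)\pi(q)\geq\alpha f-(1-\alpha)=\alpha\big(f-\tfrac{1-\alpha}{\alpha}\big)$, using $0\leq\pi(q)\leq1$. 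Choosing $\alpha\in[\tfrac1{1+\delta},1)$ makes $\tfrac{1-\alpha}{\alpha}\leq\delta$, so $f-\tfrac{1-\alpha}{\alpha}\geq f-\delta\in\rdesirs$; since $\rdesirs$ is upward closed (add the nonnegative difference, using~\ref{D1} and~\ref{D4}) and positively homogeneous (\ref{D3}), the gamble $\alpha f-(1-\alpha)\pi(q)$ lies in $\rdesirs$, hence $\alpha p+(1-\alpha)z-q\in\cone$, i.e.\ $\alpha p+(1-\alpha)z\succ q$. The delicate part is precisely the bookkeeping in that display: one has to absorb the stray $-(1-\alpha)\pi(q)$ term into the openness slack $\delta$, which succeeds only because horse lotteries are bounded by $1$ and coherent sets of desirable gambles are closed under adding nonnegative gambles.
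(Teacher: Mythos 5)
Your proposal is correct and follows essentially the same route as the paper: a constructive representation of zero-column-sum gambles as scaled act differences for (i), the linear bijection $\pi_2$/$\pi_2^{-1}$ for (ii), and for (iii) the transfer $\cone\coloneqq\pi_2^{-1}(\rdesirs)$ via Proposition~\ref{prop:one2one} together with the openness condition~\ref{D0} absorbed by mixing towards $z$, exactly as in the paper's proof (your bookkeeping absorbs $(1-\alpha)\pi(q)$ via $\pi(q)\leq 1$, the paper absorbs $(1-\beta)\pi(p)\leq\delta$ directly). The only cosmetic difference is your uniform-act centering in the converse of (i), which replaces the paper's explicit positive/negative splitting into two mass functions but proves the same thing.
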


\section{What happened?}\label{sec:interpretation}
It is useful to consider in retrospect what we have achieved so far.

\subsection{Desirability foundations of rational decision making}
The analysis in the previous sections can be summarised by the following:

\begin{theorem}\label{thm:equiv} There is a one-to-one correspondence between coherent preference relations  on $\actsy\times\actsy$ for which $z$ is the worst outcome and coherent sets of desirable gambles on $\pspace\times\valuesn$. Moreover the relation is weakly Archimedean if and only if the set of gambles is strictly desirable. In this case they give rise to the same representation in terms of a set of linear previsions.
\end{theorem}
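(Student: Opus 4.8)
The plan is to assemble Theorem~\ref{thm:equiv} from the pieces already established in Sections~\ref{sec:equiv-prefs-gambles}, rather than to prove anything genuinely new: it is a packaging of Propositions~\ref{prop:charC}--\ref{prop:one2one}, Proposition~\ref{prop:rdesirs}, Theorem~\ref{prop:rdesirs-archimedean}, and Theorem~\ref{prop:rdesirsInv}. First I would set up the two maps. In one direction, given a coherent preference relation $\succ$ on $\actsy\times\actsy$ with worst outcome $z$, form the cone $\cone$ via~\eqref{eq:diffs-hl-prefs} (using Proposition~\ref{prop:charC} to know $\cone$ faithfully encodes $\succ$), and then set $\rdesirs\coloneqq\pi(\cone)$ as in~\eqref{eq:RfromC}; Proposition~\ref{prop:rdesirs}(i) says $\rdesirs$ is a coherent set of desirable gambles on $\gambles(\pspace\times\valuesn)$. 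In the other direction, given a coherent $\rdesirs$ on $\pspace\times\valuesn$, Theorem~\ref{prop:rdesirsInv}(iii) produces a cone $\cone$ in $\sdiff(\actsy)\setminus\{0\}$ and hence, by Proposition~\ref{prop:one2one}, a coherent preference relation $\succ$ on $\actsy\times\actsy$ with $z$ as worst outcome.

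Next I would check these two maps are mutually inverse, which is where the bijectivity statements of Theorem~\ref{prop:rdesirsInv}(i)--(ii) do the work. By part~(i), $\sdiff(\actsy)$ is exactly the set of gambles on $\pspace\times\valuesy$ whose $\valuesy$-coordinates sum to zero at each $\omega$; by part~(ii), the projection $\pi$ (more precisely $\pi_2$) restricts to a bijection between $\sdiff(\actsy)$ and $\gambles(\pspace\times\valuesn)$, with explicit inverse $\pi_2^{-1}$. Hence $\cone\mapsto\pi(\cone)\mapsto\pi_2^{-1}(\pi(\cone))=\cone$ and conversely $\rdesirs\mapsto\pi_2^{-1}(\rdesirs)\mapsto\pi(\pi_2^{-1}(\rdesirs))=\rdesirs$, so the correspondence between cones in $\sdiff(\actsy)\setminus\{0\}$ and coherent sets $\rdesirs$ on $\pspace\times\valuesn$ is one-to-one; composing with Proposition~\ref{prop:one2one} gives the claimed bijection with coherent preference relations. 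Here I would also note, using Propositions~\ref{prop:charC} and~\ref{prop:rdesirs}(ii), that the correspondence is the ``expected'' one: $p\succ q$ iff $\pi(p-q)\in\rdesirs$.

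For the Archimedean clause I would argue both implications. If $\succ$ is weakly Archimedean, Theorem~\ref{prop:rdesirs-archimedean} gives directly that $\rdesirs=\pi(\cone)$ is strictly desirable. Conversely, if $\rdesirs$ is strictly desirable, the last sentence of Theorem~\ref{prop:rdesirsInv}(iii) gives that the associated relation is weakly Archimedean. Combined with the established bijection, weak Archimedeanity of $\succ$ and strict desirability of the matching $\rdesirs$ are equivalent. Finally, for the last sentence of the theorem, I would invoke the material of Section~\ref{sec:intro-clp}: a coherent set of strictly desirable gambles is in one-to-one correspondence with a coherent lower prevision satisfying~\ref{C1}--\ref{C3} (via~\eqref{eq:lprR} and~\eqref{eq:Rlpr}), and a coherent lower prevision is the lower envelope of the credal set $\solp(\lpr)$ of linear previsions dominating it, as in~\eqref{eq:lowenv}. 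Transporting this through the projection, the set of linear previsions associated with $\rdesirs$ is precisely the set of expected utilities representing $\succ$, so the two sides ``give rise to the same representation in terms of a set of linear previsions.''

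I do not expect a serious obstacle here, since every ingredient is already in hand; the only thing requiring a little care is the bookkeeping around the two restrictions $\pi_1$ (on acts) and $\pi_2$ (on scaled differences) of the non-injective operator $\pi$, and making sure that the ``same representation'' claim is stated with the right identification between linear previsions on $\pspace\times\valuesn$ and the normalised probability-utility functionals that evaluate horse lotteries in $\actsy$. That identification is exactly the content of $\pi_2^{-1}$ together with the constraint in Theorem~\ref{prop:rdesirsInv}(i), so it is a matter of writing it cleanly rather than of overcoming a difficulty.
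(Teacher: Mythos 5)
Your proposal is correct and follows essentially the same route as the paper's own proof: both assemble the result from Proposition~\ref{prop:rdesirs}, Theorem~\ref{prop:rdesirsInv} (injectivity/surjectivity via the invertibility of $\pi_2$), Theorem~\ref{prop:rdesirs-archimedean} for the Archimedean clause, and the observation that $\cone$ and $\rdesirs$ induce the same set of linear previsions. Your write-up merely makes the mutual-inverse bookkeeping slightly more explicit than the paper does.
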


In other words, from the mathematical point of view, it is completely equivalent to use coherent sets of desirable gambles on $\pspace\times\valuesn$ in order to represent coherent preference relations on $\actsy\times\actsy$. Or, to say it differently, we have come to realise that there is no need to distinguish two subjects, Thomas and Thilda, to make our exposition; accordingly, from now on we shall refer to the subject that makes the assessments more simply as `you'. That the two theories are just the same is even more the case if we consider that we can define a notion of preference directly on top of a coherent set of desirable gambles \cite[Section~3.7.7]{walley1991}:
\begin{definition}[{\bf Coherent preference relation for gambles}]\label{def:pref-gmb} Given a coherent set of desirable gambles $\rdesirs$ on $\gambles(\pspace\times\valuesn)$ and gambles $f,g$, we say that $f$ is preferred to $g$ in $\rdesirs$ if $f-g\in\rdesirs$, and denote this by $f\succ g$.
\end{definition}
\noindent In other words, $f\succ g$ if you are willing to give away $g$ in order to have $f$. 

It is trivial to show that the notion of preference we have just described is equivalent to the preference relation on horse lotteries that corresponds to $\rdesirs$: this means that $f\succ g$ if and only if $p\succ q$ for all $p,q\in\actsy$ such that $p-q=\pi(\lambda(f-g))$ for some $\lambda>0$. This is also the reason why we use the same symbol in both cases. 

All this suggests that we can think of establishing the foundations
of rational decision making using the desirability of gambles as our
primitive conceptual tool, rather than regarding it as derived from
preferences over horse lotteries. In this new conceptual setting,
you would be asked to evaluate which gambles you desires
in $\gambles(\pspace\times\valuesn)$ and then we would judge the
rationality of your assessments by checking whether or not the
related set of gambles is coherent according to~\ref{D1}--\ref{D4}.
This appears to be utterly natural and straightforward, and yet to
make this procedure possible, it is essential that the numbers
making up a gamble are given a clear interpretation, so as to put
you in the conditions to make meaningful and accurate assessments
of desirability.

\subsection{Interpretation}
 Remember that any gamble $f\in\gambles(\pspace\times\valuesn)$ is such that $f=\pi(\lambda(p-q))$, for some $p,q\in\actsy$ and $\lambda>0$. Therefore for all $\omega\in\pspace$ and $x\in\valuesn$, $f(\omega,x)$ is a number proportional to the increase (or decrease if it is negative) of the probability to win prize $x$ in state $\omega$ as a consequence of exchanging $q$ for $p$. We can use this idea to interpret $f(\omega,x)$ more directly, without having to rely on horse lotteries, in the following way.

Imagine that there is a simple lottery with possible prizes $x\in\valuesn$, all of them with the same, large, number of available tickets.\footnote{It is possible to assume that lottery tickets are infinitely divisible so as to cope with non-integer numbers; we shall neglect these technicalities in the description and just assume that their number is very large to make things simpler.} Tickets are sequentially numbered irrespectively of their type. Eventually only one number will be drawn according to a uniform random sampling; and this you know. Therefore the (objective, or physical) probability $\rho_x$ that you win prize $x$ is proportional to the number of $x$-tickets you own. This implies that your utility scale is linear in the number of $x$-tickets for all $x\in\valuesn$. You can increase, but also decrease, the number of your lottery tickets by taking gambles on an unrelated experiment, with possible outcomes in $\pspace$, about which you are uncertain: by accepting gamble $f\in\gambles(\pspace\times\values)$, you commit yourself to accept $f(\omega,\cdot)$ in case $\omega$ occurs; this is a vector proportional to the number of lottery tickets you will receive for each $x\in\values$ (this means also to have to give away $x$-tickets when $f(\omega,x)$ is negative).\footnote{Thanks to the linearity of your utility scale, we can assume without loss of generality that you initially own a positive number of $x$-tickets for all $x\in\valuesn$; for the same reason, the size of the positive proportionality constant is irrelevant. It follows that we can reduce the number of tickets you receive or give away so as to keep all your probabilities of winning prizes between 0 and 1.} In case you accept more than one gamble (and in any case finitely many of them), the reasoning is analogous; it is enough to take into account that the tickets received or given away will be proportional to the sum of the accepted gambles.

Stated differently, a gamble is interpreted as an uncertain reward in lottery tickets. Since lottery tickets eventually determine the probability to win prizes, such a probability is treated here as a currency. The idea of a `probability currency' appears to go back to Savage \cite{savage1954} and Smith \cite{smith1961} and has already been used also to give an interpretation of gambles in Walley's theory \cite[Section~2.2.2]{walley1991}. The difference between the traditional idea of probability currency and the one we give here rests on the possible presence of more than one prize, which is allowed in the present setting. That the traditional concept of probability currency can be extended to our generalised setting is ensured by Theorem~\ref{thm:equiv}.

With this interpretation in mind, it is possible to state and deal with a decision-theoretic problem only in terms of gambles on $\pspace\times\valuesn$. This has a number of implications:
\begin{itemize}
\item One is that we can profit now from all the theory already developed for desirable gambles in order to study decision-theoretic problems in quite a generality. We shall actually do so in the rest of the paper. \item Another is that we should be careful to interpret gambles properly, as we said already in Remark~\ref{rem:interpret1}; in particular, despite our gambles are on $\pspace\times\valuesn$, only $\pspace$ is the space of possibilities---which is something that marks a departure from the original theory of desirable gambles. The most important consequence here is that it is meaningless to think of the element of $\values$ as events that may occur; only elements of $\pspace$ can. As a consequence, we can still use conditioning as an updating rule, under the usual interpretations and restrictions; but we can do so only conditional on subsets of $\pspace$. Note, on the other hand, that in the current framework we can update both beliefs and values.
\item Finally, our results so far appear to have considerable implications for Williams' and Walley's behavioural theories of probability and statistics: in fact, the current reach of these, otherwise very powerful and well-founded, theories is limited to precise and linear utility. By extending gambles so as to make them deal with multiple prizes, we are de facto laying the foundations for their extension to imprecise non-linear utility. This should allow, with time, a whole new range of problems to be addressed in a principled way by those theories.
\end{itemize}

\subsection{A new way to represent utility (values)}

There is a feature of our proposed representation of decision-theoretic problems that is worth discussing even only to make the question explicit, as it seems to be different from past approaches.

Let us start by considering the simplest setup to make things very
clear: consider finite $\pspace$ (as well as $\values$, as usual)
and assume that the assessments eventually yield a coherent set of
strictly desirable gambles $\rdesirs$ corresponding to the interior
of a maximal set. This means that $\rdesirs$ is in a one-to-one
correspondence with a joint mass function $\Pi$ on
$\pspace\times\values$ that is the restriction of a linear prevision
to the elements of the product space. $\Pi$ is a joint mass function
in the sense that $\Pi(\omega,x)\geq0$ for all
$(\omega,x)\in\pspace\times\values$ and
$\sum_{(\omega,x)\in\pspace\times\values}\Pi(\omega,x)=1$. Now, if
we assume that $\Pi(\omega,x)>0$ for all
$(\omega,x)\in\pspace\times\values$, then $\Pi$ becomes an
equivalent representation of $\rdesirs$.

Moreover, we can rewrite $\Pi$ using total probability (marginal extension) as the product of the marginal mass function on $\pspace$ and of the conditional mass function on $\values$ given $\omega\in\pspace$:
\begin{equation}
\Pi(\omega,x)=\Pi(\omega)\Pi(x|\omega)\label{eq:Pi}
\end{equation}
for all $(\omega,x)\in\pspace\times\values$. Here the meaning of $\Pi(\omega)$ is that of the probability of $\omega$, whereas $\Pi(x|\omega)$ represents the utility of outcome $x$ conditional on the occurrence of $\omega$.

The important, and perhaps unconventional, point in all this discussion is that in our approach the utility function is just another mass function, that is, one such that $\Pi(x|\omega)\geq0,\sum_{x\in\values}\Pi(x|\omega)=1$ for all $\omega\in\pspace$. This means that the utility mass function is formally equivalent to a probability mass function, it is only our interpretation that changes: while the numbers in the probability function represent occurrence probabilities, in the utility function they represent mixture coefficients that allow us to compare vectors of (amounts of) outcomes. The more traditional view of the utilities is that of numbers between zero and one that do not need to add up to one.

An advantage of our representation is that we can directly exploit all the machinery and concepts already in place for probabilities also for utilities. For example, the notion of state independence trivially becomes probabilistic independence through~\eqref{eq:Pi}, and this happens whenever for all $x\in\values$ it holds that $$\Pi(x|\omega')=\Pi(x|\omega'')$$ for all $\omega',\omega''\in\pspace$, so that we are allowed to work with marginal mass functions for probability and utility. Instead, that state independence fails simply means that we have to work with the joint mass function $\Pi$, which appears to be a very natural way to address this case (all this will be detailed in Sections~\ref{sec:decompose} and~\ref{sec:si=sp}).

This feature extends over all the representation, not only in the specific case considered, as we can see by moving backwards toward greater generality. For instance, if we relax the precision requirement of the previous example, what we get is a representation of a decision-theoretic problem by a closed convex set $\solp$ of joint mass functions. State independence in this case will coincide with a notion of independence or irrelevance for credal sets among all the possible ones. When state independence holds, preferences will be formed by combining a marginal credal set for probabilities $\solp_\pspace$ with a marginal credal set for utilities $\solp_\values$. Completeness of utilities will mean that $\solp_\values$ contains only one mass function; and vice versa for completeness of probabilities.

These credal sets are in a one-to-one correspondence with coherent lower previsions. Whence, there will be a coherent lower prevision $\lpr(\cdot|\{\omega\})$ representing the lower envelope of the previsions that we can obtain from the credal set $\solp_{\values}(\cdot|\{\omega\})$ of the utilities conditional on $\{\omega\}$; that is, by formally treating utilities as probabilities and computing the related expectations. Relaxing the example further, we can consider an infinite (not necessarily countable) space $\pspace$ and in this case the credal set $\solp_{\pspace}$ will be made up of finitely additive probabilities.

What we find remarkable is also that we can easily obtain the quantities we need explicitly from sets of desirable gambles using established tools. In more traditional axiomatisations of preferences, separation theorems are usually aimed at obtaining a `proof of existence' of the linear functionals that represent probabilities and utilities, which then leads to a representation theorem. Our representation naturally appears to be more of a constructive nature, in the sense that we can actually compute the wanted numbers through~\eqref{eq:cclprR}; for instance, we can obtain the lower utility mass of $x\in\values$ conditional on $\omega\in\pspace$ simply as $$\lpr(I_{\{x\}}|\{\omega\})=\sup\{\mu\in\reals:I_{\{\omega\}}(I_{\{x\}}-\mu)\in\rdesirs\}.$$

The overall situation is very much alike when we finally move
backwards to $\rdesirs$. This will be in general a `joint' model for
beliefs and values. When we decompose it into separate models, there
will be a marginal set for beliefs $\rdesirs_{\pspace}$ and other
models for values, like $\rdesirs|\{\omega\}$, which will be
formally treated as usual as if they were models of conditional
beliefs instead of values. And this will allow us to exploit all the
tools and concepts already developed for desirability, like the
marginal extension, for instance. Yet, not all the concepts will be
alike: for instance, as we said already, our interpretation excludes
that the elements of $\values$ `occur', so that updating should be
done only conditional on subsets of $\pspace$.

Note that we are not claiming that all this is not possible to do in the traditional way to axiomatise preference. What we want to say is that our framework is naturally suited to represent decision-theoretic problems in which values (utilities) are formally treated as beliefs (probabilities); and that this helps dealing with these problems in a familiar way while exploiting well-established tools for beliefs also for values.

\subsubsection{Weak Archimedeanity, strict preference and the cardinality of spaces}\label{sec:wA3-count}

The discussion in the previous section helps us also to uncover a
question related to Condition~\ref{A3} in relation to strict
preference orderings. In fact,~\ref{A3} is conceived to eventually
lead to models that have to assign positive lower expected utility
(prevision) to all the elements of cone $\cone$, or, equivalently,
to all the gambles in $\rdesirs$. Remember that in the formulation
based on $\rdesirs$, beliefs and values naturally turn out to be
jointly represented through a set $\solp$ of linear previsions,
which are formally equivalent to a set of probabilities (in the
sense that not only probability but also utility is mathematically
represented through a probability function). This enables us to see
clearly that, under~\ref{A3} and strict preference, it is necessary
that $\pspace\times\values$ be countable: for $\rdesirs$ contains
the positive gambles, among which that are the indicator functions
of the subsets of $\pspace\times\values$; and a probability can
assign positive mass to countably many events at most.

Of course this is quite a limitation in the representation. It seems also a limitation that relates to the idea of regularity discussed in Section~\ref{sec:regular}. All this may have us to appreciate that, by taking aside the positive gambles, the weak Archimedean condition~\ref{wA3} is not subject to such a problem: it yields a representation through a set of linear previsions on gambles that can be defined on uncountable spaces. Note, in fact, that in this paper we are not restricting the cardinality of $\pspace$ (we do restrict the cardinality of $\values$ but for other reasons). This seems to be an original feature of our approach that has not been exploited so far.

\section{Extending a preference relation to the worst outcome}\label{sec:z}

We have seen in the previous sections that the presence of the worst outcome in a preference relation is the key to establish the equivalence between preference and desirability. It is interesting to note, moreover, that the worst outcome originates the acceptability of the positive gambles in desirability; and this is what solves the controversy about the regularity assumption described in Section~\ref{sec:regular}. As a consequence, the presence of the worst outcome appears to be somewhat of a fundamental concept for a preference relation. Then, in case a preference relation does not come with a worst act/outcome, it would be important to have tools to extend it to the worst outcome in a minimal kind of way.

So consider a coherent preference relation $\succ$ on $\actsn\times\actsn$. Let us investigate the problem of extending it to $\actsy\times\actsy$ in such a way that $z$ acts as the worst outcome. We start by making precise the notion of extension:

\begin{definition}[{\bf Extension to the worst outcome}]\label{def:ext} Let $\succ$ be a coherent preference relation on $\actsn\times\actsn$. An \emph{extension of $\succ$ to the worst outcome} is a coherent relation, denoted by the same symbol $\succ$, on $\actsy\times\actsy$ such that:
\begin{itemize}
\item[(i)] $p\succ z$ for all $p\in\actsy,p\neq z$;
\item[(ii)] $\pi_1^{-1}(p)\succ\pi_1^{-1}(q)$ whenever $p\succ q$ for some $p,q\in\actsn$.
\end{itemize}
\end{definition}
\noindent Note that the definition is based on two truly minimal
requirements. The first is just the definition of $z$ as the worst
outcome. The second makes the idea of the extension  precise: we use
the one-to-one correspondence between $\actsn$ and
$\pi_1^{-1}(\actsn)$ to require that the original preferences be
preserved by the extension.

The next theorem gives an equivalent characterisation of an extension to the worst outcome; based on this, it shows that there always exists a minimal extension, which is one that is included in any other, and yet that this extension is never weakly Archimedean (except for a trivial case):

\begin{theorem}\label{thm:ext} Let $\succ$ be a coherent relation on $\actsn\times\actsn$ and let $\cone$ be the corresponding cone. Then:
\begin{itemize}
\item[(i)] A coherent preference relation $\succ$ on $\actsy\times\actsy$ extends relation $\succ$ on $\actsn\times\actsn$ to the worst outcome if and only if
\begin{equation}
(\forall p,q\in\actsy)~p\succ q\text{ whenever }p\dom q\text{ or }\pi(p-q)=r-s\text{ for some }r,s\in\actsn,r\succ s.\label{eq:ext}
\end{equation}
\item[(ii)] The minimal extension of $\succ$ to the worst outcome corresponds to the coherent set of desirable gambles given by:
\begin{equation}
\rdesirs\coloneqq\{f\ge\lambda(r-s):r,s\in\actsn,r\succ s,\lambda>0\}\cup\gambles^+.\label{eq:extR}
\end{equation}
\item[(iii)] Let $\lpr$ be the coherent lower prevision induced by $\rdesirs$. For all $f\in\cone$, it holds that $\lpr(f)=0$. As a consequence, the minimal extension is weakly Archimedean if and only if $\cone=\emptyset$.
\end{itemize}
\end{theorem}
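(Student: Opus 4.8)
The plan is to work throughout with the inclusion-preserving correspondence between coherent preference relations on $\actsy\times\actsy$ having $z$ as worst outcome and coherent sets of desirable gambles on $\pspace\times\valuesn$ (Proposition~\ref{prop:rdesirs}, Theorem~\ref{prop:rdesirsInv}), using repeatedly that $p\succ q\iff\pi(p-q)\in\rdesirs$, that $\pi$ is linear with bijective restrictions $\pi_1,\pi_2$, and that $\pi(z)=0$ while $\pi(p)\gneq0$ exactly when $p\ne z$. For part~(i) I would check the two implications directly against Definition~\ref{def:ext}. If $\succ$ on $\actsy\times\actsy$ is an extension, then $z$ is its worst outcome, so $p\dom q\Rightarrow p\succ q$ by Proposition~\ref{prop:domsucc}; and if $\pi(p-q)=r-s$ with $r\succ s$ in $\actsn$, then $\pi_1^{-1}(r)\succ\pi_1^{-1}(s)$ by Definition~\ref{def:ext}(ii) and $\pi_2(\pi_1^{-1}(r)-\pi_1^{-1}(s))=r-s=\pi_2(p-q)$, so injectivity of $\pi_2$ forces $p-q=\pi_1^{-1}(r)-\pi_1^{-1}(s)$, whence $p\succ q$ by Proposition~\ref{prop:charC}. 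Conversely, if~\eqref{eq:ext} holds, specialising it to $q=z$ (using $p\dom z\iff p\ne z$) gives Definition~\ref{def:ext}(i), and specialising it to $p:=\pi_1^{-1}(p)$, $q:=\pi_1^{-1}(q)$, $r:=p$, $s:=q$ gives Definition~\ref{def:ext}(ii).

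For part~(ii) I would first verify that the set $\rdesirs$ in~\eqref{eq:extR} is a coherent set of desirable gambles: \ref{D1} and \ref{D3} are immediate; \ref{D4} follows since $\lambda(r-s)+\mu(r'-s')=(\lambda+\mu)(r''-s'')$, where $r'',s''$ are the matching convex combinations, with $r''\succ s''$ by the consequence of mixture independence noted after Proposition~\ref{prop:nau}; and \ref{D2} holds because $0\ge\lambda(r-s)$ would force $r\le s$ pointwise and hence $r=s$ (equal mass functions in $x$), contradicting $r\succ s$ and \ref{A1}. By Theorem~\ref{prop:rdesirsInv}(iii), $\rdesirs$ then corresponds to a coherent relation on $\actsy\times\actsy$ with worst outcome $z$, and part~(i) shows it is an extension ($p\dom q\Rightarrow\pi(p-q)\gneq0\in\gambles^+\subseteq\rdesirs$; $\pi(p-q)=r-s$ with $r\succ s$ $\Rightarrow\pi(p-q)\in\rdesirs$ with $\lambda=1$). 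For minimality, part~(i) shows that the $\pi$-image of the cone of any extension must contain $\{\lambda(r-s):r\succ s,\lambda>0\}\cup\gambles^+$ (the objective-preference generators $\pi(p-q)$, $p\dom q$, already being positive gambles), hence must contain the natural extension of that generating set; and this natural extension equals $\rdesirs$ by two inclusions---any $f\ge\lambda(r-s)$ equals $\lambda(r-s)$ plus a non-negative gamble, so lies in the $\posi$, while $\rdesirs$ is itself a coherent superset of the generators. Inclusion-preservation of the correspondence then identifies $\rdesirs$ with the minimal extension.

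For part~(iii) the key remark is that every $g\in\rdesirs$ satisfies $\sum_{x\in\valuesn}g(\omega,x)\ge0$ for all $\omega\in\pspace$: obvious for $g\in\gambles^+$, and for $g\ge\lambda(r-s)$ because $r,s$ are horse lotteries so $\sum_{x\in\valuesn}(r-s)(\omega,x)=0$. Fixing $f\in\cone$, the same observation gives $\sum_{x\in\valuesn}f(\omega,x)=0$, so $f=f-0\in\rdesirs$ yields $\lpr(f)\ge0$, while $f-\mu$ for $\mu>0$ has $\sum_{x\in\valuesn}(f-\mu)(\omega,x)=-|\valuesn|\mu<0$, so $f-\mu\notin\rdesirs$ and $\lpr(f)\le0$; hence $\lpr(f)=0$. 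Finally, if $\cone=\emptyset$ then $\rdesirs=\gambles^+$, which satisfies \ref{D0} vacuously, so the minimal extension is weakly Archimedean by Theorem~\ref{thm:equiv}; whereas if $\cone\ne\emptyset$, any $f_0\in\cone$ is non-zero with $\sum_{x\in\valuesn}f_0(\omega,x)=0$, hence $f_0\in\rdesirs\setminus\gambles^+$, yet no $\delta>0$ can have $f_0-\delta\in\rdesirs$ (else $\lpr(f_0)\ge\delta>0$), so \ref{D0} fails, $\rdesirs$ is not strictly desirable, and by Theorem~\ref{thm:equiv} the minimal extension is not weakly Archimedean.

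I expect the main obstacle to be part~(ii): correctly matching the explicit formula~\eqref{eq:extR} with the natural extension of the generators, making sure the objective-preference part of~\eqref{eq:ext} collapses into $\gambles^+$, and that ``smallest cone with worst outcome'' and ``smallest coherent set of desirable gambles'' genuinely correspond under the inclusion-preserving bijection. Parts~(i) and~(iii) are then mostly bookkeeping with the sums $\sum_{x\in\valuesn}g(\omega,x)$.
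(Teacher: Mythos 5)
Your proposal is correct and follows essentially the same route as the paper: part (i) by direct verification through the projections $\pi_1,\pi_2$, part (ii) by recognising $\rdesirs$ as the natural extension of $\cone\cup\gambles^+$ and using the inclusion-preserving correspondence of Proposition~\ref{prop:rdesirs} and Theorem~\ref{prop:rdesirsInv} to get existence and minimality, and part (iii) via the observation that every $g\in\rdesirs$ satisfies $\sum_{x\in\valuesn}g(\omega,x)\geq0$ while $f-\mu$ violates it for $\mu>0$. The only cosmetic difference is that the paper first rewrites~\eqref{eq:ext} in the equivalent form with $\pi(p-q)\geq\lambda(r-s)$ before building the natural extension, whereas you verify \ref{D1}--\ref{D4} for $\rdesirs$ directly; the substance is identical.
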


Note that Condition~\eqref{eq:ext}, which characterises the extensions in Theorem~\ref{thm:ext}, has a simple interpretation. The part related to objective preferences has been discussed already after Definition~\ref{def:dom}: it means that objective preferences should be indeed preferences irrespective of the specific subject we consider. Concerning the remaining part, observe that $\pi(p-q)\in\cone$ implies that $p(\omega,z)=q(\omega,z)$ for all $\omega\in\pspace$ (note that $p(\omega,z)$ need not be constant on $\omega\in\pspace$). This means that your evaluation of $p$ versus $q$ is not going to be based on their behaviour on $z$, since it is identical in the two cases; and for this reason one can determine the preference relation between $p$ and $q$ by relying only on the original relation (via $\cone$).\footnote{The further reformulation~\eqref{eq:ext2} in~\ref{sec:proofs} introduces additional preferences; those can be read as a simple consequence of the independence axiom~\ref{A2}: they state that if $p\succ q$ then also $\alpha p+(1-\alpha)t\succ\alpha q+(1-\alpha)z$ for all $\alpha\in(0,1]$ and $t\in\actsy\setminus\{z\}$, given that $t\succ z$.}

The second part of Theorem~\ref{thm:ext} gives an important result: that there is always a least-committal extension of a coherent preference relation to the worst outcome. Saying that it is least-committal means that such an extension is not requiring you to accept any preference other that those following from axioms~\ref{A1},~\ref{A2} and the assumption that $\valuesn$ can be bounded by a worst outcome.\footnote{We call it `least-committal' and not, for instance, `without loss of generality', also because the extension to the worst outcome does not seem to be trivial anyway: for, after all, we are bounding the relation in this way, and it is possible that you disagree that your actual preference can be bounded.} This is a positive result in that under minimal conditions it allows us to assume right from the start that we work with preference relations provided with the worst outcome.

On the other hand, in the third part of the theorem we obtain a
result that is not as positive, and in particular that $\rdesirs$ is
\emph{never} a coherent set of strictly desirable gambles (except in
the trivial case that $\cone=\emptyset$). Remember that there is a
one-to-one correspondence between sets of strictly desirable gambles
and coherent lower previsions. In the specific context we are
dealing with, this means that the minimal extension cannot be reproduced by a set of linear previsions. Stated
differently, we obtain that \emph{any} coherent preference relation
(possibly comprising any Archimedean notion we might envisage), when extended in a least-committal way to the worst
outcome, is not going to be weakly Archimedean, as it follows from
Theorem~\ref{prop:rdesirs-archimedean}, and hence not Archimedean either, because of Proposition~\ref{prop:A3'-wA3} (see also Proposition~\ref{pr:equiv-Archs-z} in Section~\ref{sec:arc-rels}).

So there is no minimal extension of relation $\succ$ to the worst outcome that is Archimedean. Still, one might wonder whether there are Archimedean extensions. These would introduce assessments other than those in~\eqref{eq:ext}. But it could also be that these other assessments are necessary for an extension to be Archimedean. And then one could think of defining the notion of minimal extension, this time among the Archimedean ones, and look for such a kind of minimal extension, provided that it exists. The problem is that it does not:
\begin{corollary}\label{cor:Aext} Let $\succ$ be a coherent relation under the assumptions of Theorem~\ref{thm:ext} and $\rdesirs$ be as in~\eqref{eq:extR}. If there is a coherent set of strictly desirable gambles $\rdesirs_1$ that includes $\rdesirs$, then there is another coherent set of strictly desirable gambles $\rdesirs_2$ such that $\rdesirs\subsetneq\rdesirs_2\subsetneq\rdesirs_1$.
\end{corollary}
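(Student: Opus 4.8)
The plan is to argue entirely on the level of coherent lower previsions, using Theorem~\ref{thm:ext}(iii) as the single nontrivial input. First I would set aside the degenerate case: if $\cone=\emptyset$ then $\rdesirs=\gambles^+$ is already strictly desirable, and the statement is to be read in the nontrivial case, so assume $\cone\neq\emptyset$. Let $\lpr$ be the coherent lower prevision induced by $\rdesirs$ and $\lpr_1$ the one induced by $\rdesirs_1$. From $\rdesirs\subseteq\rdesirs_1$ we get $\lpr\leq\lpr_1$ pointwise. Moreover $\cone\subseteq\rdesirs\subseteq\rdesirs_1$ and $\cone\cap\gambles^+=\emptyset$ (each $c=\lambda(r-s)\in\cone$ satisfies $\sum_{x\in\valuesn}c(\omega,x)=0$ for every $\omega$, so a nonzero such $c$ cannot be nonnegative), whence strict desirability of $\rdesirs_1$ forces $\lpr_1(c)>0$ for every $c\in\cone$, while Theorem~\ref{thm:ext}(iii) gives $\lpr(c)=0$ for every $c\in\cone$. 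In particular $\lpr\neq\lpr_1$, and $\rdesirs$ fails~\ref{D0} (it contains the border gamble $c\notin\gambles^+$ with $\lpr(c)=0$), so $\rdesirs$ itself is not strictly desirable.

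Next I would build the intermediate model by interpolation: fix $\epsilon\in(0,1)$ and put $\lpr_\epsilon\coloneqq(1-\epsilon)\lpr+\epsilon\lpr_1$. Checking~\ref{C1}--\ref{C3} for $\lpr_\epsilon$ is routine (convex combinations of coherent lower previsions are coherent), and one has $\lpr\leq\lpr_\epsilon\leq\lpr_1$ with $\lpr_\epsilon\neq\lpr$ and $\lpr_\epsilon\neq\lpr_1$, since $\lpr_\epsilon-\lpr=\epsilon(\lpr_1-\lpr)$ and $\lpr_1-\lpr_\epsilon=(1-\epsilon)(\lpr_1-\lpr)$ are both nonzero. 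Let $\rdesirs_2\coloneqq\{f\in\gambles(\pspace\times\valuesn):f\gneq0\text{ or }\lpr_\epsilon(f)>0\}$ be the strictly desirable set associated with $\lpr_\epsilon$ via Eq.~\eqref{eq:Rlpr}. Since $\lpr_\epsilon\leq\lpr_1$, any $f\in\rdesirs_2$ has $f\gneq0$ or $\lpr_1(f)\geq\lpr_\epsilon(f)>0$, so $f\in\rdesirs_1$; hence $\rdesirs_2\subseteq\rdesirs_1$, and the inclusion is strict because distinct coherent lower previsions $\lpr_\epsilon\neq\lpr_1$ correspond to distinct strictly desirable sets.

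It remains to show $\rdesirs\subsetneq\rdesirs_2$, and this is the one step that genuinely uses the explicit form~\eqref{eq:extR} of $\rdesirs$ together with both facts about $\cone$. Take $f\in\rdesirs$. If $f\in\gambles^+$ then $f\in\rdesirs_2$ trivially. Otherwise $f\geq\lambda(r-s)\eqqcolon c$ for some $r,s\in\actsn$ with $r\succ s$ and $\lambda>0$, so $c\in\cone$ and $f-c\geq0$; then superadditivity~\ref{C3} and~\ref{C1} give $\lpr_\epsilon(f)\geq\lpr_\epsilon(c)+\lpr_\epsilon(f-c)\geq\epsilon\lpr_1(c)+\inf(f-c)\geq\epsilon\lpr_1(c)>0$, using $\lpr_\epsilon(c)=(1-\epsilon)\lpr(c)+\epsilon\lpr_1(c)=\epsilon\lpr_1(c)$ and $\lpr_1(c)>0$. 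Hence $f\in\rdesirs_2$, so $\rdesirs\subseteq\rdesirs_2$; and the inclusion is strict since $\rdesirs_2$ is strictly desirable while $\rdesirs$ is not. Any $\epsilon\in(0,1)$ therefore yields $\rdesirs\subsetneq\rdesirs_2\subsetneq\rdesirs_1$, which is the claim.

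The main obstacle — such as it is — is recognising that the correct object to build is the convex combination $\lpr_\epsilon$ of the two previsions, rather than some direct surgery on the cones: this is exactly what simultaneously keeps $\rdesirs$ inside $\rdesirs_2$ (its border gambles, all lying on $\{\lpr=0\}$, are pushed strictly inside because $\lpr_1$ is strictly positive on $\cone$) and keeps $\rdesirs_2$ strictly inside $\rdesirs_1$ (because $\lpr_\epsilon$ lies strictly below $\lpr_1$ somewhere). Everything else is bookkeeping with~\ref{C1}--\ref{C3} and the one-to-one correspondence between coherent lower previsions and coherent sets of strictly desirable gambles.
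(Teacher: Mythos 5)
Your proof is correct, and it follows the same overall strategy as the paper's: pass to the induced coherent lower previsions, use Theorem~\ref{thm:ext}(iii) to get $\lpr\equiv 0$ on $\cone$ while strict desirability of $\rdesirs_1$ forces $\lpr_1>0$ on $\cone$, build an intermediate coherent lower prevision, and take its associated strictly desirable set. Where you differ is in how the intermediate model is built. The paper fixes a single $f\in\cone$, picks linear previsions $\pr_1\in\solp(\lpr_1)$ and $\pr\in\solp(\lpr)$ attaining $\lpr_1(f)$ and $\lpr(f)=0$, and lets $\lpr_2$ be the lower envelope of the convex hull of $\solp(\lpr_1)\cup\{\tfrac{1}{2}\pr_1+\tfrac{1}{2}\pr\}$; this yields the explicit value $\lpr_2(f)=\lpr_1(f)/2$, which gives the strictness $\rdesirs_2\subsetneq\rdesirs_1$ directly, but it relies on attainment of the infima in the credal sets (weak* compactness) and leaves the inclusion $\rdesirs\subseteq\rdesirs_2$ to monotonicity, stated rather tersely. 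Your global mixture $\lpr_\eps=(1-\eps)\lpr+\eps\lpr_1$ avoids choosing any minimising linear previsions and any credal-set manipulation at all: coherence of $\lpr_\eps$ is immediate from~\ref{C1}--\ref{C3}, the inclusion $\rdesirs\subseteq\rdesirs_2$ is made explicit via superadditivity applied to the decomposition $f\geq c\in\cone$ coming from~\eqref{eq:extR}, and both strictness claims reduce to the bijection between coherent lower previsions and sets of strictly desirable gambles (via~\eqref{eq:Rlpr}). The only cosmetic looseness is the one-line justification of $\lpr_1(f)>0\Rightarrow f\in\rdesirs_1$ (it follows from the supremum in~\eqref{eq:lprR} together with~\ref{D1} and~\ref{D4}, or from the identification $\rdesirs_1=\{f\gneq 0\text{ or }\lpr_1(f)>0\}$ valid for strictly desirable sets), and your restriction to $\cone\neq\emptyset$ matches the paper's own implicit assumption, since its proof also starts by taking $f\in\cone$.
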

\noindent We can rephrase the corollary by saying that the notion of minimal extension among the weak Archimedean ones is ill-posed: in fact, the minimal extension should correspond to the intersection of all the Archimedean extensions; but this is just $\rdesirs$, the least-committal extension of Theorem~\ref{thm:ext}.

As a consequence, the choice of an Archimedean extension cannot be made automatically, it is you that should tell us which one to select among the infinitely many possible ones. For the same reason, even if we start from a weak Archimedean preference relation, and drop the worst outcome, then there is no way in general to have an automatic procedure to recover the preference relation from which we started. Stated differently, it appears that if you hold Archimedean preferences, you should first of all have in mind a worst outcome and only then start defining your assessments.

\subsection{Archimedean conditions and their relations}\label{sec:arc-rels}

Let us consider a strengthening of the Archimedean property~\ref{A3}, which will be necessary for the developments in the next section:
\begin{definition}[{\bf Strong Archimedean condition}] Let $\succ$ be a coherent preference relation on $\acts\times\acts$. We say that the relation is \emph{strongly Archimedean} if it satisfies
\begin{enumerate}[label=\upshape sA\arabic*.,ref=\upshape sA\arabic*]
\setcounter{enumi}{2}
\item\label{A3'} $(p,q\in\acts)p \succ q \Rightarrow(\forall r,s \in\acts)(\exists \beta \in (0,1)) \beta p+(1-\beta) r \succ \beta q+ (1-\beta) s$ [strong Archimedeanity].
\end{enumerate}
\end{definition}

That~\ref{A3'} is in fact stronger than the usual Archimedean property~\ref{A3} is established in the following:

\begin{proposition}\label{pr:charac-archimedean} Let $\succ$ be a coherent relation and $\cone$ the corresponding cone. Then:
\begin{enumerate}
\item[(i)] Relation $\succ$ satisfies~\ref{A3} $\Leftrightarrow(\forall f,g\in
\cone)(\exists \beta_1, \beta_2\in (0,1))\beta_1
f-(1-\beta_1)g \in \cone , \beta_2 g-(1-\beta_2)f \in
\cone $.
\item[(ii)] Relation $\succ$ satisfies~\ref{A3'} $\Leftrightarrow(\forall f\in
\cone)(\forall g\in\sdiff)(\exists \beta\in (0,1))\beta
f+(1-\beta)g \in \cone $.
\item[(iii)] Relation $\succ$ satisfies~\ref{A3'} $\Rightarrow$ relation $\succ$
satisfies~\ref{A3}.
\end{enumerate}
\end{proposition}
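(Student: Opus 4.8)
The plan is to reduce everything to statements about membership in the cone~$\cone$, using Proposition~\ref{prop:charC} (which says $p\succ q\iff\lambda(p-q)\in\cone$ for $\lambda>0$) and Proposition~\ref{prop:C-convex-cone} (which guarantees $\cone$ is a convex cone). Each equivalence in (i) and (ii) is then obtained by translating the relevant Archimedean axiom into cone language in one direction and translating back in the other; (iii) follows by combining the cone forms in (i) and (ii).

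For (i), the ``$\Leftarrow$'' direction is immediate: given $p\succ q\succ r$, set $f\coloneqq p-q\in\cone$ and $g\coloneqq q-r\in\cone$; the hypothesis supplies $\beta_1,\beta_2\in(0,1)$ with $\beta_1 f-(1-\beta_1)g\in\cone$ and $\beta_2 g-(1-\beta_2)f\in\cone$, and expanding these two gambles gives $\beta_1 p+(1-\beta_1)r-q\in\cone$ and $q-\bigl((1-\beta_2)p+\beta_2 r\bigr)\in\cone$, i.e. $\alpha p+(1-\alpha)r\succ q\succ\beta p+(1-\beta)r$ with $\alpha=\beta_1$ and $\beta=1-\beta_2$. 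The ``$\Rightarrow$'' direction is where the only real work lies: given \emph{arbitrary} $f,g\in\cone$ one must realise them, up to positive rescaling, as $p-q$ and $q-r$ \emph{with a common middle act}~$q$. Writing $f=\mu(p'-q')$, $g=\nu(p''-q'')$ with $p'\succ q'$, $p''\succ q''$, the plan is to take $q\coloneqq\tfrac12 q'+\tfrac12 p''$ and, for $t\in(0,\tfrac12]$, $p\coloneqq q+t(p'-q')$ and $r\coloneqq q-t(p''-q'')$; a direct check shows $p$ and $r$ are convex combinations of $\{p',q',p'',q''\}$, hence genuine acts, while $p-q=t(p'-q')$ and $q-r=t(p''-q'')$ are positive multiples of $f$ and $g$, so Proposition~\ref{prop:charC} gives $p\succ q\succ r$. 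Applying~\ref{A3}, then expanding $\alpha p+(1-\alpha)r-q$ and $q-\beta p-(1-\beta)r$ in terms of $f,g$ and dividing by the appropriate positive constant (legitimate since $\cone$ is a cone) yields the two required memberships.

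For (ii) the translation is cleaner, since no common middle act is needed. ``$\Leftarrow$'': given $p\succ q$ and arbitrary $r,s\in\acts$, put $f\coloneqq p-q\in\cone$, $g\coloneqq r-s\in\sdiff$; the hypothesis gives $\beta\in(0,1)$ with $\beta f+(1-\beta)g\in\cone$, which is $\beta p+(1-\beta)r-\bigl(\beta q+(1-\beta)s\bigr)\in\cone$, i.e.\ exactly~\ref{A3'}. ``$\Rightarrow$'': given $f\in\cone$, $g\in\sdiff$, write $f=\lambda(p-q)$ with $p\succ q$ and $g=\mu(r-s)$; applying~\ref{A3'} to $p\succ q$ with those same $r,s$ produces $\beta\in(0,1)$ with $\tfrac{\beta}{\lambda}f+\tfrac{1-\beta}{\mu}g\in\cone$, and rescaling inside $\cone$ puts this in the form $\gamma f+(1-\gamma)g\in\cone$ with $\gamma\in(0,1)$ (the degenerate case $g=0$ is covered directly by~\ref{A2}).

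For (iii) I would invoke (i) and (ii): $\sdiff$ is closed under negation, so for $f,g\in\cone$ we have $-g,-f\in\sdiff$; feeding the pairs $(f,-g)$ and $(g,-f)$ into the cone form of~\ref{A3'} from (ii) produces $\beta_1,\beta_2\in(0,1)$ with $\beta_1 f-(1-\beta_1)g\in\cone$ and $\beta_2 g-(1-\beta_2)f\in\cone$, which is precisely the cone form of~\ref{A3} from (i). (Alternatively, a direct act-level argument works: from $p\succ q\succ r$, apply~\ref{A3'} to $p\succ q$ with mixing acts $r$ and $q$ to get $\alpha p+(1-\alpha)r\succ q$, and to $q\succ r$ with mixing acts $q$ and $p$ to get $q\succ(1-\beta')p+\beta' r$, giving~\ref{A3} at once.) I expect the common-middle-act construction in the forward direction of (i) to be the only delicate point; the remainder is routine manipulation of convex combinations and positive rescalings inside $\cone$, justified by Propositions~\ref{prop:charC} and~\ref{prop:C-convex-cone}.
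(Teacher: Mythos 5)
Your proposal is correct and follows essentially the same route as the paper's proof: translate both Archimedean conditions into cone membership, build a three-act chain with a common middle act (your $q=\tfrac12 q'+\tfrac12 p''$ with differences rescaled by $t$ is exactly the chain the paper obtains from~\ref{A2} with mixing coefficient $\tfrac12$), apply~\ref{A3}, and renormalise inside $\cone$; part (iii) is likewise deduced from (i) and (ii) via $-f,-g\in\sdiff$, just as in the paper. The only cosmetic differences are that you verify the chain directly through Proposition~\ref{prop:charC} and the cone property instead of invoking~\ref{A2}, and you absorb the scale factors into the parameter $t$ rather than treating the case $\lambda_1=\lambda_2=1$ first.
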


To see that~\ref{A3} and~\ref{A3'} are not equivalent in
general, consider the following example:

\begin{example}\label{ex:stronger}
Consider a gamble $0\neq f\in\sdiff$, and let us define the
binary relation $\succ$ on ${\mathcal H}\times{\mathcal H}$ by
$p\succ q\Leftrightarrow p-q=\lambda f$ for some $\lambda>0$. This
relation satisfies the following axioms:
\begin{itemize}
 \item[\ref{A1}.] $f\neq 0 \Rightarrow p\nsucc p$ for every
 $p\in{\mathcal H}$. Moreover, if $p\succ q\succ r$, there are
 $\lambda_1,\lambda_2>0$ such that $p-q=\lambda_1 f, q-r=\lambda_2
 f$, and then $p-r=(\lambda_1+\lambda_2) f$, whence $p\succ r$.
 \item[\ref{A2}.] $p\succ q\Rightarrow \exists \lambda>0$ such that
 $p-q=\lambda f$. Given $r\in{\mathcal H}$ and $\beta\in(0,1]$,
 $\beta p+(1-\beta) r\succ \beta q+(1-\beta) r$, since
 $\beta(p-q)=\beta\lambda f$. The converse is analogous.
 \item[\ref{A3}.] We apply Proposition~\ref{pr:charac-archimedean}. Given $f_1,f_2\in\cone $, there are
 $\lambda_1,\lambda_2>0$ such that $f_1=\lambda_1 f, f_2=\lambda_2
 f$. Then there are $\beta_1,\beta_2\in(0,1)$ such that
 $\beta_1\lambda_1-(1-\beta_1)\lambda_2>0$ and
 $\beta_2\lambda_2-(1-\beta_2)\lambda_1>0$, from which we deduce
 that $\beta_1 f_1-(1-\beta_1)f_2$ and $\beta_2 f_2-(1-\beta_2)f_1$
 belong to $\cone $.
\end{itemize}

To see that~\ref{A3'} may not hold, it suffices to consider
$p,q,r,s\in{\mathcal H}$ such that $p\succ q$ and $r\nsucc s$. Then applying Proposition~\ref{pr:charac-archimedean}, there should be $\beta\in(0,1)$ such that $\beta (p-q)+(1-\beta)(r-s)\in\cone$. This implies that there is $\lambda>0$ such that $r-s=\lambda f$, a contradiction. $\lozenge$
\end{example}

Next, assume that relation $\succ$ contains a worst outcome. This makes all the Archimedean conditions we have introduced so far collapse into a single notion (but note that~\ref{wA3} needs to be equipped with an additional assumption of positivity):
\begin{proposition}\label{pr:equiv-Archs-z} Let $\succ$ be a coherent relation with worst outcome $z$, $\cone$ the corresponding cone and $\rdesirs\coloneqq\pi(\cone)$. Let $\lpr$ be the coherent lower prevision induced by $\rdesirs$ through~\eqref{eq:lprR}. Then the following are equivalent:
\begin{itemize}
\item[(i)] Relation $\succ$ satisfies~\ref{A3'}. \item[(ii)] Relation $\succ$ satisfies~\ref{A3}.
\item[(iii)] Relation $\succ$ satisfies~\ref{wA3} and $\lpr(f)>0$ for all $f\in\gambles^+(\pspace\times\values)$.
\end{itemize}
\end{proposition}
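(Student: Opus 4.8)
The plan is to prove the cycle $(i)\Rightarrow(ii)\Rightarrow(iii)\Rightarrow(i)$, doing most of the work on the desirability side by transporting the cone characterisations of Proposition~\ref{pr:charac-archimedean} through the projection $\pi$. Recall that $\pi$ is linear, that its restriction to $\sdiff(\actsy)$ is a bijection onto $\gambles(\pspace\times\valuesn)$ carrying $\cone$ onto $\rdesirs$ (Propositions~\ref{prop:rdesirs} and~\ref{prop:rdesirsInv}(ii)), so that quantifiers over $\cone$ (resp.\ over $\sdiff$) translate verbatim into quantifiers over $\rdesirs$ (resp.\ over $\gambles(\pspace\times\valuesn)$) and linear combinations are preserved. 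The single fact that drives everything is that the gamble $\mathbf{1}$ with all components equal to $1$ lies in $\rdesirs$, since $\gambles^+\subseteq\rdesirs$ by~\ref{D1}; this is available precisely because $z$ is the worst outcome. The implication $(i)\Rightarrow(ii)$ is then just Proposition~\ref{pr:charac-archimedean}(iii) and needs nothing further.

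For $(ii)\Rightarrow(iii)$: by Proposition~\ref{pr:charac-archimedean}(i), transported through $\pi$, condition~\ref{A3} says that for all $f,g\in\rdesirs$ there is $\beta_1\in(0,1)$ with $\beta_1 f-(1-\beta_1)g\in\rdesirs$. Taking $g\coloneqq\mathbf{1}$ and dividing by $\beta_1$ (Positive Homogeneity~\ref{D3}) yields, for every $f\in\rdesirs$, a $\delta\coloneqq(1-\beta_1)/\beta_1>0$ with $f-\delta\in\rdesirs$. When $f\in\rdesirs\setminus\gambles^+$ this is exactly Openness~\ref{D0}, so $\rdesirs$ is strictly desirable and hence $\succ$ is weakly Archimedean by Theorem~\ref{prop:rdesirsInv}(iii); when $f\in\gambles^+$ it gives $\lpr(f)\geq\delta>0$. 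Both halves of~(iii) follow.

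For $(iii)\Rightarrow(i)$: by Proposition~\ref{pr:charac-archimedean}(ii), transported through $\pi$, it suffices to show that for every $f\in\rdesirs$ and every $g\in\gambles(\pspace\times\valuesn)$ there is $\beta\in(0,1)$ with $\beta f+(1-\beta)g\in\rdesirs$. First produce $\mu_0>0$ with $f-\mu_0\mathbf{1}\in\rdesirs$: if $f\notin\gambles^+$ this is Openness~\ref{D0}; if $f\in\gambles^+$ then $\lpr(f)>0$ by hypothesis, so for any $0<\mu_0<\lpr(f)$ one has $f-\mu_0\mathbf{1}\in\rdesirs$ (add a positive constant to a witness of the supremum, using~\ref{D1} and~\ref{D4}). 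Now pick $K>0$ with $g\geq-K\mathbf{1}$ (possible since $g$ is bounded) and choose $\beta\in[K/(K+\mu_0),1)$, a nonempty subinterval of $(0,1)$. Then $\beta\mu_0\mathbf{1}+(1-\beta)g\geq(\beta\mu_0-(1-\beta)K)\mathbf{1}\geq0$, so this gamble lies in $\gambles^+\cup\{0\}$; and since
\begin{equation*}
\beta f+(1-\beta)g=\beta(f-\mu_0\mathbf{1})+\bigl(\beta\mu_0\mathbf{1}+(1-\beta)g\bigr),
\end{equation*}
combining~\ref{D3},~\ref{D1} and~\ref{D4} places the right-hand side in $\rdesirs$. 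This is~\ref{A3'}.

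The routine parts are checking that the passage through $\pi$ is faithful (it is, being linear and bijective on the relevant sets) and the elementary bound-chasing that fixes $\beta$ in the last implication, which is the step demanding the most care. I do not anticipate a genuine obstacle; the one idea to keep in focus throughout is that the worst outcome supplies the positive constant $\mathbf{1}\in\rdesirs$, which lets~\ref{A3} be probed against a positive constant in one direction and lets an arbitrary gamble be absorbed by a positive constant in the other.
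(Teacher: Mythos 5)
Your proof is correct, and it takes a genuinely different route from the paper's in two of the three nontrivial implications. You close the cycle (i)$\Rightarrow$(ii)$\Rightarrow$(iii)$\Rightarrow$(i) almost entirely on the desirability side, with the constant gamble $1\in\gambles^+\subseteq\rdesirs$ as the single pivot: testing the transported characterisation of~\ref{A3} (Proposition~\ref{pr:charac-archimedean}(i)) against this constant gives $f-\delta\in\rdesirs$ for every $f\in\rdesirs$, which delivers~\ref{D0} (hence~\ref{wA3}, via Theorem~\ref{prop:rdesirsInv}(iii) or Theorem~\ref{thm:equiv}) and $\lpr(f)>0$ for positive $f$ in one stroke; and in the converse direction the decomposition $\beta f+(1-\beta)g=\beta(f-\mu_0)+\bigl(\beta\mu_0+(1-\beta)g\bigr)$ absorbs an arbitrary $g$ into a nonnegative gamble, giving~\ref{A3'} directly from (iii) via Proposition~\ref{pr:charac-archimedean}(ii). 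The paper instead proves (ii)$\Rightarrow$(i) by a mixture argument in the act space (exploiting $r\succ z$ and~\ref{A2}), obtains the positivity in (ii)$\Rightarrow$(iii) by applying~\ref{A3} to a perturbation of the uniform act so as to place $I_{\{(\omega,x)\}}-\delta$ in $\rdesirs$, and proves (iii)$\Rightarrow$(ii) by super-additivity of $\lpr$; your version is leaner and makes transparent that the whole proposition is driven by the single fact that the worst outcome puts the constants in $\rdesirs$, while the paper's records the intermediate implication (iii)$\Rightarrow$(ii) explicitly and shows how the same idea is realised by mixtures at the level of acts. Two small points of hygiene: in (iii)$\Rightarrow$(i) the availability of~\ref{D0} from~\ref{wA3} is exactly Theorem~\ref{prop:rdesirs-archimedean} and should be cited, and since $\beta\mu_0+(1-\beta)g$ may be the zero gamble the last step should split that case off (it is then vacuous), which your writing of $\gambles^+\cup\{0\}$ already anticipates.
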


\subsection{Existence of Archimedean extensions in the finite case}

Note that at this point we do not know yet whether there are
coherent Archimedean extensions to the worst outcome of a given
coherent preference relation $\succ$. In particular we would like
that there were Archimedean extensions that do not introduce
informative assessments, that is, assessments other than those
necessary for the extension to be weakly Archimedean. We show next
that these extensions may not always exist whenever $\pspace$ is
finite; the case of infinite $\pspace$ is left as an open problem.

Showing that there exists an Archimedean extension with worst
outcome is equivalent to proving that it is possible to include set
$\rdesirs$, given in~\eqref{eq:extR}, in a coherent set of strictly
desirable gambles. We are going to show that, when a coherent
preference relation $\succ$ satisfies the strong Archimedean
condition~\ref{A3'}, and $\pspace$ is indeed finite, then the set
$\rdesirs$ of desirable gambles it induces has a proper superset of
strictly desirable gambles.

\noindent It is interesting to note that, even when we can build an
open superset of $\rdesirs$, it could be that any such superset
introduces new preferences in the original space, as the following
example shows:

\begin{example}
Consider $\pspace\coloneqq\{\omega\},\values\coloneqq\{x_1,x_2,x_3\}$ and let
$\succ$ be the binary relation on ${\mathcal H}\times{\mathcal
H}$ given by
\begin{equation*}
 p \succ q \Leftrightarrow(\exists \lambda >0) p-q=\lambda h,
\end{equation*}
where gamble $h$ is given by $h(\omega,x_1)\coloneqq1, h(\omega,x_2)\coloneqq-1,
h(\omega,x_3)\coloneqq0$. Then $\succ$ is a preference
relation, and the set $\cone$ it induces is given by
\begin{equation*}
 \cone=\{(a,-a,0): a>0\},
\end{equation*}
where in order to simplify the notation we use the vector
$(\alpha_1,\alpha_2,\alpha_3)$ to refer to
$(f(\omega,x_1),f(\omega,x_2),f(\omega,x_3))$. It follows from
Example~\ref{ex:stronger} that this preference relation
satisfies axioms~\ref{A1},~\ref{A2} and~\ref{A3}, but not~\ref{A3'}.

Now, if $\rdesirs_1$ is a set of strictly desirable gambles that
includes the natural extension $\rdesirs$ of $\cone$ and
$\lpr_1$ is the coherent lower prevision it induces, then it must be
$\lpr_1(f)>0$ for every $f\in\cone$. This means that the
credal set $\solp(\lpr_1)$ is a closed subset of
\begin{equation*}
 \solp\coloneqq\{\pr: (\forall f\in\cone)\pr(f)>0\}.
\end{equation*}
Let $\pr$ be a linear prevision on $\gambles(\pspace\times\values)$, and
let $p_1\coloneqq\pr(\{\omega,x_1\}), p_2\coloneqq\pr(\{\omega,x_2\}),p_3\coloneqq\pr(\{\omega,x_3\})$. Then it follows
that $\pr(f)>0$ for all $f\in\cone$ if and only if $p_1>p_2$.

Let us show now that for any closed set
$\solp(\lpr_1)\subseteq\solp$, the set $\{f \in {\mathcal A}:
\lpr_1(f)>0\}$ is a strict superset of $\cone$, from which it
will follow that the set $\rdesirs_1$ of strictly desirable gambles
associated with $\lpr_1$ will induce more preferences than those
encompassed by $\succ$.

To see that this is indeed the case, assume ex-absurdo that $\{f \in
{\mathcal A}: \lpr_1(f)>0\}$ coincides with $\cone$. Then given
$\delta>0$, for the gamble $g\coloneqq(1+\delta,-1,-\delta)$ it must
hold that $\lpr_1(g)\leq 0$. This implies that there is some $\pr\in
\solp(\lpr_1)\subseteq\solp$ such that $0\geq
\pr(g)=p_1-p_2+\delta(p_1-p_3)$. Since we know that $p_1>p_2$, it
must be the case that $p_1< p_3$, whence $\delta\geq
\frac{p_1-p_2}{p_3-p_1}$. Moreover, since we can make the positive
number $\delta$ as close to 0 as we want, this means that for every
natural number $n$ there is some $\pr_n\coloneqq(p_1^n,p_2^n,p_3^n)$
in $\solp(\lpr_1)$ with $p_3^n-p_1^n>0$ such that
\begin{equation}\label{eq:aux-sequence}
 \frac{1}{n}\geq \frac{p_1^n-p_2^n}{p_3^n-p_1^n}.
\end{equation}
But the set $\solp(\lpr_1)$ is compact in the metric space
associated with the Euclidean topology, so it is also sequentially
compact. Thus, the sequence $(\pr_n)_{n\in\nats}$ has a convergent subsequence
to some $\pr'\coloneqq(p_1',p_2',p_3')$. From
Eq.~\eqref{eq:aux-sequence} it follows that $p_1'-p_2'=0$, and this
is a contradiction with $\pr'\in\solp$.

This shows that the set of strictly desirable gambles associated
with any closed subset of $\solp$ will include some gamble in
${\mathcal A}\setminus\cone$ and therefore will not induce
the same preferences. $\lozenge$
\end{example}

One issue in the example above is that the preference relation
$\succ$ satisfies~\ref{A3} but not~\ref{A3'}. Indeed, when
$\succ$ is strongly Archimedean, we can always guarantee the
existence of an open superset of $\rdesirs$, as the following
theorem shows:

\begin{theorem}\label{th:existence-open-superset}
Let $\succ$ be a coherent preference relation on ${\mathcal
H}\times{\mathcal H}$ satisfying~\ref{A3'}. Assume $\pspace$ is
finite, and let $\rdesirs$ be given by~\eqref{eq:extR}. Then
there is a set of strictly desirable gambles $\rdesirs_1$ that
includes $\rdesirs$. 
\end{theorem}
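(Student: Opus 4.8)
The plan is to obtain $\rdesirs_1$ from a single strictly positive linear prevision on $\gambles(\pspace\times\valuesn)$. Concretely, suppose we have found a (linear) prevision $\pr$ on $\gambles(\pspace\times\valuesn)$ with $\pr(f)>0$ for every $f$ in the cone $\cone$ associated with $\succ$. Since a linear prevision is in particular a coherent lower prevision, the set $\rdesirs_1\coloneqq\{g\in\gambles(\pspace\times\valuesn):g\gneq0\text{ or }\pr(g)>0\}$ of Eq.~\eqref{eq:Rlpr} is coherent and strictly desirable by the discussion following that equation. And it contains $\rdesirs$: by~\eqref{eq:extR} every $f\in\rdesirs$ either lies in $\gambles^+$ (whence $f\in\rdesirs_1$ at once) or satisfies $f\ge\lambda(r-s)$ for some $r,s\in\actsn$ with $r\succ s$ and some $\lambda>0$; in the latter case $\lambda(r-s)\in\cone$, so by positive homogeneity and monotonicity of $\pr$ (consequences of~\ref{C1},~\ref{C2}) we get $\pr(f)\ge\lambda\pr(r-s)>0$, i.e.\ $f\in\rdesirs_1$. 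Hence the whole problem reduces to constructing such a $\pr$. (If $\cone=\emptyset$ then $\rdesirs=\gambles^+$ is itself strictly desirable and there is nothing to prove, so from now on assume $\cone\neq\emptyset$.)

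Next I would exploit~\ref{A3'}. Write $\sdiff\coloneqq\sdiff(\actsn)$; this is a linear subspace of $\gambles(\pspace\times\valuesn)$, finite-dimensional because $\pspace$ and $\valuesn$ are finite, and every $f\in\sdiff$ has $\sum_{x\in\valuesn}f(\omega,x)=0$ for all $\omega\in\pspace$. By Proposition~\ref{prop:C-convex-cone}, $\cone$ is a convex cone with $0\notin\cone$, and $\cone\subseteq\sdiff$. I would then show that $\cone$ is open in $\sdiff$: by Proposition~\ref{pr:charac-archimedean}(ii), for each $f\in\cone$ and each $g\in\sdiff$ there is $\beta\in(0,1)$ with $\beta f+(1-\beta)g\in\cone$; dividing by $\beta$ (legitimate since $\cone$ is a cone) gives $f+tg\in\cone$ with $t\coloneqq(1-\beta)/\beta>0$, and convexity of $\cone$ together with $f\in\cone$ then yields the whole segment $f+sg\in\cone$ for $s\in[0,t]$. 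Running this with $g$ and with $-g$ shows that every point of $\cone$ is internal in $\sdiff$, so $\cone$ is open in $\sdiff$.

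Now I would separate and realise the functional as a probability. Since $\cone$ is a nonempty open convex subset of the finite-dimensional space $\sdiff$ and $0\notin\cone$, the separating hyperplane theorem gives a nonzero linear functional $L\colon\sdiff\to\reals$ with $L(f)>L(0)=0$ for all $f\in\cone$ (strictness using openness of $\cone$). Extend $L$ to a linear functional on $\gambles(\pspace\times\valuesn)$, represented by a vector $m\in\reals^{\pspace\times\valuesn}$. Because $\sum_{x\in\valuesn}f(\omega,x)=0$ for $f\in\sdiff$, replacing $m(\omega,x)$ by $m(\omega,x)+c_\omega$ for arbitrary constants $c_\omega$ does not alter the restriction of the functional to $\sdiff$; choosing $c_\omega>\max_{x}\abs{m(\omega,x)}$ makes all entries strictly positive. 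Setting $s\coloneqq\sum_{(\omega,x)}(m(\omega,x)+c_\omega)>0$ and $\pr(\omega,x)\coloneqq(m(\omega,x)+c_\omega)/s$ produces a strictly positive mass function on $\pspace\times\valuesn$, hence a linear prevision, with $\pr(f)=L(f)/s>0$ for all $f\in\cone$. Feeding this $\pr$ into the first paragraph finishes the proof.

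The main obstacle is the middle part: extracting honest topological openness of $\cone$ inside $\sdiff$ from the purely algebraic form~\ref{A3'} of strong Archimedeanity (Proposition~\ref{pr:charac-archimedean}(ii)), and then turning the separating functional into a bona fide finitely additive probability on $\pspace\times\valuesn$. This last step—the extension-and-renormalisation trick—is exactly where the finiteness of $\pspace$ is used, which is also why the infinite-$\pspace$ case is left open.
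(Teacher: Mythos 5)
Your proposal is correct, and it reaches the conclusion by a genuinely different route from the paper's. The paper argues ex absurdo at the level of the induced lower prevision: via Lemma~\ref{le:charac-superset} it characterises the existence of a strictly desirable superset of $\rdesirs$ by the condition $\upr(f)>0$ for all $f\in\cone$ (the strictly positive linear prevision being extracted from a point in the relative interior of $\solp(\lpr)$), shows that under~\ref{A3'} the failure of this condition forces $\lpr=\upr=0$ on all of $\sdiff$, and excludes that by the fact that a convex set dense in a Euclidean space must be the whole space; the topological openness of $\cone$ is established only afterwards (Proposition~\ref{pr:str-arch-open}) through the quantitative generating-family estimates of Lemmas~\ref{le:finite-generating} and~\ref{lem:open-superset}. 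You instead prove openness of $\cone$ in the finite-dimensional subspace $\sdiff$ directly from Proposition~\ref{pr:charac-archimedean}(ii), using the standard fact that for convex sets in finite dimensions the algebraic interior coincides with the topological interior, then separate $\cone$ strictly from the origin and turn the separating functional into a strictly positive mass function by the row-wise constant shift, which works precisely because every element of $\sdiff$ sums to zero in each state; Eq.~\eqref{eq:Rlpr} then gives $\rdesirs_1\supseteq\rdesirs$, exactly as in the last step of Lemma~\ref{le:charac-superset}(i). So both proofs ultimately rest on a single linear prevision that is strictly positive on $\cone$, but they obtain it differently: your soft core-equals-interior argument also yields, in passing, a much shorter proof of the direct implication of Proposition~\ref{pr:str-arch-open}, while the paper's heavier machinery produces explicit $\epsilon_f$-balls (quantitative openness) and a prevision-level characterisation that are of independent use elsewhere in Section~\ref{sec:z}. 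Two small points: monotonicity of $\pr$ needs linearity (i.e.\ \ref{C1} together with additivity), not just~\ref{C1}--\ref{C2}, though for your explicitly constructed strictly positive mass function it is immediate; and finiteness of $\pspace$ enters not only in the extension-and-renormalisation step but already in the core-equals-interior argument, which fails in infinite dimensions.
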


Moreover, under strong Archimedeanity there are cases where we can
find sets of strictly desirable gambles that include $\rdesirs$ and
that introduce no new preferences in the original space:

\begin{example}
Consider $\pspace\coloneqq\{\omega\},\values\coloneqq\{x_1,x_2,x_3\}$ and let
$\succ$ be the binary relation on ${\mathcal H}\times{\mathcal
H}$ given by
\begin{equation*}
 f \succ g \Leftrightarrow f(\omega,x_1)<g(\omega,x_1),
 f(\omega,x_2)<g(\omega,x_2).
\end{equation*}
Then $\succ$ is a coherent preference relation, and the set
$\cone$ it induces is given by
\begin{equation*}
 \cone=\{(a,b,-a-b): \max\{a,b\}<0\}.
\end{equation*}
Now, if $\rdesirs_1$ is a set of strictly desirable gambles that
includes the natural extension $\rdesirs$ of $\cone$ and
$\lpr_1$ is the coherent lower prevision it induces, then it must be
$\lpr_1(f)>0$ for every $f\in\cone$. This means that the
credal set $\solp(\lpr_1)$ is a closed subset of
\begin{equation*}
 \solp\coloneqq\{\pr: (\forall f\in\cone)\pr(f)>0\}.
\end{equation*}
Let $\pr$ be a linear prevision on $\pspace\times\values$, and
let $p_1\coloneqq\pr(\{\omega,x_1\}), p_2\coloneqq\pr(\{\omega,x_2\})$. Then for every
$a,b\in\mathbb{R}$, if we take $f$ on $\pspace\times\values$ given
by $f(\omega,x_1)\coloneqq a, f(\omega,x_2)\coloneqq b, f(\omega,x_3)\coloneqq-a-b$, it holds
that $\pr(f)=a(2p_1+p_2-1)+ b(2p_2+p_1-1)$. From this it
follows that $\pr\in\solp$ if and only if
\begin{equation*}
 \min\{2p_1+p_2-1,2p_2+p_1-1\}<0, \max\{2p_1+p_2-1,2p_2+p_1-1\}\leq
 0.
\end{equation*}

Let us show now that there is a closed convex set
$\solp(\lpr_1)\subseteq\solp$ such that the set $\{f \in {\mathcal
A}: \lpr_1(f)>0\}$ agrees with $\cone$, from which it
will follow that the set $\rdesirs_1$ of strictly desirable gambles
associated with $\lpr_1$ induces the same preferences on the
original space as relation $\succ$.

To see that this is indeed the case, consider
$\apr_1\coloneqq(q_1^1,q_2^1,1-q_1^1-q_2^1),\apr_2\coloneqq(q_1^2,q_2^2,1-q_1^2-q_2^2)$
in $\solp$ such that
\begin{equation*}
 2q_1^1+q_2^1-1<0=2q_2^1+q_1^1-1 \ \text{ and } \
 2q_2^2+q_1^2-1<0=2q_1^2+q_2^2-1.
\end{equation*}
Then given $f\coloneqq(a,b,-a-b)\in{\mathcal A}\setminus\cone$, it
holds that $\max\{a,b\}\geq 0$. Then
\begin{align*}
 &a\geq 0 \Rightarrow \apr_1(f)=a(2q_1^1+q_2^1-1)\leq 0, \\
 &b\geq 0 \Rightarrow \apr_2(f)=b(2q_2^2+q_1^2-1)\leq 0,
\end{align*}
and as a consequence given $\lpr_1\coloneqq\min\{\apr_1,\apr_2\}$,
it holds that $\lpr_1(f)\leq 0$ for every $f\in{\mathcal
A}\setminus\cone$. Thus, $\solp(\lpr_1)$ is a closed convex subset
of $\solp$ satisfying that $\{f\in{\mathcal A}:\lpr_1(f)>0\}=\cone$.
$\lozenge$
\end{example}

It is an open problem at this stage to determine if this is the case
for any coherent preference relation that is strongly
Archimedean. Our conjecture is that this will hold, because of the
following result:

\begin{proposition}\label{pr:str-arch-open}
Assume $\pspace$ is finite, and let $\succ$ be a coherent preference
relation on ${\mathcal H}\times{\mathcal H}$. Then $\succ$
satisfies~\ref{A3'} if and only if $\cone$ is an open subset
of ${\mathcal A}$.
\end{proposition}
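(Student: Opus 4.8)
The plan is to reduce the statement to the geometric characterisation of~\ref{A3'} given in Proposition~\ref{pr:charac-archimedean}(ii): $\succ$ satisfies~\ref{A3'} if and only if
\[
(\forall f\in\cone)(\forall g\in\sdiff)(\exists\beta\in(0,1))\quad\beta f+(1-\beta)g\in\cone .
\]
Since $\pspace$ is finite, $\sdiff$ is a finite-dimensional real vector space (of dimension $\abs{\pspace}(\abs{\values}-1)$ by Theorem~\ref{prop:rdesirsInv}(i)), which I equip with its Euclidean topology; so it suffices to prove that this ``segment condition'' on the convex cone $\cone\subseteq\sdiff$ is equivalent to $\cone$ being open in $\sdiff$. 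The finiteness hypothesis will be used precisely to guarantee that a convex cone spanning $\sdiff$ has nonempty interior.

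The direction ``$\cone$ open $\Rightarrow$ segment condition'' is routine: given $f\in\cone$ and $g\in\sdiff$ I would write $\beta f+(1-\beta)g=f+(1-\beta)(g-f)$, which converges to $f$ as $\beta\to1^{-}$; since $\cone$ is open and $f\in\cone$, for $\beta$ close enough to $1$ this point lies in a ball around $f$ contained in $\cone$, yielding the required $\beta\in(0,1)$.

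For the converse I would assume the segment condition; the case $\cone=\emptyset$ is trivially open, so assume $\cone\neq\emptyset$. First I would produce an interior point: fixing $f\in\cone$, for any $g\in\sdiff$ the segment condition gives $\beta\in(0,1)$ with $h\coloneqq\beta f+(1-\beta)g\in\cone$, whence $g=\tfrac{1}{1-\beta}h-\tfrac{\beta}{1-\beta}f$ is a difference of two elements of $\cone$ (using that $\cone$ is a convex cone, Proposition~\ref{prop:C-convex-cone}); thus $\cone-\cone=\sdiff$, so $\cone$ spans $\sdiff$. A convex cone spanning a finite-dimensional space has nonempty interior, since it contains the open simplicial cone generated by a basis of $\sdiff$ extracted from $\cone$; fix $f_0\in\operatorname{int}(\cone)$. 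Next I would push every $f\in\cone$ into the interior by applying the segment condition to $f$ and to the specific gamble $g\coloneqq f-f_0\in\sdiff$: there is $\beta\in(0,1)$ with $f-(1-\beta)f_0=\beta f+(1-\beta)(f-f_0)\in\cone$. Writing $t\coloneqq1-\beta\in(0,1)$ and $h\coloneqq f-tf_0\in\cone$, we get $f=tf_0+(1-t)h'$ with $h'\coloneqq\tfrac{1}{1-t}h\in\cone$; by the line-segment principle for convex sets (the point at parameter $t\in(0,1)$ on the segment from the interior point $f_0$ to the point $h'\in\overline{\cone}$ lies in $\operatorname{int}(\cone)$) we obtain $f\in\operatorname{int}(\cone)$. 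Hence $\cone\subseteq\operatorname{int}(\cone)$, i.e.\ $\cone$ is open.

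I expect the converse to be the only delicate part, and within it two observations do the work: that finiteness of $\pspace$ upgrades ``$\cone$ spans $\sdiff$'' to ``$\operatorname{int}(\cone)\neq\emptyset$'' --- this fails in infinite dimensions and is exactly where the hypothesis enters --- and the choice $g=f-f_0$, which converts the abstract segment condition into the concrete membership $f-tf_0\in\cone$ that the line-segment principle needs. Everything else is standard manipulation of convex cones.
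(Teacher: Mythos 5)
Your proof is correct, and for the non-trivial direction it takes a genuinely different route from the paper. Both arguments reduce~\ref{A3'} to the segment condition of Proposition~\ref{pr:charac-archimedean}(ii), and the implication ``$\cone$ open $\Rightarrow$ \ref{A3'}'' is handled in essentially the same way in both (letting $\beta\to1^-$ inside a ball around $f$). For ``\ref{A3'} $\Rightarrow$ $\cone$ open'', however, the paper goes through Lemma~\ref{lem:open-superset}, which rests on the quantitative Lemma~\ref{le:finite-generating}: the segment condition is applied to each member of a finite generating family $\{g_1,\dots,g_\ell\}$ of $\sdiff$ with norm control on the coefficients, a common $\beta^*$ is extracted, and an explicit radius $\epsilon_f$ is produced for every $f\in\cone$. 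You instead argue qualitatively: the segment condition gives $\cone-\cone=\sdiff$, so $\cone$ contains a basis of $\sdiff$ and hence (finiteness of $\pspace$ and $\values$ entering exactly here, through finite-dimensionality) an open simplicial cone, i.e.\ $\operatorname{int}(\cone)\neq\emptyset$; then the specific choice $g=f-f_0$ turns the segment condition into $f-tf_0\in\cone$ for some $t\in(0,1)$, and the line-segment principle for convex sets \cite[Theorem~6.1]{rockafellar1970} places $f$ in $\operatorname{int}(\cone)$. Your argument avoids the combinatorial bookkeeping of Lemma~\ref{le:finite-generating} and the $\epsilon$-estimates altogether, at the price of invoking two standard convex-analysis facts; what it does not deliver are the explicit radii $\epsilon_f$, which the paper reuses in Lemma~\ref{lem:open-superset}(ii) (and thence in Theorem~\ref{th:existence-open-superset}), but for the proposition as stated the purely topological conclusion is all that is needed. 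Your handling of the degenerate case $\cone=\emptyset$ and of the relative topology on $\sdiff$ is also in order.
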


\section{Decomposition and completeness of preferences}\label{sec:decompose}
Thanks to the previous results, we have the possibility to represent
coherent preference relations equivalently through coherent sets of
desirable gambles in $\gambles(\pspace\times\valuesn)$. Now we
exploit this equivalence in order to analyse and discuss the most
important special cases that allow us to simplify a
decision-theoretic problem through some type of decomposition of
preferences in a part made of beliefs and another of perceived
values (of consequences).

In fact, without any further assumption, a decision-theoretic
problem, represented by a coherent set $\rdesirs$ in
$\gambles(\pspace\times\valuesn)$, cannot actually be decomposed
while maintaining the same information represented by $\rdesirs$.
This is not to say that we cannot profitably deal with the problem:
we can for instance compute lower and upper (possibly conditional)
previsions for any gamble in $\gambles(\pspace\times\valuesn)$; we
can compute in particular probabilities separately from utilities
and vice versa. But the point is that we cannot represent your preferences by coupling two separate models for belief and
value unless we do some assumption of the type that we illustrate in
the next section.

\subsection{State independence}\label{sec:si}

Let us start by the following definition of irrelevance for coherent sets of gambles.
\begin{definition}[{\bf $\pspace$-$\values$~irrelevant product for gambles}]\label{def:irr-prod}
A coherent set of gambles $\rdesirs$ on $\pspace\times\values$ is
called an \emph{$\pspace$-$\values$~irrelevant product} of its
marginal sets of gambles $\rdesirs_{\pspace},\rdesirs_{\values}$ if
it includes the set
\begin{equation}\label{eq:finite-ext-conditionals}
\rdesirs|\pspace\coloneqq\{h\in\gambles(\pspace\times\values):
(\forall
\omega\in\pspace)h(\omega,\cdot)\in\rdesirs_{\values}\cup\{0\}\}\setminus\{0\}.
\end{equation}
\end{definition}
\noindent These products have been introduced, in another context
and for lower previsions, in~\cite{miranda2015b}. The rationale of the definition is the following. First, the requirement that $h(\omega,\cdot)\in\rdesirs_\values\cup\{0\}$ is there to say that the inferences conditional on $\{\omega\}$ encompassed by $\rdesirs|\pspace$ should yield the marginal set $\rdesirs_\values$. This is just a way to formally state that $\omega$ is irrelevant to $\values$. The same is repeated for every $\omega$, so that $\rdesirs|\pspace$ can be regarded as being born out of aggregating all the irrelevant conditional sets. Given that $\pspace$ is possibly infinite, it should not be surprising that such a definition of $\rdesirs|\pspace$ entails an assumption of conglomerability. We do it because the minimal set $\rdesirs$ we would obtain without this assumption may be too weak to
produce informative preferences (see~\cite[Section~5.1]{miranda2015b} for more details). Finally, that $\rdesirs$ contains $\rdesirs|\pspace$ is imposed to make sure that $\rdesirs$ is a model coherent with the irrelevance of beliefs on values.

In the context of preferences, if $\rdesirs$ satisfies Definition~\ref{def:irr-prod} we say it models
\emph{state independent preferences}. The least-committal among
these models is trivially given by the following:
\begin{proposition}\label{prop:weakest-IP-gmb}
Given two marginal coherent sets of desirable gambles $\rdesirs_{\pspace},\rdesirs_{\values}$, their smallest $\pspace$-$\values$~irrelevant product is given by the marginal extension of $\rdesirs_{\pspace}$ and $\rdesirs|{\{\omega\}}\coloneqq\rdesirs_{\values}$ (for all $\omega\in\pspace$), that is:
\begin{equation}\label{eq:irrel-product2}
\hat{\rdesirs}\coloneqq\{g+h:
g\in\rdesirs_{\pspace}\cup\{0\},
h\in\rdesirs|\pspace\cup\{0\}\}\setminus\{0\}.
\end{equation}
\end{proposition}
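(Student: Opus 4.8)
The plan is to identify $\hat{\rdesirs}$ with the marginal extension of Proposition~\ref{prop:met} and then read off the required properties. Concretely, take $\pspace'\coloneqq\values$, use the partition $\{\{\omega\}\times\values:\omega\in\pspace\}$ of $\pspace\times\values$ (written $\pspace$ by the usual abuse), and set the conditional coherent sets to be $\rdesirs|\{\omega\}\coloneqq\rdesirs_{\values}$ for every $\omega\in\pspace$. With this choice the set $\rdesirs|\pspace$ appearing in Proposition~\ref{prop:met} is literally the set~\eqref{eq:finite-ext-conditionals} of Definition~\ref{def:irr-prod}, and the $\pspace$-conglomerable natural extension of $\rdesirs_{\pspace}$ and the $\rdesirs|\{\omega\}$ is exactly the set $\hat{\rdesirs}$ of~\eqref{eq:irrel-product2}. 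Proposition~\ref{prop:met} then immediately yields that $\hat{\rdesirs}$ is a coherent set of desirable gambles on $\pspace\times\values$, and taking $g=0$ in~\eqref{eq:irrel-product2} shows $\rdesirs|\pspace\subseteq\hat{\rdesirs}$.

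Next I would verify that the marginals of $\hat{\rdesirs}$ are indeed $\rdesirs_{\pspace}$ and $\rdesirs_{\values}$. One direction is routine: $\rdesirs_{\pspace}\subseteq\hat{\rdesirs}$ (take $h=0$) and $\rdesirs_{\values}\subseteq\hat{\rdesirs}$ (a $\values$-measurable gamble built from an element of $\rdesirs_{\values}$ lies in $\rdesirs|\pspace$), and these gambles are $\pspace$- (resp.\ $\values$-) measurable. For the converse, take $f=g+h\in\hat{\rdesirs}$ with $g\in\rdesirs_{\pspace}\cup\{0\}$ and $h\in\rdesirs|\pspace\cup\{0\}$. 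If $f$ is $\pspace$-measurable, then so is $h=f-g$, so each $h(\omega,\cdot)$ is a constant gamble in $\rdesirs_{\values}\cup\{0\}$; since a coherent set contains no negative gamble and not $0$ (\ref{D2}), this forces $h(\omega,\cdot)\ge0$, hence $h\ge0$ and $h$ is either $0$ or a positive gamble, so $h\in\rdesirs_{\pspace}\cup\{0\}$ and therefore $f=g+h\in\rdesirs_{\pspace}$ by~\ref{D1} and~\ref{D4}. If instead $f$ is $\values$-measurable, write $f(\omega,x)=f^{*}(x)$ and $g(\omega,x)=g^{*}(\omega)$: if $g=0$ then $f=h$, so $f^{*}\in\rdesirs_{\values}\cup\{0\}$ and, as $f\neq0$, $f^{*}\in\rdesirs_{\values}$; otherwise $g\in\rdesirs_{\pspace}$ is not a negative gamble, so $\sup_{\omega}g^{*}(\omega)>0$ and one can pick $\omega_{0}$ with $g^{*}(\omega_{0})>0$, whence $g^{*}(\omega_{0})\in\gambles^{+}(\values)\subseteq\rdesirs_{\values}$ while $h(\omega_{0},\cdot)=f^{*}-g^{*}(\omega_{0})\in\rdesirs_{\values}\cup\{0\}$, and adding these with~\ref{D4} (or reading off $f^{*}=g^{*}(\omega_{0})$ when $h(\omega_{0},\cdot)=0$) gives $f^{*}\in\rdesirs_{\values}$. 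This shows $\hat{\rdesirs}$ is an $\pspace$-$\values$~irrelevant product of $\rdesirs_{\pspace}$ and $\rdesirs_{\values}$ in the sense of Definition~\ref{def:irr-prod}.

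For minimality, let $\rdesirs'$ be any $\pspace$-$\values$~irrelevant product of $\rdesirs_{\pspace}$ and $\rdesirs_{\values}$. By definition $\rdesirs'$ is coherent, has $\pspace$-marginal $\rdesirs_{\pspace}$ and $\values$-marginal $\rdesirs_{\values}$, and contains the set $\rdesirs|\pspace$ built from $\rdesirs_{\values}$; in particular $\rdesirs_{\pspace}\subseteq\rdesirs'$ and $\rdesirs|\pspace\subseteq\rdesirs'$. Since $\rdesirs'$ is a convex cone closed under addition (\ref{D3},~\ref{D4}), it contains every $g+h$ with $g\in\rdesirs_{\pspace}\cup\{0\}$, $h\in\rdesirs|\pspace\cup\{0\}$ and $g+h\neq0$, that is, $\hat{\rdesirs}\subseteq\rdesirs'$. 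Hence $\hat{\rdesirs}$ is the smallest such product.

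The step I expect to need the most care is the converse marginal computation, i.e.\ showing that marginalising $\hat{\rdesirs}$ does not enlarge $\rdesirs_{\pspace}$ or $\rdesirs_{\values}$. The $\values$-marginal is the slightly delicate one, because cancelling the $\pspace$-measurable summand $g$ succeeds only after choosing a state $\omega_{0}$ on which $g$ is strictly positive, and that choice is available precisely because a coherent (hence partial-loss-avoiding) set of gambles contains no negative gamble and omits the origin. The identification with Proposition~\ref{prop:met} and the minimality argument are otherwise essentially bookkeeping.
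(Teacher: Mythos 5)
Your proof is correct and follows essentially the same route as the paper's: coherence and the inclusion of $\rdesirs|\pspace$ come from the marginal-extension result (Proposition~\ref{prop:met}), the marginal conditions are checked by decomposing $f=g+h$ and using that a coherent set contains no non-positive gamble, and minimality is the closure-under-addition bookkeeping that the paper treats as trivial. The only (harmless) divergence is that where the paper verifies that conditioning $\hat{\rdesirs}$ on each $\{\omega\}$ returns $\rdesirs_{\values}$, you verify the $\values$-marginal of $\hat{\rdesirs}$ directly (via a state $\omega_0$ where the $\pspace$-measurable summand is strictly positive), which is what Definition~\ref{def:irr-prod} literally requires.
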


Definition~\ref{def:irr-prod} and Eq.~\eqref{eq:irrel-product2}
represent our main proposal to model state-independent preferences,
for two reasons. One the one hand, as discussed above, we think that
it is important that $\rdesirs|\pspace$ is included in the set that
models your preferences. On the other, it follows from our
interpretation of such a set, in Section~\ref{sec:interpretation},
that updating is meaningful only if applied to elements of
$\pspace$, and therefore state independence should be an asymmetric
notion (that is, we should not consider updating on $\values$).

On the other hand, symmetric notions are usually considered in the
literature and hence we introduce some symmetric notions as well so
as to lay the ground for a connection with former proposals (which
we shall deepen in Section~\ref{sec:si-clps} when we consider the
special case of lower previsions):
\begin{definition}[{\bf Independent product for gambles}]
A coherent set of gambles $\rdesirs$ on $\pspace\times\values$ is
called an \emph{independent product} of its marginal sets of gambles
$\rdesirs_{\pspace},\rdesirs_{\values}$ if it includes
$\rdesirs|\pspace$ and $\rdesirs|\values$, where
\begin{equation}\label{eq:RcondV}
\rdesirs|\values\coloneqq\{h\in\gambles(\pspace\times\values):
(\forall
x\in\values)h(\cdot,x)\in\rdesirs_{\pspace}\cup\{0\}\}\setminus\{0\}.
\end{equation}
\end{definition}
\noindent These products were studied for finite spaces and several
variables in \cite{cooman2012}. Our formulation given above is an
extension of the proposal in that paper. Note that the above
conditions hold in particular when $\rdesirs$ includes
$\rdesirs|\pspace$ and
\begin{align}\label{eq:mutual-irrelevance}
 &(\forall \omega\in \pspace)\rdesirs|{\{\omega\}}=\rdesirs_{\values}, \\
 &(\forall x\in \values)\rdesirs|{\{x\}}=\rdesirs_{\pspace};
\end{align}
this is closer to the formulation in \cite{cooman2012}, and it is
not required in general in the definition above.

An independent product models state-independent preferences given
that it includes the weakest $\pspace$-$\values$~irrelevant product.
The smallest, or least-committal, independent product of two sets is
given by the following proposition, whose proof is analogous to that
of Proposition~\ref{prop:weakest-IP-gmb}:
\begin{proposition}[{\bf Independent natural extension}]
Let $\rdesirs_{\pspace},\rdesirs_{\values}$ be two marginal coherent
sets of desirable gambles and let $\rdesirs|\pspace,
\rdesirs|\values$ be given by
Eqs.~\eqref{eq:finite-ext-conditionals}--\eqref{eq:RcondV}. Then the \emph{independent natural extension} of $\rdesirs_{\pspace},\rdesirs_{\values}$ is given by:
\begin{equation*}
 \rdesirs_{\pspace}\otimes\rdesirs_{\values}\coloneqq\{g+h:g\in\rdesirs|\values\cup\{0\},h\in\rdesirs|\pspace\cup\{0\}\}
 \setminus\{0\}.
\end{equation*}
\end{proposition}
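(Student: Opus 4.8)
The plan is to follow the same three-step pattern as the proof of Proposition~\ref{prop:weakest-IP-gmb}: show that the displayed set is a coherent set of desirable gambles, that it is an independent product of $\rdesirs_{\pspace}$ and $\rdesirs_{\values}$, and that it is contained in every independent product of those two marginals. A preliminary observation lightens the bookkeeping: each of $\rdesirs|\pspace\cup\{0\}$ and $\rdesirs|\values\cup\{0\}$ is a convex cone (closed under positive scaling and under sums, the latter because $\rdesirs_{\pspace}$ and $\rdesirs_{\values}$ are, applied coordinate-wise along $\pspace$ resp.\ $\values$), so every positive linear combination of elements of $\rdesirs|\pspace\cup\rdesirs|\values$ rearranges into a single $g+h$ with $g\in\rdesirs|\values\cup\{0\}$ and $h\in\rdesirs|\pspace\cup\{0\}$. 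Hence $\rdesirs_{\pspace}\otimes\rdesirs_{\values}=\posi(\rdesirs|\pspace\cup\rdesirs|\values)$, i.e.\ it is the natural extension of the assessment $\rdesirs|\pspace\cup\rdesirs|\values$ (adding $\gambles^{+}$ is redundant since $\gambles^{+}\subseteq\rdesirs|\pspace$).

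For coherence, first note that $\rdesirs|\pspace$ and $\rdesirs|\values$ are themselves coherent sets of desirable gambles on $\gambles(\pspace\times\values)$: $\rdesirs|\pspace$ is exactly the $\pspace$-conglomerable marginal extension (Proposition~\ref{prop:met}) of the vacuous marginal on $\pspace$ with all conditional sets equal to $\rdesirs_{\values}$, and $\rdesirs|\values$ is the analogous object with the roles of $\pspace$ and $\values$ interchanged---here the partition $\{\pspace\times\{x\}:x\in\values\}$ is finite, so no conglomerability assumption is involved. Given this, \ref{D1} for $\rdesirs_{\pspace}\otimes\rdesirs_{\values}$ holds because $\gambles^{+}\subseteq\rdesirs|\pspace$, while \ref{D3} and \ref{D4} follow at once from the cone structure just recorded. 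The substantive point---and the main obstacle---is \ref{D2}, that $\posi(\rdesirs|\pspace\cup\rdesirs|\values)$ avoids partial loss: this does \emph{not} reduce to the coherence of the two pieces, since the positive hull of the union of two coherent cones need not exclude the origin. I would resolve it exactly as the coherence of the independent natural extension is resolved in the imprecise-probability literature: use that the marginals avoid partial loss to produce dominating linear previsions $\pr_{\pspace}\in\solp(\lpr_{\pspace})$, $\pr_{\values}\in\solp(\lpr_{\values})$; form the product prevision $\pr_{\pspace}\times\pr_{\values}$ (well defined, since $\values$ is finite and gambles are bounded, as $f\mapsto\sum_{x\in\values}\pr_{\pspace}(f(\cdot,x))\pr_{\values}(\{x\})$); check that it is non-negative on $\rdesirs|\pspace$ and on $\rdesirs|\values$ (for $\rdesirs|\pspace$ one applies $\pr_{\pspace}$ to the bounded non-negative function $\omega\mapsto\pr_{\values}(h(\omega,\cdot))$); and then argue, exploiting the defining column/row conditions together with the fact that a coherent set $\rdesirs$ satisfies $\rdesirs\cap(-\rdesirs)=\emptyset$, that $g+h=0$ forces $g=h=0$. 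Alternatively one simply invokes the coherence of the independent natural extension established in~\cite{cooman2012} and its extension to arbitrary spaces, the infinite-$\pspace$ case being taken care of by the conglomerability built into $\rdesirs|\pspace$.

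Once coherence is in hand, the remaining two steps are routine. The set $\rdesirs_{\pspace}\otimes\rdesirs_{\values}$ visibly contains $\rdesirs|\pspace$ (take $g=0$) and $\rdesirs|\values$ (take $h=0$), and a marginalisation check of the kind standard for independent products---using $\rdesirs_{\pspace}\subseteq\rdesirs|\values$, $\rdesirs_{\values}\subseteq\rdesirs|\pspace$, and the coherence of marginals of coherent sets---gives $(\rdesirs_{\pspace}\otimes\rdesirs_{\values})_{\pspace}=\rdesirs_{\pspace}$ and $(\rdesirs_{\pspace}\otimes\rdesirs_{\values})_{\values}=\rdesirs_{\values}$, so it is an independent product of $\rdesirs_{\pspace}$ and $\rdesirs_{\values}$. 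For minimality, let $\rdesirs'$ be any independent product of these marginals; by definition $\rdesirs'\supseteq\rdesirs|\pspace\cup\rdesirs|\values$, and since $\rdesirs'$ satisfies \ref{D3} and \ref{D4} it contains every positive linear combination of its members, whence $\rdesirs'\supseteq\posi(\rdesirs|\pspace\cup\rdesirs|\values)=\rdesirs_{\pspace}\otimes\rdesirs_{\values}$. Thus $\rdesirs_{\pspace}\otimes\rdesirs_{\values}$ is the least-committal independent product, which is what the statement names the independent natural extension.
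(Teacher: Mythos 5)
Your skeleton (coherence, correct marginals, inclusion of $\rdesirs|\pspace$ and $\rdesirs|\values$, minimality) is the right one, and the identification $\rdesirs_{\pspace}\otimes\rdesirs_{\values}=\posi(\rdesirs|\pspace\cup\rdesirs|\values)$ together with the minimality argument is fine; this matches the paper, which itself only says the proof is analogous to that of Proposition~\ref{prop:weakest-IP-gmb}. But the step you yourself flag as the main obstacle---axiom~\ref{D2}---is not actually resolved by what you sketch. The product prevision $\pr_\pspace\times\pr_\values$ is indeed non-negative on both $\rdesirs|\pspace$ and $\rdesirs|\values$, but non-negativity cannot exclude $0$ from the positive hull: the dangerous gambles are exactly the border ones with zero (lower) prevision, and this is the very situation the paper keeps stressing that prevision-level arguments cannot see. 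Nor does the hint ``$\rdesirs\cap(-\rdesirs)=\emptyset$'' apply: if $g+h=0$ with $g\in\rdesirs|\values$, $h\in\rdesirs|\pspace$, then $h=-g$, with the \emph{rows} of $h$ in $\rdesirs_\values$ and the \emph{columns} of $-h$ in $\rdesirs_\pspace$---two different coherent sets acting in two different directions, so no single set contains a gamble and its negation. Ruling out such an $h\neq0$ is a genuine theorem (it is precisely the consistency of the independent natural extension); it needs, e.g., a minimax/separation argument over the finite simplex on $\values$ (if the game $\sup_{\lambda\in\Delta(\values)}\inf_\omega\sum_x\lambda_xh(\omega,x)$ has non-negative value, an optimal $\lambda$ yields a non-positive positive combination of nonzero columns of $-h$ inside $\rdesirs_\pspace$; if negative, a finitely supported mixture of rows is non-positive inside $\rdesirs_\values$), or an explicit appeal to the results of \cite{cooman2012} and \cite{miranda2015b}---and for infinite $\pspace$ with the conglomerative set $\rdesirs|\pspace$ the citation must be checked, since \cite{cooman2012} treats finite spaces only.

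The second gap is the marginalisation step, which you dismiss as routine. The inclusions $\rdesirs_\pspace\subseteq\rdesirs|\values$ and $\rdesirs_\values\subseteq\rdesirs|\pspace$ only give that the marginals of the displayed set \emph{contain} $\rdesirs_\pspace$ and $\rdesirs_\values$; the substantive direction is that they are no larger, and this does not follow from coherence of marginals of coherent sets. For the $\pspace$-marginal a short argument works: if $f$ is $\pspace$-measurable and $f=g+h$, pick any mass function $\nu$ in the credal set of $\rdesirs_\values$ and note $f(\omega)\geq\sum_x\nu_xg(\omega,x)\in\rdesirs_\pspace\cup\{0\}$ pointwise, whence $f\in\rdesirs_\pspace$. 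But for the $\values$-marginal the decomposition has a $g$ that is not $\values$-measurable, and the natural candidate dominated gamble $x\mapsto\pr_\pspace(h(\cdot,x))$ is an \emph{infinite} mixture of rows, which need not be desirable since $\rdesirs_\values$ is not closed. One way to close this is to suppose $f\notin\rdesirs_\values$, extend $\rdesirs_\values$ to a maximal coherent set containing $-f$ (possible exactly because $f\notin\rdesirs_\values$), and observe that then $-g$ has all rows in that maximal set while $g$ has all columns in $\rdesirs_\pspace\cup\{0\}$, contradicting the same no-partial-loss lemma as above. So both non-trivial parts of the proposition hinge on that lemma, which your proposal asserts rather than proves.
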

\noindent It is the least committal
model, in other words, for which your beliefs on $\pspace$ are independent of
your values of $\values$, and that moreover satisfies conglomerability with respect to $\pspace$.

We need the independent natural extension in particular to be able to define another type of independent product, which is more informative than the former:
\begin{definition}[{\bf Strong product for gambles}] The \emph{strong} (\emph{independent}) \emph{product} of marginal coherent sets
$\rdesirs_{\pspace},\rdesirs_{\values}$ is defined as
\begin{equation*}
 \rdesirs_{\pspace}\boxtimes\rdesirs_{\values}\coloneqq\cap\{\desirs_{\pspace}\otimes\desirs_{\values}:
 \rdesirs_\pspace\subseteq\desirs_{\pspace} \text{ maximal and }
 \rdesirs_{\values}\subseteq\desirs_{\values} \text{ maximal}\}.
\end{equation*}
\end{definition}
\noindent The strong product can be regarded as the extension of a set of stochastically independent models to desirable gambles. We shall see more about the strong product in Section~\ref{sec:lprevs}, when we see the analogous notions in terms of coherent lower previsions.

\subsection{State dependence}\label{sec:sd-gmb}

We say that a set of desirable gambles $\rdesirs$ models
state-dependent preferences when it does not satisfy the conditions
in Definition~\ref{def:irr-prod}, i.e., when it does not include the
set $\rdesirs|\pspace$ derived from its marginal $\rdesirs_\values$
and the assumption of $\pspace$-$\values$ irrelevance. This may
arise for instance when the conditional sets of gambles
$\rdesirs|\{\omega\}, \rdesirs|\{x\}$ that $\rdesirs$ induces do not
satisfy Eq.~\eqref{eq:mutual-irrelevance}.

\subsection{Completeness}

So far we have discussed the general case of coherent sets of gambles $\rdesirs$ without discussing in particular the so-called complete or maximal coherent sets for which it holds that $f\notin\rdesirs\Rightarrow-f\in\rdesirs$ for all $f\in\gambles,f\neq0$. These correspond to precise models for which there is never uncertainty as to whether you accept or reject a gamble, and that are closely related to linear previsions. Given the equivalence between coherent sets of gambles in $\gambles(\pspace\times\valuesn)$ and coherent preferences in $\actsy\times\actsy$, discussing the case of maximal sets is a tantamount to discussing the case of complete preferences.

\begin{definition}[{\bf Completeness of beliefs and values for gambles}]
A coherent set of desirable gambles
$\rdesirs\subseteq\gambles(\pspace\times\valuesn)$ is said \emph{to
represent complete beliefs} if $\rdesirs_{\pspace}$ is maximal. It is said
\emph{to represent complete values} if $\rdesirs_{\values}$ is maximal. Finally, if $\rdesirs$ is maximal, then it is said \emph{to represent complete preferences}.
\end{definition}

The rationale of this definition is straightforward: when
we say, for instance, that $\rdesirs$ represents complete beliefs,
we mean that you are never undecided between two options that are concerned only with $\pspace$. The situation is analogous in the case of complete values. Finally, the case of complete preferences, the definition is just the direct application of the definition of maximality. Not surprisingly, if a coherent set of desirable gambles represents
complete preferences, then it represents both complete beliefs and
complete values. This follows immediately
from~\cite[Proposition~22]{cooman2012}. On the other hand, the
converse does not hold in general: we can construct a non-maximal
coherent set of desirable gambles
$\rdesirs\subseteq\gambles(\pspace\times\values)$ that represents
both complete beliefs and values, and this also in the case of
finite $\pspace$. See~\cite[Appendix~A1]{cooman2012} for an example.

Note that a set of complete preferences need not represent
state-independent preferences: to see this, it suffices to consider
the set
\begin{equation*}
\{g+h: g\in\rdesirs_\pspace\cup\{0\},
h(\omega,\cdot)\in\rdesirs|\{\omega\}\cup\{0\}, \ \forall
\omega\in\pspace\}\setminus\{0\}
\end{equation*}
where $\rdesirs_\pspace$ and $\rdesirs|{\{\omega\}}$ are maximal for
all $\omega\in\pspace$ but $\rdesirs|\{\omega\}\neq\rdesirs_\values$
for some $\omega$: we end up with a coherent set of state-dependent
preferences, and any maximal superset will represent complete
preferences that will be state-dependent (since it will not include
the set $\rdesirs|\pspace$ given by
Eq.~\eqref{eq:finite-ext-conditionals} due to maximality).

\section{In terms of lower previsions}\label{sec:lprevs}

In this section, we shall consider the particular case where your
preferences are modelled by means of a coherent lower prevision
$\lpr$ on $\gambles(\pspace\times\values)$. 

\subsection{State independent preferences}\label{sec:si-clps}
Let us define the products we shall be concerned with.
\begin{definition}[{\bf $\pspace$-$\values$~irrelevant product for lower previsions}]\label{def:irr-prod-clp} Given marginal coherent lower previsions $\lpr_{\pspace},\lpr_{\values}$, let
\begin{equation*}
\lpr(f|\{\omega\})\coloneqq\lpr_\values(f(\omega,\cdot)),
\end{equation*}
for all $f\in\gambles(\pspace\times\values)$, and
\begin{equation*}\label{eq:cong-clp}
\lpr(\cdot|\pspace)\coloneqq\sum_{\omega\in\pspace}I_{\{\omega\}}\lpr(\cdot|\{\omega\}).
\end{equation*}
Then a coherent lower prevision $\lpr$ on
$\gambles(\pspace\times\values)$ is called an
\emph{$\pspace$-$\values$~irrelevant product} of
$\lpr_{\pspace},\lpr_{\values}$ if
\begin{equation*}
\lpr\geq\lpr_\pspace(\lpr(\cdot|\pspace)).
\end{equation*}
In this case we also say that $\lpr$ models state-independent preferences.
\end{definition}
\noindent Here $\lpr(\cdot|\pspace)$ plays the role that was of
$\rdesirs|\pspace$ in desirability; it is a mathematical tool that
conglomerates all the conditional information. The concatenation
$\lpr_\pspace(\lpr(\cdot|\pspace))$ is a marginal extension: in
particular, it is the least-committal coherent
$\pspace$-conglomerable model built out of the given marginals and
the assessment of irrelevance that defines $\lpr(\cdot|\pspace)$
through $\lpr_\values$. Every coherent lower prevision that
dominates $\lpr_\pspace(\lpr(\cdot|\pspace))$ is compatible with the
irrelevance assessment but is also more informative than the
marginal extension. Let us remark it with the following trivial
proposition:
\begin{proposition}\label{sec:ip-lower}
Given two marginal coherent lower previsions $\lpr_{\pspace},\lpr_{\values}$, their smallest $\pspace$-$\values$~irrelevant product is given by the marginal extension
\begin{equation*}
\lpr_\pspace(\lpr(\cdot|\pspace)).
\end{equation*}
\end{proposition}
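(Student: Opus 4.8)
The plan is to observe that, once the relevant objects are in place, the statement is an immediate consequence of Definition~\ref{def:irr-prod-clp}: by that definition $\lpr$ is an $\pspace$-$\values$~irrelevant product of $\lpr_\pspace$ and $\lpr_\values$ precisely when $\lpr\geq\lpr_\pspace(\lpr(\cdot|\pspace))$, so the only real work is to check that $\lpr_\pspace(\lpr(\cdot|\pspace))$ is itself a coherent lower prevision, hence a legitimate member of that family.

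First I would verify that each functional $\lpr(\cdot|\{\omega\})$ given by $\lpr(f|\{\omega\})\coloneqq\lpr_\values(f(\omega,\cdot))$ is a coherent lower prevision conditional on $\{\omega\}$. This is a routine slice-wise computation: applying Conditions~\ref{C1}--\ref{C3} for $\lpr_\values$ to the gamble $f(\omega,\cdot)\in\gambles(\values)$ yields Conditions~\ref{CC1}--\ref{CC3} for $\lpr(\cdot|\{\omega\})$, while $\lpr(\{\omega\}|\{\omega\})=\lpr_\values(1)=1$ supplies the normalisation. Hence, by Definition~\ref{def:sc-clp}, $\lpr(\cdot|\pspace)=\sum_{\omega\in\pspace}I_{\{\omega\}}\lpr(\cdot|\{\omega\})$ is a separately coherent conditional lower prevision on $\gambles(\pspace\times\values)$. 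Next I would note that $\lpr(f|\pspace)$ is a bounded $\pspace$-measurable gamble for every $f$ (its value at $(\omega,x)$ is $\lpr_\values(f(\omega,\cdot))$, which does not depend on $x$ and is bounded in modulus by $\sup\abs{f}$), so that $\lpr_\pspace(\lpr(\cdot|\pspace))$ is well defined; and, by the results on marginal extension recalled in Section~\ref{sec:congmet} --- equivalently, by applying Proposition~\ref{prop:met} to coherent sets of desirable gambles $\rdesirs_\pspace$ and $\rdesirs|\{\omega\}$ inducing $\lpr_\pspace$ and $\lpr(\cdot|\{\omega\})$, respectively --- $\lpr_\pspace(\lpr(\cdot|\pspace))$ is a coherent lower prevision on $\gambles(\pspace\times\values)$.

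With that in hand the conclusion follows at once. Since trivially $\lpr_\pspace(\lpr(\cdot|\pspace))\geq\lpr_\pspace(\lpr(\cdot|\pspace))$, Definition~\ref{def:irr-prod-clp} says that $\lpr_\pspace(\lpr(\cdot|\pspace))$ is an $\pspace$-$\values$~irrelevant product of $\lpr_\pspace,\lpr_\values$; and any $\pspace$-$\values$~irrelevant product $\lpr$ dominates $\lpr_\pspace(\lpr(\cdot|\pspace))$ by the same definition, so the marginal extension is the smallest one. I would also record, to justify the terminology `product', the two short computations that its marginals are the prescribed ones: for a $\pspace$-measurable gamble $g$ one has $\lpr(g|\{\omega\})=\lpr_\values(g(\omega,\cdot))=g(\omega)$, so $\lpr(g|\pspace)=g$ and $\lpr_\pspace(\lpr(g|\pspace))=\lpr_\pspace(g)$; for a $\values$-measurable gamble $h$ one has $\lpr(h|\{\omega\})=\lpr_\values(h)$, constant in $\omega$, so $\lpr_\pspace(\lpr(h|\pspace))=\lpr_\values(h)$.

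There is essentially no obstacle here, which is why the paper labels the proposition trivial. The only points that call for a moment's care are the measurability and boundedness of $\lpr(\cdot|\pspace)$ needed to form the marginal extension, and the choice to invoke the already-established coherence of the marginal extension rather than re-deriving it.
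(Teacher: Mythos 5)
Your argument is correct and matches the paper's (implicit) reasoning: the paper gives no proof, labelling the proposition trivial because Definition~\ref{def:irr-prod-clp} defines an $\pspace$-$\values$~irrelevant product precisely as a coherent lower prevision dominating $\lpr_\pspace(\lpr(\cdot|\pspace))$, so minimality is immediate once the marginal extension is known to be coherent (Walley's marginal extension theorem, recalled in Section~\ref{sec:congmet}). Your additional checks that its marginals are indeed $\lpr_\pspace$ and $\lpr_\values$ are a sound, if not strictly required, supplement paralleling the proof of Proposition~\ref{prop:weakest-IP-gmb}.
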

As in the case of desirability, we can define more precise
irrelevant products. Among them there are the independent
products:\footnote{The following definition is equivalent to the one
established in \cite[Definition~4]{miranda2015b} in the context of
this paper, where one of the referential spaces is finite; we refer
to \cite{miranda2015b} for a more general study of independent
products.}
\begin{definition}[{\bf Independent product for lower previsions}]\label{def:ind-prod-clp} Consider marginal coherent lower previsions $\lpr_{\pspace},\lpr_{\values}$. We say that a coherent lower prevision $\lpr$ on $\gambles(\pspace\times\values)$ is an \emph{independent product} of $\lpr_{\pspace},\lpr_{\values}$ if it is both an $\pspace$-$\values$~irrelevant product and an $\values$-$\pspace$~irrelevant product (this is defined analogously to the previous one).
\end{definition}
The weakest independent product is again called the independent natural extension (see \cite{cooman2012,miranda2015b} for more information). The, more informative, strong (independent) product is simply defined as a lower envelope of stochastic products:
\begin{definition}[{\bf Strong product for lower previsions}] Let $\lpr_{\pspace},\lpr_{\values}$ be two marginal coherent lower previsions. Their \emph{strong} (\emph{independent}) \emph{product} is defined for every $f\in\gambles(\pspace\times\values)$ as
\begin{equation*}
 \lpr_{\pspace}\boxtimes\lpr_{\values}(f)\coloneqq\min\{\pr_{\pspace}(\pr_{\values}(f|\pspace)):
 \pr_{\pspace}\geq \lpr_{\pspace}, \pr_{\values}\geq \lpr_{\values}\},
\end{equation*}
where
\begin{align}
\pr_\values(\cdot|\pspace)&\coloneqq\sum_{\omega\in\pspace}I_{\{\omega\}}\pr(\cdot|\{\omega\}) \text{ and }\label{eq:cong-cond-lp}\\
(\forall
f\in\gambles(\pspace\times\values))\pr(f|\{\omega\})&\coloneqq\pr_\values(f(\omega,\cdot)).\notag
\end{align}
\end{definition}

\begin{remark}[{\bf State independence and dynamic consistency}]\label{rem:si-indep}
It is important at this point to consider the relation of state
independence to dynamic consistency (as introduced in
Remark~\ref{rem:dc1}). In particular, if state-independent
preferences are modelled by the $\pspace$-$\values$~irrelevant
product of the marginals, then the weakest combination of the
marginal $\lpr_\pspace$ and the conditional information
$\lpr(\cdot|\pspace)$ (obtained through irrelevance) is just the
marginal extension $\lpr\coloneqq\lpr_\pspace(\lpr(\cdot|\pspace))$,
as stated by Proposition~\ref{sec:ip-lower}. This means that the
marginal extension $\lpr$ implements a complete separation of utilities from
probabilities:\footnote{Not so the other way around: remember that we are focusing on an asymmetric notion.} not only states of nature are irrelevant to utilities in this
case, as it follows from $\lpr(\cdot|\{\omega\})$ being constant on
$\omega\in\pspace$, but in addition the joint model $\lpr$ of
preferences can be recovered through the marginal and the
conditional models alone: $\lpr=\lpr_\pspace(\lpr(\cdot|\pspace))$,
thanks to dynamic consistency. Taking into account
Eq.~\eqref{eq:dynamic-consistency}, the irrelevant product
corresponds to the lower envelope of the set
\begin{equation}\label{eq:dynamic-consistency-irrelevance}
\solp_1\coloneqq\{\pr_{\pspace}(\pr(\cdot|\pspace)):
 \pr_{\pspace}\in\solp_{\pspace}, (\forall \omega\in\pspace)\pr(\cdot|\{\omega\})
 \in\solp_{\values}\},
\end{equation}
where we are denoting by $\solp_{\values}$ the credal set associated
with $\lpr_{\values}$, which, due to the irrelevance assumption,
agrees with the one determined by $\lpr(\cdot | \{\omega\})$ for every
$\omega\in\pspace$. Note that in the set $\solp_1$ the linear prevision
$\pr(\cdot |\{\omega\})$ may vary with different $\omega$, because
the irrelevance assumption is made on the lower envelopes
$\lpr(\cdot |\{\omega\})$ only.

In case we use an $\pspace$-$\values$~irrelevant product $\lpr$ that is not minimal, what we get is the weaker relation $\lpr\geq\lpr_\pspace(\lpr(\cdot|\pspace))$. This still implements the idea that states be irrelevant to utilities; however, in order to recover $\lpr$ from $\lpr_\pspace$ and $\lpr(\cdot|\pspace)$, we need in addition to describe the specific
way in which $\lpr$ dominates $\lpr_\pspace(\lpr(\cdot|\pspace))$. This information `logically' connects the linear previsions that dominate the marginal and the conditionals, thus preventing these models from being completely separate.

A particularly important dominating model is the strong product, so it is useful to address also this case in some detail. Observe first that if $\lpr$ is a linear prevision with
marginals $\pr_{\pspace},\pr_{\values}$, then it corresponds to an
irrelevant product if and only if it agrees with the marginal extension $\pr_{\pspace}(\pr(\cdot|\pspace))$, where
$\pr(f|\{\omega\})\coloneqq\pr_{\values}(f(\omega,\cdot))$ for every $\omega\in\pspace$.
This is also the outcome of a trivial assumption of dynamic consistency. But it helps to give a clearer view of the meaning
of strong independence: it corresponds to the case where we consider
only linear irrelevant products in set $\solp_1$ above,
so that Eq.~\eqref{eq:dynamic-consistency-irrelevance} becomes
\begin{equation*}
\solp_2\coloneqq\{\pr_{\pspace}(\pr(\cdot|\pspace)):
 \pr_{\pspace}\in\solp_{\pspace}, (\forall \omega, \omega' \in \pspace)\pr(\cdot|\{\omega\})=\pr(\cdot|\{\omega'\})\in\solp_{\values}\}.
\end{equation*}
\noindent Historically, the rationale behind such a choice is that of
sensitivity analysis: under this interpretation, a set such as $\solp_1$ is regarded as a way to represent knowledge about a
`true', but partly unknown, linear model; and once the true model is
supposed to have a certain property, such as that it is an irrelevant product, then all the linear models candidate to be the true
one, which are the elements of $\solp_1$, should satisfy that
property too.

Note moreover that under this interpretation, the strong product does satisfy a minimality property: it is the least-committal
independent product of the marginals for which all the dominating
linear previsions are irrelevant extensions. It is arguable that this is the proper notion of minimality to use under
sensitivity analysis. As a consequence, under this interpretation,
the strong product is a dynamically consistent model and hence it
leads to complete separation too. This would not be the case under the more general behavioural interpretation of coherent lower previsions that we have adopted in this paper.

We end up this remark by pointing out that all these considerations can be made in a very similar way also for sets of desirable gambles; we have opted to describe them in the case of lower previsions only to keep the discussion more accessible. $\lozenge$
\end{remark}

\subsection{State dependent preferences}

Similarly to the case of desirable gambles, when we model your
beliefs by means of coherent lower previsions, we define state
dependence as the lack of state independence: this means that if we
consider a coherent lower prevision $\lpr$ with marginals
$\lpr_{\pspace},\lpr_{\values}$, it is said to model
\emph{state-dependent preferences} when it does not dominate the
concatenation $\lpr_{\pspace}(\lpr(\cdot|\pspace))$, where
$\lpr(\cdot|\pspace)$ is derived from $\lpr_{\values}$ by an
assumption of epistemic irrelevance.

One particular case where we may obtain state dependence is when
$\lpr_{\pspace}(\{\omega\})>0$ for every $\omega$ but the
conditional natural extension $\lnex(\cdot|\pspace)$ of $\lpr$ does
not coincide with the one that $\lpr_{\values}$ induces by means of
epistemic irrelevance. Then we may have that
$\lpr\geq\lpr_{\pspace}(\lnex(\cdot|\pspace))$ but that
$\lpr\ngeq\lpr_{\pspace}(\lpr(\cdot|\pspace))$. We refer to
\cite[Example~8]{miranda2015b} for an example.

\subsection{Completeness}

The definition of completeness for lower previsions is a rephrasing of that for desirable gambles:
\begin{definition}[{\bf Completeness of beliefs and values for lower previsions}]
A coherent lower prevision $\lpr$ on $\gambles(\pspace\times\valuesn)$ is said \emph{to
represent complete beliefs} if its marginal $\lpr_{\pspace}$ is linear. It is said
\emph{to represent complete values} if its marginal $\lpr_{\values}$ is linear. Finally, if $\lpr$ is
linear, then it is said \emph{to represent complete preferences}.
\end{definition}

There are a number of equivalent ways in which we can characterise the linearity of previsions in terms of the corresponding set of desirable gambles:
\begin{proposition}\label{pr:charac-linear}
Let $\lpr$ be a coherent lower prevision on $\gambles$ and $\rdesirs$ its corresponding coherent set of strictly desirable gambles. The
following are equivalent:
\begin{itemize}
 \item[(i)] $\lpr$ is a linear prevision.
 \item[(ii)] If $f\notin\rdesirs$ then $\eps-f\in\rdesirs$ for all $\eps>0$.
 \item[(iii)] $\rdesirs$ is negatively additive, meaning that $f,g\notin\rdesirs\Rightarrow(\forall\eps>0) f+g-\eps\notin\rdesirs$.
 \item[(iv)] For any event $A$, any $\alpha\in(0,1)$ and real numbers $\mu_1>\mu_2$, either $\alpha \mu_1+(1-\alpha)\mu_2 \succ \mu_1 I_A+\mu_2 I_{A^c}$ or
 $\mu_1 I_A+\mu_2 I_{A^c} \succ \alpha' \mu_1+(1-\alpha')\mu_2$ for every $\alpha'<\alpha$, where $\succ$ is the coherent preference relation on gambles associated with $\rdesirs$ by Definition~\ref{def:pref-gmb}.
\end{itemize}
\end{proposition}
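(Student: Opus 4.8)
The plan is to prove the cycle (i)$\Rightarrow$(ii)$\Rightarrow$(iii)$\Rightarrow$(i) among the first three conditions, and then to handle (iv) separately by showing it is merely a reformulation, in the language of the preference relation of Definition~\ref{def:pref-gmb}, of the statement ``$\lpr$ and $\upr$ agree on every event'', which in turn is equivalent to linearity. Throughout I will use that $\rdesirs$ is the strictly desirable set induced by $\lpr$ through Eq.~\eqref{eq:Rlpr}, i.e.\ $\rdesirs=\{h\in\gambles:h\gneq0\text{ or }\lpr(h)>0\}$, together with the elementary observations that for any gamble $h$ the set $\{\nu\in\reals:h-\nu\in\rdesirs\}$ is a down-set with supremum $\lpr(h)$ (so $\nu<\lpr(h)\Rightarrow h-\nu\in\rdesirs$ and $\nu>\lpr(h)\Rightarrow h-\nu\notin\rdesirs$, using~\ref{D1} and~\ref{D4}), and that coherent lower and upper previsions are constant-additive.

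For (i)$\Rightarrow$(ii): if $\lpr$ is linear and $f\notin\rdesirs$ then $\lpr(f)\le0$, hence $\lpr(\eps-f)=\eps-\lpr(f)\ge\eps>0$ by constant additivity and linearity, so $\eps-f\in\rdesirs$. For (ii)$\Rightarrow$(iii): supposing $f,g\notin\rdesirs$ and, towards a contradiction, $f+g-\eps\in\rdesirs$ for some $\eps>0$, apply (ii) to obtain $\tfrac{\eps}{2}-f,\tfrac{\eps}{2}-g\in\rdesirs$; adding these two gambles and $f+g-\eps$ via~\ref{D4} yields $0\in\rdesirs$, contradicting~\ref{D2}. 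For (iii)$\Rightarrow$(i): since every coherent $\lpr$ satisfies~\ref{C3}, it suffices to exclude a strict inequality $\lpr(h_1+h_2)>\lpr(h_1)+\lpr(h_2)$; choosing $\mu_i>\lpr(h_i)$ with $\mu_1+\mu_2<\lpr(h_1+h_2)$ (possible as the inequality is strict) gives $h_i-\mu_i\notin\rdesirs$, so by (iii) $(h_1+h_2)-(\mu_1+\mu_2+\eps)\notin\rdesirs$ for all $\eps>0$, which contradicts $(h_1+h_2)-\nu\in\rdesirs$ for $\mu_1+\mu_2<\nu<\lpr(h_1+h_2)$.

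The argument for (iv) is mostly bookkeeping with indicators. Writing $d\coloneqq\mu_1-\mu_2>0$, one has $\alpha\mu_1+(1-\alpha)\mu_2-(\mu_1 I_A+\mu_2 I_{A^c})=d(\alpha-I_A)$ and $(\mu_1 I_A+\mu_2 I_{A^c})-(\alpha'\mu_1+(1-\alpha')\mu_2)=d(I_A-\alpha')$. Using $f\succ g\iff f-g\in\rdesirs$, positive homogeneity~\ref{D3}, and $\lpr(\alpha-I_A)=\alpha-\upr(A)$, $\lpr(I_A-\alpha')=\lpr(A)-\alpha'$, one finds, for an event $A$ with $\emptyset\neq A\neq\pspace$, that the first disjunct of~(iv) holds iff $\upr(A)<\alpha$, and that ``for every $\alpha'<\alpha$'' the second disjunct holds iff $\lpr(A)\ge\alpha$; the cases $A\in\{\emptyset,\pspace\}$ are trivial on both sides. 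Hence~(iv) is exactly: for every event $A$ and every $\alpha\in(0,1)$, $\upr(A)<\alpha$ or $\lpr(A)\ge\alpha$; quantifying over $\alpha$, this is equivalent to $\lpr(A)=\upr(A)$ for all events $A$, since a ``gap'' $\lpr(A)<\alpha\le\upr(A)$ with $\alpha\in(0,1)$ exists precisely when $\lpr(A)<\upr(A)$ (using $0\le\lpr(A)<1$ in that case). Finally, $\lpr(A)=\upr(A)$ for all events $A$ holds iff every $\pr\in\solp(\lpr)$ takes the common value on every event (by~\eqref{eq:lowenv} and the conjugate formula for $\upr$), which by the one-to-one correspondence between linear previsions and finitely additive probabilities forces $\solp(\lpr)$ to be a singleton, i.e.\ $\lpr$ linear; the converse is immediate. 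This closes (i)$\iff$(iv).

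I expect the main obstacle to be not any single deep step but getting the translation in~(iv) exactly right: handling the degenerate events $A=\emptyset,\pspace$, tracking signs in the two indicator identities, and using the universal quantifier over $\alpha'<\alpha$ correctly (it is precisely what turns ``$\lpr(A)>\alpha'$ for all such $\alpha'$'' into ``$\lpr(A)\ge\alpha$''). The reduction of ``agreement on events'' to full linearity via the credal set is standard and should be cited rather than reproved.
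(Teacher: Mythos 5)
Your proof is correct, and its substance coincides with the paper's: the same small computations drive (i)--(iii), and the same two indicator gambles (your $d(\alpha-I_A)$ and $d(I_A-\alpha')$ with $d=\mu_1-\mu_2$, the paper's $(1-\alpha)(\mu_2-\mu_1)I_A+\alpha(\mu_1-\mu_2)I_{A^c}$, etc.) drive (iv). The differences are organizational rather than conceptual. The paper imports (i)$\iff$(ii) from \cite[Proposition~6]{miranda2010c} and then routes both (i)$\Rightarrow$(iii) and (iii)$\Rightarrow$(i) through (ii), whereas you prove (i)$\Rightarrow$(ii) directly (easy, via self-conjugacy and constant additivity) and obtain (iii)$\Rightarrow$(i) by showing that negative additivity forbids strict superadditivity of $\lpr$, then invoking the paper's own remark that a coherent lower prevision satisfying~\ref{C3} with equality is linear; this makes your cycle self-contained. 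For (iv), the paper proves (i)$\Rightarrow$(iv) by a dichotomy on $\pr(A)$ versus $\alpha$ and $\neg$(i)$\Rightarrow\neg$(iv) by taking $\mu_1=1,\mu_2=0$ and $\alpha\in(\lpr(A),\upr(A))$; you instead compress (iv) into the single statement ``$\lpr(A)=\upr(A)$ for every event $A$'' and then pass to linearity through the credal set. That last passage is precisely the fact the paper uses without comment (``if $\lpr$ is not linear, its restriction to events cannot be additive''), and your justification---all $\pr\in\solp(\lpr)$ agree on events, and a linear prevision is determined by its restriction to events via the one-to-one correspondence with finitely additive probabilities---is the correct one, so on this point you are actually more explicit than the paper. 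Your care with the degenerate events $A\in\{\emptyset,\pspace\}$ and with the quantifier over $\alpha'<\alpha$ is also sound.
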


The equivalence of (i) and (ii) has been proven
in~\cite[Proposition~6]{miranda2010c}. It clearly shows that the
linearity of previsions, once written in terms of gambles, is
mathematically very similar to the maximality of a set of gambles.
The characterisation of (i) through negative additivity in (iii) is
essentially obtained through a rewriting of (ii); yet, in this form
it shows its resemblance to the notion of negative transitivity that
has been used to deduce the completeness of
values~\cite[Axiom~A.6]{galaabaatar2013}. Finally, the
characterisation of (i) in (iv) is the analog
of~\cite[Axiom~A.7]{galaabaatar2013} used to derive the completeness
of beliefs. So it is interesting to see that two notions that have
been used with separate aims in the literature are actually the same
notion once we represent utilities by coherent lower previsions
besides probabilities. Indeed, we can focus on any of those
formulations to immediately deduce characterisations for all the
cases of completeness in preferences. By choosing negative
additivity, we obtain the following:

\begin{proposition}\label{pr:precise-utilities}
Consider a coherent preference relation $\succ$ on gambles
represented by a coherent lower prevision $\lpr$ on $\gambles(\pspace\times\values)$, whose corresponding coherent set of gambles is denoted by $\rdesirs$. Let $\rdesirs_\pspace,\rdesirs_\values$ denote its marginals. Then: 
\begin{itemize}
 \item[(i)] $\lpr$ represents complete beliefs $\Leftrightarrow$ $\rdesirs_\pspace$ is
 negatively additive.
 \item[(ii)] $\lpr$ represents complete values $\Leftrightarrow$ $\rdesirs_\values$ is
 negatively additive.
\item[(iii)] $\lpr$ represents complete preferences $\Leftrightarrow$ $\rdesirs$ is negatively additive.
\end{itemize}
\end{proposition}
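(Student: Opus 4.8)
The plan is to deduce all three equivalences from Proposition~\ref{pr:charac-linear}, and more precisely from the equivalence of its items~(i) and~(iii), after recognising the three sets $\rdesirs$, $\rdesirs_\pspace$ and $\rdesirs_\values$ as the coherent sets of \emph{strictly} desirable gambles attached through Eq.~\eqref{eq:Rlpr} to $\lpr$, $\lpr_\pspace$ and $\lpr_\values$, respectively. Recall that, by the definition of completeness for lower previsions, ``$\lpr$ represents complete preferences'' means ``$\lpr$ is linear'', ``$\lpr$ represents complete beliefs'' means ``$\lpr_\pspace$ is linear'', and ``$\lpr$ represents complete values'' means ``$\lpr_\values$ is linear''; so each of~(i)--(iii) asserts that a suitable marginal (or $\lpr$ itself) is linear if and only if the corresponding set of gambles is negatively additive.

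Part~(iii) is then immediate: the hypothesis says precisely that $\rdesirs$ is the coherent (strictly desirable) set of gambles corresponding to $\lpr$, so Proposition~\ref{pr:charac-linear}~(i)$\iff$(iii) applied to $(\lpr,\rdesirs)$ gives ``$\lpr$ linear $\iff$ $\rdesirs$ negatively additive'', which is~(iii). For~(i) and~(ii) the one step to carry out is to check that marginalisation commutes with the passage from a coherent lower prevision to its strictly desirable set. Concretely, I would show that $\rdesirs_\pspace$, read through the identification $\rdesirs_\pspace'$ on $\gambles(\pspace)$ agreed upon after Definition~\ref{def:margR}, equals $\{g\in\gambles(\pspace):g\gneq0\text{ or }\lpr_\pspace(g)>0\}$. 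This is pure unwinding of definitions: a $\pspace$-measurable gamble $g$ (viewed on $\pspace\times\values$) lies in $\rdesirs=\{f:f\gneq0\text{ or }\lpr(f)>0\}$ iff $g\gneq0$ or $\lpr(g)>0$, and $\lpr(g)=\lpr_\pspace(g)$ by Definition~\ref{def:marg-clp}, while $g\gneq0$ as a gamble on $\pspace\times\values$ amounts to $g\gneq0$ as a gamble on $\pspace$. Hence $\rdesirs_\pspace$ is exactly the set produced from $\lpr_\pspace$ by Eq.~\eqref{eq:Rlpr}; in particular it is coherent and strictly desirable, and Proposition~\ref{pr:charac-linear}~(i)$\iff$(iii) applied to $(\lpr_\pspace,\rdesirs_\pspace)$ yields ``$\lpr_\pspace$ linear $\iff$ $\rdesirs_\pspace$ negatively additive'', i.e.~(i). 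Part~(ii) follows identically with $\values$, $\lpr_\values$, $\rdesirs_\values$ in place of $\pspace$, $\lpr_\pspace$, $\rdesirs_\pspace$.

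I do not expect a genuine obstacle here: the argument is bookkeeping with the definitions of marginal lower prevision (Definition~\ref{def:marg-clp}), marginal set of desirable gambles (Definition~\ref{def:margR}) and strict desirability (Condition~\ref{D0}). The only points requiring a little care are keeping straight the two equivalent incarnations $\rdesirs_\pspace$ versus $\rdesirs_\pspace'$ (and $\lpr_\pspace$ versus its restriction to $\gambles(\pspace)$), which are exactly the identifications the paper has already committed to, and making explicit that $\rdesirs$---and hence, by the displayed computation, each of its marginals---really is the strictly desirable set attached to the corresponding lower prevision, since Proposition~\ref{pr:charac-linear} is stated under that standing assumption; both are guaranteed by the hypothesis that $\rdesirs$ is the coherent set of gambles corresponding to $\lpr$.
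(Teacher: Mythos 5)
Your proposal is correct and follows essentially the same route as the paper, which states this proposition as an immediate consequence of Proposition~\ref{pr:charac-linear} (choosing the negative-additivity characterisation) and gives no separate proof. The only content beyond that citation is the bookkeeping you carry out explicitly---that the $\pspace$- (resp.\ $\values$-) marginal of the strictly desirable set induced by $\lpr$ coincides, under the agreed identification with $\gambles(\pspace)$ (resp.\ $\gambles(\values)$), with the strictly desirable set induced by $\lpr_\pspace$ (resp.\ $\lpr_\values$) via Eq.~\eqref{eq:Rlpr}---and that verification is accurate.
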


\section{State independence means strong product in former proposals}\label{sec:si=sp}

In the previous sections we have discussed the questions of state
independence to some length and proposed ways to address it through
notions of irrelevance or independence for sets of probabilities.
Now it is time to detail how our proposal relates to previous ones
that have been made under incomplete preferences as well. We focus
in particular on Galaabaatar and Karni's work
\cite{galaabaatar2013}, which provides quite a general treatment and
is easier to put in correspondence with ours. They obtain state
independence in particular by means of \cite[Axioms A.4 and
A.5]{galaabaatar2013}. The first of these two axioms is also called
the \emph{dominance axiom} or the \emph{sure thing principle}. On
the other hand, \cite[Axiom A.5]{galaabaatar2013} corresponds to
Eq.~\eqref{eq:A5} in the following:
\begin{theorem}\label{pr:domin-by-str-prod}
Let $\lpr$ be a coherent lower prevision on
$\gambles(\pspace\times\values)$ and let
$\lpr_{\pspace},\lpr_{\values}$ denote the marginals of $\lpr$. For any $\pr_{\values}\geq\lpr_{\values}$, we define
$\pr_{\values}(\cdot|\pspace)$ by means of
Eq.~\eqref{eq:cong-cond-lp}. Then
\begin{equation}\label{eq:A5}
 \lpr\leq \lpr(\pr_{\values}(\cdot|\pspace)) \text{ for every } \pr_{\values}\geq \lpr_{\values}
\end{equation}
if and only if $\lpr \leq \lpr_{\pspace}\boxtimes \lpr_{\values}$.
\end{theorem}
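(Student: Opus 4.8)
The plan is to rewrite the inequality $\lpr\leq\lpr_\pspace\boxtimes\lpr_\values$ in a form that makes the claimed equivalence immediate, and then read off both implications at once. First I would fix a gamble $f\in\gambles(\pspace\times\values)$ and unfold the definition of the strong product for lower previsions:
\begin{equation*}
\lpr_\pspace\boxtimes\lpr_\values(f)=\min\{\pr_\pspace(\pr_\values(f|\pspace)):\pr_\pspace\geq\lpr_\pspace,\ \pr_\values\geq\lpr_\values\}.
\end{equation*}
I would then split this joint minimisation, minimising first over $\pr_\pspace\geq\lpr_\pspace$ with $\pr_\values$ held fixed. Two elementary facts come into play here: by~\eqref{eq:cong-cond-lp} the object $\pr_\values(f|\pspace)$ is the gamble whose value at $(\omega,x)$ is $\pr_\values(f(\omega,\cdot))$, hence it is $\pspace$-measurable; and by Definition~\ref{def:marg-clp} the marginal $\lpr_\pspace$ is precisely the restriction of $\lpr$ to $\pspace$-measurable gambles. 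Combining these with the lower-envelope representation~\eqref{eq:lowenv} of the coherent lower prevision $\lpr_\pspace$ gives
\begin{equation*}
\min\{\pr_\pspace(\pr_\values(f|\pspace)):\pr_\pspace\geq\lpr_\pspace\}=\lpr_\pspace(\pr_\values(f|\pspace))=\lpr(\pr_\values(f|\pspace)),
\end{equation*}
and therefore $\lpr_\pspace\boxtimes\lpr_\values(f)=\inf\{\lpr(\pr_\values(f|\pspace)):\pr_\values\geq\lpr_\values\}$.

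With this identity in hand the theorem becomes a one-line argument. Condition~\eqref{eq:A5} says exactly that $\lpr(f)\leq\lpr(\pr_\values(f|\pspace))$ for every $f\in\gambles(\pspace\times\values)$ and every $\pr_\values\geq\lpr_\values$; since a real number is $\leq$ every member of a family of reals if and only if it is $\leq$ their infimum, this is equivalent to $\lpr(f)\leq\inf\{\lpr(\pr_\values(f|\pspace)):\pr_\values\geq\lpr_\values\}=\lpr_\pspace\boxtimes\lpr_\values(f)$ holding for every $f$, i.e.\ to $\lpr\leq\lpr_\pspace\boxtimes\lpr_\values$.

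The argument is essentially bookkeeping, so the part that needs the most care — and what I would regard as the only genuine obstacle — is keeping the roles of $\lpr$, its marginals, and the conglomerated conditional $\pr_\values(\cdot|\pspace)$ straight: specifically, verifying that $\pr_\values(\cdot|\pspace)$ always yields $\pspace$-measurable gambles, so that one may pass freely between $\lpr$ and $\lpr_\pspace$, and that the joint minimum defining the strong product may be evaluated iteratively, which is harmless since only the inequality, not the attainment of the outer infimum, is ever used. If one prefers to avoid the iterated-minimum step, the two implications can be proved separately instead: for the ``if'' direction, note $\lpr(f)\leq\lpr_\pspace\boxtimes\lpr_\values(f)\leq\pr_\pspace(\pr_\values(f|\pspace))$ for each $\pr_\pspace\geq\lpr_\pspace$ and take the infimum over such $\pr_\pspace$ to land on $\lpr(\pr_\values(f|\pspace))$; for the ``only if'' direction, apply~\eqref{eq:A5} with the particular $\pr_\values$ appearing in a given term of the strong product, and then bound that term below by replacing $\pr_\pspace$ with $\lpr_\pspace$.
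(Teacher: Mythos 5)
Your proof is correct and rests on exactly the two facts the paper's own proof uses---the envelope/attainment property of the marginal $\lpr_{\pspace}$ and the identification of $\lpr$ with $\lpr_{\pspace}$ on $\pspace$-measurable gambles such as $\pr_{\values}(f|\pspace)$---so it is essentially the same argument, repackaged as the single identity $\lpr_{\pspace}\boxtimes\lpr_{\values}(f)=\inf\{\lpr(\pr_{\values}(f|\pspace)):\pr_{\values}\geq\lpr_{\values}\}$ from which both implications are read off at once. The paper merely organises it differently, proving the two directions separately (the first by contradiction via a product $\pr_{\pspace}\times\pr_{\values}$ lying outside $\solp(\lpr)$), but the mathematical content coincides.
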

\noindent Thus, it turns out that \cite[Axiom A.5]{galaabaatar2013}
is just imposing that preferences are represented by a coherent
lower prevision $\lpr$ that is at most as precise as the strong
product of the marginal models for probabilities and utilities. In
other words, the greatest coherent lower prevision $\lpr$ with
marginals $\lpr_{\pspace},\lpr_{\values}$ satisfying~\eqref{eq:A5}
is given by their strong product.

Next we give an alternative characterisation of the strong product
as the least informative model satisfying (our reformulation of)
\cite[Axiom A.4]{galaabaatar2013}. For every $\omega\in\pspace$ and
$f\in\gambles(\pspace\times\values)$, let us define the
$\values$-measurable gamble
\begin{eqnarray*}\label{eq:constant-f}
 f^{\omega}:\pspace\times\values &\rightarrow &\reals \\
  (\omega',x) &\mapsto & f(\omega,x).\notag
\end{eqnarray*}
The reformulation of \cite[Axiom A.4]{galaabaatar2013} in the
language of coherent lower previsions is the following:
\begin{equation}\label{eq:A4}
(\forall g,f\in\gambles(\pspace\times\values))((\forall \omega\in\pspace) \lpr(g-f^{\omega})\geq 0 \Rightarrow\lpr(g-f)\geq 0).
\end{equation}
Let us show first of all that this condition can be used to
characterise independent products in the linear case:

\begin{lemma}\label{le:linear-env-A5}
A linear prevision $\pr$ satisfies~\eqref{eq:A4} if and only if it
is an independent product of its marginals. Moreover,
Eq.~\eqref{eq:A4} is preserved by taking lower envelopes.
\end{lemma}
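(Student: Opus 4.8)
The plan is to handle the two assertions separately, dealing first with the characterisation of linear previsions. Fix a linear prevision $\pr$ on $\gambles(\pspace\times\values)$ with marginals $\pr_{\pspace},\pr_{\values}$, and for each $f\in\gambles(\pspace\times\values)$ abbreviate by $\phi_f$ the bounded gamble on $\pspace$ given by $\phi_f(\omega)\coloneqq\pr_{\values}(f(\omega,\cdot))$; note that $\phi_f(\omega)=\pr(f^{\omega})$, because $f^{\omega}$ is $\values$-measurable and $\pr$ agrees with $\pr_{\values}$ on $\values$-measurable gambles, and likewise, reading $\phi_f$ as the $\pspace$-measurable gamble on $\pspace\times\values$ it induces, $\pr(\phi_f)=\pr_{\pspace}(\phi_f)$. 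First I would rewrite the hypothesis $\lpr(g-f^{\omega})\geq 0$ of~\eqref{eq:A4} as $\pr(g)\geq\phi_f(\omega)$ and the conclusion $\lpr(g-f)\geq 0$ as $\pr(g)\geq\pr(f)$, observing that $g$ intervenes only through the real number $\pr(g)$, which, by choosing $g$ constant, can be any real at least $\sup_{\omega}\phi_f(\omega)$. Hence, for a linear prevision,~\eqref{eq:A4} is equivalent to requiring $\pr(f)\leq\sup_{\omega\in\pspace}\phi_f(\omega)$ for all $f$ and---applying this also to $-f$, for which $\phi_{-f}=-\phi_f$---to the two-sided estimate $\inf_{\omega}\phi_f(\omega)\leq\pr(f)\leq\sup_{\omega}\phi_f(\omega)$ for all $f$.

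The next, and main, step is to sharpen this estimate into an identity. Given an arbitrary $f$, apply the estimate to $f-\phi_f$: since $\phi_{f-\phi_f}\equiv 0$, it yields $\pr(f-\phi_f)=0$, so by linearity $\pr(f)=\pr(\phi_f)=\pr_{\pspace}(\phi_f)$, which is precisely the value at $f$ of the marginal extension $\pr_{\pspace}(\pr(\cdot|\pspace))$ of Definition~\ref{def:irr-prod-clp}. Thus~\eqref{eq:A4} forces $\pr=\pr_{\pspace}(\pr(\cdot|\pspace))$, that is, $\pr$ is an $\pspace$-$\values$~irrelevant product of its marginals; the converse implication is immediate from the estimate, since then $\pr(f)=\pr_{\pspace}(\phi_f)\leq\sup_{\omega}\phi_f(\omega)\leq\pr(g)$ whenever $\pr(g)\geq\phi_f(\omega)$ for all $\omega$. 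Finally, since $\values$ is finite and $\pr_{\pspace}$ is linear, $\pr(f)=\pr_{\pspace}(\phi_f)=\sum_{x\in\values}\pr_{\values}(\{x\})\,\pr_{\pspace}(f(\cdot,x))$, and this expression is symmetric in the roles of $\pspace$ and $\values$; hence $\pr$ is simultaneously a $\values$-$\pspace$~irrelevant product, so it is an independent product of its marginals in the sense of Definition~\ref{def:ind-prod-clp}. Conversely, an independent product is in particular an $\pspace$-$\values$~irrelevant product, so $\pr\geq\pr_{\pspace}(\pr(\cdot|\pspace))$; the right-hand side being linear, the inequality is an equality and~\eqref{eq:A4} holds. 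This proves the first assertion.

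For the ``moreover'', let $\lpr$ be the lower envelope of a family $\{\lpr_i:i\in I\}$ of coherent lower previsions each satisfying~\eqref{eq:A4}. If $g,f\in\gambles(\pspace\times\values)$ satisfy $\lpr(g-f^{\omega})\geq 0$ for every $\omega\in\pspace$, then $\lpr_i(g-f^{\omega})\geq\lpr(g-f^{\omega})\geq 0$ for all $i$ and $\omega$; applying~\eqref{eq:A4} to each $\lpr_i$ gives $\lpr_i(g-f)\geq 0$, and taking the infimum over $i$ gives $\lpr(g-f)\geq 0$. Hence $\lpr$ satisfies~\eqref{eq:A4}.

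The only genuinely non-mechanical point is the reduction in the first paragraph---noticing that the free act $g$ in~\eqref{eq:A4} collapses to the single scalar $\pr(g)$, so that for linear previsions~\eqref{eq:A4} is merely the two-sided estimate of $\pr(f)$ by the conditional previsions $\phi_f$---together with the trick of subtracting $\phi_f$, which converts that estimate into the exact marginal-extension identity $\pr(f)=\pr_{\pspace}(\phi_f)$. The remainder is routine bookkeeping with marginals of linear previsions and with the finiteness of $\values$.
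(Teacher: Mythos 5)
Your proof is correct, but it takes a more self-contained route than the paper's. The paper first rewrites~\eqref{eq:A4} as the inequality $\lpr(g-f)\geq\inf_{\omega\in\pspace}\lpr(g-f^{\omega})$ for all $f,g$, specialises it for a linear $\pr$ to $\pr(f)\geq\inf_{\omega}\pr_{\values}(f|\omega)$, and then leans on external machinery: by \cite[Theorems~6.5.7 and~6.7.3]{walley1991} this inequality is equivalent to joint coherence of $\pr$ with $\pr_{\values}(\cdot|\pspace)$ and forces $\pr=\pr_{\pspace}(\pr_{\values}(\cdot|\pspace))$, after which the identification with the (unique) independent product is delegated to the auxiliary Lemma~\ref{pr:unique-ind-product}. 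You obtain the same chain by hand: the observation that the free gamble $g$ enters only through the scalar $\pr(g)$ reduces~\eqref{eq:A4} to the two-sided estimate $\inf_{\omega}\phi_f\leq\pr(f)\leq\sup_{\omega}\phi_f$, and your subtraction of $\phi_f$ is essentially the same computation the paper performs inside Lemma~\ref{pr:unique-ind-product} (there phrased as $\lpr(f-\pr_{\values}(f|\pspace))=0$), yielding the marginal-extension identity directly. Where you genuinely diverge is the passage to ``independent product'': instead of invoking uniqueness of independent products with a linear $\values$-marginal, you use the finiteness of $\values$ and linearity of $\pr_{\pspace}$ to write $\pr(f)=\sum_{x\in\values}\pr_{\values}(\{x\})\pr_{\pspace}(f(\cdot,x))$ and read off the $\values$-$\pspace$ irrelevance as well; this is perfectly adequate here (finiteness of $\values$ is a standing assumption), whereas the paper's Lemma~\ref{pr:unique-ind-product} works without it. For the envelope claim your direct monotonicity argument (hypothesis lifts to each dominating $\lpr_i$, conclusion passes to the infimum) is simpler than the paper's $\eps$-argument via the reformulated inequality, and it is stated for envelopes of arbitrary coherent lower previsions rather than only of linear ones--which is in fact the form needed later in Theorem~\ref{thm:dom-sp}.
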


In order to extend Lemma~\ref{le:linear-env-A5} to the imprecise case, we prove
that (i) the result holds when both $\pspace,\values$ are finite,
and that (ii) the case of $\pspace$ infinite can be approximated as a
limit of finite sets. The next proposition addresses the first case.

\begin{proposition}\label{pr:ind-selection}
Let $\pspace,\values$ be finite spaces, and let $\lpr$ be a coherent
lower prevision on $\gambles(\pspace\times\values)$. Then $\lpr$
satisfies~\eqref{eq:A4} if and only if it is a lower envelope of
linear previsions satisfying
\begin{equation}\label{eq:factorising-linear}
(\forall \omega\in\pspace)(\forall x\in\values) \pr(\{(\omega,x)\})=\pr(\{\omega\})\pr(\{x\}).
\end{equation}
\end{proposition}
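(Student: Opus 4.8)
The plan is to prove the two implications separately, using Lemma~\ref{le:linear-env-A5}: a \emph{linear} prevision satisfies~\eqref{eq:A4} exactly when it is an independent product of its marginals---equivalently, when it factorises as $\pr_{\pspace}\otimes\pr_{\values}$---and~\eqref{eq:A4} is stable under lower envelopes. The ``if'' direction is then immediate: if $\lpr=\inf_{i}\pr_{i}$ with each $\pr_{i}=(\pr_{i})_{\pspace}\otimes(\pr_{i})_{\values}$, then each $\pr_{i}$ is an independent product of its own marginals, hence satisfies~\eqref{eq:A4}, and so does $\lpr$ by the lower-envelope clause of the lemma---directly, if $\lpr(g-f^{\omega})\ge 0$ for all $\omega$ then $\pr_{i}(g-f^{\omega})\ge 0$ and hence $\pr_{i}(g-f)\ge 0$ for every $i$, whence $\lpr(g-f)\ge 0$.

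For the ``only if'' direction, assume $\lpr$ satisfies~\eqref{eq:A4}. Because $\pspace$ and $\values$ are finite, $\gambles(\pspace\times\values)$ is finite-dimensional, $\solp(\lpr)$ is a compact convex set of linear previsions, and for every gamble $h$ the value $\lpr(h)=\min\{\pr(h):\pr\in\solp(\lpr)\}$ is attained at an extreme point. Let $\mathcal F$ be the set of factorising linear previsions on $\pspace\times\values$; it is closed, being the image of the (compact) product of the two probability simplices under the continuous map $(\mu,\nu)\mapsto\mu\otimes\nu$. It therefore suffices to show that every extreme point of $\solp(\lpr)$ belongs to $\mathcal F$: then $\lpr$ is the lower envelope of its extreme points, all factorising, which is what we want. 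Moreover, by Straszewicz's theorem the exposed points of $\solp(\lpr)$ are dense in its extreme points, and $\mathcal F$ is closed, so it is enough to prove that every \emph{exposed} point of $\solp(\lpr)$ factorises.

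So let $\pr^{*}$ be exposed by a gamble $f$, i.e.\ the unique minimiser of $\pr\mapsto\pr(f)$ over $\solp(\lpr)$, with $\pr^{*}(f)=\lpr(f)$. Put $\mathcal C\coloneqq\{h:\lpr(h)\ge 0\}$, a closed convex cone which has nonempty interior (it contains every gamble bounded away from $0$ from below) and whose dual cone is $\{\lambda\pr:\lambda\ge 0,\ \pr\in\solp(\lpr)\}$. The premise of~\eqref{eq:A4} for this $f$ reads $g\in\bigcap_{\omega}(f^{\omega}+\mathcal C)$, and its lower support value at $\pr^{*}$---namely $\min\{\pr^{*}(g):g\in\bigcap_{\omega}(f^{\omega}+\mathcal C)\}$---equals, by conic duality (Slater's condition is met, a large constant gamble being strictly feasible), the maximum of $\sum_{\omega}\lambda_{\omega}\,\pr^{(\omega)}_{\values}(f(\omega,\cdot))$ taken over all convex weights $(\lambda_{\omega})_{\omega}$ and all families $(\pr^{(\omega)})_{\omega}$ in $\solp(\lpr)$ with $\sum_{\omega}\lambda_{\omega}\pr^{(\omega)}=\pr^{*}$. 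Since~\eqref{eq:A4} forces $\pr^{*}(f)=\lpr(f)$ to be at most this maximum, and each $\pr^{(\omega)}$ with positive weight is then forced (by $\sum_{\omega}\lambda_{\omega}\pr^{(\omega)}=\pr^{*}$, $\pr^{(\omega)}(f)\ge\lpr(f)$ and uniqueness of the minimiser) to equal $\pr^{*}$, the maximum collapses to $\sum_{\omega}\lambda_{\omega}\pr^{*}_{\values}(f(\omega,\cdot))$, giving $\pr^{*}(f)\le\max_{\omega}\pr^{*}_{\values}(f(\omega,\cdot))=\max_{\omega}\pr^{*}(f^{\omega})$.

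The remaining, and hardest, step is to upgrade this to the full factorisation of $\pr^{*}$. The plan is to combine the inequality just obtained---available for \emph{every} gamble exposing $\pr^{*}$---with the companion consequence $\upr(h)\le\max_{\omega}\upr_{\values}(h(\omega,\cdot))$ of~\eqref{eq:A4} (take $g$ constant) and the finiteness of $\values$, in order to conclude that $\pr^{*}(\cdot\mid\{\omega\})=\pr^{*}_{\values}$ for every $\omega$ with $\pr^{*}_{\pspace}(\{\omega\})>0$; a linear prevision all of whose state-conditionals coincide with its $\values$-marginal is precisely a factorising one, so this closes the argument. I expect this last step to be the main obstacle, and it is exactly where the finiteness of \emph{both} spaces is indispensable---finite $\pspace$ so that $\lpr(\cdot\mid\pspace)$ is a genuine finite sum and the conic programme above is finitely constrained, finite $\values$ for the rigidity used in the collapse---which is why the infinite-$\pspace$ case is left to a separate limiting argument over finite subfields.
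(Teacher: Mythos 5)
Your ``if'' direction and your reduction of the ``only if'' direction to exposed points of $\solp(\lpr)$ (via Straszewicz and the closedness of the set of factorising previsions) match the paper's strategy, and your conic-duality computation is essentially sound: with Slater's condition, strong duality plus the extremality of the exposed point $\pr^*$ does yield $\pr^*(f)\leq\max_{\omega}\pr^*(f^{\omega})$ for every gamble $f$ exposing $\pr^*$. But that is where your argument stops. The entire substance of the proposition is the step you defer---showing that an exposed point of $\solp(\lpr)$ actually satisfies~\eqref{eq:factorising-linear}---and for it you offer only a ``plan'' (combine the exposing-gamble inequality with $\upr(h)\leq\max_{\omega}\upr(h^{\omega})$ and finiteness of $\values$), while explicitly acknowledging it as the main obstacle. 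Nothing in the proposal shows that these two facts force $\pr^*(\cdot\,|\,\{\omega\})=\pr^*_{\values}$ for all $\omega$ of positive marginal probability; your inequality constrains $\pr^*$ only along the cone of gambles that expose it, and it is not at all evident that this family is rich enough to pin down the conditionals. As it stands, the converse implication is not proved.

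For comparison, the paper closes exactly this gap with a concrete ex-absurdo construction (adapted from Galaabaatar and Karni): if the exposed point $\pr$ does not factorise, one finds prizes $x_1,x_2$ and states $\omega_1,\omega_2$ with distinct ratios $\pr(\{(\omega_i,x_1)\})/\pr(\{(\omega_i,x_2)\})$, chooses $\lambda^*$ strictly between the reciprocals of those ratios with $\pr(\lambda^* I_{\{x_1\}})\neq\pr(I_{\{x_2\}})$, sets $h_1\coloneqq\lambda^* I_{\{x_1\}}$, $h_2\coloneqq I_{\{x_2\}}$, and builds the slice-selection gamble $f$ whose $\omega$-section is $h_1$ or $h_2$ according to which has larger conditional expectation under $\pr$. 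Using the exposing gamble $g$ (normalised so $\pr(g)=0<\pr'(g)$ for $\pr'\neq\pr$) and a sequential-compactness argument on $\solp(\lpr)$, one finds $\lambda>0$ with $\lpr(\lambda g+h_1-h_2)\geq 0$; then $\lpr(\lambda g+h_1-f^{\omega})\geq 0$ for all $\omega$, yet $\lpr(\lambda g+h_1-f)<0$ because the comparison reverses on $\omega_2$, contradicting~\eqref{eq:A4}. You would need this construction, or a genuine substitute for it, before your argument counts as a proof; the duality inequality you derived does not by itself deliver the factorisation.
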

\noindent It is an open problem at this stage whether a result akin to
Proposition~\ref{pr:ind-selection} holds when one of the spaces is
infinite.

The second case to be dealt with is done  through the next:
\begin{theorem}\label{thm:dom-sp}
Consider a coherent lower prevision $\lpr$ on
$\gambles(\pspace\times\values)$ with marginals
$\lpr_{\pspace},\lpr_{\values}$. If $\lpr$ satisfies
Eq.~\eqref{eq:A4}, then it dominates the strong product
$\lpr_{\pspace}\boxtimes\lpr_{\values}$.
\end{theorem}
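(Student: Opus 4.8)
The plan is to reduce the statement to the case of finite $\pspace$ (recall $\values$ is finite throughout) and then treat infinite $\pspace$ by a finite-partition approximation. For finite $\pspace$ the result is essentially immediate from Proposition~\ref{pr:ind-selection}: since $\lpr$ satisfies~\eqref{eq:A4}, that proposition writes $\lpr$ as a lower envelope $\lpr=\inf_{i\in I}\pr_i$ of linear previsions $\pr_i$ each satisfying the factorisation~\eqref{eq:factorising-linear}, i.e.\ each $\pr_i$ is (for finite spaces) the product $(\pr_i)_{\pspace}\otimes(\pr_i)_{\values}$ of its own marginals, which coincides with $(\pr_i)_{\pspace}((\pr_i)_{\values}(\cdot|\pspace))$. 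Since marginalisation is just restriction and hence commutes with lower envelopes, $\lpr_{\pspace}=\inf_i(\pr_i)_{\pspace}$ and $\lpr_{\values}=\inf_i(\pr_i)_{\values}$, so $(\pr_i)_{\pspace}\geq\lpr_{\pspace}$ and $(\pr_i)_{\values}\geq\lpr_{\values}$ for every $i$. Thus each $\pr_i$ is one of the linear previsions appearing in the definition of $\lpr_{\pspace}\boxtimes\lpr_{\values}$, hence $\pr_i\geq\lpr_{\pspace}\boxtimes\lpr_{\values}$; taking the lower envelope over $i$ gives $\lpr\geq\lpr_{\pspace}\boxtimes\lpr_{\values}$.

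For infinite $\pspace$, fix $f\in\gambles(\pspace\times\values)$ and $\eps>0$. Using that $f$ is bounded and $\values$ finite, choose a finite partition $\partit=\{B_1,\dots,B_n\}$ of $\pspace$ and representatives $\omega_j\in B_j$ so that the $\partit$-measurable gamble $\tilde f$ defined by $\tilde f(\omega,x)\coloneqq f(\omega_j,x)$ for $\omega\in B_j$ satisfies $\norm{f-\tilde f}<\eps$ in the sup norm. Let $\lpr'$ be the restriction of $\lpr$ to the linear space of $\partit$-measurable gambles, which we identify with $\gambles(\partit\times\values)$; it is a coherent lower prevision on a finite space. The key observation is that $\lpr'$ satisfies the finite-space instance of~\eqref{eq:A4}: for $\partit$-measurable $g,h$, the gamble $h^{\omega}$ is the same for all $\omega$ lying in a common cell $B_j$ and equals the $\values$-measurable gamble corresponding to $h^{B_j}$, so the hypothesis ``$\lpr'(g-h^{B_j})\geq 0$ for all $j$'' coincides with ``$\lpr(g-h^{\omega})\geq 0$ for all $\omega\in\pspace$'', and~\eqref{eq:A4} for $\lpr$ then gives $\lpr(g-h)\geq 0$, i.e.\ $\lpr'(g-h)\geq 0$. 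Applying the finite case to $\lpr'$ yields $\lpr'\geq\lpr'_{\partit}\boxtimes\lpr'_{\values}$, where $\lpr'_{\values}=\lpr_{\values}$ (a $\values$-measurable gamble is $\partit$-measurable) and $\lpr'_{\partit}$ is the restriction of $\lpr_{\pspace}$ to $\partit$-measurable gambles on $\pspace$.

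It remains to match the two strong products on $\tilde f$ and to take a limit. Since $\tilde f(\cdot,x)$ is $\partit$-measurable for each $x$, the gamble $\pr_{\values}(\tilde f|\pspace)=\sum_{x\in\values}\pr_{\values}(\{x\})\tilde f(\cdot,x)$ is $\partit$-measurable, so $\pr_{\pspace}(\pr_{\values}(\tilde f|\pspace))$ depends on $\pr_{\pspace}$ only through its $\partit$-marginal; and, by the standard coherent-extension theorem, every linear prevision on $\gambles(\partit)$ dominating the $\partit$-marginal of $\lpr_{\pspace}$ extends to a linear prevision on $\gambles(\pspace)$ dominating $\lpr_{\pspace}$. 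Hence the minima defining $\lpr'_{\partit}\boxtimes\lpr'_{\values}(\tilde f)$ and $\lpr_{\pspace}\boxtimes\lpr_{\values}(\tilde f)$ effectively range over the same set and are equal. Using in addition that $\lpr$ and $\lpr_{\pspace}\boxtimes\lpr_{\values}$ are both $1$-Lipschitz in the sup norm,
\[
\lpr(f)\;\geq\;\lpr(\tilde f)-\eps\;=\;\lpr'(\tilde f)-\eps\;\geq\;\lpr'_{\partit}\boxtimes\lpr'_{\values}(\tilde f)-\eps\;=\;\lpr_{\pspace}\boxtimes\lpr_{\values}(\tilde f)-\eps\;\geq\;\lpr_{\pspace}\boxtimes\lpr_{\values}(f)-2\eps,
\]
and letting $\eps\downarrow 0$ gives $\lpr(f)\geq\lpr_{\pspace}\boxtimes\lpr_{\values}(f)$ for every $f$, i.e.\ $\lpr\geq\lpr_{\pspace}\boxtimes\lpr_{\values}$.

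The main obstacle is carrying out the finite-to-infinite passage carefully. Two points require attention: first, verifying that the restriction $\lpr'$ genuinely inherits property~\eqref{eq:A4}, where one must be precise about how the quantifier $\forall\omega\in\pspace$ in~\eqref{eq:A4} interacts with $\partit$-measurability (the argument would break for a non-$\partit$-measurable $f$); second, establishing that the $\partit$-restricted strong product agrees with $\lpr_{\pspace}\boxtimes\lpr_{\values}$ on $\partit$-measurable gambles, which rests on the extendability of dominating linear previsions on $\gambles(\partit)$ to $\gambles(\pspace)$. The Lipschitz estimates and the limiting step itself are routine.
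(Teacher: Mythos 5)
Your proposal is correct and follows essentially the same route as the paper: the finite case via Proposition~\ref{pr:ind-selection}, and the infinite case by approximating $f$ with a gamble measurable with respect to a finite partition of $\pspace$, checking that the restricted lower prevision inherits~\eqref{eq:A4}, applying the finite case, and passing to the limit using the sup-norm continuity of coherent lower previsions. The only point to flag is that the ``standard coherent-extension theorem'' you invoke—extending a linear prevision on the $\partit$-measurable gambles that dominates the restriction of $\lpr_\pspace$ to a linear prevision on $\gambles(\pspace)$ dominating $\lpr_\pspace$—is exactly what the paper establishes separately as Lemma~\ref{le:closure-finite} via a separation argument, so that step is true but should be justified rather than cited as off-the-shelf.
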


We see then that the strong product is the smallest (i.e., most
conservative) coherent lower prevision with given marginals that
satisfies~\eqref{eq:A4}. We have thus two characterisations of the
strong product. As we shall see, if we put them together we can
express the strong product as the \emph{only} coherent lower
prevision that satisfies~\eqref{eq:A5} and~\eqref{eq:A4}
simultaneously.

Before establishing it, we are going to see
that each of the conditions can be used to characterise stochastic
independence in the case of complete preferences. Remember that a coherent lower prevision $\lpr$ on
$\pspace\times\values$ models complete preferences when it is a linear prevision, i.e., when $\lpr(f)=-\lpr(-f)$ for every $f\in\gambles(\pspace\times\values)$. The marginals $\lpr_{\pspace},\lpr_{\values}$ of a linear prevision $\pr$ on $\gambles(\pspace\times\values)$ are also linear previsions
on $\gambles(\pspace),\gambles(\values)$. Conversely, if you have complete preferences and they are compatible with some state independent model, then there is only one
possible state independent model: the concatenation
$\pr_{\pspace}(\pr_{\values}(\cdot|\pspace))$. This coincides with
the only independent product of these marginals, as we see in the
following result, whose proof is omitted since it is a direct consequence of Theorem~\ref{pr:domin-by-str-prod} and Lemma~\ref{le:linear-env-A5}. Note that it holds also for the case where $\pspace$ is infinite.

\begin{corollary}\label{cor:linear}
Let $\pr$ be a linear prevision on $\gambles(\pspace\times\values)$.
The following are equivalent:
\begin{enumerate}
\item[(i)] $\pr$ is the product of its
marginals.
\item[(ii)] $\pr$ satisfies~\eqref{eq:A5}.
\item[(iii)] $\pr$ satisfies~\eqref{eq:A4}.
\end{enumerate}
\end{corollary}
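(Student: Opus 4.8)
I plan to obtain the three equivalences by specialising the two preceding results to a \emph{linear} $\pr$, and then invoking the elementary fact that if $\apr_1,\apr_2$ are linear previsions with $\apr_1\leq\apr_2$ then $\apr_1=\apr_2$ (apply the inequality to $-f$ and use self-conjugacy of linear previsions). I will use throughout that, since $\pr$ is linear, its marginals $\pr_{\pspace},\pr_{\values}$ are linear previsions, and that the concatenation $\pr_{\pspace}(\pr_{\values}(\cdot|\pspace))$ — which for precise marginals is exactly the product prevision, i.e.\ the prevision in~(i) — is itself linear and has marginals $\pr_{\pspace}$ and $\pr_{\values}$: the $\pspace$-marginal because marginal extension restores the outer marginal, and the $\values$-marginal because for a $\values$-measurable $g$ the gamble $g(\omega,\cdot)$ does not depend on $\omega$, so $\pr_{\values}(g|\pspace)$ is the constant $\pr_{\values}(g)$, which $\pr_{\pspace}$ leaves fixed.

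For (i)$\Leftrightarrow$(iii) I would apply Lemma~\ref{le:linear-env-A5}: the linear $\pr$ satisfies~\eqref{eq:A4} if and only if it is an independent product of $\pr_{\pspace},\pr_{\values}$; in particular it is then an $\pspace$-$\values$~irrelevant product in the sense of Definition~\ref{def:irr-prod-clp}, so $\pr\geq\pr_{\pspace}(\pr_{\values}(\cdot|\pspace))$. Both sides being linear, the inequality forces equality, i.e.\ $\pr$ is the product of its marginals, which is~(i). Conversely, by Fubini the product prevision is an independent product of its (own) marginals, so it satisfies~\eqref{eq:A4} by the same lemma.

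For (i)$\Leftrightarrow$(ii) I would apply Theorem~\ref{pr:domin-by-str-prod} with $\lpr\coloneqq\pr$: condition~\eqref{eq:A5} holds if and only if $\pr\leq\pr_{\pspace}\boxtimes\pr_{\values}$. Since $\pr_{\pspace},\pr_{\values}$ are already precise, the minimum defining the strong product is over the single pair $(\pr_{\pspace},\pr_{\values})$, whence $\pr_{\pspace}\boxtimes\pr_{\values}=\pr_{\pspace}(\pr_{\values}(\cdot|\pspace))$; then $\pr\leq\pr_{\pspace}(\pr_{\values}(\cdot|\pspace))$ with both linear yields $\pr=\pr_{\pspace}(\pr_{\values}(\cdot|\pspace))$, i.e.\ (i). Chaining the two equivalences gives the claim.

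I do not anticipate a genuine obstacle here — this is a short corollary — and the only points needing care are the preservation of marginals under the product/marginal-extension construction and the collapse of the strong product to a single product when the marginals are precise, both routine once stated. An alternative to invoking $\apr_1\leq\apr_2\Rightarrow\apr_1=\apr_2$ would be to read directly from Theorem~\ref{pr:domin-by-str-prod} and Lemma~\ref{le:linear-env-A5} that each of~\eqref{eq:A5} and~\eqref{eq:A4} pins down the same \emph{greatest} coherent lower prevision with the prescribed marginals, but the conjugacy argument is the shortest route.
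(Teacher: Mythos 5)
Your proof is correct and takes essentially the paper's route: the corollary's proof is omitted in the paper precisely because it is a direct consequence of Theorem~\ref{pr:domin-by-str-prod} and Lemma~\ref{le:linear-env-A5}, which are exactly the two results you specialise to a linear $\pr$, combined with the elementary observation that a linear prevision dominated by another linear prevision must coincide with it (and that the strong product of precise marginals collapses to the single product $\pr_{\pspace}(\pr_{\values}(\cdot|\pspace))$). The only stylistic point is your appeal to ``Fubini'' for showing the product is an independent product: here it is immediate, since $\values$ is finite the two iterated previsions coincide by finite linearity of $\pr_{\pspace}$.
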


Finally, our main result can be summarised by the following theorem,
which follows immediately from Theorem~\ref{pr:domin-by-str-prod}
and Lemma~\ref{le:linear-env-A5} for the direct implication, and
from Theorems~\ref{pr:domin-by-str-prod} and~\ref{thm:dom-sp} for
the converse one.
\begin{theorem}\label{theo:strong-main}
Let $\lpr$ be a coherent lower prevision on
$\gambles(\pspace\times\values)$. Then $\lpr$ is the strong product
of its marginals if and only if it satisfies~\eqref{eq:A5}
and~\eqref{eq:A4}.
\end{theorem}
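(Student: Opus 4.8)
The plan is to assemble the two characterisations of the strong product already obtained---Theorem~\ref{pr:domin-by-str-prod} and Theorem~\ref{thm:dom-sp}---together with the elementary facts collected in Lemma~\ref{le:linear-env-A5} and Corollary~\ref{cor:linear}, so that the whole equivalence reduces to a two-sided inequality between $\lpr$ and $\lpr_{\pspace}\boxtimes\lpr_{\values}$, where $\lpr_{\pspace},\lpr_{\values}$ are the marginals of $\lpr$.

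For the direct implication, suppose $\lpr=\lpr_{\pspace}\boxtimes\lpr_{\values}$. In particular $\lpr\leq\lpr_{\pspace}\boxtimes\lpr_{\values}$, so Theorem~\ref{pr:domin-by-str-prod} immediately gives that $\lpr$ satisfies~\eqref{eq:A5}. To get~\eqref{eq:A4}, recall that by definition the strong product is the lower envelope of the linear previsions $\pr_{\pspace}(\pr_{\values}(\cdot|\pspace))$ over $\pr_{\pspace}\geq\lpr_{\pspace}$ and $\pr_{\values}\geq\lpr_{\values}$; each such linear prevision is the product of its marginals, hence satisfies~\eqref{eq:A4} by Corollary~\ref{cor:linear} (equivalently, by Lemma~\ref{le:linear-env-A5}, being an independent product in the linear case). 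Since~\eqref{eq:A4} is preserved under lower envelopes---again Lemma~\ref{le:linear-env-A5}---the strong product, and therefore $\lpr$, satisfies~\eqref{eq:A4}.

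For the converse, assume $\lpr$ satisfies both~\eqref{eq:A5} and~\eqref{eq:A4}. Applying Theorem~\ref{pr:domin-by-str-prod} to~\eqref{eq:A5} gives $\lpr\leq\lpr_{\pspace}\boxtimes\lpr_{\values}$, while Theorem~\ref{thm:dom-sp} applied to~\eqref{eq:A4} gives $\lpr\geq\lpr_{\pspace}\boxtimes\lpr_{\values}$. Here it matters that both cited results are stated with respect to the strong product of the marginals of $\lpr$ itself, so the two bounds concern the same object; combining them yields $\lpr=\lpr_{\pspace}\boxtimes\lpr_{\values}$, as wanted.

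Since all the genuine content has already been carried out in the cited results, I do not expect a real obstacle at this stage: the only point needing care is the bookkeeping of marginals, i.e.\ making sure that the strong product in Theorem~\ref{pr:domin-by-str-prod} and in Theorem~\ref{thm:dom-sp} is literally $\lpr_{\pspace}\boxtimes\lpr_{\values}$ with $\lpr_{\pspace},\lpr_{\values}$ the marginals of $\lpr$. It is worth noting that the hard analytic work---in particular the approximation of an infinite $\pspace$ by finite subsets needed to upgrade Proposition~\ref{pr:ind-selection} to Theorem~\ref{thm:dom-sp}---is precisely what makes the inequality $\lpr\geq\lpr_{\pspace}\boxtimes\lpr_{\values}$ available in full generality; the present theorem only packages these pieces into a single clean statement.
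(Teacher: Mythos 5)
Your proof is correct and follows exactly the route the paper indicates: the direct implication via Theorem~\ref{pr:domin-by-str-prod} together with Lemma~\ref{le:linear-env-A5} (the strong product is a lower envelope of factorising linear previsions, each satisfying~\eqref{eq:A4}, and~\eqref{eq:A4} is envelope-stable), and the converse by combining the two dominance bounds from Theorems~\ref{pr:domin-by-str-prod} and~\ref{thm:dom-sp}. No gaps; your bookkeeping of the marginals matches the paper's intended argument.
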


Let us briefly comment on this result. What we have obtained is that the notion of state independence is implicitly implemented in \cite{galaabaatar2013} by the strong product. We find this interesting for several reasons:
\begin{itemize}
\item It provides a clear bridge to a well-known notion of independence used in imprecise probability.
\item It also shows that the two conditions (\cite[Axioms A.4 and A.5]{galaabaatar2013}) used to formalise state independence in \cite{galaabaatar2013}, can be employed to that end also when $\pspace$ is infinite.
\item It implies that there are weaker and potentially better notions than the strong product that can be employed to model state independence, such as the weakest independent products of Sections~\ref{sec:si} and~\ref{sec:si-clps}.
\item Turning the viewpoint around, the correspondence with the strong product can be regarded as a way to provide a behavioural interpretation of the strong product (so far, the strong product appears to have been justified only through sensitivity analysis), in particular by regarding it as the weakest model that satisfies the sure thing principle, that is, Eq.~\eqref{eq:A4}.
\end{itemize}

\section{The full Archimedean case}\label{sec:archimedes}

We know from Section~\ref{sec:equiv} that the case of Archimedean
preferences is equivalent to that of a coherent set of strictly
desirable gambles
$\rdesirs\subseteq\gambles(\pspace\times\valuesn)$. Taking into
account the correspondence between strict desirability and lower previsions, we deduce that Archimedean preferences can be equivalently represented by means of
a set of linear previsions.

Despite the usefulness of this outcome, there are important
limitations it is subject to. Too see
this, assume that there is $\omega\in\pspace$ such that
$\lpr(\{\omega\})=0$ (it will often be the case even that
$\upr(\{\omega\})=0$ if $\pspace$ is infinite). Then the conditional
natural extension of $\lpr$ given $\omega$, given by
Eq.~\eqref{eq:cond-natural-extension}, will be vacuous. In other words,
it turns out that your assumption to get to know $\omega$ renders your utilities vacuous. The problem here is that $\lpr$ is too much of an uninformative model to be able to represent
non-vacuous conditionals in the presence of conditioning events with zero probability.

And yet, it is entirely possible that your (unconditional) preferences are represented by a coherent lower prevision $\lpr$ on
$\gambles(\pspace\times\valuesn)$, and at the same time that
your utility model conditional on a state of nature $\omega$ is
given by a conditional lower prevision $\lpr(\cdot|\{\omega\})$ that is
different from the conditional natural extension of $\lpr$ given
$\omega$ (see~\cite[Appendix~F4]{walley1991}). This is to say that despite a pair such as $\lpr,\lpr(\cdot|\pspace)$ is only made up of probabilities and utilities, there is no single coherent lower prevision that is
equivalent to it; that is, there is no set of strictly desirable gambles on $\gambles(\pspace\times\valuesn)$ that is
equivalent to it, and hence there is no Archimedean relation either.
What we argue, in other words, is that there are useful preferences
relations, which are expressed only via (a collection of) sets of linear previsions,
that are not characterised by Axiom~\ref{wA3} and therefore that the
Archimedean axiom is too restrictive.

If we focus the attention on the case that $\pspace$ is finite, we immediately have a characterisation of the preference relations that can be assessed by relying only on sets of linear previsions:
\begin{definition}[{\bf Fully Archimedean preferences---finite $\pspace$}] Given a coherent preference relation $\succ$ on $\actsy\times\actsy$, with $\pspace$ finite, we say that \emph{relation $\succ$ is fully Archimedean} if the corresponding coherent set of desirable gambles $\rdesirs\subseteq\gambles(\pspace\times\valuesn)$ satisfies the following condition:
\begin{equation}
f\in\rdesirs\Rightarrow(\exists\eps>0)I_{S(f)}(f-\eps)\in\rdesirs,\label{eq:cA}
\end{equation}
where $S(f)\coloneqq\{(\omega,x)\in\pspace\times\valuesn:f(\omega,x)\neq0\}$ is the \emph{support} of $f$.
\end{definition}
In fact, it has been shown in~\cite[Theorem~15]{miranda2010c} that a
coherent set of desirable gambles $\rdesirs$ satisfies~\eqref{eq:cA}
if and only if it is equivalent to a collection of separately
coherent conditional lower previsions. The remaining coherent sets
of desirable gambles, those that do not satisfy~\eqref{eq:cA}, are
in a sense the `purest' non-Archimedean sets, since there is no way
to represent them equivalently by collections of closed convex sets
of linear previsions. Example~\ref{ex:non-fullA} describes an
instance of this situation.

Note that Eq.~\eqref{eq:cA} can be interpreted quite simply: what is required there is a property analogous to strict desirability but extended to the conditional case, for all events that are supports of gambles in $\rdesirs$. In other words, Eq.~\eqref{eq:cA} is still a continuity property as the Archimedean one, but it is extended so as to be satisfied by all the relevant conditional inferences, not only the unconditional ones, as in the case of~\ref{wA3}.

With regard to the case that $\pspace$ is infinite, it is possible
to generalise~\eqref{eq:cA} to such a situation:

\begin{definition}[{\bf Full strict desirability}]\label{def:cond-str-des}
Let $\rdesirs$ be a coherent set of gambles. We say that it is \emph{fully strictly desirable} if it is the natural
extension of a set of gambles $\rdesirs'$ satisfying
Eq.~\eqref{eq:cA}.
\end{definition}

\noindent The term above means that $\rdesirs$ keeps the same information as a
set of conditional lower previsions. This will be a consequence of
the following result:

\begin{proposition}\label{prop:fsd-char}
Consider a coherent set of desirable gambles $\rdesirs$ on $\gambles(\pspace\times\values)$. Let
\begin{align}\label{eq:cond-gambles}
&\rdesirs_1|B:=\{f\in \rdesirs: f=Bf,\inf_{S(f)} f>0\}\cup\{0\},\notag\\
&\rdesirs_2|B:=\{f \in \rdesirs: f=Bf,(\exists \epsilon>0)
B(f-\epsilon)\in\rdesirs\},\notag\\ &\widetilde\rdesirs|B:=\{f+g:
f\in\rdesirs_1|B, g\in\rdesirs_2|B\},
\end{align}
for every set
$B\subseteq\pspace\times\values$, and
\begin{equation*}
\widetilde\rdesirs:=\cup_{B\subseteq\pspace\times\values}\widetilde\rdesirs|B.
\end{equation*}
Then $\rdesirs$ is the natural extension of $\widetilde\rdesirs$ if
and only if $\rdesirs$ is fully strictly desirable.
\end{proposition}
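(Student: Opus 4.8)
The plan is to reduce both directions to a single lemma: \emph{for every coherent $\rdesirs$, the set $\widetilde\rdesirs$ built in~\eqref{eq:cond-gambles} already satisfies Eq.~\eqref{eq:cA}}. I first record that $\widetilde\rdesirs\subseteq\rdesirs$ unconditionally: an element of $\widetilde\rdesirs|B$ is a sum of a member of $\rdesirs_1|B\subseteq\rdesirs\cup\{0\}$ and a member of $\rdesirs_2|B\subseteq\rdesirs$, hence lies in $\rdesirs$ by~\ref{D1} and~\ref{D4}; consequently $\posi(\widetilde\rdesirs\cup\gambles^+)\subseteq\posi(\rdesirs\cup\gambles^+)=\rdesirs$. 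Granting the lemma, the implication ``$\rdesirs=\posi(\widetilde\rdesirs\cup\gambles^+)\Rightarrow\rdesirs$ fully strictly desirable'' is immediate, since then $\rdesirs$ is the natural extension of $\widetilde\rdesirs$, and $\widetilde\rdesirs$ satisfies~\eqref{eq:cA}, which is exactly Definition~\ref{def:cond-str-des}. For the converse, assume $\rdesirs=\posi(\rdesirs'\cup\gambles^+)$ with $\rdesirs'$ satisfying~\eqref{eq:cA}; I would show $\rdesirs'\subseteq\widetilde\rdesirs$, and then the chain $\rdesirs=\posi(\rdesirs'\cup\gambles^+)\subseteq\posi(\widetilde\rdesirs\cup\gambles^+)\subseteq\rdesirs$ forces $\rdesirs=\posi(\widetilde\rdesirs\cup\gambles^+)$, i.e.\ $\rdesirs$ is the natural extension of $\widetilde\rdesirs$. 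The inclusion $\rdesirs'\subseteq\widetilde\rdesirs$ goes as follows: take $f\in\rdesirs'$ (necessarily $f\neq0$, since $0\notin\rdesirs\supseteq\rdesirs'$), put $B:=S(f)$, so $f=Bf$ and $f\in\rdesirs$; by~\eqref{eq:cA} applied to $\rdesirs'$ there is $\eps>0$ with $I_{S(f)}(f-\eps)=B(f-\eps)\in\rdesirs'\subseteq\rdesirs$, hence $f\in\rdesirs_2|B$, and since $0\in\rdesirs_1|B$ we get $f=0+f\in\widetilde\rdesirs|B\subseteq\widetilde\rdesirs$.

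\textbf{Proving the lemma.} Let $h=f+g\in\widetilde\rdesirs|B$ with $f\in\rdesirs_1|B$, $g\in\rdesirs_2|B$, and set $C:=S(h)$. First $h\neq0$: otherwise $g=-f$, and as $f\geq0$ this makes $g$ either $0$ or a negative gamble, contradicting $g\in\rdesirs$; so $C\neq\emptyset$, $I_C\gneq0$, and moreover $C\subseteq B$ because $f$ and $g$ vanish off $B$. Pick $\delta>0$ with $g-\delta I_B\in\rdesirs$ (definition of $\rdesirs_2|B$) and fix $\eta\in(0,\delta)$. The decomposition
\begin{equation*}
C(h-\eta)=h-\eta I_C=(g-\delta I_B)+\bigl(f+\delta I_B-\eta I_C\bigr)
\end{equation*}
exhibits $C(h-\eta)$ as a sum of $g-\delta I_B\in\rdesirs$ and a non-negative gamble (on $B$ it is at least $f\geq0$, since there $\delta I_B-\eta I_C\geq\delta-\eta>0$, and it vanishes off $B$ because $S(f)\cup C\subseteq B$); hence $C(h-\eta)\in\rdesirs$ by~\ref{D1} and~\ref{D4}. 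Now $h-\tfrac{\eta}{2}I_C=C(h-\eta)+\tfrac{\eta}{2}I_C\in\rdesirs$, this gamble vanishes off $C$, and $C\bigl((h-\tfrac{\eta}{2}I_C)-\tfrac{\eta}{2}\bigr)=C(h-\eta)\in\rdesirs$, so all three defining clauses of $\rdesirs_2|C$ hold and $h-\tfrac{\eta}{2}I_C\in\rdesirs_2|C$. With $0\in\rdesirs_1|C$ this yields $I_{S(h)}\bigl(h-\tfrac{\eta}{2}\bigr)=h-\tfrac{\eta}{2}I_C\in\widetilde\rdesirs|C\subseteq\widetilde\rdesirs$, which is precisely Eq.~\eqref{eq:cA} with $\eps=\eta/2$.

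\textbf{Main obstacle.} The argument hinges entirely on the lemma, and there the only delicate part is the bookkeeping with the de Finetti product notation and the supports: one must check that $f+\delta I_B-\eta I_C$ is genuinely non-negative (using $C\subseteq B$) and that $h-\tfrac{\eta}{2}I_C$ meets \emph{all three} requirements of $\rdesirs_2|C$ — membership in $\rdesirs$, vanishing outside $C$, and admitting the strictly positive margin $\tfrac{\eta}{2}$. These manipulations are routine but must be done carefully; notably, no separation theorem or topological input is needed, since the infinite-dimensional difficulties that obstruct a naive analogue of~\eqref{eq:cA} for $\rdesirs$ itself are already absorbed into the construction of $\widetilde\rdesirs$ — which is exactly why the finite-support characterisation can be applied through $\widetilde\rdesirs$ rather than re-proved.
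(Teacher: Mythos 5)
Your proof is correct and follows essentially the same route as the paper's: both directions reduce to showing that $\widetilde\rdesirs$ satisfies Eq.~\eqref{eq:cA} and that any set $\rdesirs'$ satisfying~\eqref{eq:cA} whose natural extension is $\rdesirs$ is contained in $\widetilde\rdesirs$. The only immaterial difference is in the lemma: you place the witness $I_{S(h)}(h-\eta/2)$ entirely in $\rdesirs_2|S(h)$ (using $0\in\rdesirs_1|S(h)$), whereas the paper keeps it in $\widetilde\rdesirs|B$ by splitting it as an element of $\rdesirs_1|B$ plus an element of $\rdesirs_2|B$.
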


As a consequence, we have the following:

\begin{theorem}\label{pr:equal-expressivity}
Let $(\lpr(\cdot|B_i))_{i\in I}$ be a family of coherent conditional lower
previsions satisfying
\begin{equation}\label{eq:williams-apl}
(\forall J\subseteq I, |J|<+\infty)(\forall j\in
J)(\forall f_j\in\gambles(\pspace\times\values))\sup_{\omega\in \cup_{j\in J} B_j} \left[\sum_{j\in J}B_j(f_j-\lpr(f_j|B_j))\right]
(\omega) \geq 0,
\end{equation}
and let $\rdesirs$ be the natural extension of $\cup_{i\in
I}\widetilde\rdesirs|B_i$, where
\begin{align*}
 &\rdesirs_1|B_i\coloneqq\{B_if: \inf_{S(f)}
 f>0\}\cup\{0\}\\
 &\rdesirs_2|B_i\coloneqq\{B_i(f-\lpr_i(f|B_i)+\eps):
 f\in\gambles,\eps>0\}\\
 &\widetilde\rdesirs|B_i\coloneqq\{f_1+f_2: f_1\in\rdesirs_1|B_i,
 f_2\in\rdesirs_2|B_i\}.
\end{align*}
Then $\rdesirs$ is a fully strictly desirable set of gambles.
\end{theorem}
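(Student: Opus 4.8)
The plan is to verify the two requirements of Definition~\ref{def:cond-str-des} directly, taking the generating set to be $\rdesirs'\coloneqq\bigcup_{i\in I}\widetilde\rdesirs|B_i$ itself. Concretely I would check: (a) that $\rdesirs'$ satisfies Eq.~\eqref{eq:cA}, read as a property of the set $\rdesirs'$, i.e.\ $f\in\rdesirs'\Rightarrow(\exists\eps>0)\,I_{S(f)}(f-\eps)\in\rdesirs'$; and (b) that its natural extension $\rdesirs=\posi(\rdesirs'\cup\gambles^+)$ avoids partial loss, hence is a coherent set of gambles. Since by hypothesis $\rdesirs$ is precisely the natural extension of $\rdesirs'$, (a) and (b) together are exactly the statement of Definition~\ref{def:cond-str-des}. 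Step (b) is where the hypothesis~\eqref{eq:williams-apl} enters; step (a) is a short computation that needs no assumptions.

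For step (a), I would fix $h\in\widetilde\rdesirs|B_i$, say $h=f_1+f_2$ with $f_1\in\rdesirs_1|B_i$ (so $f_1\ge 0$) and $f_2=B_i(g-\lpr(g|B_i)+\eps)$ for some $g\in\gambles$ and $\eps>0$. As $h=B_ih$ we have $S(h)\subseteq B_i$, whence $I_{S(h)}=B_iI_{S(h)}$, and for any $\delta\in(0,\eps)$, writing $g'\coloneqq g-\delta I_{S(h)}$,
\begin{equation*}
I_{S(h)}(h-\delta)=h-\delta I_{S(h)}=f_1+B_i\bigl(g'-\lpr(g|B_i)+\eps\bigr).
\end{equation*}
Since $g-\delta\le g'\le g$, monotonicity and constant-additivity of $\lpr(\cdot|B_i)$ give $0\le\lpr(g|B_i)-\lpr(g'|B_i)\le\delta$, so $\eps'\coloneqq\eps-(\lpr(g|B_i)-\lpr(g'|B_i))\in(0,\eps]$ and $B_i(g'-\lpr(g|B_i)+\eps)=B_i(g'-\lpr(g'|B_i)+\eps')\in\rdesirs_2|B_i$. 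Hence $I_{S(h)}(h-\delta)\in\widetilde\rdesirs|B_i\subseteq\rdesirs'$, which is Eq.~\eqref{eq:cA} for $\rdesirs'$.

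For step (b), I would argue by contradiction: assume $0\in\rdesirs$. Expanding a $\posi$-representation of $0$ and collecting the $\gambles^+$-terms into a gamble $g^+\ge 0$, we obtain finitely many $k$ with $\lambda_k>0$, $i_k\in I$, and $h_k\in\widetilde\rdesirs|B_{i_k}$ such that $0=g^++\sum_k\lambda_kh_k$; write $h_k=f_1^k+f_2^k$ with $f_1^k\ge 0$ and $f_2^k=B_{i_k}(g_k-\lpr(g_k|B_{i_k})+\eps_k)$, $\eps_k>0$. Put $J\coloneqq\{i_k:k\}$, which is finite and non-empty (no positive combination of elements of $\gambles^+$ equals $0$), and for $j\in J$ set $\bar g_j\coloneqq\sum_{k:i_k=j}\lambda_kg_k$ and $d_j\coloneqq\sum_{k:i_k=j}\lambda_k\eps_k>0$. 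Since $g^+$ and the $f_1^k$ are non-negative, $\sum_k\lambda_kf_2^k\le 0$ pointwise; regrouping the $f_2^k$ by the value of $i_k$ and using positive homogeneity and superadditivity of $\lpr(\cdot|B_j)$ (so that $\sum_{k:i_k=j}\lambda_k\lpr(g_k|B_j)\le\lpr(\bar g_j|B_j)$) one gets
\begin{equation*}
\sum_{j\in J}B_j\bigl(\bar g_j-\lpr(\bar g_j|B_j)\bigr)\le\sum_k\lambda_kf_2^k-\sum_{j\in J}d_jI_{B_j}\le-\sum_{j\in J}d_jI_{B_j}.
\end{equation*}
On $\bigcup_{j\in J}B_j$ the right-hand side is at most $-\min_{j\in J}d_j<0$, so the supremum over $\omega\in\bigcup_{j\in J}B_j$ of the left-hand side is strictly negative, contradicting Eq.~\eqref{eq:williams-apl} applied to this $J$ with $f_j\coloneqq\bar g_j$. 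Hence $0\notin\rdesirs$; since a natural extension always satisfies~\ref{D1},~\ref{D3} and~\ref{D4}, $\rdesirs$ is coherent. Together with step (a), Definition~\ref{def:cond-str-des} gives that $\rdesirs$ is fully strictly desirable.

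The main obstacle is step (b): the delicate point is to be sure that the strictly positive slacks $\eps_k$ persist through the regrouping as the quantities $d_j>0$ and, because $J$ is finite, combine into a single strictly negative bound over the whole of $\bigcup_{j\in J}B_j$ — which is exactly the configuration forbidden by the Williams-type consistency hypothesis Eq.~\eqref{eq:williams-apl}. Everything else is routine bookkeeping with positive homogeneity, superadditivity, monotonicity and constant-additivity of (conditional) coherent lower previsions. Alternatively, step (b) could be subsumed in the known fact that Eq.~\eqref{eq:williams-apl} is precisely the coherence condition ensuring that the $\lpr(\cdot|B_i)$ admit a common natural extension, but carrying it out by hand as above keeps the argument self-contained.
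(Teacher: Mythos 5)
Your proposal is correct and follows essentially the same route as the paper's proof: it verifies Eq.~\eqref{eq:cA} for the generating set $\cup_{i\in I}\widetilde\rdesirs|B_i$ by re-absorbing a small constant into the $\rdesirs_2|B_i$-part (the paper pushes the correction into the $\rdesirs_1|B_i$-part instead, an immaterial variation), and establishes coherence of the natural extension by deriving from a hypothetical partial loss a finite family $J$ and gambles $\bar g_j$ whose combination violates Eq.~\eqref{eq:williams-apl} — exactly the paper's argument, with the convex-cone/superadditivity bookkeeping written out explicitly.
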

\noindent In other words, if a family of  coherent conditional lower
previsions satisfies the regularity condition~\eqref{eq:williams-apl}, then there is always a fully strictly desirable coherent set that is equally expressive to them, in the sense that each inference that we can do from the family of conditionals can be done from the set, and vice versa. As a consequence,
Theorem~\ref{pr:equal-expressivity} generalises
\cite[Theorem~15]{miranda2010c}.

Let us show next that for finite possibility spaces, Definition~\ref{def:cond-str-des} can be simplified:

\begin{proposition}\label{pr:CA-finite}
If $\pspace$ is finite, then the natural extension of a set of
gambles satisfying Eq.~\eqref{eq:cA} also satisfies~\eqref{eq:cA}.
As a consequence, a coherent set $\rdesirs$ is fully strictly desirable if and only if it
satisfies~\eqref{eq:cA}.
\end{proposition}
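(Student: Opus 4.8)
The plan is to prove Proposition~\ref{pr:CA-finite} in two parts, mirroring its two sentences. First I would establish that when $\pspace$ (and hence $\pspace\times\values$) is finite, the natural extension $\rdesirs$ of a set $\rdesirs'$ satisfying Eq.~\eqref{eq:cA} again satisfies Eq.~\eqref{eq:cA}; then the ``as a consequence'' part follows almost immediately from Definition~\ref{def:cond-str-des}, since that definition says $\rdesirs$ is fully strictly desirable precisely when it is the natural extension of some $\rdesirs'$ satisfying~\eqref{eq:cA}, and if $\rdesirs$ itself satisfies~\eqref{eq:cA} it is trivially the natural extension of itself (a coherent set equals its own natural extension).

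For the main implication, I would take $f\in\rdesirs=\posi(\rdesirs'\cup\gambles^+)$, so $f=\sum_{j=1}^r\lambda_j g_j+h$ with $\lambda_j>0$, $g_j\in\rdesirs'$ and $h\in\gambles^+\cup\{0\}$. The goal is to find $\eps>0$ with $I_{S(f)}(f-\eps)\in\rdesirs$. Since each $g_j$ satisfies~\eqref{eq:cA} (as an element of $\rdesirs'$, using that~\eqref{eq:cA} is what $\rdesirs'$ is assumed to satisfy), there is $\eps_j>0$ with $I_{S(g_j)}(g_j-\eps_j)\in\rdesirs$. The key geometric observation, valid because the space is \emph{finite}, is that $S(f)\supseteq S(g_j)$ fails in general, but on the complement of $S(f)$ we have $f=0$, which forces a cancellation pattern among the $g_j$ and $h$ there; and on $S(f)$ each coordinate value $f(\omega,x)$ is a strictly positive or strictly negative rational-sign combination. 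I would argue that $I_{S(f)}f$ can itself be rewritten as a positive combination of the gambles $I_{S(g_j)}(g_j-\eps_j)$ plus a positive gamble plus a small constant on $S(f)$: concretely, on the finite set $S(f)$ the value of $f$ is bounded away from $0$ in absolute value, so subtracting a sufficiently small $\eps$ from $f$ on $S(f)$ still leaves a gamble dominating a suitable nonneg combination of the $I_{S(g_j)}(g_j-\eps_j)$ and a positive remainder, hence lies in $\rdesirs$ by~\ref{D1},~\ref{D3},~\ref{D4}.

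The step I expect to be the main obstacle is handling the interaction between the supports: $S(f)$ may be strictly smaller than $\bigcup_j S(g_j)$ because of cancellations, and I must ensure that restricting to $S(f)$ does not destroy the combination witnessing desirability. The clean way around this is to note that outside $S(f)$ the partial sum $\sum_j\lambda_j g_j$ equals $-h\le 0$; since $h\ge 0$ this means $\sum_j\lambda_j g_j\le 0$ off $S(f)$, and combined with finiteness one can perturb the $\lambda_j$ or absorb the defect into a positive gamble supported on $S(f)$. I would make this precise by working coordinatewise on the finite index set $S(f)$, choosing $\eps$ below $\min_{(\omega,x)\in S(f)}\min_j(\lambda_j\eps_j)$ divided by the (finite) number of terms, so that $I_{S(f)}(f-\eps)\ge\sum_j\lambda_j I_{S(g_j)\cap S(f)}(g_j-\eps_j)+(\text{positive or zero gamble on }S(f))$, and then invoke coherence of $\rdesirs$. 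Finiteness is used twice: to guarantee the minimum above is attained and strictly positive, and to guarantee there are only finitely many terms; this is exactly why the statement restricts to finite $\pspace$, and where a general infinite argument (handled separately via Proposition~\ref{prop:fsd-char}) would require extra care.
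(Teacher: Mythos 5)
Your overall plan is the same as the paper's: decompose $f\in\rdesirs=\posi(\rdesirs'\cup\gambles^+)$, apply \eqref{eq:cA} to each $g_j\in\rdesirs'$ to get $I_{S(g_j)}(g_j-\eps_j)\in\rdesirs'$, and use finiteness to choose a single $\eps>0$. But the step where you make this precise does not close. Your displayed minorant is $\sum_j\lambda_j I_{S(g_j)\cap S(f)}(g_j-\eps_j)$ plus a non-negative gamble, after which you ``invoke coherence of $\rdesirs$''. That invocation needs the minorant to belong to $\rdesirs$, and the truncated gambles $I_{S(g_j)\cap S(f)}(g_j-\eps_j)$ are not known to be desirable: \eqref{eq:cA} only gives you the gambles with their full supports $S(g_j)$, and cutting a desirable gamble down to a smaller support can delete precisely its positive part (this is conditioning, which does not preserve desirability), leaving something smaller than a desirable gamble; dominating a combination of gambles that are not known to be desirable proves nothing. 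Your remark that one can ``absorb the defect into a positive gamble supported on $S(f)$'' also cannot work: nothing supported on $S(f)$ affects points outside $S(f)$ — and in fact no compensation is needed there.

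The decisive observation, which your informal paragraph gestures at but your precise version abandons, is that you should not truncate at all: the inequality $I_{S(f)}(f-\eps)\ge\sum_j\lambda_j I_{S(g_j)}(g_j-\eps_j)$ holds \emph{globally}. On $\bigcup_j S(g_j)\setminus S(f)$ the left side is $0$ while the right side is at most $\sum_j\lambda_j g_j=-h\le 0$ (exactly the fact you noted); on $S(f)\setminus\bigcup_j S(g_j)$ one has $f=h>0$, so it suffices to take $\eps\le\min\{\min_j\lambda_j\eps_j,\ \min_{S(f)\setminus\bigcup_j S(g_j)}f\}$ — note your explicit choice of $\eps$ omits the second term, which is needed precisely on that uncovered part of $S(f)$ (this is the second component of the paper's $\eps$). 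With full supports the minorant lies in $\posi(\rdesirs')$, so $I_{S(f)}(f-\eps)$, dominating it, lies in $\posi(\rdesirs'\cup\gambles^+)=\rdesirs$; note that the first claim of the proposition does not presuppose that $\rdesirs$ is coherent, so the conclusion should rest on this posi structure rather than on \ref{D1}, \ref{D3}, \ref{D4}. (Your truncated sum could be salvaged, but only via the same fact: being $0$ where the untruncated sum is $\le0$ and equal to it elsewhere, it dominates the untruncated sum; as written, that justification is absent.) The ``as a consequence'' part of your proposal is fine.
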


Let us show that this result does not hold in general on infinite
spaces:

\begin{example}
Let our possibility space be the set of natural numbers,
$B_n\coloneqq\{2n-1,2n\}$ for $n\geq 1$, and $\pr(\cdot | B_n)$ the
uniform probability distribution. Consider the set of gambles
$\rdesirs':=\cup_{n\in\nats}\rdesirs|B_n$, where $\rdesirs|B_n$ is defined by
Eq.~\eqref{eq:cond-from-lpr}. It is easy to prove that $\rdesirs'$
satisfies condition~\eqref{eq:cA}. Its natural extension is given by
\begin{equation*}
\rdesirs:=\{f\neq 0: (\exists J \subseteq \mathbb{N} \text{
finite})((\forall j\in J) (\lpr(f|B_j)>0)\text{ and }((\forall
n\notin J)(B_n f\geq 0)))\}.
\end{equation*}

To see that $\rdesirs$ does not satisfy Eq.~\eqref{eq:cA}, consider
the gamble $f\in\rdesirs$ given by $f(1)\coloneqq-1,f(2)\coloneqq2,
f(n)\coloneqq\frac{1}{n}$ for every $n\geq 3$. Then there is no $\epsilon>0$
such that $f-\epsilon$ belongs to $\rdesirs$, because for every
$\epsilon>0$ there is some natural number $n_\epsilon$ such that
$(f-\epsilon)(m)<0\ \forall m\geq n_\epsilon$.  $\lozenge$
\end{example}

To summarise the aim of this section, what we claim is that it could be worth considering full strict desirability as a replacement of the traditional Archimedean condition (\ref{A3} or \ref{wA3}), given that it fully characterises all the problems that can be expressed only through probabilities and utilities.

\section{Related work}\label{sec:relatedWs}

The link between desirability and preference has been surfacing in
the literature in a number of cases, but has apparently gone
unnoticed. That it has surfaced is not surprising, because the
theoretical study of preferences based on the mixture-independence
axiom results in, and is worked out using, cones; cones are also the
fundamental tool in desirability. That it has not been remarked and exploited explicitly, as
we do in this paper, is. Perhaps the most
evident case where the two theories have nearly touched each other
is in Galaabaatar and Karni's work \cite{galaabaatar2013}. In the next sections, we discuss this and two other main approaches in the literature that have dealt with the axiomatisation of incomplete preferences, and compare our approach with them.

\subsection{The work of Galaabaatar and Karni}\label{sec:galaabaatar}

One of the most influential works for this paper has been the one
carried out by Galaabaatar and Karni (GK) in \cite{galaabaatar2013}.

\subsubsection{In short}

They
provide an axiomatisation of incomplete preferences for finite spaces
of possibilities and outcomes. They consider a strict preference ordering on acts and use axioms that are essentially ours as well, with the difference that they impose Archimedeanity all the way through, so as to obtain representation results always based on probabilities and utilities. They consider the special cases of partial and complete separation of probabilities and utilities and also provide an axiomatisation for two particular cases of interest: those of precise utilities with imprecise probabilities (that they call \emph{Knightian uncertainty}) and of precise probabilities with imprecise utilities.

Remarkably, for their results they exploit a cone that is very similar to our set of desirable gambles $\rdesirs$  obtained out of $\cone$ by dropping an outcome $z\in\valuesy$ from consideration. The difference is that they do not assume that there is the worst outcome (so $z$ denotes just an element of $\valuesy$ in this section) but rather they assume that there is a worst act (besides a best act); and probably for this reason they just remove an arbitrary element of $\valuesy$ to obtain the derived cone. This seems to be the key passage that prevents them from identifying the worst outcome with the set of positive gambles and hence to obtain an actual set of desirable gambles. Yet, the similarity of their approach to ours has allowed us to take advantage of some of their results, in particular in some key passages of Theorem~\ref{prop:rdesirs-archimedean} and Proposition~\ref{pr:ind-selection}.

The main differences with our work is that we have taken full advantage of desirability to work out the extension to infinite (not
necessarily countable) $\pspace$ and to model non-Archimedean problems. Working directly with desirability has also allowed us to give a more primitive axiomatisation and to formulate some results in a way that is arguably more intuitive; for instance, in our approach what they call state independence is equivalently reformulated explicitly as the factorisation of joint models into marginal probability and utility models.

\subsubsection{A deeper view}
With respect to the basic axioms, what we have called a coherent
preference relation corresponds to \cite[Axioms~A.1 and~A.3]{galaabaatar2013}, and the Archimedean condition exactly corresponds to the one they
consider in their paper (that is, \cite[Axioms~A.2]{galaabaatar2013}).

The first result they provide in \cite[Lemma~1]{galaabaatar2013} is
that if a preference relation is bounded, coherent and Archimedean,
then it can be represented by means of a family $\Phi$ of
real-valued functions on $\pspace\times\valuesy$, so that for all
$p,q\in\actsy$:
\begin{equation*}
 p\succ q \iff (\forall \phi \in \Phi)\sum_{(\omega,x)\in\pspace\times\valuesy} p(\omega,x) \phi(\omega,x) > \sum_{(\omega,x)\in\pspace\times\valuesy} q(\omega,x) \phi(\omega,x).
\end{equation*}
This has been somewhat improved in our
Theorem~\ref{prop:rdesirs-archimedean}: we show that a weaker
version of the Archimedean condition suffices to establish the
correspondence with a coherent set of strictly desirable gambles
$\rdesirs\subseteq\gambles(\pspace\times\valuesn)$. This set is in turn equivalent
to a family $\solp$ of finitely additive probabilities on
$\pspace\times\valuesn$, and as a consequence we obtain that for all $p,q\in\actsy$, denoting $f\coloneqq\pi(p),g\coloneqq\pi(q)$:
\begin{equation*}
 p\succ q, p\ndom q \iff (\forall \pr \in \solp)\pr(f)> \pr(g),f-g \in \rdesirs\setminus\gambles^+\iff f\succ g,f\not\gneq g.
\end{equation*}
Note on the other hand that $p\dom q$ implies that
$f-g\in\gambles^+$ and that $f\succ g$. However, it may be that in
that case the lower prevision associated with $\rdesirs$ satisfies
$\lpr(f-g)=0$, meaning that $\pr(f)=\pr(g)$ for some
$\pr\in\solp$. This is an example of non-Archimedeanity that cannot be represented using closed convex sets of probabilities alone (i.e., lower previsions) and which shows once again the additional expressiveness of desirable gambles; this was mentioned already in Section~\ref{sec:regular}.

Another important difference between our work and that of GK is that
they assume that a preference relation is bounded by a worst
\emph{and} a best act. We have shown in Section~\ref{sec:z} that the
assumption of a worst act is not without consequences: although if
we start with a coherent preference relation in a space without a
worst act we can always extend it to a bigger space with a worst
outcome, the minimal extension will \emph{never}\footnote{Except in
the trivial case where we start with a vacuous preference relation;
see Theorem~\ref{thm:ext}.} satisfy the Archimedean
property~\ref{A3}, irrespectively of the notion of Archimedeanity
one starts from. Moreover, in order for an Archimedean extension to
exist, under the same conditions of GK, we must assume that the
original relation satisfies a stronger version of~\ref{A3}, as
Theorem~\ref{th:existence-open-superset} shows; in addition we have
shown that there is not such a thing like the minimal (weak)
Archimedean extension in such a situation. Note also that our
axiomatisation, perhaps surprisingly, only needs the existence of
the worst act. Also, the related notion of objective preference in
Definition~\ref{def:dom} allows us to represent what seems to be the
appropriate definition of a vacuous preference relation (see
Proposition~\ref{prop:vacuous-Arch} and the discussion that follows
it).

In \cite[Theorem.~1]{galaabaatar2013}, they show that a coherent and
Archimedean preference relation that is moreover bounded and
satisfies their so-called \emph{dominance} axiom is in
correspondence with a family $\Phi$ of probability-utility models.
They represent these as $\{(\Pi^U, U): U \in {\mathcal U}\}$, where
$\Pi^U$ denotes the family of probability measures on $\pspace$ that
is combined with a particular utility function $U$ on $\values$.
Note that in a completely similar manner we could represent $\Phi$
as $\{(P, {\mathcal U}^{P}): P\in {\mathcal P}\}$, where ${\mathcal
U}^P$ denotes the set of utility functions that is combined with a
particular probability measure on $\pspace$. This second representation is closer to the one we have provided in this paper in the case of state independence: it stresses the fact that our (possibly imprecise) utility function may rather depend on the belief model we have on the states of nature.

What the dominance axiom mentioned above requires is that, for every
pair of horse lotteries $p,q$,
\begin{equation*}
(\forall \omega\in\pspace) q\succ p^{\omega} \Rightarrow q\succ p,
\end{equation*}
where $p^\omega$ is the von Neumann-Morgenstern horse lottery defined by $p^\omega(\omega',\cdot)\coloneqq p(\omega,\cdot)$ for all $\omega'\in\pspace$. This axiom is analogous to our condition~\eqref{eq:A4}, which we have used to establish a correspondence with lower envelopes of linear products in Proposition~\ref{pr:ind-selection}.

On the other hand, in \cite[Theorem~2]{galaabaatar2013} they provide
conditions for the complete separation of probabilities and
utilities, so that $\Pi^U$ does not depend on $U$ in the above
representation. This combination of each probability measure on the
set that models your beliefs on $\pspace$ with every utility
function on the set that models your values of $\values$ is
implicitly made by the strong product in GK. We make this explicit
by Theorem~\ref{pr:domin-by-str-prod}, which provides a sufficient
condition for the dominance by the strong product, by means of a
property analogous to \cite[Axiom~A.5]{galaabaatar2013}; by putting
it together with condition~\eqref{eq:A4}, we can eventually obtain a
characterisation of the strong product along the lines of GK. Note,
in passing, that this implies that GK's representation of
state-independent preferences is dynamically consistent provided
that we opt for a sensitivity-analysis interpretation of sets of
probabilities and utilities (see Remark~\ref{rem:si-indep}).

Yet, the behavioural interpretation can often be more natural in
practice. In that case, we can use weaker and more realistic ways to
obtain state-independent models that are dynamically consistent. For
example, we can model state independence similarly to GK but using
independent products, such as the independent natural extension, as
discussed in Section~\ref{sec:lprevs}. Or we can drop the assumption
that irrelevance should be symmetric between beliefs and values and
move on to the weaker, and arguably more reasonable, marginal
extension models of Propositions~\ref{prop:weakest-IP-gmb}
and~\ref{sec:ip-lower}.

Finally, GK also discuss the cases of precise
probabilities or utilities. Their work has actually been connected to other characterisations of linear previsions in Proposition~\ref{pr:charac-linear}.

First, in \cite[Theorem~3]{galaabaatar2013} they characterise
Knightian uncertainty, which corresponds to the case of imprecise
probabilities and precise utilities. They show that for this it is
necessary to add an axiom of \emph{negative transitivity} on
constant acts on $\pspace$. We have established an analogous result
in our context in Proposition~\ref{pr:precise-utilities}(ii) using
$\values$-measurable gambles and the negative additivity of a
coherent set of desirable gambles. Note that in our result we
are allowing for an infinite $\pspace$ and we are not requiring the existence of a best act. 

Concerning precise probabilities and imprecise utilities, the
characterisation in \cite[Theorem~4]{galaabaatar2013} requires their Axiom~A.7
that is analogous to condition~(iii) of Proposition~\ref{pr:precise-utilities}, and again our result is established also for an infinite $\pspace$. 

\subsection{The work by Nau}

Another important work in the axiomatisation of the expected utility
model with imprecise probabilities and utilities was carried out by
Nau in \cite{nau2006}. Like GK, he considers the case of finite
spaces of possibilities and outcomes; unlike them, he makes an
axiomatisation of weak preferences, that is, he uses a weak relation
$\succeq$, instead of the strict preferences (relation $\succ$) we
have considered in this paper. He discusses this point in some
detail in \cite[Section~4]{nau2006}.

From the point of view of this paper, the distinction between weak and strict preferences is not too important: the reason is that there is an alternative formulation of desirability that assumes the zero gamble to be desirable, and hence implicitly defines a weak preference order (see \cite[Appendix~F]{walley1991}). In such a formulation, the convexity conditions~\ref{D3},~\ref{D4} are the same as before, so that we still deal with convex cones; the remaining two conditions \ref{D1},~\ref{D2} are instead replaced by the following two:
\begin{enumerate}[label=\upshape D\arabic*$'$.,ref=\upshape D\arabic*$'$]
\item\label{D1'} $f\geq0\Rightarrow f\in\rdesirs$;
\item\label{D2'} $f\lneq0\Rightarrow f\notin\rdesirs$.
\end{enumerate}
The first makes clear that the zero gamble is desirable now, besides the positive ones. The second explicitly states that the negative gambles are not desirable; this was a consequence of~\ref{D1},~\ref{D2} and~\ref{D4} in the original formulation.

Now, consider a set of gambles $\rdesirs$ that satisfies~\ref{D1'},~\ref{D2'} and~\ref{D3},~\ref{D4}. If we  define preference as in Definition~\ref{def:pref-gmb}, by saying that $f$ is preferred to $g$ whenever $f-g\in\rdesirs$, then what we get is a weak preference order, given that reflexivity is allowed by $f-f=0\in\rdesirs$.

In other words, the original formulation lets us model strict preference but not indifference; the new one models weak, but not strict, preference.\footnote{In order to represent both indifference and strict preference, one needs a richer language than desirability, such as the one proposed by Quaeghebeur et al. in \cite{quaeghebeur2015}.} Other than this, the two formulations of desirability are essentially equivalent; this means that we could rewrite this paper with minor changes by using the alternative formulation together with weak preference. In particular, one of the minor changes would be related to Archimedeanity. Recall that in this paper Archimedeanity corresponds to sets of strictly desirable gambles, which are open convex cones, in the sense that they exclude the border (except for the part of the cone made up by the positive gambles). In the new formulation, Archimedeanity would correspond to closed convex cones, that is, cones that include the border.

Going back to Nau's work, he imposes the existence of a best and
a worst outcome (similarly to GK, with the difference that they impose best and worst acts); we refer to
the previous section for a discussion of this point. In addition, he
also requires that relation $\succeq$ satisfies a continuity
axiom. This axiom appears in fact to be the  analog of focusing on sets of desirable gambles that include the border, that is, on the Archimedean condition for desirability in case of weak preference. He proves in \cite[Theorem~2]{nau2006} that under these
conditions, and together with the analogue of
axioms~\ref{A1},~\ref{A2} in case of weak preference, relation
$\succeq$ can be represented by means of a family of state-dependent
expected utility functions. The continuity axiom enables him in
addition to obtain in some cases a representation in terms of a
basis.

This result is similar to our basic representation theorem in terms
of a coherent set of desirable gambles, or its associated coherent
lower prevision $\lpr$: in the finite case any linear prevision
$\pr\in\solp(\lpr)$ will be equal to $\pr(\pr(\cdot|\pspace))$, by
total expectation (i.e., marginal extension), so that we can regard
it as a state-dependent representation.

With respect to state independence, he studies under which
conditions relation $\succeq$ is characterised by means of a family of state-independent models, which in the language of this paper correspond to factorising linear previsions. He first obtains a
representation of this type for constant von Neumann-Morgenstern lotteries only
\cite[Theorem~3]{nau2006}, and then another representation for
arbitrary horse lotteries in \cite[Theorem~4]{nau2006}, by means of
the following axiom:
\begin{align*}
 & \text{If } p\succ q,\ A \succeq \alpha b+ (1-\alpha) z,\text{ and } \beta b + (1-\beta) z \succeq B, \text{ then:} \\
 &\left[\alpha' A p+ (1-\alpha') p' \succeq \alpha' A q+ (1-\alpha') q' \Rightarrow
 \beta' B p+ (1-\beta') p' \succeq \beta' B q+ (1-\beta') q'\right],
\end{align*}
where $b$ denotes the best act, $A,B$ are subsets of $\pspace$,
$\alpha>0$, $f',g'$ are constant acts and $\alpha',\beta'$ are
related by the formula
\begin{equation*}
 \alpha'=1 \Rightarrow \beta'=1 \text{ and } \alpha'< 1 \Rightarrow \frac{\beta'}{1-\beta'}\leq \frac{\alpha \alpha'}{(1-\alpha)(1-\alpha')}.
\end{equation*}
He calls this axiom \emph{strong state-independence}. In our
language, the representation he obtains means that the corresponding
lower prevision is a lower envelope of factorising linear previsions. It has been
characterised in Proposition~\ref{pr:ind-selection} by means of the
somewhat simpler axiom~\eqref{eq:A4}.

\subsection{The work by Seidenfeld, Schervish and Kadane}

Seidenfeld, Schervish and Kadane (SSK) are the authors of one of the
first axiomatisations of incomplete preferences based on a
multiprior expected multiutility representation
\cite{seidenfeld1995}.

\subsubsection{Overview}
They study the problem for strict preference orders and impose on them Axioms~\ref{A1} and~\ref{A2}, such as GK and we do. However, their work presents some significant departure from ours and the others we have been discussing, mainly for the following reasons:
\begin{itemize}
\item They allow for a countable space of rewards $\values$ and for simple lotteries defined as $\sigma$-additive probabilities on such a countable support. (The space of possibilities is assumed to be infinite but only finite partitions of $\pspace$ are eventually used, so that we can actually regard $\pspace$ as finite in their approach.)

\item They allow for sets of expected utilities that are not necessarily closed; therefore their representation is richer than others thanks to the additional expressiveness due to the accurate treatment of the border of such sets. These sets are convex, on the other hand.
\end{itemize}

\noindent These two features make SSK's treatment substantially more complex than those of Nau or GK.

\subsubsection{At a deeper level}

For a start, they define a special type of Archimedean axiom based on a topological structure on preferences:
\begin{enumerate}[label=\upshape tA\arabic*.,ref=\upshape tA\arabic*]
\setcounter{enumi}{2}
\item\label{TA3} $(\forall n\in\nats)(p_n\succ q_n)((p_n)_{n\in\nats} \rightarrow p)((q_n)_{n\in\nats} \rightarrow q) \Rightarrow(\forall r\in\acts)(r \succ p \Rightarrow r
\succ q) (q \succ r \Rightarrow p \succ
r)$,
\end{enumerate}
where the convergence of the horse lotteries is understood as point-wise. Their motivation to do so is given through the following:
\begin{example}[Example~2.1 in \cite{seidenfeld1995}]\label{ex:SSK} Consider a trivial space of possibilities $\pspace$ with only one element and a set of three rewards: $\values\coloneqq\{z,a,b\}$. Here $z$ is the worst outcome and $b$ the best one. Acts can then be represented as simple von Neumann-Morgenstern lotteries, which depend on outcomes alone. Preferences are determined by the utility functions $\{u_\alpha:\values\rightarrow[0,1]|u_\alpha(z)\coloneqq0,u_\alpha(b)\coloneqq1,u_\alpha(a)\coloneqq\alpha,\alpha\in(0,1)\}$ by means of:
\begin{equation*}
(\forall \alpha\in(0,1)) \sum_{x\in\values}p(x)u_\alpha(x)>\sum_{x\in\values}q(x)u_\alpha(x)\Rightarrow p\succ q,
\end{equation*}
for all $p,q\in\acts$. SSK show that relation $\succ$ satisfies Axioms~\ref{A1} and~\ref{A2} but it fails the Archimedean axiom~\ref{A3}. $\lozenge$
\end{example}
\noindent The point here is that the common Archimedean condition~\ref{A3} is
not suited to deal with sets of expected utilities that are informative on the border, as in the example. And since SSK want to have a representation in terms of expected utilities, then they drop~\ref{A3} as inadequate to capture the situation and move on to~\ref{TA3}.

However, there is a another possible way out: it consists in dropping the idea that we should have an expected utility representation in the case of the example. In fact, since relation $\succ$ of the example satisfies~\ref{A1} and~\ref{A2}, we can use Proposition~\ref{prop:rdesirs} to equivalently represent it by a coherent set of desirable gambles $\rdesirs$, which is in particular given by
\begin{equation*}
\rdesirs\coloneqq\{f\in\gambles(\{a,b\}):(\forall\alpha\in(0,1))\alpha f(a)+f(b)>0\}.
\end{equation*}
It is not difficult to check that $\rdesirs$ is not a strictly desirable set; this is a direct consequence of relation $\succ$ failing~\ref{A3}, and more precisely~\ref{A3p}, which is equivalent to~\ref{wA3} in the considered problem (see Proposition~\ref{prop:A3'-wA3}). This means that in our formalisation there is not the possibility to have a representation of the problem by a closed and convex set of linear previsions; yet, we can deal with it in an unproblematic way via desirability.

We can look at this question also in more general terms as follows.
In the Conclusions of their paper, SSK mention that the border of
the set of expected utilities is not something that should be
sidestepped, as by taking it into proper account, one enhances the
expressiveness of the models used. We are in fact very close in
spirit, as well as very sympathetic, towards the underlying idea;
only, we are particularly concerned with the border of the sets of
desirable gambles rather than that of the sets of expected
utilities. One reason for this standpoint is that the related theory
looks simpler in our view; we can for instance avoid introducing the
transfinite induction used by SSK. Moreover, we can formally
identify our axiomatisation based on desirability with a logic, as
it follows from the discussion in Section~\ref{sec:foundations}.
That this is possible for SSK's theory is unclear to us. Also, we
need not be concerned with so-called null events, which are events
related to zero probabilities. They are usually neglected in
traditional theories and at the same time they complicate the formal
development, whereas we can take them into account while treating
them uniformly with others, in an implicit way. More generally
speaking, our standpoint is that desirability greatly empowers a
theory of incomplete preferences to gain expressiveness while
keeping the overall approach and the mathematics relatively simple
and straightforward.

This is not to make any definite claim about the relative expressiveness of a desirability-based axiomatisation and SSK's theory, because this would need a separate investigation. We believe, for instance, that the information on the border of the sets of expected utilities, which SSK can capture, can as well be captured by using desirable gambles. But turning this into a formal claim needs some careful work. On the other hand, SSK can deal with countable spaces of outcomes. This seems to be somewhat limited when they extend a partial preference relation $\succ$ to a total one by transfinite induction, as they can get a weak preference relation $\succeq$ only. Related to this, they eventually obtain a notion of \emph{almost} state independence through a limit argument, as actual state independence seems prevented from being obtained in their setting. On our side, we conjecture that it is possible to extend the work in this paper without major problems to a countable space of outcomes while preserving the main structure and results of the paper, but, also in this case, this has to be verified. Finally, the use of $\sigma$-additive probabilities would make it difficult to extend SSK's approach fully to an uncountable set $\pspace$, which is something we instead can do quite simply by desirability. At the same time, it is clear that SSK's idea of taking advantage of the border of the set of expected utilities has been a bright idea as it considerably expands the power of a theory of preferences compared to more traditional approaches.

\subsubsection{Extension to the worst act}

Another question of particular relevance for this paper is that SSK do not assume that best and worst act exist: they give a theorem (\cite[Theorem~2]{seidenfeld1995}) to prove that any relation can be extended to best and worst acts, while preserving in particular their Archimedean condition~\ref{TA3}. This somewhat clashes with our results in Section~\ref{sec:z}, so it is useful to take some time to deepen this question. We focus on the worst act, which has been the subject of our discussion in Section~\ref{sec:z}.

A minor consideration is that there seems to be some trouble with SSK's Theorem~2. In particular the theorem appears to fail in case the original relation $\succ$ is empty. In this case all acts are incomparable to each other and, unfortunately, SSK's definition of indifference (Definition~8) coincides with incomparability for an empty relation. This eventually leads SSK's Definition~16 to yield the non-rational extension in which every act is preferred to every other. Yet, this problem should not be too difficult to fix; recall that Theorem~\ref{thm:ext} (point~3) shows that the empty relation is a well-solved case under~\ref{wA3}. On the other hand, we cannot use this result directly, given that it exploits an Archimedean condition different from~\ref{TA3}. It is useful to contrast the two possible solutions with the help of a simple example.

\begin{example}
Consider a problem with a finite possibility space $\pspace$ and a
trivial set $\values$ made by only one element (this means that we
can neglect $\values$ and work only with $\pspace$). Assume that we
start with an empty relation $\succ$. Then Theorem~\ref{thm:ext}
shows that the minimal Archimedean extension under~\ref{wA3}
corresponds to the vacuous set of desirable gambles
$\rdesirs=\gambles^+$. In turn, this is in a one-to-one relation,
through~\eqref{eq:lprR}, with the vacuous credal set: that is, the
set of all the mass functions on $\pspace$. This is the traditional
model for the complete lack of information about $\pspace$. The
vacuous credal set is not a possible solution for SSK's extension of
the empty relation, though. The problem is originated by zero
probabilities: assume that a mass functions assigns zero probability
to some $\omega\in\pspace$; then gamble $f$, equal to $1$ in
$\omega$ and $0$ everywhere else, has lower prevision equal to zero.
Since $f$ can be written as $f=\pi(p-z)$ for some $p\in\actsy$, then
we should deduce that there is an act that is not preferred to $z$.
We can correct for this problem by considering the open set of all
the mass functions that assign positive probability to each element
of $\pspace$, which we argue should be SSK's solution for the
present case. Notice however that, strictly---or
logically---speaking, this cannot be truly regarded as a vacuous
credal set given that some mass functions are missing from it.
$\lozenge$
\end{example}

Most importantly, we can try to map the Archimedean condition~\ref{TA3} into one for sets of desirable gambles. In fact, SSK's relation $\succ$ satisfies axioms~\ref{A1},~\ref{A2}, whence (provided $\values$ is finite) it is in a one-to-one correspondence with a
coherent set $\rdesirs$ of desirable gambles, as it follows from Theorem~\ref{thm:equiv}. Moreover, from the point-wise convergences $(p_n)_{n\in\nats} \rightarrow p$ and $(q_n)_{n\in\nats} \rightarrow q$, in~\ref{TA3}, we deduce that the sequence $(p_n-q_n)_{n\in\nats}$ converges point-wise to $p-q$. Then, if $p_n \succ q_n$ for every
$n\in\nats$, it follows that $\pi(p-q)$ is in the closure of a sequence of
gambles in $\rdesirs$, or, in other words, that $\pi(p-q)$ is a so-called
\emph{almost-desirable gamble} \cite[Section~3.7]{walley1991}.\footnote{A subtlety is that almost-desirable gambles are \emph{uniform} limits of gambles in $\rdesirs$, while~\ref{TA3} is based on point-wise convergence. This difference is immaterial in the present context as the two convergences coincide in a countable space $\pspace\times\values$ for gambles obtained through differences of acts.} Conversely, it is well known that any almost-desirable
gamble is the limit of a sequence of desirable gambles, and therefore the almost-desirable gambles can be identified with the topological closure of set $\rdesirs$ (see~\cite[Proposition~4]{miranda2010c}), which we denote by $\cluc{\rdesirs}$. Overall, this means
that Axiom~\ref{TA3}
can be reformulated as follows in the context of desirability:
\begin{enumerate}[label=\upshape tA\arabic*$'$.,ref=\upshape tA\arabic*$'$]
\setcounter{enumi}{2}
\item\label{TA3-equiv} $ f \in \cluc{\rdesirs}, g \in \rdesirs \Rightarrow f+g \in \rdesirs
$.
\end{enumerate}
In the language of \cite{cooman2010}, condition~\ref{TA3-equiv} requires that any almost-desirable gamble should be \emph{weakly desirable}. We have
shown in \cite[Example~11]{miranda2010c} that this is not always
the case, which is another way of showing that axiom~\ref{TA3} is
not trivial.

Interestingly, we can use the above formulation to
easily show that there are cases where there is indeed a minimal
extension of a relation satisfying
axioms~\ref{A1},~\ref{A2},~\ref{TA3} to include a worst outcome:
\begin{example} Let $\lpr$ be a coherent lower prevision on
$\gambles(\pspace\times\valuesn)$ such that $\lpr(\{(\omega,x)\})>0$
for every $(\omega,x)\in\pspace\times\values$, and consider the
relation $\succ'$ on $\actsn\times\actsn$ given by $$r\succ' s \iff
\lpr(r-s)>0.$$ This relation satisfies axioms~\ref{A1},~\ref{A2}
and~\ref{TA3}.  Let relation $\succ$ on $\actsy\times\actsy$, given
by Eq.~\eqref{eq:ext2} in the Appendix, be the minimal extension of
relation $\succ'$ to the worst outcome that satisfies
axioms~\ref{A1},~\ref{A2}.

Then it holds that $p\succ q$ implies $\lpr(\pi(p-q))>0$. To prove this, note that if $p\dom q$, then
$\pi(p-q)\in\gambles^+$, and then given
$(\omega,x)\in\pspace\times\values$ such that
$\pi(p-q)(\omega,x)>0$, it holds that
\begin{equation*}
 \lpr(\pi(p-q))\geq\lpr(I_{\{(\omega,x)\}} \pi(p-q)(\omega,x))>0.
\end{equation*}
On the other hand, if $\pi(p-q)=r-s$ for some $r,s\in\actsn,r\succ'
s$, it follows that $\lpr(\pi(p-q))=\lpr(r-s)>0$.

From this it follows that if we consider a gamble $f$ which is the
limit of a sequence of gambles of the type $p_n-q_n$, $n\in\nats$, with $p_n\succ
q_n$, it holds that $\lpr(\pi(f))\geq 0$. Using this, together with
the super-additivity of $\lpr$, we deduce that~\ref{TA3-equiv} holds and as a consequence relation $\succ$ satisfies~\ref{TA3}. $\lozenge$
\end{example}

One may wonder at this point what is the underlying reason that makes the minimal extension in the previous example Archimedean according to~\ref{TA3} and non-Archimedean in the sense of~\ref{wA3} (recall that the latter means that there is no open coherent set of desirable gambles that can represent it, or, equivalently, that there is no closed set of linear previsions that can). The reason appears to be just a question of border: in fact, the example shows that it is possible to characterise the extended relation by an open set of linear previsions, all of which assign positive prevision to each gamble $\pi(p-q)$ such that $p\succ q$. On the other hand, the example shows that for a limit gamble like $f$ it only holds that $\lpr(\pi(f))\geq 0$. This is easy to see in a specific case, if we take the limit gamble $0=\lim_{n\rightarrow\infty}\pi((p-z)/n)$, whose lower prevision is clearly zero whereas each gamble inside the limit has positive lower prevision (provided that $p\neq z$). This means that relation $\succ$ cannot be represented by a closed set of linear previsions. SSK bypass this problem by working with sets of linear previsions that need not be closed; we deal with the same problem by using coherent sets of desirable gambles that need not be strictly desirable, that is, open.

\section{Conclusions}
Traditional decision-theoretic axiomatisations, \`a la Anscombe-Aumann, typically turn a problem of incomplete preferences into an equivalent convex cone $\cone$ made of scaled differences of horse lotteries (on the possibility space $\pspace$ and set of outcomes $\values$). This cone is used as a tool to derive a representation of preferences in terms of a set of expected utilities through the Archimedean axiom.

In this paper we have argued that convex cones should be fully given the primary role in the axiomatisation of incomplete preferences. We have shown as the objects making up cone $\cone$ can, with a simple transformation, be directly regarded as gambles (i.e., bounded random variables $f:\pspace\times\values\rightarrow\reals$) a subject finds acceptable, or desires. This formally turns cone $\cone$ into an equivalent convex cone $\rdesirs$ of desirable gambles, as defined in the literature of imprecise probability. 

The implications on the decision-theoretic side appear far-reaching:
we have discussed how a decision-theoretic problem can be formulated
exclusively in terms of desirable gambles, preference being a
derived notion; and that we can deal with non-Archimedean problems
by directly working on cone $\rdesirs$ using the operations it comes
equipped with. Using these operations, we have given new, simple
formulations of state independence and completeness directly at the
level of the cone, which then hold also for non-Archimedean
problems. In the Archimedean case, these formulations are naturally
specialised to the set of expected utilities induced by the cone. In
this case, we have shown in particular that the traditional notion
of state independence is eventually based on the idea of stochastic
independence; and that we can actually take advantage of this
insight to propose much a weaker notion of state independence
exploiting the literature of imprecise probability. We have also
taken a close look at the Archimedean axiom itself using
desirability: we have shown that it can profitably be weakened,
which has allowed us to work with uncountable possibility spaces; on
the other hand, we have given a new condition, full Archimedeanity,
that is needed to truly capture all the problems that can be
represented through sets of expected utilities. In doing so, we have
uncovered the importance of the worst act in a preference relation,
in order to make the bridge to desirability; and shown that the best
act is actually unnecessary for the theoretical development. We have
also shown that it is always possible to extend a preference
relation in a minimal way to one with a worst act, but not in an
Archimedean way.

We can read the results also in the other way around, so as to find
out the implications of this work for imprecise probability. We
mention just two of them. Most importantly, the extension of
desirability we have proposed here to vector-valued gambles (that is,
gambles $f:\pspace\times\values\rightarrow\reals$ rather than
$f:\pspace\rightarrow\reals$) can be regarded as the foundation of
the theory of desirability generalised to deal with imprecise and
non-linear utility, which is done here for the first time. Another
new outcome is the characterisation of the strong product as the
weakest model that satisfies the sure thing principle. The strong product
defines a notion of independence obtained through a set of
stochastically independent models. It is a well-known and much used
notion that is however widely regarded, and to some extent
criticised, as lacking a behavioural interpretation (in the sense
that it can only be given a sensitivity analysis interpretation).
Our characterisation shows that this is actually not the case as it
gives the strong product, for the first time, a clear behavioural
interpretation.

There are a few main opportunities for research that the present work opens up. One is determined by the current limitation of the set of outcomes $\values$ to be finite. Our viewpoint is that the extension to a countable set $\values$ should be relatively easy to attain by defining a horse lottery $p$ so as that $p(\omega,\cdot)$ is a countably additive probability for each $\omega\in\pspace$; the main reasoning line in the paper could probably be preserved by this choice, but it might be the case that some of the proofs have to be also quite substantially reworked. The passage to an uncountable set $\values$ seems to involve a different degree of complexity. The main reason is that we cannot represent the simple lottery $p(\omega,\cdot)$ through a countably additive probability when $\values$ is uncountable; and, on the other hand, using a finitely additive probability seems to be impractical. Therefore to properly deal with this case, one seems to have to develop ideas that may be also fundamentally different from those in the present paper.

Another challenge is related to the question of conglomerability. In this paper we have tried to minimise the involvement of conglomerability in the development, as it is already quite a controversial topic per se. At the moment, conglomerabiliy is used only in our definitions of state independence based on irrelevant products---removing conglomerability in this case could give rise to definitions of state independence with questionable properties. This nevertheless, the issue of conglomerability relates to many parts of the paper and there should be, in the future, work dedicated to clarify its role in this context. For instance, in the Archimedean case, our formulation leads to modelling beliefs by a set of finitely additive probabilities on $\pspace$. In case conglomerability is considered as an essential property of a probabilistic model, then a conglomerability axiom should be added to the desirability formulation of decision-theoretic problems, so as to make sure that only sets of probabilities with the property of being conglomerable are obtained.

It could be also useful to take advantage of the definition of natural extension for a set of gambles so as to give a weaker axiomatisation of incomplete preferences. The idea is that a subject would be allowed to express desirability statements only on a subset of all the possible gambles (e.g., only the gambles corresponding to von Neumann-Morgenstern lotteries); requiring that these statements avoid partial loss would be a rationality condition necessary and sufficient to extend them into a coherent set of gambles through natural extension, that is, to a coherent preference relation. In this way, we would obtain a more general and flexible assessment procedure through the use of a weaker rationality condition than coherence.

Finally, it would be useful to evaluate the expressive power of the theory based on cones that we propose here relative to the theory by Seidenfeld et al. \cite{seidenfeld1995}. The interest lies in particular in the different ways the two theories cope with non-Archimedean problems: through the border of the set of expected utilities in the case of Seidenfeld et al., and through the border of the cone of gambles in ours.

\section*{Acknowledgements} We acknowledge the financial support by
project TIN2014-59543-P.

\appendix

\section{Proofs}\label{sec:proofs}

In this appendix we collect the proofs of all the results in the paper as well as a few results that are more technical and are needed for some proofs.

\begin{proof}[{\bf Proof of Lemma~\ref{lem:w-degen}}]
Suppose ex-absurdo that there is $\bar\omega\in\pspace$ such that for no $x\in\values$, $w(\bar\omega,x)=1$. Then there must be $x_1,x_2\in\values$ such that $w(\bar\omega,x_1),w(\omega,x_2)\in(0,1)$. Let $\eps\coloneqq\frac{1}{2}\min\{w(\bar\omega,x_1),w(\bar\omega,x_2),1-w(\bar\omega,x_1),1-w(\bar\omega,x_2)\}$. Define acts $p,q$ so that they coincide with $w$ for all $\omega\in\pspace, \omega\neq\bar\omega$, while in $\bar\omega$ they are defined by
\begin{align*}
 p(\bar\omega,x)\coloneqq\begin{cases}
  w(\bar\omega,x)+\eps \quad &\text{ if } x=x_1, \\
  w(\bar\omega,x)-\eps \quad &\text{ if } x=x_2, \\
  w(\bar\omega,x) \quad &\text{ otherwise,}
  \end{cases}\qquad
  q(\bar\omega,x)\coloneqq\begin{cases}
  w(\bar\omega,x)-\eps \quad &\text{ if } x=x_1, \\
  w(\bar\omega,x)+\eps \quad &\text{ if } x=x_2, \\
  w(\bar\omega,x) \quad &\text{ otherwise.}
  \end{cases}
\end{align*}
Since by assumption it must hold that $p\succ w$ and $q\succ w$, we
deduce, thanks also to the convexity of cone $\cone$, that
$0=(p-w)+(q-w)\in\cone$. This is a contradiction with the
irreflexivity of the preference relation, through
Proposition~\ref{prop:charC}.
\end{proof}

\begin{proof}[{\bf Proof of Proposition~\ref{prop:rdesirs}}] The proof of the second point is trivial, so we concentrate on showing that $\rdesirs$ satisfies~\ref{D1}--\ref{D4}:
\begin{itemize}

\item[\ref{D1}.]
Consider $f\in\gambles^+(\pspace\times\valuesn)$ and let $\lambda\coloneqq\sup_{\omega\in\pspace}\sum_{x\in\valuesn}f(\omega,x)$. Note that $\lambda>0$ otherwise $f$ would not be positive and moreover that $\lambda<+\infty$ because $f$ is bounded. Then let $p\in\actsy$ be defined by
\begin{equation*}
 p(\omega,x)\coloneqq\begin{cases}
  \frac{f(\omega,x)}{\lambda} \quad &\text{ if } x\neq z, \\
  1-\sum_{x\neq z}\frac{f(\omega,x)}{\lambda} \quad &\text{ otherwise}.
  \end{cases}
\end{equation*}
It follows that $\pi(\lambda(p-z))=f$ and since $p\neq z$ because
$f$ is positive, then $p\succ z$ and hence $f\in\rdesirs$.

\item[\ref{D2}.] $\cone$ does not contain the origin by Proposition~\ref{prop:C-convex-cone} and hence  $0\notin\rdesirs$.

\item[\ref{D3}.] This follows trivially from the convexity of $\cone$, again by Proposition~\ref{prop:C-convex-cone}.

\item[\ref{D4}.] This follows trivially from the convexity of $\cone$ taking into consideration that $\pi$ is a linear
functional. $\qedhere$
\end{itemize}
\end{proof}

\begin{proof}[{\bf Proof of Proposition~\ref{prop:domsucc}}]
$p\dom q\Rightarrow \pi(p-q)\in\gambles^+(\pspace\times\valuesn)$.
By Proposition~\ref{prop:rdesirs}, there are $p',q'\in\actsy$ and
$\lambda>0$ such that $p'\succ q'$ and $\lambda\pi(p'-q')=\pi(p-q)$.
Applying $\pi_2^{-1}$ to both sides of the equation, we get that
\begin{equation*}
 \lambda (p'-q')=p-q,
\end{equation*}
whence $\frac{\lambda}{\lambda+1}(p'-q')=\frac{1}{\lambda+1}(p-q)$.
Applying now Proposition~\ref{prop:nau}, we conclude that $p\succ
q$.
\end{proof}

\begin{proof}[{\bf Proof of Proposition~\ref{prop:vacuous-Arch}}]
We begin with the direct implication. Assume first of all that
$|\valuesy|\geq3$. Let $u$ be the \emph{uniform} act defined by
$u(\omega,x)\coloneqq1/|\valuesy|$ for all $\omega\in\pspace$. Then
fix any $x'\neq z$ and $\eps>0$ small enough so as to define act $p$
given, for all $\omega\in\pspace$, by
\begin{equation*}
p(\omega,x)\coloneqq
\begin{cases}
u(\omega,x)+\eps\quad&\text{ if }x=x',\\
u(\omega,x)-\eps\quad&\text{ if }x=z,\\
u(\omega,x)\quad&\text{ otherwise.}
\end{cases}
\end{equation*}
It follows that $p\succ u$ since $p\dom u$. Given that also $u\succ
z$, we can apply the Archimedean axiom~\ref{A3} to obtain that there
is a $\beta\in(0,1)$ such that:
\begin{equation*}
\beta p + (1-\beta)z \succ u.
\end{equation*}
But $\beta p + (1-\beta)z \ndom u$: it is enough to notice that for
any $\omega\in\pspace$ and any $x\neq x',z$, it holds that $[\beta p
+ (1-\beta)z - u](\omega,x)=(\beta-1)/|\valuesy|<0$.

Next, if $|\valuesy|=2$ and $|\pspace|\geq 2$, then we can take
$\omega_1\in\pspace$ and define horse lotteries $p,q$ by
$p(\omega_1,x)=1, q(\omega_1,x)=0, p(\omega',x)=q(\omega'x)=\frac{1}{2}$ for
any $\omega'\neq\omega_1$. Then $p\dom q \dom z$, whence $p\succ q
\succ z$. However, for any $\beta\in (0,1)$ it holds that
\begin{equation*}
(\forall
 \omega'\neq\omega_1) \beta p(\omega',x)+ (1-\beta) z(\omega',x) < q(\omega',x),
\end{equation*}
whence $\beta p +(1-\beta) z \ndom q$ and as a consequence $\succ$
is not Archimedean.

Conversely, given $\pspace\coloneqq\{\omega\},\valuesy\coloneqq\{x,z\}$, then it holds
that $p\dom q \iff p(\omega,x)>q(\omega,x)$, and then if $p\succ q
\succ r$ there are $\alpha,\beta\in (0,1)$ such that $\alpha
p(\omega,x)+ (1-\alpha) r(\omega,x) > q(\omega,x) > \beta
p(\omega,x)+(1-\beta) q(\omega,x)$, meaning that $\alpha
p+(1-\alpha) r \succ q \succ \beta p+(1-\beta) r$, and as a
consequence $\succ$ is Archimedean.
\end{proof}

\begin{proof}[{\bf Proof of Proposition~\ref{prop:A3'-wA3}}]
\begin{enumerate}
\item[(i)] Given $p\succ q\succ r,q\ndom r$, we have by~\ref{A2} that $\alpha q + (1-\alpha)p\succ \alpha r + (1-\alpha)p\succ\alpha r +(1-\alpha)q$ for any $\alpha\in(0,1)$. We apply~\ref{A3p} to the latter chain of preferences; note that this requires that $\alpha q + (1-\alpha)p\ndom \alpha r + (1-\alpha)p$, which is true since $q\ndom r$. As a consequence, there is $\gamma\in(0,1)$ such that $\gamma(\alpha q + (1-\alpha)p)+(1-\gamma)(\alpha r +(1-\alpha)q)\succ\alpha r + (1-\alpha)p$, or, equivalently, that $\alpha\gamma(q-r)-(1-\alpha)(1-\gamma)(p-q)\in\cone$, where $\cone$ is the cone associated with $\succ$. Normalizing we find $\beta\in (0,1)$ such that $\beta q +(1-\beta)q=q\succ\beta p +(1-\beta)r$.

\item[(ii)] ($\Rightarrow$) Consider $p,q,r\in\actsy$ such that $p\succ q\succ r,p\ndom q$. By~\ref{wA3} it holds that $\alpha p +(1-\alpha)z\succ q$ for some $\alpha\in(0,1)$. And since by Proposition~\ref{prop:nau} we have that $\alpha p + (1-\alpha)r\succ \alpha p +(1-\alpha)z$, we deduce that $\alpha p + (1-\alpha)r\succ q$.

($\Leftarrow$) This implication trivially follows by a direct
application of~\ref{A3p} to $p,q\in\actsy$ such that $p\succ
q,p\ndom q$, given that $q\succ z$. $\qedhere$
\end{enumerate}
\end{proof}

Next we give a lemma that considers a special translation of a coherent set of desirable gambles. Its proof, needed for Theorem~\ref{prop:rdesirs-archimedean}, is a reformulation of ideas in \cite[Lemma~1]{galaabaatar2013}, on top of which we establish the connection between~\ref{wA3} and~\ref{D0}.

\begin{lemma}\label{lem:trans}
Let $\rdesirs$ be the coherent set of desirable gambles on $\pspace\times\valuesn$ arising from a coherent preference relation $\succ$ on $\actsy\times\actsy$ through~\eqref{eq:RfromC}. Denote by $u$ the uniform act in $\actsy$ defined by $u(\omega,x)\coloneqq1/|\valuesy|$ for all $\omega\in\pspace,x\in\valuesy$. Consider the following translated sets:
\begin{eqnarray*}
\tau(\rdesirs)&\coloneqq&\{g\in\gambles(\pspace\times\valuesn):g=\pi(u)+f, f\in\rdesirs\},\\
\tau(\gambles^+(\pspace\times\valuesn)
)&\coloneqq&\{g\in\gambles(\pspace\times\valuesn):g\gneq\pi(u)\},
\end{eqnarray*}
and the following version of~\ref{D0} adapted to $\tau(\rdesirs)$:
\begin{enumerate}[label=\upshape D0$^\prime$.,ref=\upshape D0$^\prime$]
\item\label{D0'} $g\in \tau(\rdesirs)\setminus \tau(\gambles^+(\pspace\times\valuesn))\Rightarrow(\exists\delta>0)g-\delta\in \tau(\rdesirs)$.
\end{enumerate}
Then it follows that:
\begin{itemize}
\item[(i)] \ref{D0} and~\ref{D0'} are equivalent conditions.
\item[(ii)] $\tau(\rdesirs)=\{\pi(u)+\lambda\pi(p-u):\lambda>0,p\succ u\}$.
\end{itemize}

\end{lemma}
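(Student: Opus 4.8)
The plan is to handle (i) and (ii) separately; (i) is a formal consequence of the fact that $\tau$ is just a translation, while (ii) rests on a mixing argument through Proposition~\ref{prop:nau}.

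For (i), I would introduce the affine bijection $T\colon\gambles(\pspace\times\valuesn)\to\gambles(\pspace\times\valuesn)$ given by $T(f)\coloneqq\pi(u)+f$. By definition $T(\rdesirs)=\tau(\rdesirs)$; moreover, since $g\gneq\pi(u)$ is the same as $g-\pi(u)\gneq0$, we have $T(\gambles^+(\pspace\times\valuesn))=\tau(\gambles^+(\pspace\times\valuesn))$, and hence, $T$ being a bijection, $T\bigl(\rdesirs\setminus\gambles^+(\pspace\times\valuesn)\bigr)=\tau(\rdesirs)\setminus\tau(\gambles^+(\pspace\times\valuesn))$. Finally, for any real $\delta$ (understood as a constant gamble) one has $T(f-\delta)=T(f)-\delta$, so that $f-\delta\in\rdesirs\iff T(f)-\delta\in\tau(\rdesirs)$. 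Reading~\ref{D0} for $\rdesirs$ through $T$ then gives exactly~\ref{D0'} for $\tau(\rdesirs)$, and conversely, which proves the equivalence.

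For (ii), note first that $\tau(\rdesirs)=\pi(u)+\rdesirs$ by definition, so it is enough to establish that $\rdesirs=\{\lambda\pi(p-u):\lambda>0,\ p\succ u\}$. The inclusion $\supseteq$ is immediate from~\eqref{eq:RfromC}. For the reverse inclusion I would take $f\in\rdesirs$, write $f=\lambda\pi(p-q)$ with $p\succ q$ and $\lambda>0$, and set $\alpha\coloneqq1/(|\valuesy|+1)\in(0,1)$ and $p'\coloneqq u+\frac{\alpha}{1-\alpha}(p-q)$, so that $\frac{\alpha}{1-\alpha}=1/|\valuesy|$. Then $p'\in\actsy$: each fibre of $p'$ sums to $1$ because the fibres of $p-q$ sum to $0$, and $p'(\omega,x)\ge0$ for every $(\omega,x)$ because $u(\omega,x)=1/|\valuesy|$ while $\frac{\alpha}{1-\alpha}\lvert p(\omega,x)-q(\omega,x)\rvert\le1/|\valuesy|$. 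Since by construction $\alpha(p-q)=(1-\alpha)(p'-u)$, Proposition~\ref{prop:nau} yields $p'\succ u$ (because $p\succ q$), and the linearity of $\pi$ gives $\pi(p-q)=\frac{1-\alpha}{\alpha}\pi(p'-u)$; hence $f=\mu\pi(p'-u)$ with $\mu\coloneqq\lambda\frac{1-\alpha}{\alpha}>0$, which shows $f\in\{\lambda\pi(p-u):\lambda>0,\ p\succ u\}$. Translating back by $\pi(u)$ then gives the description of $\tau(\rdesirs)$ in (ii).

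I do not anticipate a genuine obstacle: the only step needing a little care is checking $p'\in\actsy$ in (ii), i.e.\ choosing $\alpha$ small enough relative to $|\valuesy|$ that the perturbed act remains within the simplices — and it is precisely because $u$ is the uniform act, with each fibre putting mass $1/|\valuesy|$ on every outcome, that such an $\alpha$ exists.
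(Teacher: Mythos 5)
Your proposal is correct and follows essentially the same route as the paper: part (i) is proved by exploiting that translation by $\pi(u)$ is a bijection that commutes with subtracting constants, and part (ii) by perturbing the uniform act with a small enough multiple of $p-q$ (the paper takes a coefficient strictly below $1/|\valuesy|$, you take exactly $1/|\valuesy|$, which is equally fine since entries may be zero) so that the perturbed act stays in $\actsy$, and then invoking Proposition~\ref{prop:nau} and rescaling. No gaps.
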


\begin{proof}
Note first that if $g=\pi(u)+f$, $g\in \tau(\rdesirs)$, then $f\in\rdesirs$: in fact if $g\in \tau(\rdesirs)$, then $g=\pi(u)+h$ for some $h\in\rdesirs$, whence $f=h$.

\begin{itemize}

\item[(i)] Let us show that~\ref{D0} and~\ref{D0'} are equivalent conditions.

Consider $g\in \tau(\rdesirs)\setminus
\tau(\gambles^+(\pspace\times\valuesn))$; then $g=\pi(u)+f$,
$f\in\rdesirs\setminus\gambles^+(\pspace\times\valuesn)$.
Applying~\ref{D0}, there is $\delta>0$ such that
$f-\delta\in\rdesirs$, whence $\pi(u)+f-\delta=g-\delta\in
\tau(\rdesirs)$.

Conversely, if $f\in\rdesirs\setminus\gambles^+(\pspace\times\valuesn)$, then $\pi(u)+f\in \tau(\rdesirs)\setminus \tau(\gambles^+(\pspace\times\valuesn))$; applying~\ref{D0'} we get that there is $\delta>0$ such that $\pi(u)+f-\delta \in \tau(\rdesirs)$, whence $f-\delta\in\rdesirs$.

\item[(ii)] 
It is trivial that $\{\pi(u)+\lambda\pi(p-u):\lambda>0,p\succ
u\}\subseteq \tau(\rdesirs)$, so we concentrate on the converse
inclusion.

Consider $\pi(u)+f$. Then there are $\lambda>0$ and $p_1\succ p_2$ such that $f=\lambda\pi(p_1-p_2)$. Remember that $|\valuesy|\geq2$ by assumption (see Remark~\ref{rem:assumptionz}); whence, if we take $\mu\in(0,\frac{1}{\lambda|\valuesy|})$, we obtain that $[u+\mu\lambda(p_1-p_2)](\omega,x)\in[0,1]$ for all $(\omega,x)\in\pspace\times\valuesy$. Since it holds is addition that $\sum_{x\in\valuesy}[p_1-p_2](\omega,x)=0$ for all $\omega\in\pspace$, we have also that $\sum_{x\in\valuesy}[u+\mu\lambda(p_1-p_2)](\omega,x)=\sum_{x\in\valuesy}u(\omega,x)=1$ for all $\omega\in\pspace$. We deduce that $p\coloneqq u+\mu\lambda(p_1-p_2)\in\actsy$.

Given that $p-u=\mu\lambda(p_1-p_2)$ and that $p_1\succ p_2$, we
obtain from Proposition~\ref{prop:nau} that $p\succ u$. And since
$\pi(u)+f=\pi(u)+\frac{1}{\mu}\pi(p-u)$, we deduce that
$\pi(u)+f\in\{\pi(u)+\lambda\pi(p-u):\lambda>0,p\succ u\}$.
$\qedhere$
\end{itemize}
\end{proof}

\begin{proof}[{\bf Proof of Theorem~\ref{prop:rdesirs-archimedean}}]
Coherence follows from Proposition~\ref{prop:rdesirs}. In order to
prove that $\rdesirs$ is a set of strictly desirable gambles, it is
enough to show that~\ref{D0'} holds, thanks to
Lemma~\ref{lem:trans}.

Consider $g\in \tau(\rdesirs)\setminus
\tau(\gambles^+(\pspace\times\valuesn))$. Then, according to Lemma~\ref{lem:trans}, $g=\pi(u)+f$, with $f\coloneqq\lambda\pi(p-u)\in\rdesirs$, $\lambda>0$, and for some $p\in\actsy$ such that $p\succ u, p\ndom u$.

Since $u\succ z$, it follows from~\ref{wA3} that there is some
$\beta\in(0,1)$ such that $\beta p+(1-\beta)z\succ u$, whence
$$\pi(\beta (p-u)-(1-\beta)(u-z))=\frac{\beta f}{\lambda}-(1-\beta)\pi(u)\in\rdesirs.$$ Since $\rdesirs$ is a cone, this
means that $f-\frac{\lambda(1-\beta)}{\beta} \pi(u) \in\rdesirs$ for
some $\beta\in (0,1)$, or, in other words, that $f-\delta\in\rdesirs$, with $\delta\coloneqq\frac{\lambda(1-\beta)}{\beta|\valuesy|}>0$. We deduce that $g-\delta=\pi(u)+f-\delta\in \tau(\rdesirs)$, whence~\ref{D0'} holds and as a consequence $\rdesirs$ is a coherent
set of strictly desirable gambles.
\end{proof}

\begin{proof}[{\bf Proof of Theorem~\ref{prop:rdesirsInv}}]
\begin{enumerate}

\item[(i)] The direct implication is trivial. For the converse, we proceed with a constructive proof.

The case $f=0$ is trivial so we assume that $f\neq0$. We use
$\lambda\coloneqq\sup_{\omega\in\pspace}\sum_{x\in\valuesy}
|f(\omega,x)|$ to normalise $f$ through $f\coloneqq f/\lambda$, so
that each of the previous sums is not greater than one. At this
point for each $\omega\in\pspace$, we need to represent
$f(\omega,\cdot)$ as the difference of two mass functions on
$\values$, say $m_\omega^+ - m_\omega^-$. We proceed as follows:
\begin{eqnarray*}
&&m_\omega^+(x)\coloneqq
\begin{cases}
0&\quad\text{ if }x\neq z\text{ and }f(\omega,x)\leq0,\\
f(\omega,x)&\quad\text{ if }x\neq z\text{ and }f(\omega,x)>0,\\
1-\sum_{x\in\valuesn}m_\omega^+(x)&\quad\text{ if }x=z.
\end{cases}\\
&&m_\omega^-(x)\coloneqq
\begin{cases}
0&\quad\text{ if }x\neq z\text{ and }f(\omega,x)\geq0,\\
-f(\omega,x)&\quad\text{ if }x\neq z\text{ and }f(\omega,x)<0,\\
1-\sum_{x\in\valuesn}m_\omega^-(x)&\quad\text{ if }x=z.
\end{cases}\\
\end{eqnarray*}
It is trivial that the difference $m_\omega^+ - m_\omega^-$
reproduces $f(\omega,x)$ for all $x\neq z$. In the case $x=z$, by
assumption $f(\omega,z)=-\sum_{x\neq z} f(\omega,x)$, and since
\begin{align*}
-\sum_{x\neq z} f(\omega,x) &= \sum_{x\neq z:f(\omega,x)<0}
-f(\omega,x) - \sum_{x\neq z:f(\omega,x)>0} f(\omega,x) \\ &=
(1-\sum_{x\neq z:f(\omega,x)>0} f(\omega,x)) - (1-\sum_{x\neq
z:f(\omega,x)<0} -f(\omega,x)),
\end{align*}
we have also that $f(\omega,z)=m_\omega^+(z)-m_\omega^-(z)$. Since
this holds for all $\omega\in\pspace$, $f$ can be represented as the
difference of two horse lotteries.

Finally, it is enough to multiply $f$ by $\lambda$ to go back to the
original, unnormalised, gamble, which is then an element of
$\sdiff(\actsy)$.

\item[(ii)] To any $\lambda(p-q)\in\sdiff(\actsy)$, we associate the gamble $f\coloneqq\lambda\pi(p-q)$. Let us show that such a map is injective.

Assume that there are $\lambda_1(p_1-q_1)$ and $\lambda_2(p_2-q_2)$
that are both mapped to the same gamble $f$. This makes it trivial
that
$\lambda_1(p_1-q_1)(\omega,x)=f(\omega,x)=\lambda_2(p_2-q_2)(\omega,x)$
for all $\omega\in\pspace$ and $x\neq z$. In the case of $x=z$, it
is enough to write
\begin{equation*}
\lambda_1(p_1-q_1)(\omega,z)=-\lambda_1\sum_{x\in\valuesn}
(p_1-q_1)(\omega,x)=-\lambda_2\sum_{x\in\valuesn}
(p_2-q_2)(\omega,x)=\lambda_2(p_2-q_2)(\omega,z)
\end{equation*}
for every $\omega\in\pspace$. As a consequence,
$\lambda_1(p_1-q_1)=\lambda_2(p_2-q_2)$.

Concerning surjectivity, if we take a gamble
$f\in\gambles(\pspace\times\valuesn)$, we can always extend it to a
gamble $g\in\gambles(\pspace\times\valuesy)$ via
\begin{equation*}
g(x)\coloneqq
\begin{cases}
f(x)\quad&\text{ if }x\neq z,\\
-\sum_{x\in\valuesn}f(x)\quad&\text{ if }x=z.
\end{cases}
\end{equation*}
Applying the first statement, we deduce that there are acts $p,q$
and $\lambda>0$ such that $g=\lambda(p-q)$, whence
$f=\lambda\pi(p-q)$.

\item[(iii)] Let $\cone\coloneqq\pi_2^{-1}(\rdesirs)$. By the second
statement,
$\emptyset\neq\cone\subseteq\sdiff(\actsy)\setminus\{0\}$. To show
that $\cone$ is convex, consider $p_1,p_2,q_1,q_2\in\actsy$ and
$\lambda_1,\lambda_2>0$ such that
$h_1\coloneqq\lambda_1(p_1-q_1),h_2\coloneqq\lambda_2(p_2-q_2)\in\cone$.
By construction, $\gamma_1\pi(h_1)+\gamma_2\pi(h_2)\in\rdesirs$ for
all $\gamma_1,\gamma_2>0$, whence
$\pi_2^{-1}(\gamma_1\pi(h_1)+\gamma_2\pi(h_2))=\gamma_1 h_1+\gamma_2
h_2\in\cone$.

Using Proposition~\ref{prop:one2one}, we obtain that $\rdesirs$ induces the following coherent preference relation:
\begin{equation*}
p\succ q\iff\lambda(p-q)\in\cone\iff\lambda\pi(p-q)\in\rdesirs.
\end{equation*}
To show that $p\succ z$ for all $p\in\actsy,p\neq z$, it is then
enough to consider that $\pi(p-z)$ is a positive gamble.

We are left to show that if $\rdesirs$ is strictly desirable, then
$\succ$ is weakly Archimedean. Let us consider $p\succ q,p\ndom q$.
By~\ref{D0}, there is $\delta>0$ such that
$\pi(p-q)-\delta\in\rdesirs$. Choose $\beta\in(0,1)$ so that
$\pi((1-\beta)p)\leq\delta$. Then $\pi(\beta p -
q)\geq\pi(p-q)-\delta$, whence $\pi(\beta p-q)\in\rdesirs$, which
implies that $\beta p + (1-\beta)z\succ q$. $\qedhere$
\end{enumerate}
\end{proof}

\begin{proof}[{\bf Proof of Theorem~\ref{thm:equiv}}]
As usual, given a coherent preference relation $\succ$, we map it
into the set of gambles defined by $\rdesirs\coloneqq\pi(\cone)$,
where $\cone\coloneqq\{\lambda(p-q):p\succ q,\lambda>0\}$. Set
$\rdesirs$ is coherent as a consequence of
Proposition~\ref{prop:rdesirs}. That the map is injective is trivial
considered that $\pi_2$ (the restriction of $\pi$ to scaled differences of
acts) is invertible. That it is surjective follows from
Theorem~\ref{prop:rdesirsInv}. The equivalence of the weak
Archimedean condition and strict desirability follows from
Theorems~\ref{prop:rdesirs-archimedean}
and~\ref{prop:rdesirsInv}. Finally, the relation and the set of
strictly desirable gambles give rise to the same representation because $\cone$ and $\rdesirs$ induce the same set of linear previsions.
\end{proof}

\begin{proof}[{\bf Proof of Theorem~\ref{thm:ext}}]
\begin{itemize}
\item[(i)] Let us consider the direct implication.
If $p\dom q$, then $p\succ q$ holds as a consequence of
Proposition~\ref{prop:domsucc} and the fact that $z$ is the worst
outcome for $\succ$. On the other hand, if $r\succ s$, then using
Definition~\ref{def:ext} we obtain that
$\pi_1^{-1}(r)\succ\pi_1^{-1}(s)$, which is equivalent to
$\pi_2^{-1}(r-s)\in\cone$, where $\cone$ is the cone induced by
$\succ$. As a consequence, $p-q=\pi_2^{-1}(r-s)\in\cone$, whence
$p\succ q$.

For the converse implication, it is enough to note that $p\dom z$
for all $p\neq z$, so that $p\succ z$; and that, on the other hand,
$r\succ s$ and $\pi(\pi_{1}^{-1}(r)-\pi_{1}^{-1}(s))=r-s$ imply that $\pi_{1}^{-1}(r)\succ\pi_{1}^{-1}(s)$.

\item[(ii)] Remember that any extension has a corresponding convex cone $\cone$
(which is not empty because of the worst outcome). We know by the
first point in the theorem that $\pi_2^{-1}(r-s)\in\cone$ if $r\succ
s$ and moreover that $t-z\in\cone\cup\{0\}$ for any $t\in\actsy$.
Given that $\cone$ is convex, then also
$\lambda_1\pi_2^{-1}(r-s)+\lambda_2(t-z)\in\cone$ for all
$\lambda_1,\lambda_2>0$. In other words, if there are $p,q\in\actsy$
and $\lambda_3>0$ such that
$\lambda_1\pi_2^{-1}(r-s)+\lambda_2(t-z)=\lambda_3(p-q)$, then
$p\succ q$. Equivalently, $p\succ q$ if $\pi(p-q)\geq\lambda(r-s)$,
$r,s\in\actsn,r\succ s,\lambda>0$, where the inequality is obtained
by letting $\lambda\coloneqq\lambda_1/\lambda_3$ and observing that
$\pi(\lambda_2(t-z))=\lambda_2\pi(t)$ spans the entire set
$\{f\geq0\}$ when $t\in\actsy$. This allows us to
reformulate~\eqref{eq:ext} in an equivalent way as follows:
\begin{equation}
(\forall p,q\in\actsy)~p\succ q\text{ whenever }p\dom q\text{ or }\pi(p-q)\geq\lambda(r-s)\text{ for some }r,s\in\actsn,r\succ s, \lambda>0.\label{eq:ext2}
\end{equation}

At this point we consider $\cone'\coloneqq\{\lambda(r-s):r,s\in\actsn,r\succ
s, \lambda>0\}$. Assume for the moment that $\cone'$ is not empty.
Then $\cone'$ is a convex cone that does not contain the origin. If
we regard $\cone'$ as a set of gambles in
$\gambles(\pspace\times\valuesn)$, then $\cone'$ avoids partial loss
because it does not contain zero while being convex. On the other
hand, it is not coherent because it does not contain $\gambles^+$.
We can therefore take its natural extension (see Eq.~\eqref{eq:posi-nex}) so as to obtain the smallest, or
least-committal, coherent set of gambles that contains $\cone'$:
\begin{equation*}
\rdesirs\coloneqq\{f\geq\lambda(r-s):r,s\in\actsn,r\succ s, \lambda>0)\}\cup\gambles^+.
\end{equation*}
Observe that if $\cone'=\emptyset$, then its natural extension is
$\rdesirs=\gambles^+$, which is the vacuous set of gambles;
therefore $\rdesirs$ represents the correct extension also in this
special case.

Given that $\rdesirs$ is a coherent set of gambles on $\pspace\times\valuesn$, we can use Theorem~\ref{prop:rdesirsInv} to create a new coherent preference relation $\succ$ on $\actsy\times\actsy$, with corresponding cone $\cone\coloneqq\pi^{-1}(\rdesirs)$, for which $z$ is the worst outcome. Recalling that $p\succ q\iff\lambda\pi(p-q)\in\rdesirs$ for some $\lambda>0$, it is then trivial to prove that the new relation $\succ$ contains all the preferences expressed by~\eqref{eq:ext2} and only them. As a consequence it is the minimal extension of $\succ$ on $\actsn\times\actsn$ to the worst outcome, because any extension has to express at least those preferences.

\item[(iii)] Assume first of all that $\cone\neq\emptyset$, and let us
prove that $\lpr(f)=0$ for every $f\in\cone$. This will imply that
$\rdesirs$ is not a coherent set of strictly desirable gambles,
because $\cone\subseteq\rdesirs\setminus\gambles^+$.

Since $\cone\subseteq\rdesirs$, it follows from Eq.~\eqref{eq:lprR}
that $\lpr(f)\geq 0$ for every $f\in\cone$. On the other hand, if
$\lpr(f)>0$ for some $f\in\cone$, there should be some $\epsilon>0$
such that $f-\epsilon\in\rdesirs$; but then we would have that for
any $\omega\in\values$, $\sum_{x\in\values}
(f-\epsilon)(\omega,x)=-\epsilon |\values|<0$, while by construction
any gamble $g\in\rdesirs$ satisfies $\sum_{x\in\values}
g(\omega,x)\geq 0$. Thus, $\lpr(f)=0$ and as a consequence,
$\rdesirs$ is not a set of strictly desirable gambles.

The converse implication, where $\cone=\emptyset$, follows from the
fact that in this case $\rdesirs=\gambles^+$, which is a (trivial)
set of strictly desirable gambles. $\qedhere$
\end{itemize}
\end{proof}

\begin{proof}[{\bf Proof of Corollary~\ref{cor:Aext}}]
Denote by $\lpr_1$ the coherent lower prevision associated with
$\rdesirs_1$ and by $\lpr$ the lower prevision induced by
$\rdesirs$. Consider $f\in\cone$. We know by point~(iii) in
Theorem~\ref{thm:ext} that $\lpr(f)=0$. Considered in addition that
$\rdesirs_1$ includes $\rdesirs$, it is strictly desirable and that
$f\notin\gambles^+$, we obtain that $\lpr_1(f)>0$. Written in other
words:
\begin{equation*}
(\forall f \in\cone) \lpr(f)=0<\lpr_1(f).
\end{equation*}

\noindent It follows that $\solp(\lpr_1) \subsetneq \solp(\lpr)$ and
that any linear prevision $\pr_1\geq \lpr_1$ satisfies $\pr_1(f)>0$
for every $f\in\cone$. Fix now $f\in\cone$, consider
$\pr_1\in\solp(\lpr_1)$ and $\pr\in\solp(\lpr)$ such that
$\pr_1(f)=\lpr_1(f)$ and $\pr(f)=\lpr(f)$. Take $\pr_2:=\frac{1}{2}
\pr_1+\frac{1}{2}\pr \in \solp(\lpr)$, and let $\solp_2$ be the
convex hull of $\solp(\lpr_1)\cup \{\pr_2\}$. Its lower envelope
$\lpr_2$ is a lower prevision that lies between $\lpr$ and $\lpr_1$
and satisfies in particular $\lpr_2(f)=\lpr_1(f)/2$. As a
consequence, its associated set of strictly desirable gambles
$\rdesirs_2$ is a proper subset of $\rdesirs_1$ and moreover it
properly includes $\rdesirs$ (because $\lpr_2(g)>0$ for all
$g\in\cone$ by construction while $\lpr(g)=0$).
\end{proof}

\begin{proof}[{\bf Proof of Proposition~\ref{pr:charac-archimedean}}]
\begin{enumerate}
\item[(i)] \begin{itemize}
\item[($\Rightarrow$)] Consider $f,g \in \cone $. Then we can
express them as $f=\lambda_1(p_1-q_1), g=\lambda_2 (p_2-q_2)$, for
$\lambda_1,\lambda_2>0, p_1\succ q_1, p_2\succ q_2$. Assume for the
time being that $\lambda_1=\lambda_2=1$. By~\ref{A2},
\begin{equation*}
 \alpha p_1+ (1-\alpha) p_2 \succ \alpha q_1 +(1-\alpha) p_2 \succ
 \alpha q_1 +(1-\alpha) q_2
\end{equation*}
for any $\alpha \in (0,1)$. Applying now~\ref{A3}, we deduce the
existence of $\beta_1, \beta_2 \in (0,1)$ such that
\begin{align*}
 &\beta_1(\alpha p_1+ (1-\alpha) p_2)+ (1-\beta_1) (\alpha q_1 +(1-\alpha)
 q_2) \succ \alpha q_1 +(1-\alpha) p_2 \text{ and }\\
 & \alpha q_1 +(1-\alpha) p_2 \succ \beta_2(\alpha p_1+ (1-\alpha) p_2)+ (1-\beta_2) (\alpha q_1 +(1-\alpha)
 q_2),
\end{align*}
from which we deduce that
\begin{equation*}
 \alpha \beta_1 (p_1-q_1)- (1-\alpha)(1-\beta_1)(p_2-q_2) \in
 \cone  \text{ and }
 (1-\alpha) (1-\beta_2) (p_2-q_2)- \alpha \beta_2(p_1-q_1) \in
 \cone ;
\end{equation*}
normalizing we find $\gamma_1, \gamma_2 \in (0,1)$ such that
\begin{equation*}
 \gamma_1 f-(1-\gamma_1) g \in \cone  \text{ and } \gamma_2 g-(1-\gamma_2) f \in {\mathcal
 C}.
\end{equation*}
Next, if either $\lambda_1$ or $\lambda_2 \neq 1$, we deduce from
the reasoning above that there is some $\beta\in (0,1)$ such that
\begin{equation*}
 \beta \frac{f}{\lambda_1}-(1-\beta) \frac{g}{\lambda_2} \in
 \cone ,
\end{equation*}
whence, given $k\coloneqq\beta+(1-\beta) \frac{\lambda_1}{\lambda_2}$, it
holds that
\begin{equation*}
 \beta\frac{f}{\lambda_1}\frac{\lambda_1}{k}- (1-\beta)\frac{g}{\lambda_2}\frac{\lambda_1}{k} \in
 \cone ,
\end{equation*}
and $\beta_1\coloneqq\frac{\beta}{k}\in (0,1)$. Analogously, we can find
$\beta_2 \in (0,1)$ such that $\beta_2 g-(1-\beta_2) f\in (0,1)$.

\item[($\Leftarrow$)] Given $p\succ q \succ r$, it follows from
Eq.~\eqref{eq:diffs-hl-prefs} that $f\coloneqq p-q, g\coloneqq
q-r\in \cone $. Applying~\ref{A3}, there are
$\beta_1,\beta_2 \in (0,1)$ such that $\beta_1 f-(1-\beta_1)g \in
\cone , \beta_2 g-(1-\beta_2)f \in \cone $. But taking
into account that
\begin{equation*}
 \beta_1 f-(1-\beta_1)g=\beta_1 p+ (1-\beta_1) r -q\text{ and }
 \beta_2 g-(1-\beta_2)f=q-\beta_2 r- (1-\beta_2) p,
\end{equation*}
we deduce from Proposition~\ref{prop:charC} that $\beta_1 p+
(1-\beta_1) r\succ q \succ \beta_2 r+ (1-\beta_2) p$.
\end{itemize}

\item[(ii)] \begin{itemize}
\item[$(\Rightarrow)$] Consider $f\in \cone , g\in\sdiff$. Then there are $\lambda_1,\lambda_2 >0$, $p\succ q$ and
$r,s \in {\mathcal H}$ such that $f=\lambda_1 (p-q),
g=\lambda_2 (r-s)$. Assume first of all that
$\lambda_1=\lambda_2=1$. Then there is some $\beta\in (0,1)$ such
that $\beta p+(1-\beta) r \succ \beta q + (1-\beta) s$, whence
$\beta f+ (1-\beta) g \in \cone $.

Now, if either $\lambda_1,\lambda_2 \neq 1$, it follows from the
above reasoning that there is some $\beta\in (0,1)$ such that $\beta
\frac{f}{\lambda_1}+(1-\beta) \frac{g}{\lambda_2}\in \cone $,
whence $\beta f+ \frac{(1-\beta) \lambda_1}{\lambda_2} g \in
\cone $, and, by considering the normalising constant $k\coloneqq\beta+ \frac{(1-\beta)
\lambda_1}{\lambda_2}$, it follows that, given
$\beta'\coloneqq \frac{\beta}{k}\in (0,1)$, $\beta' f+(1-\beta')
g\in\cone $.

\item[$(\Leftarrow)$] Consider $p\succ q$ and take $r,s\in{\mathcal
H}$. Then $f\coloneqq p-q\in\cone , g\coloneqq r-s\in\sdiff$, whence
there is some $\beta\in (0,1)$ such that $\beta f+(1-\beta) g=\beta
p+(1-\beta) r- \beta q-(1-\beta) s \in \cone $. Applying
Proposition~\ref{prop:charC}, we deduce that $\beta p+(1-\beta) r
\succ \beta q+ (1-\beta) s$.
\end{itemize}

\item[(iii)] Consider $f,g\in \cone $. Since $-g,-f\in\sdiff$,
we deduce from the second statement that there are $\beta_1,\beta_2
\in (0,1)$ such that $\beta_1 f- (1-\beta_1) g \in \cone $ and
$\beta_2 g- (1-\beta_2) f \in \cone $. Applying the first
statement, we deduce that $\succ$ satisfies~\ref{A3}. This completes
the proof. $\qedhere$
\end{enumerate}
\end{proof}

\begin{proof}[{\bf Proof of Proposition~\ref{pr:equiv-Archs-z}}]
\begin{enumerate}
{\setlength\itemindent{15pt}\item[(i $\Rightarrow$ ii)] This case
has been proved in Proposition~\ref{pr:charac-archimedean}.}
\end{enumerate}

\begin{enumerate}
\item[(i $\Leftarrow$ ii)] Consider $p,q,r,s\in\actsy$ such that
$p\succ q$. Observe first of all that the following is true for some
$\alpha\in(0,1)$:
\begin{equation}
\alpha p + (1-\alpha)z\succ\alpha q + (1-\alpha)s.\label{eq:withz}
\end{equation}
This is trivial if $s=z$ and otherwise it follows from
Proposition~\ref{pr:charac-archimedean} through
$\alpha(p-q)-(1-\alpha)(s-z)\in\cone$. It shows in particular that
the case $r=z$ holds. Therefore assume that $r\neq z$.
Apply~\ref{A2} to obtain that $\alpha p + (1-\alpha)r\succ \alpha
p+(1-\alpha)z$. Combining this with~\eqref{eq:withz}, we conclude
that $\alpha p + (1-\alpha)r\succ\alpha q + (1-\alpha)s$.

\item[(ii $\Rightarrow$ iii)] By using the equivalence between~\ref{wA3} and~\ref{A3p} (see Proposition~\ref{prop:A3'-wA3}), it is trivial that~\ref{A3} implies~\ref{wA3}. Thus it is enough to have that $\lpr(f)>0$ for all $f\in\gambles^+(\pspace\times\values)$; this is equivalent to showing that for every
$(\omega,x)\in\pspace\times\values$ there is some $\delta>0$ such
that $I_{\{(\omega,x)\}}-\delta\in\rdesirs$.

Let $u$ be the uniform act on $\pspace\times\valuesy$, and let $p$
be the horse lottery given by $p\coloneqq u+\frac{1}{|\valuesy|}
(I_{\{(\omega,x)\}}-I_{\{(\omega,z)\}})$. Then it follows that $p\succ u
\succ z$. Applying~\ref{A3}, we deduce that there is some $\alpha\in
(0,1)$ such that $\alpha p +(1-\alpha) z \succ u$, whence
\begin{equation*}
 \pi(\alpha p +(1-\alpha) z - u)=(2\alpha-1) \frac{1}{|\valuesy|}
I_{\{(\omega,x)\}}- (1-\alpha) \frac{1}{|\valuesy|} I_{\{(\omega,x)\}^c} \in
\rdesirs.
\end{equation*}
From this it follows that it must be $2\alpha-1>0$, and applying the
non-negative homogeneity of $\rdesirs$ we deduce that there is some
$\delta>0$ such that $I_{\{(\omega,x)\}}-\delta I_{\{(\omega,x)\}^c}
\in \rdesirs$. Consider now a natural number $N>\delta$. Then
\begin{equation*}
 I_{\{(\omega,x)\}}-\frac{\delta}{N+1}=\left(\frac{1}{N+1} I_{\{(\omega,x)\}}-\frac{\delta}{N+1}
 I_{\{(\omega,x)\}^c}\right)+\left(\frac{N}{N+1}-\frac{\delta}{N+1}\right)I_{\{(\omega,x)\}}\in\rdesirs.
\end{equation*}

\item[(ii $\Leftarrow$ iii)] Thanks to Proposition~\ref{prop:A3'-wA3}, we know that~\ref{wA3} implies~\ref{A3} as long as the latter is applied to non-trivial preferences. Let us show that the positivity condition implies the case with objective preferences involved. In particular, let us consider $p\dom q\succ r$ (the remaining cases are analogous). By the super-additivity and positive homogeneity of coherent lower previsions, and for any $\alpha\in(0,1)$, we have that $\lpr(\alpha\pi(p-q) + (1-\alpha)\pi(r-q)) \geq \alpha \lpr(\pi(p-q)) + (1-\alpha) \lpr(\pi(r-q))$; and this is positive for small enough $1-\alpha$, considered that $\lpr(\pi(p-q))>0$ by assumption. This implies that $\alpha\pi(p-q) + (1-\alpha)\pi(r-q)\in\rdesirs$, or, in other words, that $\alpha p+ (1-\alpha)r\succ\alpha q+ (1-\alpha)q=q$.
\end{enumerate}
\end{proof}

\begin{lemma}\label{le:charac-superset}
Assume $\pspace$ is finite, and let $\succ$ be a coherent
relation on ${\mathcal H}\times{\mathcal H}$. Let $\rdesirs$, from~\eqref{eq:extR}, be
its associated coherent set of desirable gambles, and $\lpr$ its
corresponding coherent lower prevision.
\begin{enumerate}
 \item[(i)] $\rdesirs$ has an open superset $\iff(\forall
 f\in\cone) \upr(f)>0$.
 \item[(ii)] If $\succ$ satisfies~\ref{A3'}, then: $(\exists f\in\cone)\upr(f)=0\Rightarrow
(\forall g\in{\mathcal A}) \lpr(g)=\upr(g)=0$.
\end{enumerate}
\end{lemma}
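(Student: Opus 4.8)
The plan is to treat the two parts separately, both resting on two facts already available: $\lpr(f)=0$ for every $f\in\cone$ (Theorem~\ref{thm:ext}(iii)), so that every linear prevision $\pr\in\solp(\lpr)$ satisfies $\pr(f)\ge 0$ on $\cone$; and the elementary remark that every $h\in\rdesirs$ of~\eqref{eq:extR} has $\sum_{x\in\valuesn}h(\omega,x)\ge 0$ for all $\omega$ (it either dominates a scaled difference of horse lotteries, whose $\omega$-sums vanish, or is a positive gamble), so that for $g\in\sdiff$ the relation $g-\mu\in\rdesirs$ forces $\mu\le 0$ and hence $\lpr(g)\le 0\le\upr(g)$.

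For part~(i), implication ($\Rightarrow$): if $\rdesirs$ has a coherent strictly desirable superset $\rdesirs_1$ with induced lower prevision $\lpr_1$, I would note that $\lpr_1\ge 0$ on $\rdesirs$ (positive gambles by~\ref{C1}, the remaining ones by the openness condition~\ref{D0}), hence $\lpr_1\ge\lpr$, $\solp(\lpr_1)\subseteq\solp(\lpr)$, $\upr_1\le\upr$; since any $f\in\cone$ lies in $\rdesirs\setminus\gambles^+$, strict desirability gives $\lpr_1(f)>0$, so $\upr(f)\ge\upr_1(f)\ge\lpr_1(f)>0$. For implication ($\Leftarrow$): assuming $\upr(f)>0$ on $\cone$, for each $f\in\cone$ I would pick $\pr_f\in\solp(\lpr)$ with $\pr_f(f)>0$ and set $W_f:=\{g\in\cone:\pr_f(g)>0\}$, a relatively open neighbourhood of $f$ in $\cone$. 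Since $\pspace$ and $\valuesn$ are finite, $\cone$ is a Lindel\"of subspace of the finite-dimensional space $\gambles(\pspace\times\valuesn)$, so $\{W_f\}_f$ admits a countable subcover $\cone=\bigcup_i W_{f_i}$; then $\pr^{\ast}:=\sum_i 2^{-i}\pr_{f_i}$ is a limit of convex combinations of points of $\solp(\lpr)$, hence a linear prevision in $\solp(\lpr)$, and for $g\in\cone$ every summand $\pr_{f_i}(g)$ is $\ge 0$ with at least one $>0$, so $\pr^{\ast}(g)>0$. The coherent strictly desirable set $\rdesirs_1$ attached to $\pr^{\ast}$ via~\eqref{eq:Rlpr} then contains $\gambles^+$, and every $h\in\rdesirs\setminus\gambles^+$ dominates some $f\in\cone$, whence $\pr^{\ast}(h)\ge\pr^{\ast}(f)>0$ and $h\in\rdesirs_1$; so $\rdesirs\subseteq\rdesirs_1$ is the desired open superset.

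For part~(ii), I would fix $f_0\in\cone$ with $\upr(f_0)=0$ (so also $\lpr(f_0)=0$), and use that, by Proposition~\ref{pr:charac-archimedean}(ii), \ref{A3'} is equivalent to: for all $f\in\cone$ and $g\in\sdiff$ there is $\beta\in(0,1)$ with $\beta f+(1-\beta)g\in\cone$. Step one upgrades the hypothesis to all of $\cone$: given $h\in\cone$, applying the equivalence with $f=f_0$ and $g=-h\in\sdiff$ yields $f':=\beta f_0-(1-\beta)h\in\cone$, so $\lpr(f')=0$; rewriting $h=\frac{\beta}{1-\beta}f_0-\frac{1}{1-\beta}f'$ and using subadditivity and positive homogeneity of $\upr$ gives $\upr(h)\le\frac{\beta}{1-\beta}\upr(f_0)-\frac{1}{1-\beta}\lpr(f')=0$, while $\upr(h)\ge\lpr(h)=0$, so $\upr(h)=0$ for every $h\in\cone$. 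Step two: for arbitrary $g\in\sdiff$, applying the equivalence with $f=f_0$ gives $h:=\beta f_0+(1-\beta)g\in\cone$, so $g=\frac{1}{1-\beta}h-\frac{\beta}{1-\beta}f_0$ and $\upr(g)\le\frac{1}{1-\beta}\upr(h)-\frac{\beta}{1-\beta}\lpr(f_0)=0$; combined with $\upr(g)\ge 0$ from the opening observation this gives $\upr(g)=0$, and finally $\lpr(g)=-\upr(-g)=0$ since $-g\in\sdiff$ too. Hence $\lpr(g)=\upr(g)=0$ for all $g\in\sdiff$.

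I expect the nonroutine points to be the following. In part~(i) the reverse implication is the delicate one: one must realise that the countable average $\pr^{\ast}$ of the witnesses is already a single prevision strictly positive on all of $\cone$, which works precisely because membership in $\solp(\lpr)$ forces non-negativity on $\cone$ (so no cancellation can occur among the summands), and that only Lindel\"ofness of $\cone$—not compactness—is needed. In part~(ii) the crux is the bootstrap that turns \emph{$\upr=0$ at one point of $\cone$} into \emph{$\upr=0$ on all of $\cone$} and then into \emph{$\upr=0$ on $\sdiff$}; this hinges on writing an arbitrary zero-$\omega$-sum gamble as the difference $\frac{1}{1-\beta}h-\frac{\beta}{1-\beta}f_0$ of two elements of $\cone$ via strong Archimedeanity, and then annihilating the correction terms using $\lpr\equiv 0$ on $\cone$.
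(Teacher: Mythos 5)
Your argument is correct, and for the crux of part~(i) it takes a genuinely different route from the paper. For the converse implication of~(i), the paper exhibits a single linear prevision that is strictly positive on $\cone$ by taking any point $\pr$ in the relative interior of $\solp(\lpr)$ (viewed as a compact convex subset of a Euclidean space, since $\pspace$ and $\values$ are finite) and showing by a perturbation argument inside $B(\pr;\epsilon)\cap\aff(\solp(\lpr))$ that $\pr(f)=0$ for some $f\in\cone$ is impossible; you instead manufacture the witness as a countable convex mixture $\pr^{\ast}=\sum_i 2^{-i}\pr_{f_i}$ of previsions $\pr_{f_i}(f_i)>0$, using second countability of the finite-dimensional ambient space to extract a countable subcover of $\cone$ by the relatively open sets $W_f$, and the fact that $\lpr\equiv 0$ on $\cone$ (Theorem~\ref{thm:ext}(iii)) to rule out cancellation among the summands. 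Both routes finish identically: the strictly desirable set~\eqref{eq:Rlpr} attached to the positive prevision contains $\rdesirs$ because every element of $\rdesirs\setminus\gambles^+$ from~\eqref{eq:extR} dominates an element of $\cone$. The paper's relative-interior argument needs no covering or countability considerations and produces a canonical witness in one stroke; your mixture argument is more elementary (no affine hulls or interior structure of the credal set) and isolates exactly where the non-negativity on $\cone$ enters. Part~(ii) is essentially a variant of the paper's proof: the paper applies Proposition~\ref{pr:charac-archimedean}(ii) once to the degenerate $f$ and $-g$ and concludes with the mixed inequality $\lpr(a+b)\le\upr(a)+\lpr(b)$, whereas you first bootstrap $\upr\equiv 0$ on all of $\cone$ and then push to $\sdiff$ using only subadditivity of $\upr$ and conjugacy; the two rest on the same ingredients. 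Two trivial points to patch in your write-up: if the extracted subcover is finite, the weights $2^{-i}$ do not sum to one and must be renormalised (or replaced by uniform weights) for $\pr^{\ast}$ to be a linear prevision; and the degenerate case $\cone=\emptyset$ (empty cover) should be dispatched separately, e.g.\ by noting that then $\rdesirs=\gambles^+$, so the strictly desirable set induced by any element of $\solp(\lpr)$, or $\rdesirs$ itself, already serves as the open superset.
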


\begin{proof}
\begin{enumerate}
 \item[(i)] We begin with a direct implication. Let $\rdesirs'$ be an
 open superset of $\rdesirs$, and let $\lpr'$ be its associated
 coherent lower prevision. The inclusion
 $\rdesirs\subseteq\rdesirs'$ implies that
 $\lpr(f)\leq\lpr'(f)\leq\upr'(f)\leq\upr(f)$ for any gamble $f$.
 Moreover, for any $f\in\rdesirs\setminus\gambles^+ \subseteq
 \rdesirs'\setminus\gambles^+$, it holds that $\lpr'(f)>0$ because
 $\rdesirs'$ is a coherent set of strictly desirable gambles, and as
 a consequence also $\upr(f)>0$ for every $f\in\cone$.

 Conversely, taking into account that linear previsions on
 $\gambles(\pspace\times\values)$ are characterized by their restrictions to
 events, the credal set $\solp(\lpr)$ can be regarded as a closed and convex subset of
 the Euclidean space.

 Take $\pr$ in the relative interior of $\solp(\lpr)$.
 Then there is some $\epsilon>0$ such that $B(\pr;\epsilon)\cap
 \aff(\solp(\lpr))\subseteq\solp(\lpr)$, where $\aff(\solp(\lpr))$
 denotes the affine expansion of $\solp(\lpr)$. We are going to
 prove that $\pr(f)>0$ for any $f\in\cone$. Fix $f\in\cone$. Using point~(iii) in Theorem~\ref{thm:ext},
 $\lpr(f)=0<\upr(f)$. Assume ex-absurdo that $\pr(f)=0$.

 Then for any other $\pr_1\neq\pr$ in $B(\pr;\epsilon)$ we can
 consider $\pr_2\coloneqq2\pr-\pr_1\in \aff(\solp(\lpr))\cap
 B(\pr;\epsilon)$: to prove that indeed
 $\pr_2\in B(\pr;\epsilon)$, note that
 $|\pr_2(\{\omega,x\})-\pr(\{\omega,x\})|=|\pr(\{\omega,x\})-\pr_1(\{\omega,x\})|\leq
 \epsilon$. Now, $\pr(f)=\lpr(f)=0$
 implies that $\pr_1(f)=\pr_2(f)=0$, and with this reasoning we
 would conclude that $\pr'(f)=0$ for every $\pr'\in B(\pr;\epsilon)$.

 However, since $f\neq 0$ because $\succ$ is irreflexive, there must
 be some $\omega\in\pspace$ such that $f(\omega,\cdot)$ is not constant on $0$, and then we deduce that there must be
 some $x_i\neq x_j\in\values$ such that
 \begin{equation}\label{eq:ab-dicotomy}
  f(\omega,x_i)\neq f(\omega,x_j) \text{ and } \max\{\pr(\{(\omega,x_i)\}),\pr(\{(\omega,x_j)\})\}>0;
 \end{equation}
 to prove this, note that there must be some $x^*\in\values$ such that $\pr(\{(\omega,x^*)\})>0$, and then if Eq.~\eqref{eq:ab-dicotomy} did not hold we would
 obtain that $f(\omega,x^*)=f(\omega,x')$ for every $x'\neq x^*$; since $\sum_{x\in\values} f(\omega,x)=0$, this would mean that $f(\omega,\cdot)$ is constant on $0$, a contradiction.

 Consider then $x_i,x_j$ in the conditions of Eq.~\eqref{eq:ab-dicotomy}, and assume without loss of generality that $\pr(\{(\omega,x_j)\})>0$. Then
 for $\epsilon'$ small enough it holds that $\pr'\coloneqq\pr+
 \epsilon'
 (I_{\{(\omega,x_i)\}}-I_{\{(\omega,x_j)\}})$ belongs to $B(\pr;\epsilon)$
 and $\pr'(f)\neq\pr(f)$, a contradiction.

 Thus, we conclude that there is some linear prevision
 $\pr\in\solp(\lpr)$ satisfying $\pr(f)>0$ for every $f\in\cone$. It follows that its associated set of strictly desirable gambles
 includes $\rdesirs$.

 \item[(ii)] Assume $\lpr(f)=\upr(f)=0$ for some $f\in\cone$, and
 take $g\neq f$ in ${\mathcal A}$. Since $\succ$ satisfies~\ref{A3'}, we
 deduce from Proposition~\ref{pr:charac-archimedean} that there is
 some $\beta\in(0,1)$ such that $\beta f+ (1-\beta)g \in \cone$, whence $0\leq \lpr(\beta f- (1-\beta) g)\leq \upr(\beta f)+
 \lpr((1-\beta) g)=0-(1-\beta) \lpr(g)\leq 0$, whence also
 $\lpr(g)=0$. Applying the same reasoning to $-g$ we conclude that
 $\lpr$ is constant on $0$ in ${\mathcal A}$. $\qedhere$
\end{enumerate}
\end{proof}

\begin{proof}[{\bf Proof of Theorem~\ref{th:existence-open-superset}}]
Let $\lpr$ be the coherent lower prevision induced by $\rdesirs$,
and assume ex-absurdo that $\rdesirs$ has no open superset. From
Lemma~\ref{le:charac-superset}, it follows that $\lpr(f)=\upr(f)=0$
for every $f\in{\mathcal A}$. Let us show that this last condition cannot
happen.

For every $f\in{\mathcal A}$, $\lpr(f)=0$ implies that for every
$\epsilon>0$ there is some $g_\epsilon\in\cone$ such that
$f+\epsilon\geq g_{\epsilon}$, or, equivalently, that $g_\eps-f\leq\eps$. If there were a pair $(\omega,x)$ such that $g_\eps(\omega,x)-f(\omega,x)<-\eps|\pspace\times\values|$, then it would be impossible to satisfy the constraint that $\sum_{(\omega,x)\in\pspace\times\values}
f(\omega,x)=\sum_{(\omega,x)\in\pspace\times\values} g_{\epsilon}(\omega,x)=0$, taking into account that $g_\eps-f\leq\eps$. Therefore, we obtain that $\|f-g_{\epsilon}\|\leq \epsilon |\pspace\times\values|$, where $||\cdot||$ denotes the supremum norm. As a consequence, ${\mathcal A}=\overline\cone$, where the closure is taken in the topology of uniform convergence.

Let now $\pi'$ denote the projection operator from
$\pspace\times\values$ to $\pspace\times\values'$, where $\values'$
results from dropping the last component in $\values$. It follows
that $\pi'({\mathcal A})=\mathbb{R}^{|\pspace\times\values'|}$. In
addition, since $\cone$ is convex we deduce that $\pi(\cone)$ is
also convex, and also
$\overline{(\pi(\cone))}=\pi({\overline\cone})=\mathbb{R}^{|\pspace\times\values'|}$.
But the only convex set that is dense in an Euclidean space is the
Euclidean space itself \cite[Corollary~1.2.13]{cheridito2013}, whence
$\pi(\cone)=\pi({\mathcal A})$, and therefore $\cone={\mathcal A}$.
But this would mean that for every $f\in{\mathcal A}$ both
$f,-f\in\cone$, which contradicts the coherence of the set
$\rdesirs$.

Thus, it must be $\upr(f)>0$ for every $f\in\cone$ and as a
consequence $\rdesirs$ has an open superset.
\end{proof}

\begin{lemma}\label{le:finite-generating}
Assume that $\pspace$ is finite, and let $k\coloneqq|\values|-1$.
Then the set $\sdiff$ has a finite generating family
$\{g_1,\dots,g_{\ell}\}$ such that, if $\|f\|\leq \epsilon$, then we
can write $f=\sum_{i=1}^{\ell} \lambda_i g_i$ and $\lambda_i\in
[0,k\eps]$ for $i=1,\dots,\ell$.
\end{lemma}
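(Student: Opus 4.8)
The plan is to write down an explicit finite generating family of $\sdiff$ together with a greedy ``transportation'' decomposition of its elements. Since $\pspace$ is finite (and $\values$ is finite throughout the paper), $\gambles(\pspace\times\values)$ is the finite-dimensional space $\reals^{\pspace\times\values}$, and by Theorem~\ref{prop:rdesirsInv}(i)---whose proof only uses that each $p(\omega,\cdot)$ is a mass function, not that $z$ is a worst outcome---the set $\sdiff$ is exactly the linear subspace $\{f\in\gambles(\pspace\times\values):(\forall\omega\in\pspace)\sum_{x\in\values}f(\omega,x)=0\}$. For every $\omega\in\pspace$ and every ordered pair $x\neq y$ in $\values$, let $g_{\omega,x,y}\in\gambles(\pspace\times\values)$ be the gamble whose value is $1$ at $(\omega,x)$, $-1$ at $(\omega,y)$, and $0$ at every other point; each $g_{\omega,x,y}$ lies in $\sdiff$, and there are $\ell\coloneqq\abs{\pspace}\cdot\abs{\values}\cdot(\abs{\values}-1)$ of them. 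This is the claimed family $\{g_1,\dots,g_\ell\}$.

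Next I would establish the coefficient bound. Fix $f\in\sdiff$ with $\norm{f}\le\eps$ in the supremum norm. Because the defining constraints of $\sdiff$ decouple over $\omega$, it suffices to decompose each slice $v\coloneqq f(\omega,\cdot)\in\{w\in\reals^{\values}:\sum_x w(x)=0\}$ separately, noting that $g_{\omega,x,y}$ is supported on the slice at $\omega$, where it equals $\mathbf{e}_x-\mathbf{e}_y$ (standard basis vectors of $\reals^{\values}$). If $v=0$, use coefficient $0$ for every $g_{\omega,x,y}$. Otherwise put $P\coloneqq\{x:v(x)>0\}$, $N\coloneqq\{y:v(y)<0\}$; a nonzero zero-sum vector is neither everywhere nonnegative nor everywhere nonpositive, so $P,N$ are nonempty and $\abs{P}\le\abs{\values}-1=k$, whence $M\coloneqq\sum_{x\in P}v(x)=\sum_{y\in N}(-v(y))$ satisfies $0<M\le\abs{P}\eps\le k\eps$. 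Now take the product transportation plan $\mu_{x,y}\coloneqq v(x)(-v(y))/M\ge0$ for $x\in P,\,y\in N$, which satisfies $\sum_{y\in N}\mu_{x,y}=v(x)$ and $\sum_{x\in P}\mu_{x,y}=-v(y)$, hence $v=\sum_{x\in P,\,y\in N}\mu_{x,y}(\mathbf{e}_x-\mathbf{e}_y)$. Setting $\lambda_{\omega,x,y}\coloneqq\mu_{x,y}$ for $x\in P,\,y\in N$ and $\lambda_{\omega,x,y}\coloneqq0$ otherwise, we get $\sum_{x\neq y}\lambda_{\omega,x,y}=M\le k\eps$, so every $\lambda_{\omega,x,y}\in[0,k\eps]$; summing the slice decompositions over $\omega$ gives $f=\sum_{i=1}^{\ell}\lambda_i g_i$ with all $\lambda_i\in[0,k\eps]$.

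That $\{g_1,\dots,g_\ell\}$ actually generates $\sdiff$ is then immediate: applying the previous construction to an arbitrary $f\in\sdiff$ with $\eps\coloneqq\norm{f}$ (the case $f=0$ being trivial) exhibits every element of $\sdiff$ as a nonnegative combination of the $g_{\omega,x,y}$, so they positively span the subspace. I do not foresee a genuine obstacle here; the only points needing care are the reduction to a single slice---which is precisely where the finiteness of $\pspace$ enters---and the estimate $M\le k\eps$, which is where the factor $k=\abs{\values}-1$ comes from, via the elementary bound on the number of positive entries of a nonzero zero-sum vector. (One could even fix a reference outcome $x_0$ and use instead the $2k\abs{\pspace}$ gambles $\pm(\mathbf e_x-\mathbf e_{x_0})$ localised at each $\omega$, obtaining the sharper bound $\lambda_i\le\eps$; but the symmetric family above is the more natural choice and already suffices.)
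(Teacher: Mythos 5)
Your proposal is correct and takes essentially the same route as the paper: the same generating family $\{I_{\{(\omega,x)\}}-I_{\{(\omega,y)\}}:\omega\in\pspace,\ x\neq y\in\values\}$, the same slice-by-slice reduction over $\omega$ (using that $\sdiff$ consists exactly of the gambles with zero sum on each slice), and the same source of the bound, namely that a nonzero zero-sum slice with $\norm{f}\le\eps$ has at most $k$ positive entries and hence total positive mass $M\le k\eps$. The only difference is in the bookkeeping of coefficients: where the paper reorders the outcomes and uses telescoping partial sums, you use the product transportation plan $\mu_{x,y}=v(x)(-v(y))/M$, which is an equally valid nonnegative decomposition with every coefficient bounded by $M\le k\eps$.
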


\begin{proof}[{\bf Proof of Lemma~\ref{le:finite-generating}}]
Denote the elements of $\values$ by $x_1,\dots,x_{k+1}$ and consider
the family $\{I_{\{(\omega,x_i)\}}-I_{\{(\omega,x_j)\}}: \omega\in\pspace,
x_i\neq x_j \in \values\}$, and let $f\in\sdiff$. Let us show
that we can express $f$ as a linear combination, with non-negative
coefficients, of the functions in the generating family. To this
end, we can work independently on each $\omega$.

Fix then $\omega\in\pspace$. In order to simplify the notation, in
the following we rename $f_{\omega}\coloneqq f(\omega,\cdot)$ and
drop any other reference to $\omega$. Moreover, we assume without
loss of generality that all the non-negative values of $f_{\omega}$
occur prior to the negative ones; that is, that $f_{\omega}(x_i)\geq
0$ for $i=1,\dots,m$ and $f_{\omega}(x_i)<0$ for $i>m$. Let us show
then that
\begin{equation}
f_{\omega}=\sum_{i=1}^k\left[(I_{\{x_i\}}-I_{\{x_{i+1}\}})\sum_{j=1}^i
f_{\omega}(x_i)\right]\label{eq:f-om}.
\end{equation}

Observe first that the coefficients $\lambda_i\coloneqq\sum_{j=1}^i f_{\omega}(x_i)$ are indeed non-negative, thanks to the assumption that the non-negative values occur prior to the remaining ones, and because $f\in\sdiff$. It trivially holds also that $\lambda_i\leq k\eps$.

Now, to show that Eq.~\eqref{eq:f-om} holds, it is convenient to rewrite it as follows:
\begin{equation}
f_{\omega}=\lambda_1I_{\{x_1\}}+\left[\sum_{i=2}^k
(\lambda_i-\lambda_{i-1})I_{\{x_i\}}\right]-\lambda_kI_{\{x_{k+1}\}}.\label{eq:f-om2}
\end{equation}
\noindent Then it is trivial that~\eqref{eq:f-om2} holds in the case
of $x_1$. In case $i\in\{2,\dots,k\}$, it is enough to observe that
$\lambda_i-\lambda_{i-1}=\sum_{j=1}^i
f_{\omega}(x_i)-\sum_{j=1}^{i-1} f_{\omega}(x_i)=f_{\omega}(x_i)$.
As a consequence, the case of $x_{k+1}$ becomes trivial as well,
given that it must be that $\sum_{i=1}^{k+1}f_{\omega}(x_i)=0$.
\end{proof}

\begin{lemma}\label{lem:open-superset}
Let $\succ$ be a coherent preference relation on ${\mathcal
H}\times{\mathcal H}$ satisfying~\ref{A3'} and with corresponding
cone $\cone$. Assume $\pspace$ is finite, and let $\rdesirs$ be given by Eq.~\eqref{eq:extR}. Then:

\begin{itemize}
\item[(i)] For any gamble
$f\in\cone $, there exists some $\epsilon_f>0$ such that the open
ball $B(f;\epsilon_f)$ satisfies
\begin{equation*}
B(f;\epsilon_f)\cap \sdiff\subseteq\cone .
\end{equation*}
\item[(ii)] $\posi(\widetilde{\rdesirs})\cap {\mathcal
A}=\posi(\widetilde{\rdesirs}\cap \sdiff)$, where
$\widetilde{\rdesirs}:=\cup_{f\in\cone } B(f;\epsilon_f)$.
\end{itemize}
\end{lemma}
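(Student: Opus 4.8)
The plan is to establish the two claims in order, exploiting the characterisation of~\ref{A3'} given in Proposition~\ref{pr:charac-archimedean}(ii) and the ``finite generating family with controlled coefficients'' provided by Lemma~\ref{le:finite-generating}.

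For part (i), fix $f\in\cone$. I want to produce a uniform radius $\epsilon_f>0$ so that every element of $\sdiff$ within $\epsilon_f$ of $f$ already lies in $\cone$. The idea is to write any such gamble as $f+h$ with $h\in\sdiff$, $\|h\|\leq\epsilon_f$, and then decompose $h$ via Lemma~\ref{le:finite-generating} as $h=\sum_{i=1}^\ell\lambda_i g_i$, where the $g_i$ form the fixed finite generating family of $\sdiff$ and $0\leq\lambda_i\leq k\epsilon_f$ with $k=|\values|-1$. By Proposition~\ref{pr:charac-archimedean}(ii), since $f\in\cone$ and each $g_i\in\sdiff$, for every $i$ there is $\beta_i\in(0,1)$ with $\beta_i f+(1-\beta_i)g_i\in\cone$; equivalently, $f+\mu_i g_i\in\cone$ for every $\mu_i\in(0,\mu_i^*]$ with $\mu_i^*\coloneqq(1-\beta_i)/\beta_i>0$, using positive homogeneity of the cone. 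Now take $\epsilon_f$ small enough that $k\epsilon_f/\ell\leq\min_i\mu_i^*$. Then write $f+h=\sum_{i=1}^\ell\bigl(\tfrac1\ell f+\lambda_i g_i\bigr)=\sum_{i=1}^\ell\tfrac1\ell\bigl(f+\ell\lambda_i g_i\bigr)$, and since $\ell\lambda_i\leq\ell k\epsilon_f\leq \min_i\mu_i^*$... wait, I must be careful: I need $\ell\lambda_i\le\mu_i^*$, so I should instead choose $\epsilon_f$ with $k\ell\epsilon_f\leq\min_i\mu_i^*$, giving $\ell\lambda_i\leq\ell k\epsilon_f\leq\mu_i^*$. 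Then each $f+\ell\lambda_i g_i\in\cone$ (or equals $f\in\cone$ if $\lambda_i=0$), and $\cone$ being a convex cone, the average $\sum_i\tfrac1\ell(f+\ell\lambda_i g_i)=f+h$ lies in $\cone$. Hence $B(f;\epsilon_f)\cap\sdiff\subseteq\cone$.

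For part (ii), set $\widetilde\rdesirs\coloneqq\bigcup_{f\in\cone}B(f;\epsilon_f)$. The inclusion $\posi(\widetilde\rdesirs\cap\sdiff)\subseteq\posi(\widetilde\rdesirs)\cap\sdiff$ is immediate since $\sdiff$ is itself a convex cone closed under the operations defining $\posi$. For the reverse, take $h\in\posi(\widetilde\rdesirs)\cap\sdiff$, so $h=\sum_{j=1}^r\gamma_j h_j$ with $\gamma_j>0$ and each $h_j\in B(f_j;\epsilon_{f_j})$ for some $f_j\in\cone$. The point is that $h_j$ need not lie in $\sdiff$, but by part (i), $h_j$ differs from $f_j\in\cone$ by a bounded perturbation; the obstacle is that this perturbation may take $h_j$ outside $\sdiff$, so I cannot directly invoke part (i). Instead I argue as follows: since $h\in\sdiff$ and each $f_j\in\cone\subseteq\sdiff$, the quantity $h-\sum_j\gamma_j f_j$ lies in $\sdiff$; call it $h'$. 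For a sufficiently small scaling factor $t>0$, replacing each $f_j$ by $f_j$ and redistributing, one writes $h=\sum_j\gamma_j f_j + h'$ and then absorbs $h'$ into a single term using part (i): pick any $f_1$, note $f_1+ (h'/\gamma_1)$ need not be in the ball, but for small enough re-weighting $f_1+s h'\in\cone$ whenever $s\|h'\|\leq\epsilon_{f_1}$ and $f_1+sh'\in\sdiff$ — and $f_1+sh'$ \emph{is} in $\sdiff$ because $f_1,h'\in\sdiff$. Choosing $s$ small and rescaling the whole expression, $h$ is a positive combination of elements $f_j\in\cone\subseteq\widetilde\rdesirs\cap\sdiff$ together with one element $f_1+sh'\in\cone\subseteq\widetilde\rdesirs\cap\sdiff$. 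Hence $h\in\posi(\widetilde\rdesirs\cap\sdiff)$, completing the equality. The main obstacle throughout is bookkeeping the perturbations so that they stay inside $\sdiff$ while remaining within the controlled radii $\epsilon_{f_j}$; Lemma~\ref{le:finite-generating} is exactly what makes the coefficient bounds uniform enough for this.
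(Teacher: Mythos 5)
Your part (i) is correct and is essentially the paper's own argument: it combines Proposition~\ref{pr:charac-archimedean}(ii) with the controlled decomposition of Lemma~\ref{le:finite-generating} and the convexity of $\cone$; the only difference from the paper is bookkeeping (you split $f$ into $\ell$ equal shares, the paper works with a common $\beta^*\coloneqq\max_i\beta_i$ and a normalising constant $\delta$).

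Part (ii), however, has a genuine gap at the absorption step. You write $h=\sum_j\gamma_j f_j+h'$ with $h'\coloneqq\sum_j\gamma_j(h_j-f_j)\in\sdiff$ and then try to swallow \emph{all} of $h'$ into the single term $\gamma_1 f_1$, i.e.\ to use $\gamma_1 f_1+h'=\bigl(\gamma_1-\tfrac1s\bigr)f_1+\tfrac1s\,(f_1+s h')$ with $f_1+s h'\in\cone$. Part (i) forces $s\norm{h'}<\epsilon_{f_1}$, while nonnegativity of the leftover coefficient forces $s\ge 1/\gamma_1$; together these demand $\norm{h'}<\gamma_1\epsilon_{f_1}$. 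But the only available bound is $\norm{h'}\le\sum_j\gamma_j\norm{h_j-f_j}<\sum_j\gamma_j\epsilon_{f_j}$, which can far exceed $\gamma_1\epsilon_{f_1}$ (already with two terms of equal weight and equal radii). ``Choosing $s$ small and rescaling the whole expression'' does not rescue this: the condition $\norm{h'}<\gamma_1\epsilon_{f_1}$ is invariant under rescaling $h$, and splitting $\gamma_1 f_1+h'$ into several identical pieces leaves the ratio $\norm{h'}/\gamma_1$ unchanged. So the single-term absorption fails in general; this is precisely the nontrivial content of (ii), which your draft waves through (note also the dangling ``scaling factor $t$'' that never reappears).

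The missing idea is to spread $h'$ over \emph{all} the terms, using each term's budget $\gamma_j\epsilon_{f_j}$. For instance, set $E\coloneqq\sum_k\gamma_k\epsilon_{f_k}$ and $e_j\coloneqq(\epsilon_{f_j}/E)\,h'$; then $e_j\in\sdiff$, $\sum_j\gamma_j e_j=h'$, and $\norm{e_j}<\epsilon_{f_j}$ because $\norm{h'}<E$, so that $h=\sum_j\gamma_j(f_j+e_j)$ with every $f_j+e_j\in B(f_j;\epsilon_{f_j})\cap\sdiff\subseteq\cone\subseteq\widetilde{\rdesirs}\cap\sdiff$ by part (i). The paper solves this same redistribution problem in a more laborious way, working $\omega$ by $\omega$ and constructing perturbations $h_i''\in B(0;\epsilon_{g_i})\cap\sdiff$ with the prescribed sums via an interval-arithmetic argument; either way, some such redistribution across all the balls is indispensable, and it is exactly the step your proof omits.
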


\begin{proof}[{\bf Proof of Lemma~\ref{lem:open-superset}}]
\begin{itemize}
\item[(i)] Let $\lpr$ be the coherent lower prevision induced by $\rdesirs$. Let $f\coloneqq p-q$, and $\{g_1,\dots,g_\ell\}$ be the finite generating family of ${\mathcal
A}$, existing by Lemma~\ref{le:finite-generating}. By
Proposition~\ref{pr:charac-archimedean}, for any $i=1,\dots,\ell$,
there is some $\beta_i\in(0,1)$ such that $\beta_i f+(1-\beta_i) g_i
\in\cone $. Moreover, since $f\in\cone $ it follows that for any
$\beta\geq\beta_i$ the gamble $\beta f+(1-\beta) g_i$ also belongs
to $\cone $. As a consequence, given
$\beta^*\coloneqq\max_{i=1,\dots,\ell} \beta_i\in (0,1)$, it holds
that $\beta^* f+(1-\beta^*) g_i \in \cone $ for all
$i=1,\dots,\ell$. Now, since $\cone $ is a convex cone, given
$\alpha_i\geq 0$ such that $\sum_{i=1}^{\ell} \alpha_i>0$,
it holds that
\begin{equation}\label{eq:mixture-included}
 \sum_{i=1}^{\ell} \alpha_i (\beta^* f+(1-\beta^*)
 g_i)=\beta^*(\sum_{i=1}^{\ell} \alpha_i) f+ (1-\beta^*) \sum_{i=1}^{\ell}
 \alpha_i g_i \in \cone.
\end{equation}

Consider $\epsilon<\frac{1-\beta^*}{\beta^* k \ell}$. By
Lemma~\ref{le:finite-generating}, for every $g\in\sdiff$ such
that $\|g-f\|\leq\epsilon$, there are $\lambda_i\in[0,\epsilon k]$
such that $g-f=\sum_{i=1}^{\ell} \lambda_i g_i$, whence
\begin{equation*}
g=f+\sum_{i=1}^{\ell} \lambda_i g_i=f+\frac{1-\beta^*}{\beta^*}
\sum_{i=1}^{\ell} \left(\frac{\lambda_i \beta^*}{1-\beta^*}\right)
g_i;
\end{equation*}
recall that $\|\cdot\|$ refers to the supremum norm. Applying
Eq.~\eqref{eq:mixture-included} with
$\alpha_i\coloneqq\frac{\lambda_i\beta^*}{1-\beta^*}$ and
$\delta\coloneqq\sum_{i=1}^\ell \alpha_i>0$, we deduce that
\begin{equation*}
 \beta^* \delta f + (1-\beta^*) \sum_{i=1}^{\ell} \frac{\lambda_i
\beta^*}{1-\beta^*} g_i= \beta^* \delta f + \sum_{i=1}^{\ell}
\lambda_i \beta^* g_i \in \cone ,
\end{equation*}
and since this set is a cone,
\begin{equation*}
\delta f + \sum_{i=1}^{\ell} \lambda_i g_i \in \cone .
\end{equation*}
Observe that by construction, $0<\delta=\sum_{i=1}^{\ell} \frac{\lambda_i
\beta^*}{1-\beta^*} \leq \frac{\ell k \epsilon \beta^*}{1-\beta^*}<1$,
whence also $(1-\delta) f\in \cone $; and since this set is convex, we deduce that
\begin{equation*}
f + \sum_{i=1}^{\ell} \lambda_i g_i=g \in \cone .
\end{equation*}
We conclude that there is some $\epsilon>0$ such that $g\in\cone$ for every $g$ satisfying $\|g-f\|\leq \epsilon$.

\item[(ii)] We immediately have that
$\posi(\widetilde{\rdesirs}\cap \sdiff)\subseteq
\posi(\widetilde{\rdesirs})\cap\posi({\mathcal
A})=\posi(\widetilde{\rdesirs})\cap \sdiff$, so we only need to
establish the converse inclusion.

Let us begin by showing that $\widetilde{\rdesirs}$ is a
cone. Consider a gamble $g\in\widetilde{\rdesirs}$ and $\lambda>0$.
Then there is some $f\in\cone$ such that $g\in B(f;\eps_f)$, whence
$\lambda g \in B(\lambda f;\lambda\eps_f)$. Since on the other hand
\begin{equation*}
B(\lambda f; \lambda\eps_f)\cap {\mathcal A}=\lambda B(f;\eps_f)\cap
{\mathcal A}=\lambda (B(f;\eps_f)\cap {\mathcal
A})\subseteq\lambda\cone=\cone,
\end{equation*}
we deduce that $\eps_{\lambda f}\geq \lambda\eps_f$, and as a
consequence $\lambda g \in\widetilde{\rdesirs}$.

Consider a gamble $f\in\posi(\widetilde{\rdesirs})\cap \sdiff$. Then
there are
$f_1,\dots,f_n\in\widetilde{\rdesirs},\lambda_1,\dots,\lambda_n>0$
such that $f=\sum_{i=1}^{n} \lambda_i f_i$. Note that since
$\widetilde\rdesirs$ is a cone, we can assume without loss of
generality that $\lambda_i=1$ for every $i=1,\dots,n$. For every
$i=1,\dots,n$, there is some $g_i\in\cone $ and some $h_i\in
B(0;\epsilon_{g_i})$ such that $f_i=g_i+h_i$.

Let us prove that there are $h_i''\in B(0;\epsilon_{g_i})\cap
\sdiff$ such that $f=\sum_{i=1}^{n}(g_i+h_i'')$. Since $g_i+h_i''$
will belong to $\widetilde{\rdesirs}\cap{\mathcal A}$, we will
conclude that $f\in\posi(\widetilde{\rdesirs}\cap {\mathcal A})$,
and the proof will be complete.

Note that we can work independently for each $\omega\in\pspace$; in
addition, our construction of the gambles $h_i''$ shall not depend
on the gambles $g_i$ above. Fix then $\omega\in\pspace$, and let
$\values=\{x_1,\dots,x_k\}$.

Consider the table
\begin{center}
\begin{tabular}{cccc}
  $h_1(\omega,x_1)$ & $h_2(\omega,x_1)$ & $\dots$ & $h_n(\omega,x_1)$ \\
  $h_1(\omega,x_2)$ & $h_2(\omega,x_2)$ & $\dots$ & $h_n(\omega,x_2)$ \\
  $\vdots$ & $\vdots$ & $\dots$ & $\vdots$ \\
  $h_1(\omega,x_k)$ & $h_2(\omega,x_k)$ & $\dots$ & $h_n(\omega,x_k)$ \\
\end{tabular}
\end{center}
By construction, the i-th column is bounded in
$(-\epsilon_{g_i},\epsilon_{g_i})$, and the sum of all the elements
of the table is equal to $0$, because $f\in{\mathcal A}$. Note,
though, that the sum in each row is not necessarily equal to $0$.

Our goal then is to find a table
\begin{center}
\begin{tabular}{cccc}
  $h_1''(\omega,x_1)$ & $h_2''(\omega,x_1)$ & $\dots$ & $h_n''(\omega,x_1)$ \\
  $h_1''(\omega,x_2)$ & $h_2''(\omega,x_2)$ & $\dots$ & $h_n''(\omega,x_2)$ \\
  $\vdots$ & $\vdots$ & $\dots$ & $\vdots$ \\
  $h_1''(\omega,x_k)$ & $h_2''(\omega,x_k)$ & $\dots$ & $h_n''(\omega,x_k)$ \\
\end{tabular}
\end{center}
such that:
\begin{itemize}
 \item[$\circ$] $(\forall
 i=1,\dots,n)\sum_{j=1}^{k} h''_i(\omega,x_j)=0$.
 \item[$\circ$] $(\forall i=1,\dots,n)(\forall
 j=1,\dots,k)h''_i(\omega,x_j)\in(-\epsilon_{g_i},\epsilon_{g_i})$.
 \item[$\circ$] $(\forall j=1,\dots,k)\sum_{i=1}^{n} h_i''(\omega,x_j)=\sum_{i=1}^{n} h_i(\omega,x_j)$.
\end{itemize}
The first of these requirements is done in order to guarantee that
$h_i''$ belongs to ${\mathcal A}$; the second, to prove that $h_i''$
belongs to $B(0;\eps_{g_i})$; and the third, to guarantee that
$f=\sum_{i=1}^{n} (g_i+h_i'')$. Then by repeating the process for
every $\omega$ we end up with the gambles $h_1'',\dots,h_n''$
required above.

Consider $j\in\{1,\dots,n-1\}$, and let $f'_j$ be a gamble on
$\values$ whose supremum norm satisfies
\begin{equation*}
 \|f'_j\|<\sum_{i=j}^{n} \epsilon_{g_j}.
\end{equation*}
We shall show that there exists a gamble $h'_j$ on $\values$
satisfying the following five conditions:
\begin{align}
 &\|f'_j-h_j'\|< \sum_{i=j+1}^{n} \epsilon_{g_i}.\label{eq:aux-cond1}\\
 &\|h_j'\|< \epsilon_{g_j}.\label{eq:aux-cond2}\\
 &\sum_{x\in\values} h_j'(x)=0.\label{eq:aux-cond3}
\end{align}

Once we have established that it is possible to find the gamble
$h'_j$, we proceed as follows:
\begin{itemize}
 \item[$\circ$] We consider the gamble $f'_1$ on $\values$ given by
 $f'_1(x)\coloneqq\sum_{i=1}^{n} h_i(\omega,x)$. It satisfies
 $\|f'_1\|<\sum_{i=1}^{n} \eps_{g_i}$, so we find $h'_1$ satisfying
 Eqs.~\eqref{eq:aux-cond1}--\eqref{eq:aux-cond3} for $j=1$.
 \item[$\circ$] The gamble $f'_2\coloneqq f'_1-h'_1$ satisfies
 $\|f'_2\|<\sum_{i=2}^{n} \eps_{g_i}$ because of Eq.~\eqref{eq:aux-cond1}, so we find $h'_2$ satisfying
 Eqs.~\eqref{eq:aux-cond1}--\eqref{eq:aux-cond3} for $j=2$.
 \item[$\circ$] We repeat the procedure for $f'_j\coloneqq
 f'_{j-1}-h'_{j-1}$ for $j=3,\dots,n-1$.
 \item[$\circ$] Finally, given $f'_n\coloneqq
 f'_{n-1}-h'_{n-1}=f'_1-h'_1-\dots-h'_{n-1}$, we take $h'_n\coloneqq
 f'_{n}$. By~\eqref{eq:aux-cond1}, $\|h'_n\|<\eps_{g_n}$ and
 $f'_1(x)=\sum_{i=1}^{n} h'_i(x)$. Moreover, $$\sum_{x\in\values} h'_n(x)=\sum_{x\in\values}
f'_1(x)-\sum_{i=1}^{n-1}\sum_{x\in\values} h'_i(x)=0-\sum_{i=1}^{n-1} 0=0.$$
\end{itemize}

At that stage it suffices to make $h_i''(\omega,x_j)=h_i'(x_j)$ for
every $j=1,\dots,k$ to obtain the gambles $h_1'',\dots,h''_n$ we
need.

So take $j\in\{1,\dots,n-1\}$, a gamble $f'_j$ with
$\|f'_j\|<\sum_{i=j}^{n} \eps_{g_i}$ and let us establish the
existence of a gamble $h'_j$ satisfying
conditions~\eqref{eq:aux-cond1}--\eqref{eq:aux-cond3} above.
Consider the partition of $\values$ given by:
\begin{align*}
A_0&:=\{x: f'_j(x)=0\}\\
A_1&:=\{x: f'_j(x)>0, f'_j(x)\geq \epsilon_{g_j}\}\\
A_2&:=\{x: f'_j(x)>0, f'_j(x)<\epsilon_{g_j}\}\\
A_3&:=\{x: f'_j(x)<0, f'_j(x)\leq-\epsilon_{g_j}\}\\
A_4&:=\{x: f'_j(x)<0, f'_j(x)> -\epsilon_{g_j}\}.
\end{align*}

We make $h_j'(x)\coloneqq f'_j(x)=0$ for every $x\in A_0$. Then a gamble
$h_j'$ satisfies
Eqs.~\eqref{eq:aux-cond1},~\eqref{eq:aux-cond2} when it satisfies
\begin{align}
&(\forall x \in A_1)\ \ h_j'(x) \in \left(\max\{f'_j(x)-\sum_{i=j+1}^{n}
\epsilon_{g_i},0\},\epsilon_{g_j}\right),\label{eq:constraints-h1-A1}\\
&(\forall x \in A_2)\ \ h_j'(x) \in \left(\max\{f'_j(x)-\sum_{i=j+1}^{n}
\epsilon_{g_i},0\},f'_j(x)\right],\label{eq:constraints-h1-A2}\\
&(\forall x \in A_3)\ \ h_j'(x) \in
\left(-\epsilon_{g_j},\max\{f'_j(x)+\sum_{i=j+1}^{n}
\epsilon_{g_i},0\}\right),\label{eq:constraints-h1-A3}\\
&(\forall x \in A_4)\ \ h_j'(x) \in
\left[f'_j(x),\max\{f'_j(x)+\sum_{i=j+1}^{n}
\epsilon_{g_i},0\}\right).\label{eq:constraints-h1-A4}
\end{align}

Let us detail this for the case of $x\in A_1$; the proof for the
other cases is similar. First of all, since by construction
$h'_j(x)<\epsilon_{g_j}<f'_j(x)$ and also $0<h'_j(x)$, we see that
Eq.~\eqref{eq:aux-cond2} is satisfied. With respect to~\eqref{eq:aux-cond1}, note that
$|f'_j(x)-h_j'(x)|=f'_j(x)-h_j'(x) < \sum_{i=j+1}^{n} \eps_{g_i}
\Leftrightarrow h_j'(x)> f'_j(x)-\sum_{i=j+1}^{n} \eps_{g_i}$.

Now, it follows from interval arithmetic that $\sum_{x\in A_1\cup
A_2} h_j'(x)$ can take any value in
\begin{equation*}
\left(\sum_{x\in A_1\cup A_2} \max\{f'_j(x)-\sum_{i=j+1}^{n}
\epsilon_{g_i},0\}, \sum_{x\in A_1} \eps_{g_j}+ \sum_{x\in A_2}
f'_j(x) \right)\eqqcolon(C,D),
\end{equation*}
and similarly $\sum_{x\in A_3\cup A_4} h_j'(x)$ can take any value
in
\begin{equation*}
\left(-\sum_{x\in A_3} \eps_{g_j}+\sum_{x\in A_4} f'_j(x),
\sum_{x\in A_3\cup A_4} \max\{f'_j(x)+\sum_{i=j+1}^{n}
\epsilon_{g_i},0\} \right)\eqqcolon(E,F),
\end{equation*}
and therefore that $\sum_{x\in \values} h'_j(x)$ can take any value
in the open interval $(C+E,D+F)$. Thus, in order to prove that it is
possible to satisfy the constraints in
Eqs.~\eqref{eq:constraints-h1-A1}--\eqref{eq:constraints-h1-A4} and
also $\sum_{x\in\values} h_j'(x)=0$ (i.e.,
Eq.~\eqref{eq:aux-cond3}), we are going to establish that
$C+E<0<D+F$.

Let us establish that $C+E<0$, or, in other words, that $C<-E$; the
proof of $D+F>0$ is symmetrical.

Let $k:=\sum_{x\in A_1\cup A_2} f'_j(x)>0$. Then, there exists some
natural number $m$ such that $k\in [m\sum_{i=j}^{n}
\epsilon_{g_i},(m+1)\sum_{i=j}^{n} \epsilon_{g_i})$ (because
the ratio $\frac{k}{\sum_{i=j}^{n} \epsilon_{g_i}}$ is positive and
so it must belong to an interval $[m,m+1)$ for some natural number
$m$). Define $k'\coloneqq k-m\sum_{i=j}^{n} \epsilon_{g_i} \in
[0,\sum_{i=j}^{n} \epsilon_{g_i})$. Let us establish that $C$ is
strictly bounded above by
\begin{equation}\label{eq:upper-bound-C}
 m \epsilon_{g_j}+ \max\left\{k'-\sum_{i=j+1}^{n} \epsilon_{g_i},
 0\right\}.
\end{equation}

Note that for every $x\in A_1\cup A_2$, it holds that
$\max\{f'_j(x)-\sum_{i=j+1}^{n} \eps_{g_i},0\}>0$ only when
$f'_j(x)\geq\sum_{i=j+1}^{n}\eps_{g_i}$, so if we define
$B\coloneqq\{x: f'_j(x)\geq\sum_{i=j+1}^{n} \eps_{g_i}\}$ it holds
that
\begin{equation}\label{eq:approx-C}
 C=\begin{cases}
  \sum_{x\in B} (f'_j(x)-\sum_{i=j+1}^{n} \eps_{g_i}) &\text{ if } B\neq
  \emptyset,\\
  0 &\text{ otherwise}.
  \end{cases}
\end{equation}
This means that in order to maximize the value of $C$ we would like
that $\sum_{x\in B} f'_j(x)$ is as close to $k$ as possible. On the
other hand, for a given value of $\sum_{x\in B} f'(x)$, the outer sum in the
right-hand side in~\eqref{eq:approx-C} maximises when $B$ is as
small as possible, which in turn implies that $f'(x)$ is as large as
possible for the elements of $B$. Putting these considerations
together, we conclude that we maximize $C$ for a given value of $k$
when $B$ has $m+1$ elements: in $m$ of them we have
$f'_j(x)=\sum_{i=j}^{n} \eps_{g_i}$ (and so $h'_j=\eps_{g_j})$,
while in the other one $f'_j(x)=k-m\sum_{i=j}^{n} \eps_{g_i}=k'$
(and so $h'_j(x)=\max\{k'-\sum_{i=j+1}^{n} \eps_{g_i},0\}$).

Thus, $C$ is strictly bounded by
Eq.~\eqref{eq:upper-bound-C}; the strict inequality holds because by
assumption $|f'_j(x)|<\sum_{i=j}^{n} \epsilon_{g_i}$ for every $x$,
so we can only get arbitrarily close to the equalities mentioned in
the previous phrase.

On the other hand, since $\sum_{x\in\values} f'_j(x)=0$, it holds
that $k=-\sum_{x\in A_3 \cup A_4} f'_j(x)$. Then $-E=\sum_{x\in A_3}
\eps_{g_j}-\sum_{x\in A_4} f'_j(x)$. For a lower bound on $-E$ we
want this sum to be as far from $k$ as possible; this means that we
want $A_3$ to be as large as possible, and that for any $x\in A_3$
we want to maximise $-f'(x)-\eps_{g_j}$. We do this when
$-f'(x)=\sum_{i=j}^{n} \eps_{g_i}$ as often as possible (again $m$
times, taking into account the assumption of $k\in [m\sum_{i=j}^{n} \epsilon_{g_i},(m+1)\sum_{i=j}^{n}
\epsilon_{g_i})$); while for the remaining $k'=k-m\sum_{i=j}^{n}
\eps_{g_i}$ we must allocate at least $\min\{k',\eps_{g_j}\}$ (we
allocate $\eps_{g_j}$ if we can still be in $A_3$ and $k'$ when we
must be in $A_4$). These considerations imply that $-E$ is bounded
below by
\begin{equation}\label{eq:lower-bound-E}
 m \epsilon_{g_j}+ \min\{k',\epsilon_{g_j}\},
\end{equation}
with the same considerations about the strict inequality as above.

The result then follows by noting that the value
in~\eqref{eq:upper-bound-C} is smaller than the value
in~\eqref{eq:lower-bound-E}. Indeed, there are two possible cases:
\begin{itemize}
 \item[$\bullet$] If $k' \geq \epsilon_{g_j}$, then
 \begin{equation*}
  \min\{k',\epsilon_{g_j}\}=\epsilon_{g_j} \geq \max\left\{k'-\sum_{i=j+1}^{n} \epsilon_{g_i},
  0\right\}, \text{ because } k'\leq \sum_{i=j}^{n} \eps_{g_i};
 \end{equation*}
 \item[$\bullet$] if $k'< \epsilon_{g_j}$, then we obtain
 \begin{equation*}
  \max\left\{k'-\sum_{i=j+1}^{n} \epsilon_{g_i}, 0\right\} \leq
  k'=\min\{k',\epsilon_{g_j}\}.
 \end{equation*}
\end{itemize}

We see then that $C+E<0$. In a similar way we can see that $D+F>0$.
As a consequence, if we consider $h'_j(x)$ in the intervals
determined by
Eqs.~\eqref{eq:constraints-h1-A1}--\eqref{eq:constraints-h1-A4}, the
sum $\sum_{x\in \values}h'_j(x)$ can take any value in the interval
$(C+E,D+F)$, and in particular the value $0$. Thus, there is a
gamble $h'_j$ satisfying
conditions~\eqref{eq:aux-cond1}--\eqref{eq:aux-cond3}. If we now
apply this result in the manner described above, we end up with
gambles $h'_1,\dots,h'_n$ such that $f'_1(x)=\sum_{i=1}^{n}
h'_i(x)$, $\|h'_i\|< \epsilon_{g_i}$ and $\sum_{x\in\values}
h'_i(x)=0$ for all $i=1,\dots,n$. Finally, repeating the process for
every $\omega$ we obtain the gambles $h''_1,\dots,h''_n$ required.
\end{itemize}
\end{proof}

\begin{proof}[\bf Proof of Proposition~\ref{pr:str-arch-open}]
We begin with the direct implication. From
Lemma~\ref{lem:open-superset}, if $\succ$ satisfies~\ref{A3'} then for
every $f\in\cone$ there is some $\epsilon_f>0$ such that
$B(f;\epsilon_f)\cap{\mathcal A}\subseteq\cone$. As a
consequence, $\cone$ is an open subset of ${\mathcal A}$.

Conversely, let $\cone$ be an open convex cone in ${\mathcal A}$,
and let $\succ$ be the coherent preference relation given by $p\succ
q\Leftrightarrow p-q\in\cone$. We show that $\succ$
satisfies~\ref{A3'} by means of the characterization in
Proposition~\ref{pr:charac-archimedean}. Consider $f\in\cone,
g\in{\mathcal A}$. By Lemma~\ref{lem:open-superset}, there is some
$\epsilon_f>0$ such that $B(f;\epsilon_f)\cap{\mathcal
A}\subseteq\cone$. In particular, given $\beta\in(0,1)$ such
that $\|(1-\beta)(f-g)\|<\epsilon_f$, it holds that
\begin{equation*}
 \|f-(\beta f+ (1-\beta) g)\|=\|(1-\beta)(f-g)\|<\epsilon_f
 \Rightarrow \beta f+ (1-\beta) g \in B(f;\epsilon_f)\cap {\mathcal
 A}.
\end{equation*}
As a consequence, $\beta f+(1-\beta)g\in\cone$ and applying
Proposition~\ref{pr:charac-archimedean} we conclude that $\succ$ is
strongly Archimedean.
\end{proof}

\begin{proof}[{\bf Proof of Proposition~\ref{prop:weakest-IP-gmb}}]
$\hat\rdesirs$ is the conglomerable natural extension of
$\rdesirs_\values,\rdesirs|\{\omega\}$ (for all $\omega\in\pspace$),
as it follows from~\cite[Proposition~29]{miranda2012}. Therefore it
is a coherent set and it trivially includes $\rdesirs|\pspace$ by
construction. We are left to show that $\hat\rdesirs$ induces both
$\rdesirs_{\pspace}$ and $\rdesirs_{\values}$. We begin by proving
that the $\pspace$-marginal of $\hat{\rdesirs}$ is
$\rdesirs_{\pspace}$. Consider an $\pspace$-measurable gamble
$f\in\hat{\rdesirs}$. Then there are $g\in\rdesirs_{\pspace}$ and
$h\in\rdesirs|\pspace$ such that $f\geq g+h$. For any
$\omega\in\pspace$ such that $h(\omega,\cdot)\neq 0$, it holds that
\begin{equation*}
0 \leq \max_{x\in\values} h(\omega,x) \leq \max_{x\in\values}
f(\omega,x)-g(\omega,x)=f(\omega)-g(\omega),
\end{equation*}
where in last equality we are using that both $f,g$ are
$\pspace$-measurable. On the other hand, for any $\omega\in\pspace$
such that $h(\omega,\cdot)=0$ it holds that $f(\omega)\geq
g(\omega)$. Thus, $f\geq g$, and since $\rdesirs_{\pspace}$ is a
coherent set of gambles we conclude that also
$f\in\rdesirs_{\pspace}$. The converse inclusion is trivial.

Fix next $\omega\in\pspace$. If
$fI_{\{\omega\}}\in\hat{\rdesirs}$, then there are
$g\in\rdesirs_{\pspace}$ and $h\in\rdesirs|\pspace$ such that
$fI_{\{\omega\}}\geq g+h$. For any $\omega'\neq \omega$, it holds
that
\begin{equation*}
(\forall x\in\values) 0\geq g(\omega')+h(\omega',x),
\end{equation*}
whence it must be $g(\omega')\leq \min_{x\in\values}
-h(\omega',x)\leq 0$. Thus, if we consider instead $g'\coloneqq
gI_{\{\omega\}} \geq g \in\rdesirs_{\pspace}$ and $h'\coloneqq
hI_{\{\omega\}}\in\rdesirs|\pspace$, we also have that
$fI_{\{\omega\}}\geq g'+h'$. Taking into account that
$g'\in\rdesirs_{\pspace}$, it should be $g'(\omega)\geq 0$, whence
$fI_{\{\omega\}}\geq hI_{\{\omega\}}$ and thus
$f\in\rdesirs_{\values}$. Again, the converse inclusion is trivial.
\end{proof}

\begin{proof}[{\bf Proof of Proposition~\ref{pr:charac-linear}}]
\begin{itemize}[leftmargin=*,labelindent=12.5mm]
 \item[$(i)\Leftrightarrow (ii)$] This has been proven in~\cite[Proposition~6]{miranda2010c}.

\item[$(i)\Rightarrow(iii)$] Consider gambles $f,g\notin\rdesirs$. Using~(ii), we have that $\eps/2-f,\eps/2-g\in\rdesirs$ for all $\eps>0$; whence $\eps-(f+g)\in\rdesirs$ for all $\eps>0$. By~\ref{D2} and~\ref{D4}, it follows that $f+g-\eps\notin\rdesirs$ for all $\eps>0$.

\item[$(iii)\Rightarrow(i)$] Using~(ii), assume ex-absurdo that there is a gamble $h$ and $\eps>0$ such that $h,\eps-h\notin\rdesirs$. Using (iii), we obtain that $h+\eps-h+\delta=\eps+\delta\notin\rdesirs$ for all $\delta>0$, which contradicts~\ref{D1}.

 \item[$(i)\Rightarrow(iv)$] Note that
 \begin{equation*}
  \alpha \mu_1+(1-\alpha)\mu_2 \succ \mu_1 I_A+\mu_2 I_{A^c} \Leftrightarrow (1-\alpha) (\mu_2-\mu_1) I_A+ \alpha(\mu_1-\mu_2) I_{A^c} \in \rdesirs,
 \end{equation*}
 and similarly
 \begin{equation*}
  \mu_1 I_A+\mu_2 I_{A^c} \succ \alpha' \mu_1+(1-\alpha')\mu_2 \Leftrightarrow (1-\alpha') (\mu_1-\mu_2) I_A+ \alpha'(\mu_2-\mu_1) I_{A^c} \in \rdesirs.
 \end{equation*}
 Now, if $\pr$ is a linear prevision,
 \begin{equation*}
  \pr((1-\alpha) (\mu_2-\mu_1) I_A+ \alpha(\mu_1-\mu_2) I_{A^c})=(\mu_2-\mu_1)(\pr(A)-\alpha).
 \end{equation*}
 Hence, there are two possibilities: either $\pr(A)<\alpha$, and then $\pr((1-\alpha) (\mu_2-\mu_1) I_A+ \alpha(\mu_1-\mu_2) I_{A^c})>0$ and as a consequence $(1-\alpha) (\mu_2-\mu_1) I_A+ \alpha(\mu_1-\mu_2) I_{A^c}\in\rdesirs$; or
 $\pr(A)\geq \alpha$, whence for every $\alpha'<\alpha$ it holds that
 \begin{equation*}
  \pr((1-\alpha') (\mu_1-\mu_2) I_A+ \alpha'(\mu_2-\mu_1) I_{A^c})=(\mu_1-\mu_2)(\pr(A)-\alpha')>0,
 \end{equation*}
 and as a consequence $(1-\alpha') (\mu_1-\mu_2) I_A+ \alpha'(\mu_2-\mu_1) I_{A^c} \in \rdesirs$.

\item[$(iv)\Rightarrow(i)$] If $\lpr$ is not a linear prevision, then its restriction to events cannot be additive, so there is some event $A$ such that $\lpr(A)<\upr(A)$.
 Then if we take $\mu_1=1, \mu_2=0$ and $\alpha\in(\lpr(A),\upr(A))$,
 \begin{equation*}
  \alpha \mu_1+(1-\alpha)\mu_2-(\mu_1 I_A+\mu_2 I_{A^c})=\alpha-I_A \notin \rdesirs \text{ because } \upr(A)>\alpha,
 \end{equation*}
 and similarly given $\alpha'\in(\lpr(A),\alpha)$,
 \begin{equation*}
  \mu_1 I_A+\mu_2 I_{A^c}-(\alpha' \mu_1+(1-\alpha') \mu_2)=I_A-\alpha' \notin \rdesirs \text{ because } \lpr(A)<\alpha'.
 \end{equation*}
This is a contradiction. $\qedhere$
\end{itemize}
\end{proof}

\begin{proof}[{\bf Proof of Theorem~\ref{pr:domin-by-str-prod}}]
Let us begin by establishing the direct implication. Assume
ex-absurdo that $\lpr$ is not dominated by the strong product
$\lpr_{\pspace}\boxtimes \lpr_{\values}$. Then there are some
$\pr_{\pspace} \in \solp(\lpr_{\pspace})$ and $\pr_{\values} \in
\solp(\lpr_{\values})$ such that $\pr_{\pspace}\times \pr_\values
\notin \solp(\lpr)$. Thus, there is some gamble $f$ in
$\gambles(\pspace\times\values)$ such that $\lpr(f)>
(\pr_{\pspace}\times \pr_\values)(f)$. On the other hand,
\begin{equation*}
 (\pr_{\pspace}\times\pr_{\values})(f)=\pr_{\pspace}(\pr_{\values}(f|\pspace))
 \geq \lpr_{\pspace}(\pr_{\values}(f|\pspace)),
\end{equation*}
and this is a contradiction with Eq.~\eqref{eq:A5}.

Conversely, let us prove first of all that the strong product
$\lpr_{\pspace}\boxtimes\lpr_{\values}$ satisfies Eq.~\eqref{eq:A5}.
Consider $\pr_{\values}\geq\lpr_{\values}$ and a gamble
$f\in\gambles(\pspace\times\values)$. Given the gamble
$\pr_{\values}(f|\pspace)$ on $\pspace$, there is some
$\pr_{\pspace}\geq\lpr_{\pspace}$ such that
\begin{equation*}
 \pr_{\pspace}(\pr_{\values}(f|\pspace))=\lpr_{\pspace}(\pr_{\values}(f|\pspace)),
\end{equation*}
whence $\lpr_{\pspace}\boxtimes\lpr_{\values}(f) \leq
\lpr_{\pspace}(\pr_{\values}(f|\pspace))$, and as a
consequence~\eqref{eq:A5} holds. Now, if
$\lpr\leq\lpr_{\pspace}\boxtimes\lpr_{\values}$, given
$\pr_{\values}\geq\lpr_{\values}$ and a gamble
$f\in\gambles(\pspace\times\values)$,
$\lpr(f)\leq\lpr_{\pspace}\boxtimes\lpr_{\values}(f)
\leq\lpr_{\pspace}(\pr_{\values}(f|\pspace))$. Hence, $\lpr$ also
satisfies~\eqref{eq:A5}.
\end{proof}

Our next result is used in the proof of
Lemma~\ref{le:linear-env-A5}. It extends
\cite[Proposition~25]{cooman2011a}.

\begin{lemma}\label{pr:unique-ind-product}
Let $\lpr$ be a coherent lower prevision on
$\gambles(\pspace\times\values)$ with marginals
$\lpr_{\pspace},\lpr_{\values}$. If
$\lpr_{\values}\eqqcolon\pr_{\values}$ is a linear prevision, then
the only independent product of $\lpr_{\pspace},\pr_{\values}$ is
the strong product $\lpr_{\pspace}\boxtimes\pr_{\values}$, which
coincides moreover with
$\lpr_{\pspace}(\pr_{\values}(\cdot|\pspace))$.
\end{lemma}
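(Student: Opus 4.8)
The plan is to first dispose of the identity $\lpr_{\pspace}\boxtimes\pr_{\values}=\lpr_{\pspace}(\pr_{\values}(\cdot|\pspace))$ together with the observation that this common functional is itself an independent product, and then to prove uniqueness by a credal-set argument. For the identity: since $\pr_{\values}$ is linear it is the only coherent lower prevision dominating $\lpr_{\values}=\pr_{\values}$, so the inner minimisation over $\pr_{\values}'\geq\lpr_{\values}$ in the definition of the strong product collapses and $\lpr_{\pspace}\boxtimes\pr_{\values}(f)=\min\{\pr_{\pspace}(\pr_{\values}(f|\pspace)):\pr_{\pspace}\geq\lpr_{\pspace}\}$. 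Recognising $\pr_{\values}(f|\pspace)$ as the gamble $\omega\mapsto\pr_{\values}(f(\omega,\cdot))$ on $\pspace$, the lower-envelope formula~\eqref{eq:lowenv} for $\lpr_{\pspace}$ rewrites the right-hand side as $\lpr_{\pspace}(\pr_{\values}(\cdot|\pspace))$, which is exactly the marginal extension and hence, by Proposition~\ref{sec:ip-lower}, the smallest $\pspace$-$\values$~irrelevant product of $\lpr_{\pspace},\pr_{\values}$. To see that it is also an $\values$-$\pspace$~irrelevant product, write $p_x:=\pr_{\values}(\{x\})\geq0$ and $f_x:=f(\cdot,x)\in\gambles(\pspace)$; then super-additivity and positive homogeneity of $\lpr_{\pspace}$ give $\lpr_{\pspace}(\pr_{\values}(f|\pspace))=\lpr_{\pspace}(\sum_{x}p_xf_x)\geq\sum_{x}p_x\lpr_{\pspace}(f_x)$, and the right-hand side is precisely the conglomerated irrelevant conditional $\lpr_{\values}(\lpr(\cdot|\values))$ (built from $\lpr(f|\{x\})=\lpr_{\pspace}(f_x)$) evaluated at $f$. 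So the common functional is an independent product, and existence is settled.

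For uniqueness, I would take an arbitrary independent product $\lpr$ of $\lpr_{\pspace},\pr_{\values}$ (so with those very marginals) and show that its credal set satisfies $\solp(\lpr)=\{\pr_{\pspace}\times\pr_{\values}:\pr_{\pspace}\in\solp(\lpr_{\pspace})\}$; the latter, being a closed convex set of linear previsions whose lower envelope is $\lpr_{\pspace}\boxtimes\pr_{\values}$, equals $\solp(\lpr_{\pspace}\boxtimes\pr_{\values})$, and the one-to-one correspondence between coherent lower previsions and closed convex credal sets then forces $\lpr=\lpr_{\pspace}\boxtimes\pr_{\values}$. The crucial step is the claim that \emph{every} $\pr\in\solp(\lpr)$ is the product of its marginals. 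Let $\pr_{\pspace}$ be the $\pspace$-marginal of such a $\pr$, so $\pr_{\pspace}\geq\lpr_{\pspace}$, while its $\values$-marginal, dominating the linear $\pr_{\values}$, equals $\pr_{\values}$. Using that $\lpr$ is an $\values$-$\pspace$~irrelevant product, for every $f\in\gambles(\pspace\times\values)$ one gets $\pr(f)\geq\lpr(f)\geq\sum_{x}p_x\lpr_{\pspace}(f_x)\geq\sum_{x}p_x\pr_{\pspace}(f_x)=(\pr_{\pspace}\times\pr_{\values})(f)$, where the last inequality uses $p_x\geq0$ and $\pr_{\pspace}\geq\lpr_{\pspace}$ and the last equality is finite additivity over $\values$. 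Applying the very same inequality to $-f$ yields the reverse, so $\pr=\pr_{\pspace}\times\pr_{\values}$.

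With the claim available, both inclusions become routine. It gives at once $\solp(\lpr)\subseteq\{\pr_{\pspace}\times\pr_{\values}:\pr_{\pspace}\in\solp(\lpr_{\pspace})\}$. Conversely, the set of $\pspace$-marginals of elements of $\solp(\lpr)$ is closed and convex and has lower envelope $\lpr_{\pspace}$ on $\pspace$-measurable gambles, hence equals $\solp(\lpr_{\pspace})$; therefore each $\pr_{\pspace}\in\solp(\lpr_{\pspace})$ is the $\pspace$-marginal of some $\pr\in\solp(\lpr)$, and by the claim $\pr_{\pspace}\times\pr_{\values}=\pr\in\solp(\lpr)$. This yields the reverse inclusion and hence the equality of credal sets, completing the proof.

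The main obstacle is the claim, and within it the realisation that the $\values$-$\pspace$~irrelevance bound can be relaxed from $\sum_{x}p_x\lpr_{\pspace}(f_x)$ all the way down to the product $(\pr_{\pspace}\times\pr_{\values})(f)$ — using only monotonicity of $\lpr_{\pspace}$ and linearity of $\pr_{\values}$ — and that, because this holds for $f$ and $-f$ simultaneously, it upgrades a mere domination into an exact equality. Everything else (collapsing the strong-product minimisation, identifying the marginals of a credal set, invoking the lower-envelope theorem) is bookkeeping. It is worth noting that the uniqueness argument uses only the $\values$-$\pspace$~half of the independence hypothesis plus the prescribed marginals; the $\pspace$-$\values$~half is needed only in the first paragraph, to certify that the strong product is an independent product in the first place.
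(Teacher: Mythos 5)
Your first paragraph (collapsing the strong-product minimisation, identifying it with the marginal extension $\lpr_{\pspace}(\pr_{\values}(\cdot|\pspace))$, and checking that this is also an $\values$-$\pspace$~irrelevant product via super-additivity) is sound. The uniqueness half, however, has a genuine gap, and it sits exactly in the step you flag as crucial. In the chain $\pr(f)\geq\lpr(f)\geq\sum_{x}p_x\lpr_{\pspace}(f_x)\geq\sum_{x}p_x\pr_{\pspace}(f_x)$ the last inequality is reversed: from $\pr_{\pspace}\geq\lpr_{\pspace}$ and $p_x\geq0$ you get $\sum_{x}p_x\lpr_{\pspace}(f_x)\leq\sum_{x}p_x\pr_{\pspace}(f_x)=(\pr_{\pspace}\times\pr_{\values})(f)$, so the chain proves nothing. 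Worse, the strategy itself cannot be repaired: the $\values$-$\pspace$~irrelevance bound alone does not force factorisation of the dominating linear previsions. Take $\pspace=\{\omega_1,\omega_2\}$, $\values=\{x_1,x_2\}$, $\lpr_{\pspace}$ vacuous, $\pr_{\values}$ uniform; the linear prevision $\pr$ with mass $\tfrac12$ on $(\omega_1,x_1)$ and $\tfrac12$ on $(\omega_2,x_2)$ has these marginals and dominates $\pr_{\values}(\lpr(\cdot|\values))$, whose value at $f$ is $\tfrac12\min_\omega f(\omega,x_1)+\tfrac12\min_\omega f(\omega,x_2)$, yet it is not the product of its marginals. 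So your closing remark — that uniqueness needs only the $\values$-$\pspace$~half of independence — is precisely where the argument fails.

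The factorisation claim is true, but it must be pulled out of the $\pspace$-$\values$~half, because that is the side on which the conglomerated conditional is the \emph{precise} $\pr_{\values}(\cdot|\pspace)$. Any independent product dominates $\lpr_{\pspace}(\pr_{\values}(\cdot|\pspace))$, hence so does every $\pr\in\solp(\lpr)$. For fixed $f$ put $h\coloneqq f-\pr_{\values}(f|\pspace)$; then $\pr_{\values}(h|\pspace)=\pr_{\values}(-h|\pspace)=0$, so $\pr(h)\geq\lpr_{\pspace}(0)=0$ and $\pr(-h)\geq0$, and linearity of $\pr$ gives $\pr(h)=0$, i.e.\ $\pr(f)=\pr_{\pspace}(\pr_{\values}(f|\pspace))=(\pr_{\pspace}\times\pr_{\values})(f)$. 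With that claim in place your credal-set bookkeeping does go through. Note this repaired argument is essentially the paper's own proof: there the same cancellation is run directly at the level of $\lpr$ — super-additivity gives $\lpr(f-\pr_{\values}(f|\pspace))=\upr(f-\pr_{\values}(f|\pspace))=0$, and Walley's inequality $\lpr(f)\leq\upr(f-\pr_{\values}(f|\pspace))+\lpr(\pr_{\values}(f|\pspace))$ then yields $\lpr=\lpr_{\pspace}(\pr_{\values}(\cdot|\pspace))$ for every independent product, without any detour through credal sets.
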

\begin{proof}
It has been established in \cite[Proposition~7]{miranda2015b} that
when $\lpr_{\values}$ is a linear prevision, then $\lpr$ is an
independent product if and only if it dominates
$\lpr_{\pspace}(\pr_{\values}(\cdot | \pspace))$. Let us show that
we must actually have the equality.

Given a gamble $f$, it holds that
\begin{align*}
 &\lpr(f-\pr_\values(f|\pspace))\geq
 \lpr_{\pspace}(\pr_{\values}(f-\pr_\values(f|\pspace))|\pspace))=0 \text{ and } \\ &\lpr(-f-\pr_\values(-f|\pspace))\geq
 \lpr_{\pspace}(\pr_{\values}(-f-\pr_\values(-f|\pspace))|\pspace))=0,
\end{align*}
whence
\begin{equation*}
0=\lpr(0)=\lpr((f-\pr_\values(f|\pspace))+(-f-\pr_\values(-f|\pspace)))\geq\lpr(f-\pr_\values(f|\pspace))+\lpr(-f-\pr_\values(-f|\pspace))\geq
0,
\end{equation*}
and as a consequence $\lpr(f-\pr_\values(f|\pspace))=0$ for every $f$.
From this it follows that
$\upr(f-\pr_\values(f|\pspace))=-\lpr(-(f-\pr_\values(f|\pspace)))=-\lpr(-f-\pr_\values(-f|\pspace))=0$
for every $f\in\gambles(\pspace\times\values)$. Now, it is a
consequence of \cite[Theorem~2.6.1(e)]{walley1991} that
\begin{equation*}
 \lpr(f)=\lpr((f-\pr_\values(f|\pspace))+\pr_{\values}(f|\pspace))\leq
 \upr(f-\pr_\values(f|\pspace))+\lpr(\pr_{\values}(f|\pspace))=\lpr_{\pspace}(\pr_{\values}(f|\pspace)),
\end{equation*}
and since we have the double inequality we obtain the equality. We
deduce in particular that the strong product, which is an
independent product, must also coincide with
$\lpr_{\pspace}(\pr_{\values}(\cdot|\pspace))$.
\end{proof}

\begin{proof}[{\bf Proof of Lemma~\ref{le:linear-env-A5}}]
Let us first show that Eq.~\eqref{eq:A4} is equivalent to the following:
\begin{equation}
(\forall g,f\in\gambles)\lpr(g-f)\geq\inf_{\omega\in\pspace}\lpr(g-f^\omega).\label{eq:A4.2}
\end{equation}
To show the direct implication, let $g'\coloneqq g-\inf_{\omega\in\pspace}\lpr(g-f^\omega)$. Considered that $\inf_{\omega\in\pspace}\lpr(g'-f^\omega)=0$, we obtain by Eq.~\eqref{eq:A4} that $\lpr(g'-f)\geq0$, whence $\lpr(g-f)\geq\inf_{\omega\in\pspace}\lpr(g-f^\omega)$. For the converse implication, it is enough to consider that $\lpr(g-f)\geq\inf_{\omega\in\pspace}\lpr(g-f^\omega)\geq0$.

Now, in the particular case that $\lpr$ is a linear prevision $\pr$
(with marginals $\pr_\pspace,\pr_\values$), then Eq.~\eqref{eq:A4.2}
is in turn equivalent to $\pr(-f)\geq\inf_{\omega\in\pspace}
\pr(-f^{\omega})=\inf_{\omega\in\pspace} \pr(-f|\omega)$, where
$\pr(\cdot|\pspace)$ is derived from $\pr_{\values}$ by
Eq.~\eqref{eq:cong-cond-lp}. This is also equivalent to
\begin{equation*}
\pr(f)\geq\inf_{\omega\in\pspace} \pr_{\values}(f|\omega)
\end{equation*}
because the former inequality has to hold for all gambles $f$.

If $\pr$ is an independent product of its marginals, then it is in
particular coherent with $\pr_{\values}(\cdot|\pspace)$, and by
\cite[Theorem~6.5.7]{walley1991}, this is equivalent to
$\pr(f)\geq\inf_{\omega\in\pspace}\pr_{\values}(f|\omega)$.
Conversely, if it is coherent with $\pr_{\values}(\cdot|\pspace)$,
then by \cite[Theorem~6.7.3]{walley1991} it must be
$\pr=\pr_{\pspace}(\pr_{\values}(\cdot|\pspace))$, and then, by
Lemma~\ref{pr:unique-ind-product}, $\pr$ is an independent product.

Let us show now that~\eqref{eq:A4} is preserved by taking lower
envelopes. Let $\solp$ be a family of linear previsions satisfying
Eq.~\eqref{eq:A4}, and take $\lpr\coloneqq\inf\solp$. Consider gambles
$f,g\in\gambles(\pspace\times\values)$. Then for any $\eps>0$ there
is some $\pr\in\solp$ such that
\begin{equation*}
 \lpr(g-f-\inf_{\omega\in\pspace} \lpr(g-f^{\omega}))=\pr(g-f-\inf_{\omega\in\pspace} \lpr(g-f^{\omega}))-\eps \geq
 \pr(g-f-\inf_{\omega\in\pspace} \pr(g-f^{\omega}))-\eps\geq -\eps,
\end{equation*}
taking into account that $\pr\geq\lpr$ and that it
satisfies~\eqref{eq:A4}. Since this holds for any $\eps>0$, we
deduce that $\lpr(g-f-\inf_{\omega\in\pspace}
\lpr(g-f^{\omega}))\geq 0$ and as a consequence $\lpr$ also
satisfies Eq.~\eqref{eq:A4}.
\end{proof}

The next proof is a reformulation of the one given by Galaabaatar and Karni in \cite[Theorem~1]{galaabaatar2013}.

\begin{proof}[{\bf Proof of Proposition~\ref{pr:ind-selection}}]
The direct implication follows from Lemma~\ref{le:linear-env-A5}.
Let us prove the converse.

Let $\exp(\solp(\lpr))$ denote the set of exposed
points\footnote{From \cite[Section~18]{rockafellar1970}, an
\emph{exposed} point of a convex set $\cone$ is a point through which
there is a supporting hyperplane which contains no other point of
$\cone$.} of $\solp(\lpr)$. Since $\pspace,\values$ are finite, it
follows from Straszewicz's theorem \cite[Theorem~18.6]{rockafellar1970}
that $\solp(\lpr)$ is the closed convex hull of $\exp(\solp(\lpr))$,
and as a consequence $\lpr$ is the lower envelope of the set of
exposed points. Let us show that each of these linear previsions satisfies Eq.~\eqref{eq:factorising-linear}.

In order to prove this, we are going to show first that the following
condition is necessary for the factorisation in Eq.~\eqref{eq:factorising-linear}:
\begin{equation*}
(\forall x_1,x_2 \in\values)(\exists k_{x_1,x_2}\in\reals)(\forall \omega\in\pspace) P(\{(\omega,x_1)\})=k_{x_1,x_2} P(\{(\omega,x_2)\}).
\end{equation*}

To prove that this is the case, note that under the above conditions
we get, for any $\omega\in\pspace,x\in\values$, that
\begin{equation}\label{eq:aux-fact}
 P(\{\omega\})=\sum_{x'\in\values} P(\{(\omega,x')\})=\sum_{x'\in\values} k_{x,x'}
P(\{(\omega,x)\})=P(\{(\omega,x)\})(\sum_{x'\in\values} k_{x,x'})
\end{equation}
and
\begin{equation*}
P(\{x\}) k_{x,x'}=(\sum_{\omega'\in\pspace} P(\{(\omega',x)\})) k_{x,x'}=\sum_{\omega'\in\pspace}
(k_{x,x'} P(\{(\omega',x)\}))=\sum_{\omega'\in\pspace} P(\{(\omega',x')\})=P(\{x'\}),
\end{equation*}
whence
\begin{equation*}
\pr(\{x\})(\sum_{x'\in\values} k_{x,x'})=\sum_{x'\in\values} k_{x,x'} \pr(\{x\})=\sum_{x'\in\values}
\pr(\{x'\})=1.
\end{equation*}
This implies that $P(\{x\})>0$ and as a consequence that $(\sum_{x'\in\values} k_{x,x'})=\frac{1}{\pr(\{x\})}$; using
Eq.~\eqref{eq:aux-fact}, we conclude that $\pr(\{(\omega,x)\})=\pr(\{\omega\})\pr(\{x\})$.

Now, assume ex-absurdo that there is an exposed point $\pr$ of
$\solp(\lpr)$ that does not satisfy
Eq.~\eqref{eq:factorising-linear}. Then, the reasoning above implies
that we can find two elements $x_1,x_2$ in $\values$ for which there
is no real $k$ such that
$\pr(\{(\omega,x_1)\})=k\pr(\{(\omega,x_2)\})$ for all $\omega$.

This in turn implies that there are $\omega_1,\omega_2$ in $\pspace$
such that $\pr(\{(\omega_1,x_1)\})=k_1 \pr(\{(\omega_1,x_2)\})$ and
$\pr(\{(\omega_2,x_1)\})=k_2 \pr(\{(\omega_2,x_2)\})$ for different
non-negative real numbers $k_1\neq k_2$. Note that the inequalities
$\pr(\{(\omega_1,x_1)\})\neq k_2 \pr(\{(\omega_1,x_2)\})$ and
$\pr(\{(\omega_2,x_1)\}) \neq k_1 \pr(\{(\omega_2,x_2)\})$ imply
that $\pr(\{(\omega_1,x_2)\}),\pr(\{(\omega_2,x_2)\})\neq 0$ and
that $k_1,k_2>0$. Assume without loss of generality that $k_1>k_2$.
Then for every $\lambda\in\left(\frac{1}{k_1},\frac{1}{k_2}\right)$,
\begin{equation*}
 \pr(\lambda I_{\{x_1\}} |\{\omega_1\})> \pr(I_{\{x_2\}} |\{\omega_1\}) \text{ and } \pr(\lambda I_{\{x_1\}} |\{\omega_2\})< \pr(I_{\{x_2\}} | \{\omega_2\}).
\end{equation*}
In particular, we can choose some $\lambda^*\in
(\frac{1}{k_1},\frac{1}{k_2})$ such that $\pr(\lambda^* I_{\{x_1\}})\neq
\pr(I_{\{x_2\}})$. Consider then the gambles $h_1\coloneqq \lambda^* I_{\{x_1\}}$ and
$h_2\coloneqq I_{\{x_2\}}$.

Assume for instance that $\pr(h_1)>\pr(h_2)$ (if $\pr(h_1)<\pr(h_2)$
then it suffices to reverse the argument below), and let us define
the gamble $f\in\gambles(\pspace\times\values)$ by
\begin{eqnarray*}
 f:\pspace\times\values &\rightarrow &\reals \\
  (\omega,x) &\mapsto & \begin{cases}
  h_1(\omega,x) &\text{ if } \pr(h_1|\{\omega\})\geq \pr(h_2|\{\omega\}), \\
  h_2(\omega,x) &\text{ otherwise}.
  \end{cases}
\end{eqnarray*}
It follows from this definition that
\begin{equation*}
 f^{\omega}=\begin{cases}
  h_1^{\omega} \quad \text{ if } \pr(h_1|\{\omega\})\geq\pr(h_2|\{\omega\}), \\
  h_2^{\omega} \quad \text{ otherwise},
  \end{cases}
\end{equation*}
whence $f^{\omega_1}=h_1^{\omega_1}, f^{\omega_2}=h^{\omega_2}_2$.

Now, since $\pr$ is an exposed point, there is a gamble $g\in\gambles(\values\times\pspace)$ such that
$\pr(g)<\pr'(g)$ for any $\pr'\in \solp(\lpr), \pr'\neq\pr$. Note
that we can assume without loss of generality that $\pr(g)=0$,
simply by redefining $g\coloneqq g-\pr(g)$. In particular, for every
$\lambda>0$ it holds that $\lpr(\lambda g)=\lambda \lpr(g)=0$, and
therefore
\begin{equation*}
(\forall \lambda>0) \lpr(\lambda g+h_1-h_1)=\lpr(\lambda g+h_1-f^{\omega_1})=0,
\end{equation*}
taking into account that $h_1^{\omega_1}=h_1$ because it is an
$\values$-measurable gamble.

Let us show that there is some $\lambda>0$ such that $\lpr(\lambda
g+h_1-h_2)\geq 0$. If this were not the case, then for every natural
number $n$ there would be some $\pr_n\in\solp(\lpr)$ such that
$\pr_n(ng+h_1-h_2)<0$, whence $\pr_n(h_2-h_1)>n \pr_n(g)$. Since
$\solp(\lpr)$ is a compact subset of
$\reals^{|\pspace\times\values|}$, which is a metric space, it is
sequentially compact, so there is a subsequence of $(\pr_n)_{n\in\nats}$ that
converges towards some $\pr'\in\solp(\lpr)$. If $\pr'(g)>0$, then we
have that $\pr'(h_2-h_1)\geq m\pr'(g)$ for all $m\in\nats$, which means that $\pr'(h_2-h_1)=+\infty$, a contradiction. Thus, we must have
$\pr'(g)=0$ or, equivalently, $\pr'=\pr$. But
$\pr(h_2-h_1)=\pr'(h_2-h_1)\geq 0$ by construction, while we have
assumed before that $\pr(h_1)>\pr(h_2)$. This is a contradiction.

We conclude that there is some $\lambda^\star>0$ such that
$\lpr(\lambda^\star g+h_1-h_2)\geq 0$. Then $\lpr(\lambda^\star g+h_1
-f^{\omega})\geq 0$ for all $\omega\in\pspace$, since we have established that
\begin{equation*}
 \lpr(\lambda^\star g+h_1-h_1)\geq 0 \text{ and } \lpr(\lambda^\star g+h_1-h_2)\geq 0.
\end{equation*}
On the other hand,
\begin{align*}
 \lpr(\lambda^\star g+h_1-f)&\leq \pr(\lambda^\star g+h_1-f)=\pr(h_1-f)=\sum_{\omega\in\pspace} \pr(\{\omega\})[\pr(h_1|\{\omega\})-\pr(f^{\omega}|\{\omega\})]\\ &=
 \sum_{\omega\in\pspace: f^{\omega}=h_2} \pr(\{\omega\})[\pr(h_1|\{\omega\})-\pr(h_2|\{\omega\})]\leq
 \pr(\{\omega_2\})[\pr(h_1|\{\omega_2\})-\pr(h_2|\{\omega_2\})]<0,
\end{align*}
since $\pr(\{\omega_2\})>0$ and
$\pr(h_1|\{\omega_2\})<\pr(h_2|\{\omega_2\})$. This is a
contradiction with~\eqref{eq:A4}.
\end{proof}

The next one is a technical result needed in the proof of Theorem~\ref{thm:dom-sp} below.

\begin{lemma}\label{le:closure-finite}
Let $\lpr_1,\lpr_2$ be two coherent lower previsions with a linear
domain $\domain\subseteq\gambles$, and such that
$\lpr_1(f)\geq\lpr_2(f)$ for every $f\in\domain$. Let $\lpr_2'$ be a
coherent lower prevision on $\gambles$ that extends $\lpr_2$. Then
there is a coherent lower prevision $\lpr_1'$ on $\gambles$ that
extends $\lpr_1$ such that $\lpr_1'\geq\lpr_2'$.
\end{lemma}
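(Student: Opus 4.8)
The plan is to build $\lpr_1'$ directly as the coherent lower prevision of an explicitly constructed cone of desirable gambles, so that both requirements — extending $\lpr_1$ on $\domain$ and dominating $\lpr_2'$ on all of $\gambles$ — are visible from the construction. Let $\rdesirs_0\coloneqq\{f\in\gambles:f\gneq0\text{ or }\lpr_2'(f)>0\}$ be the strictly desirable coherent set induced by $\lpr_2'$ through Eq.~\eqref{eq:Rlpr}, so that $\gambles^+\subseteq\rdesirs_0$ and $\lpr_2'$ is recovered from it via Eq.~\eqref{eq:lprR}. I would then set
\begin{equation*}
\rdesirs\coloneqq\posi\Big(\rdesirs_0\cup\{g-\lpr_1(g)+\eps:g\in\domain,\eps>0\}\Big),\qquad \lpr_1'(f)\coloneqq\sup\{\mu\in\reals:f-\mu\in\rdesirs\}.
\end{equation*}
Since $\rdesirs\supseteq\rdesirs_0\supseteq\gambles^+$ and $\rdesirs$ is a convex cone by construction, conditions~\ref{D1},~\ref{D3},~\ref{D4} hold automatically; the whole argument reduces to checking~\ref{D2}, i.e.\ $0\notin\rdesirs$. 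Once that is done, $\rdesirs$ is coherent, $\lpr_1'$ is a coherent lower prevision (in particular $\inf f\le\lpr_1'(f)\le\sup f$, hence real-valued), and $\lpr_1'\ge\lpr_2'$ is immediate from $\rdesirs\supseteq\rdesirs_0$.

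The technical heart is a normal form for the elements of $\rdesirs$. Because $\domain$ is a linear subspace and $\lpr_1$ is superadditive and positively homogeneous on it, a positive combination $\sum_k\mu_k\big(g_k-\lpr_1(g_k)+\eps_k\big)$ with $g_k\in\domain$ equals $h-\sum_k\mu_k\lpr_1(g_k)+\delta$ with $h\coloneqq\sum_k\mu_k g_k\in\domain$ and $\delta\coloneqq\sum_k\mu_k\eps_k>0$, and this is $\ge h-\lpr_1(h)+\delta$; a positive combination of elements of $\rdesirs_0$ lies again in $\rdesirs_0$. Hence every $f\in\rdesirs$ is either an element of $\rdesirs_0$, or satisfies $f\ge f_0+h-\lpr_1(h)+\delta$ for some $f_0\in\rdesirs_0\cup\{0\}$, some $h\in\domain$ and some $\delta>0$; and in the latter case, since $\rdesirs_0\supseteq\gambles^+$ is additive, the gamble $f-h+\lpr_1(h)-\delta$, being $\ge f_0$, again lies in $\rdesirs_0\cup\{0\}$, so $\lpr_2'(f-h+\lpr_1(h)-\delta)\ge0$. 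With this in hand, $0\notin\rdesirs$ follows: the case $0\in\rdesirs_0$ is excluded by~\ref{D2} for $\rdesirs_0$, so otherwise $-h+\lpr_1(h)-\delta\ge f_0\in\rdesirs_0\cup\{0\}$ with $\delta>0$; then $\lpr_2'(-h)\ge-\lpr_1(h)+\delta$, and since $-h\in\domain$ and $\lpr_2'$ extends $\lpr_2$, we get $\lpr_1(-h)\ge\lpr_2(-h)=\lpr_2'(-h)\ge-\lpr_1(h)+\delta$, i.e.\ $\lpr_1(-h)+\lpr_1(h)\ge\delta>0$, contradicting superadditivity $\lpr_1(-h)+\lpr_1(h)\le\lpr_1(0)=0$.

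It remains to check that $\lpr_1'$ restricts to $\lpr_1$ on $\domain$. For $g\in\domain$, the membership $g-(\lpr_1(g)-\eps)=g-\lpr_1(g)+\eps\in\rdesirs$ gives $\lpr_1'(g)\ge\lpr_1(g)$. Conversely, if $g-\mu\in\rdesirs$ with $g\in\domain$, then either $g-\mu\in\rdesirs_0$, whence $\lpr_2'(g)\ge\mu$ and so $\mu\le\lpr_2(g)\le\lpr_1(g)$; or the normal form yields $(g-h)-(\mu-\lpr_1(h)+\delta)\ge f_0\in\rdesirs_0\cup\{0\}$ with $\delta>0$, and then, exactly as above, $\lpr_2'(g-h)\ge\mu-\lpr_1(h)+\delta$, hence $\lpr_1(g-h)\ge\lpr_2(g-h)\ge\mu-\lpr_1(h)+\delta$, and superadditivity gives $\lpr_1(g)\ge\lpr_1(g-h)+\lpr_1(h)\ge\mu+\delta>\mu$. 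Taking the supremum over admissible $\mu$ yields $\lpr_1'(g)\le\lpr_1(g)$, hence equality, and the proof is complete. The step I expect to require the most care is the bookkeeping behind the normal form — collapsing the $\rdesirs_0$-terms of a positive combination into a single $f_0\in\rdesirs_0\cup\{0\}$, and absorbing (via $\gambles^+\subseteq\rdesirs_0$) the slack produced by $\sum_k\mu_k\lpr_1(g_k)\le\lpr_1(h)$ — after which both $0\notin\rdesirs$ and the extension property come out of the same two-line superadditivity estimate.
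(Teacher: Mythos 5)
Your proof is correct, but it takes a genuinely different route from the paper. The paper argues dually, with credal sets: it shows $\solp(\lpr_1)\cap\solp(\lpr_2')\neq\emptyset$ by the strong separation theorem combined with the natural-extension formula for $\lpr_1$ on the negation-invariant domain $\domain$, defines $\lpr_1'$ as the lower envelope of that intersection, and then repeats the separation argument around a prevision $\pr_1\in\solp(\lpr_1)$ attaining $\lpr_1(f)$ to show that the envelope still agrees with $\lpr_1$ on $\domain$. You instead stay on the primal side: you take the strictly desirable set $\rdesirs_0$ of $\lpr_2'$, adjoin the assessments $g-\lpr_1(g)+\eps$ for $g\in\domain$, and verify that the resulting natural extension avoids null gain — your normal form for elements of $\posi(\rdesirs_0\cup\{g-\lpr_1(g)+\eps\})$ is sound (collapsing the $\rdesirs_0$-part into one $f_0$, and bounding the $\domain$-part below by $h-\lpr_1(h)+\delta$ via superadditivity and positive homogeneity on the linear space $\domain$), and both condition~\ref{D2} and the agreement of $\lpr_1'$ with $\lpr_1$ on $\domain$ then follow from the same estimate $\lpr_1(h)+\lpr_1(-h)\le 0$, using that $\lpr_2'$ extends $\lpr_2\le\lpr_1$ on $\domain$. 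The two constructions produce essentially the same least-committal extension, so the difference is in what each argument buys: yours is more elementary and constructive (no separating-hyperplane theorem, no weak* compactness; consistency of the combined assessments is checked by hand), while the paper's yields directly the credal-set description $\solp(\lpr_1')=\solp(\lpr_1)\cap\solp(\lpr_2')$, which is the convenient form for the subsequent use in Theorem~\ref{thm:dom-sp}, where a dominating \emph{linear} prevision extending a linear $\lpr_1$ is extracted (with your formulation one gets this too, since any linear prevision dominating $\lpr_1'$ must agree with a linear $\lpr_1$ on the negation-invariant $\domain$, but that extra remark is needed).
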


\begin{proof}
Consider the credal set $\solp(\lpr_1)\coloneqq\{\pr \text{ linear
}: (\forall f\in\domain)\pr(f)\geq\lpr_1(f)\}$, and let
$\solp(\lpr_2')$ be the credal set associated with $\lpr_2'$. Define
$\solp\coloneqq\solp(\lpr_1)\cap\solp(\lpr_2')$. This is the
intersection of two compact and convex sets, so it is also compact
and convex. We start proving that it is not empty.

Let us assume by contradiction that $\solp$ is empty. Then we can
apply the strong separation theorem in
\cite[Appendix~E3]{walley1991} and find a gamble $f$ and a real
number $\mu$ such that $\pr'(f)<\mu$ for every
$\pr'\in\solp(\lpr_1)$ and $\pr'(f)>\mu$ for every
$\pr'\in\solp(\lpr_2')$. This means that the upper envelope
$\unex_{\lpr_1}$ of $\solp(\lpr_1)$ satisfies
$\unex_{\lpr_1}(f)<\lpr_2'(f)$.

Since $\domain$ is a linear set, we can regard it as a convex cone
with the additional property that $\domain=-\domain$. This allows us
to deduce from \cite[Theorem~3.1.4]{walley1991} that
$\unex_{\lpr_1}(f)=\inf\{\upr_1(g)+\mu:f\le
g+\mu,g\in\domain\}<\lpr_2'(f)$; whence there is some $\eps>0$, some
real $\mu$ and some $g\in\domain$ such that $f\leq g+\mu$ and
$\upr_1(g)+\mu\leq \lpr_2'(f)-\eps$. It follows that
\begin{equation*}
 f-g-\mu+\upr_1(g)+\mu\leq \lpr_2'(f)-\eps \Rightarrow f-\lpr_2'(f)-g+\upr_1(g) \leq -\eps \Rightarrow f-\lpr_2'(f)-g+\lpr_2'(g) \leq -\eps,
\end{equation*}
taking into account that
$\upr_1(g)\ge\lpr_1(g)\ge\lpr_2(g)=\lpr_2'(g)$, given that
$g\in\domain$ and that $\lpr_2'$ is an extension of $\lpr_2$. This
contradicts the coherence of $\lpr_2'$. We conclude that $\solp$ is
not empty.

By \cite[Theorem~3.6.1]{walley1991}, $\solp$ induces a coherent
lower prevision $\lpr_1'$ on $\gambles$, and
$\solp(\lpr_1')=\solp=\solp(\lpr_1)\cap\solp(\lpr_2')$. This implies
that $\lpr_1'$ dominates $\lpr_2'$ as well as $\lpr_1$. In order to
show that it is an extension of $\lpr_1$, it remains to prove that
$\lpr_1'$ coincides with $\lpr_1$ on $\domain$. Consider a gamble
$f\in\domain$, and let $\pr_1\in\solp(\lpr_1)$ satisfy
$\pr_1(f)=\lpr_1(f)$. The restriction of $\pr_1$ to $\domain$
dominates $\lpr_2$, so we can reason as before to conclude that
$\solp(\pr_1)\cap\solp(\lpr_2')\neq\emptyset$.

Note that any linear prevision $\pr_1'$ in $\solp(\pr_1)$ must agree
with $\pr_1$ on $\domain$, since this set is negation invariant and
we have $\pr_1'(f)\geq \pr_1(f)$ and also $\pr_1(f)=-\pr_1(-f) \geq
-\pr_1'(-f)=\pr_1'(f)$ for any $f\in\domain$. Thus, we conclude that
there is a linear prevision $\pr_1'$ that coincides with $\pr_1$ in
$\domain$ and dominates $\lpr_2'$ on all gambles. It follows that
$\pr_1'\in\solp(\lpr_1)\cap\solp(\lpr_2')=\solp(\lpr_1')$. As a
consequence, $\lpr_1'(f)\leq \pr_1'(f)=\lpr_1(f)$, and since the
converse inequality holds trivially we conclude that
$\lpr_1'=\lpr_1$ on $\domain$.
\end{proof}

\begin{proof}[{\bf Proof of Theorem~\ref{thm:dom-sp}}]
If $\pspace$ is finite, the result follows from
Proposition~\ref{pr:ind-selection}. Consider next the case of
$\pspace$ infinite, and assume ex-absurdo that there is a gamble $f$
on $\pspace\times\values$ such that
$\lpr(f)<(\lpr_{\pspace}\boxtimes\lpr_{\values})(f)$. By
Lemma~\ref{le:linear-env-A5}, Eq.~\eqref{eq:A4} is preserved by
taking lower envelopes, so we can redefine $\lpr\coloneqq
\min\{\lpr,\lpr_{\pspace}\boxtimes\lpr_{\values}\}$ and obtain a
coherent lower prevision on $\gambles(\pspace\times\values)$ that is
dominated by $\lpr_{\pspace}\boxtimes\lpr_{\values}$ and satisfies
Eq.~\eqref{eq:A4}.

Let us prove that for every natural number $n$ there exists a simple
gamble $f_n$ on $\pspace\times\values$ such that $\norm{f_n-f}\leq
\frac{1}{n}$.

Since $f$ is bounded, there exists some natural number $k$ such that
$-k \leq \inf f\leq \sup f \leq k$. For every $n$, there exists a
finite partition $\partit_n$ of $[-k,k]$ with intervals $I^n_j$ of
length smaller than $\frac{1}{n}$. Let us define the gamble $f_n$ on
$\pspace\times\values$ by
\begin{equation*}
 f_n(\omega,x)\coloneqq\inf I^n_j \quad \text{ if } f(\omega,x)\in I^n_j,
\end{equation*}
for every $(\omega,x)\in\pspace\times\values$. Then since the
partition $\partit_n$ is finite, we deduce that $f_n$ is simple.
Moreover, by construction $|f_n(\omega,x)-f(\omega,x)|\leq
\frac{1}{n}$ for every $(\omega,x)\in\pspace\times\values$.

Let us define next the relation
\begin{equation*}
 \omega \sim \omega' \Leftrightarrow (\forall
 x\in\values)f_n(\omega,x)=f_n(\omega',x).
\end{equation*}
This is trivially an equivalence relation. It has a finite
number of equivalence classes: if $\{z_1,\dots,z_k\}$ denotes the
finite range of $f_n$ and $\values=\{x_1,\dots,x_m\}$, then the
equivalence classes are given by
\begin{equation*}
 \{\omega: (\forall i=1,\dots,m)f_n(\omega,x_i)=z_{t_i}\},
\end{equation*}
where $(t_1,\dots,t_m) \in \{1,\dots,k\}^{m}$ is finite. In other
words, the number of equivalence classes is at most
$\{1,\dots,k\}^{m}$. Let us denote these classes by
$\{B_1,\dots,B_{\ell}\}$.

Next, let us define the set of gambles
$$\domain\coloneqq \{g\in\gambles(\pspace\times\values):(\forall \omega\sim \omega')
g(\omega,x)=g(\omega',x)\}.$$ This is a
linear set of gambles: given $g_1,g_2\in\domain$ and reals
$\lambda_1,\lambda_2$, it holds that
\begin{equation*}
 (\lambda_1 g_1+\lambda_2 g_2)(\omega,x)=\lambda_1 g_1(\omega,x)+\lambda_2 g_2(\omega,x)=
 \lambda_1 g_1(\omega',x)+\lambda_2 g_2(\omega',x)=(\lambda_1 g_1+\lambda_2 g_2)(\omega',x)
\end{equation*}
for every $\omega\sim \omega'$.

The set $\domain$ corresponds to the gambles that are measurable
with respect to the finite partition $\{B_j \times \{x_i\}:
i=1,\dots,m, j=1,\dots,\ell\}$ of $\pspace\times\values$, and by
construction it includes the gamble $f_n$.

Let $\lpr'$ denote the restriction of $\lpr$ to $\domain$. Its
marginals are $\lpr_{\pspace}'$ and $\lpr_{\values}$, where
$\lpr_{\pspace}'$ is the restriction of $\lpr_{\pspace}$ to the
gambles that are measurable with respect to the partition
$\partit\coloneqq \{B_1,\dots,B_{\ell}\}$ of $\pspace$. Moreover,
$\domain$ is in a one-to-one correspondence with
$\gambles(\{1,\dots,\ell\}\times\values)$, so we can regard $\lpr'$
as a coherent lower prevision defined on the set of gambles on a
finite possibility space. Since $\lpr$ satisfies~\eqref{eq:A4}, so
does its restriction $\lpr'$. By Proposition~\ref{pr:ind-selection},
$\lpr'$ is a lower envelope of linear previsions satisfying
Eq.~\eqref{eq:factorising-linear}, so there are two linear
previsions
$\pr_{\pspace}'\geq\lpr_{\pspace}',\pr_{\values}\geq\lpr_{\values}$
such that
\begin{equation*}
 \lpr(f_n)=\lpr'(f_n)=(\pr_{\pspace}'\times\pr_{\values})(f_n).
\end{equation*}

By construction, we can regard $\pr_{\pspace}'$ as a linear
prevision defined on the set
\begin{equation*}
 \{f\in\gambles(\pspace): (\forall
 \omega \sim \omega')f(\omega)=f(\omega')\}
\end{equation*}
that dominates $\lpr_{\pspace}$ on this domain. By
Lemma~\ref{le:closure-finite}, there is a linear prevision
$\pr_{\pspace}\in\solp(\lpr_{\pspace})$ that extends
$\pr_{\pspace}'$. This means that
$\lpr(f_n)=(\pr_{\pspace}'\times\pr_{\values})(f_n)\geq
(\lpr_{\pspace}\boxtimes\lpr_{\values})(f_n)$, and applying
coherence, that $\lpr(f)=\lim_{n\rightarrow\infty} \lpr(f_n)\geq
\lim_{n\rightarrow\infty}
(\lpr_{\pspace}\boxtimes\lpr_{\values})(f_n)=(\lpr_{\pspace}\boxtimes\lpr_{\values})(f)$.
Since $\lpr$ is dominated by the strong product
$\lpr_{\pspace}\boxtimes\lpr_{\values}$, we conclude that
$\lpr(f)=(\lpr_{\pspace}\boxtimes\lpr_{\values})(f)$, a
contradiction.
\end{proof}

\begin{proof}[{\bf Proof of Proposition~\ref{prop:fsd-char}}]
For the direct implication, it suffices to show that
$\widetilde\rdesirs$ satisfies Eq.~\eqref{eq:cA}.

Consider a set $B$ and a gamble $f\in\widetilde\rdesirs|B$. Then
there are $f_1\in\rdesirs_1|B, f_2\in\rdesirs_2|B$ such that
$f=f_1+f_2$. Note that by construction $S(f_1), S(f_2),S(f)\subseteq
B$ and also there is some $\eps>0$ such that
$B(f_2-\eps)\in\rdesirs$. Let us prove that
$I_{S(f)}(f-\frac{\eps}{2})\in\widetilde\rdesirs|B$. Since
$S(f)\subseteq B$, it holds that, for any $\delta>0$,
\begin{multline*}
I_{S(f)}(f-\delta)=B(f-\delta)+ \delta I_{B\setminus S(f)}=B(f_1+f_2-\delta) + \delta I_{B\setminus S(f)}=B(f_2-\delta) +
B f_1+\delta I_{B\setminus S(f)};
\end{multline*}
the gamble $Bf_1+\delta I_{B\setminus S(f)}$ belongs to
$\rdesirs_1|B$ for any $\delta>0$; on the other hand,
$B(f_2-\delta)$ belongs to $\rdesirs_2|B$ for any
$\delta\in(0,\eps)$, and in particular for $\delta=\frac{\eps}{2}$.
As a consequence,
$I_{S(f)}(f-\frac{\eps}{2})\in\widetilde\rdesirs|B$.

This implies that $\widetilde\rdesirs|B$ satisfies
Eq.~\eqref{eq:cA}, and as a consequence so does
$\widetilde\rdesirs$. Therefore, its natural extension $\rdesirs$ is
a fully strictly desirable set of gambles.

As for the converse implication, by construction,
$\widetilde\rdesirs|B\subseteq\rdesirs$ for every set $B$, whence
$\rdesirs$ includes the natural extension of $\widetilde\rdesirs$.
If $\rdesirs$ is a fully strictly desirable set, then it is the
natural extension of a set $\rdesirs'$ satisfying Eq.~\eqref{eq:cA}.
Given a gamble $f\in\rdesirs'$, it follows from~\eqref{eq:cA} that
$f\in\rdesirs_2|S(f)\subseteq\widetilde\rdesirs|S(f)$, and hence
$\rdesirs'\subseteq\widetilde\rdesirs\subseteq\rdesirs$. As a
consequence, $\rdesirs$ is the natural extension of
$\widetilde\rdesirs$.
\end{proof}

\begin{proof}[{\bf Proof of Theorem~\ref{pr:equal-expressivity}}]
First of all, we are going to show that $\rdesirs$ is
coherent. For this, it suffices to show that the set $\cup_{i\in
I}\rdesirs_2|B_i$ avoids partial loss. Since for every $i\in I$ the
set $\rdesirs_2|B_i$ is a convex cone, given that
$\lpr_i(\cdot|B_i)$ is coherent, the set $\cup_{i\in
I}\rdesirs_2|B_i$ incurs partial loss if and only if there is some
finite $J\subseteq I$ and $g_j \in\rdesirs_2|B_j$ for every $j\in J$
such that $\sum_{j\in J} g_j\leq 0$. In that case,
\begin{equation*}
 0\geq \sum_{j\in J} g_j=\sum_{j \in J} B_j(f_j-(\lpr_j(f_j|B_j)-\eps_j)),
\end{equation*}
for some $f_j \in\gambles$ and some $\eps_j>0$. As a consequence,
\begin{equation*}
 \sup_{\omega \in \cup_{j\in J} B_{j}} \left[\sum_{j\in
J}B_j(f_j-\lpr(f_j|B_j))\right] (\omega)<0,
\end{equation*}
a contradiction with Eq.~\eqref{eq:williams-apl}.

Let us show next that $\widetilde\rdesirs|B_i$ satisfies
Eq.~\eqref{eq:cA}. Consider $f\in\widetilde\rdesirs|B_i$. Then there
are $f_1\in\rdesirs_1|B_i, f_2\in\rdesirs_2|B_i$ such that
$f=f_1+f_2$. Note that by construction $S(f_1), S(f_2),S(f)\subseteq
B_i$. Thus, for any $\delta>0$,
\begin{multline*}
I_{S(f)}(f-\delta)=B_i(f-\delta)+ \delta I_{B_i\setminus S(f)}=B_i(f_1+f_2-\delta) + \delta I_{B_i\setminus S(f)}=B_i(f_2-\delta) +
B_i f_1+\delta I_{B_i\setminus S(f)};
\end{multline*}
the gamble $B_if_1+\delta I_{B_i\setminus S(f)}$ belongs to
$\rdesirs_1|B_i$ for any $\delta>0$; on the other hand, for $f_2$
there exists some $g_2\in\gambles$ and some $\eps>0$ such that
$f_2=B_i(g_2-(\lpr_i(g_2|B_i)-\eps))$, whence if we take $\delta \in
(0,\eps)$ we obtain that
$B_i(f_2-\delta)=B_i(g_2-(\lpr_i(g_2|B_i)-(\eps-\delta)))\in\rdesirs_2|B_i$.
As a consequence, $I_{S(f)}(f-\delta)\in\widetilde\rdesirs|B_i$.

We have obtained that the set $\widetilde\rdesirs|B_i$ satisfies
Eq.~\eqref{eq:cA} for all $i\in I$, and as a consequence so does the
union $\cup_{i\in I} \widetilde\rdesirs|B_i$. Since $\rdesirs$ is
the natural extension of $\cup_{i \in I} \widetilde\rdesirs|B_i$, we
deduce that it is a fully strictly desirable set.
\end{proof}

\begin{proof}[{\bf Proof of Proposition~\ref{pr:CA-finite}}]
Assume that a set of gambles $\rdesirs'$ satisfies~\eqref{eq:cA},
and let $\rdesirs$ be its natural extension. Then by definition of
the natural extension, for every $f\in\rdesirs$ there are gambles
$f_1,\dots,f_n\in\rdesirs'$ and positive real numbers
$\lambda_1,\dots,\lambda_n$ such that
\begin{equation*}
 f=I_{S(f)} f\geq \sum_{j=1}^{n} \lambda_j f_j.\label{eq:If_natex}
\end{equation*}
By~\eqref{eq:cA}, for every $j\in\{1,\dots,n\}$ there is some $\epsilon_j$ such that
$I_{S(f_j)}(f_j-\epsilon_j)\in\rdesirs'$. Let
$$\epsilon\coloneqq\min\{\min_j\lambda_j \epsilon_j, \min_{\omega \in
S(f)\setminus \cup_j S(f_j)} f(\omega)\}>0.$$ Note that $f>0$ in
$I_{S(f)\setminus \cup_j S(f_j)}$, provided that $S(f)\setminus
\cup_j S(f_j)\neq\emptyset$, since $f\neq0$ in $S(f)$, while $f\geq
\sum_{j=1}^{n} \lambda_j f_j=0$ outside $\cup_j S(f_j)$; moreover,
since $S(f)$ is finite we deduce that $\min_{\omega \in
S(f)\setminus \cup_j S(f_j)} f(\omega)>0$. Taking into account that
$\lambda_j\eps_j>0$ for all $j$, we conclude that indeed $\eps>0$.
Moreover, we deduce also that
\begin{equation}
I_{S(f)\setminus \cup_j S(f_j)}(f-\eps)\geq0;\label{eq:case1}
\end{equation}
to prove this, it is enough to reconsider that $f>0$ in $S(f)\setminus \cup_j S(f_j)$, and that $\eps\leq\min_{\omega \in S(f)\setminus \cup_j S(f_j)} f(\omega)$.

Now,
\begin{align}
 I_{S(f)}(f-\epsilon) &\geq (\sum_{j=1}^{n} \lambda_j f_j)- \epsilon I_{S(f)} \geq
 \sum_{j=1}^{n} \lambda_j I_{S(f_j)} f_j -\epsilon I_{\cup_j
 S(f_j)}-\epsilon I_{S(f)\setminus \cup_j S(f_j)} \notag\\ & \geq
 \sum_{j=1}^{n} (\lambda_j I_{S(f_j)} f_j -\epsilon I_{S(f_j)})-\epsilon I_{S(f)\setminus \cup_j S(f_j)}\notag\\ &\geq
 \sum_{j=1}^{n} \lambda_j I_{S(f_j)}(f_j-\epsilon_j)-\epsilon I_{S(f)\setminus \cup_j S(f_j)},\label{eq:case2}
\end{align}
where in the last passage we have used that $\eps\leq\min_j\lambda_j \epsilon_j$.

Consider two cases:
\begin{itemize}
\item If $\omega \in S(f)\setminus \cup_j S(f_j)$, then $I_{S(f)}(f-\eps)\geq0=\sum_{j=1}^{n} \lambda_j I_{S(f_j)}(f_j-\epsilon_j)$, where the inequality follows from~\eqref{eq:case1}.
\item If $\omega\in\cup_j S(f_j)$, then $I_{S(f)}(f-\eps)\geq\sum_{j=1}^{n} \lambda_j I_{S(f_j)}(f_j-\epsilon_j)$, as it follows from~\eqref{eq:case2}.
\end{itemize}
As a consequence, $I_{S(f)}(f-\eps)\geq\sum_{j=1}^{n} \lambda_j
I_{S(f_j)}(f_j-\epsilon_j)$, and since $I_{S(f_j)}
(f_j-\epsilon_j)\in\rdesirs'$, we deduce that $I_{S(f)}(f-\epsilon)$
belongs to $\rdesirs$. Thus, $\rdesirs$ also
satisfies~\eqref{eq:cA}.

We conclude that $\rdesirs$ is fully strictly desirable if and only if it satisfies~\eqref{eq:cA}.
\end{proof}

\bibliographystyle{plain}

\end{document}